\typeout{IJCAI--23 Instructions for Authors}

\documentclass{article}
\pdfpagewidth=8.5in
\pdfpageheight=11in

\usepackage{ijcai24}
\usepackage{times}
\usepackage{soul}
\usepackage{url}
\usepackage[hidelinks]{hyperref}
\usepackage[utf8]{inputenc}
\usepackage[small]{caption}
\usepackage{graphicx}
\usepackage{amsmath}
\usepackage{amsthm}
\usepackage{booktabs}
\usepackage{algorithm}
\usepackage{algorithmic}
\usepackage[switch]{lineno}

\usepackage{nicefrac}
\usepackage{comment}

\urlstyle{same}

\usepackage[all]{xy}
\usepackage{array}
\usepackage{listings}

\usepackage{multirow}
\usepackage{longtable}
\usepackage{color, colortbl}
\usepackage{lscape}
\usepackage{textgreek}

\usepackage{caption}
\usepackage{enumitem}
\usepackage{verbatim}

\usepackage{scalerel,amssymb,amsmath}
\usepackage[T1]{fontenc}
\usepackage{float}
\usepackage{systeme}
\usepackage{amssymb}
\usepackage[table,pdftex,dvipsnames]{xcolor}
\usepackage[most]{tcolorbox}
\usepackage{nicematrix}
\usepackage{accents}
\usepackage{bm}
\usepackage{mathrsfs}
\usepackage{framed}
\usepackage{setspace}
\usepackage{cleveref}
\usepackage{calc}
\usepackage{longtable}
\usepackage{gensymb}
\usepackage{breakcites}
\usepackage{subcaption}
\usepackage{hhline}
\usepackage{thmtools}
\usepackage{thm-restate}

\usepackage{epsdice}
\usepackage{mathtools}
\usepackage{amsfonts}
\usepackage{stmaryrd}
\usepackage[export]{adjustbox}

\DeclareFontFamily{U}{mathx}{}
\DeclareFontShape{U}{mathx}{m}{n}{<-> mathx10}{}
\DeclareSymbolFont{mathx}{U}{mathx}{m}{n}
\DeclareMathAccent{\widehat}{0}{mathx}{"70}
\DeclareMathAccent{\widecheck}{0}{mathx}{"71}

\usepackage{xargs}
\usepackage[pdftex,dvipsnames]{xcolor}
\usepackage[colorinlistoftodos,prependcaption,textsize=tiny]{todonotes}

\usepackage{tikz}
\usetikzlibrary{arrows,decorations.pathmorphing,positioning,fit,trees,shapes,shadows,automata,calc}
\usetikzlibrary{patterns,arrows,arrows.meta,calc,shapes,shadows,decorations.pathmorphing,decorations.pathreplacing,automata,shapes.multipart,positioning,shapes.geometric,fit,circuits,trees,shapes.gates.logic.US,fit,backgrounds}
\usepackage{pgfplots}
\usetikzlibrary{patterns,arrows,arrows.meta,calc,shapes,shadows,decorations.pathmorphing,decorations.pathreplacing,automata,shapes.multipart,positioning,shapes.geometric,fit,circuits,trees,shapes.gates.logic.US,fit,decorations.text}

\setlength{\marginparwidth}{1.2cm}

\newtheorem{definition}{Definition}
\newtheorem{theorem}{Theorem}%
\newtheorem{lemma}{Lemma}
\newtheorem{proposition}{Proposition}
\newtheorem{corollary}{Corollary}%
\theoremstyle{remark}
\newtheorem{remark}{Remark}
\newtheorem{example}{Example}

\newcommandx{\unsure}[2][1=]{\todo[linecolor=red,backgroundcolor=red!25,bordercolor=red,#1]{#2}}
\newcommandx{\change}[2][1=]{\todo[linecolor=blue,backgroundcolor=blue!25,bordercolor=blue,#1]{#2}}
\newcommandx{\info}[2][1=]{\todo[linecolor=OliveGreen,backgroundcolor=OliveGreen!25,bordercolor=OliveGreen,#1]{#2}}
\newcommandx{\improvement}[2][1=]{\todo[linecolor=Plum,backgroundcolor=Plum!25,bordercolor=Plum,#1]{#2}}

\newcommand{\NN}{\ensuremath{\mathbb{N}}}

\newcommand{\RR}{\ensuremath{\mathbb{R}}}

\newcommand{\EE}{\ensuremath{\mathbb{E}}}

\newcommand{\cP}{\mathcal{P}}

\newcommand{\given}{\mid}

\newcommand{\ie}{\emph{i.e.}}
\newcommand{\eg}{\emph{e.g.}}

\newcommand{\dist}[1]{\ensuremath{\Delta(#1)}}
\newcommand{\pr}[2]{\ensuremath{\textrm{Pr}(\,#1\mid#2\,)}}

\newcommand{\agent}{{\mathfrak{a}}}
\newcommand{\nature}{{\mathfrak{n}}}
\newcommand{\publ}{\circ}
\newcommand{\priv}{\bullet}

\newcommand{\Zagent}{Z_\priv^\agent}
\newcommand{\Znature}{Z_\priv^\nature}
\newcommand{\Zpub}{Z_\publ}

\newcommand{\tinystate}[1]{\tikz\draw[black,#1] (0,0) circle (.7ex);}
\newcommand{\tsW}{\tinystate{fill=white}}
\newcommand{\tsLG}{\tinystate{fill=black!10}}
\newcommand{\tsDG}{\tinystate{fill=black!40}}
\newcommand{\tsDash}{\tinystate{fill=black!5, thin, dashed}}
\newcommand{\tsDot}{\tinystate{fill=black!20, thick, dotted}}

\newcommand{\microstate}[1]{\tikz\draw[black,#1] (0,0) circle (.4ex);}
\newcommand{\msW}{\microstate{fill=white}}
\newcommand{\msLG}{\microstate{fill=black!10}}
\newcommand{\msDG}{\microstate{fill=black!40}}

\newcommand{\msDot}{\microstate{fill=black!20, thick, dotted}}

\newcommand{\pto}{\hookrightarrow}
\newcommand{\pset}{{\scalebox{0.5}[1]{$\hookrightarrow$}}}
\newcommand{\suff}{\ltimes}
\newcommand{\aoh}{history}

\newcommand{\aohs}{histories}

\newcommand{\ocs}[1]{\ensuremath{\sigma_{\{\pi,\theta\}_{#1}}}}
\newcommand{\Ocs}[1]{\ensuremath{\text{OS}_{\{\pi,\theta\}_{#1}}}}
\newcommand{\Paths}{\text{Paths}}
\newcommand{\mypath}{\tau}
\newcommand{\agrees}{\mathcal{P}}
\newcommand{\concat}{\oplus}
\newcommand{\bigconcat}{\bigoplus}

\newcommand{\sticky}{\mathsf{stick}}
\newcommand{\upd}{\mathsf{upd}}

\newcommand{\player}{\mathsf{play}}
\newcommand{\fixed}{\mathsf{fix}}
\newcommand{\Ustick}{U^\text{stick}}
\newcommand{\last}{last}

\DeclarePairedDelimiter\tup{\langle}{\rangle}

\definecolor{myBlue}{RGB}{25, 131, 238}
\definecolor{myGreen}{RGB}{35, 158, 112}
\definecolor{myRed}{RGB}{235, 61, 42}
\newcommand{\changeB}[1]{{\color{myBlue}#1}}

\newcommand{\changeR}[1]{{\color{myRed}#1}}

\newcommand{\detr}{{det}}
\newcommand{\mix}{{mix}}
\newcommand{\rel}{\mathsf{rel}}
\newcommand{\hsim}{{\sim_{\rel^\nature(\theta)}}}

\newenvironment{reusefigure}[2][htbp]
    {\addtocounter{figure}{-1}%
    \renewcommand{\addcontentsline}[3]{}%
    \begin{figure}[#1]}
    {\end{figure}}

\pdfinfo{
/TemplateVersion (IJCAI.2024.0)
}

\title{Imprecise Probabilities Meet Partial Observability: \\
Game Semantics for Robust POMDPs}

\author{
Eline M. Bovy$^1$
\and
Marnix Suilen$^1$\and
Sebastian Junges$^1$\And
Nils Jansen$^{1,2}$
\affiliations
$^1$Radboud University, The Netherlands\\
$^2$Ruhr-University Bochum, Germany
\emails
\{eline.bovy, marnix.suilen, sebastian.junges\}@ru.nl,
n.jansen@rub.de
}

\begin{document}

\maketitle

\begin{abstract}
    Partially observable Markov decision processes (POMDPs)
    rely on the key assumption that probability distributions are precisely known.
    Robust POMDPs (RPOMDPs) alleviate this concern by defining imprecise probabilities, referred to as uncertainty sets.
    While robust MDPs have been studied extensively, work on RPOMDPs is limited and primarily focuses on algorithmic solution methods. 
    We expand the theoretical understanding of RPOMDPs by showing that 1)~different assumptions on the uncertainty sets affect optimal policies and values; 
    2)~RPOMDPs have a  partially observable stochastic game (POSG) semantic; 
    and 3)~the same RPOMDP with different assumptions leads to semantically different POSGs and, thus, different policies and values.
    These novel semantics for RPOMDPs give access to results for POSGs, studied in game theory; concretely, we show the existence of a Nash equilibrium. 
    Finally, we classify the existing RPOMDP literature using our semantics, clarifying under which uncertainty assumptions these existing works operate.
\end{abstract}

\section{Introduction}

Partially observable Markov decision processes (POMDPs) are the standard model for decision-making under stochastic uncertainty and incomplete state information~\cite{DBLP:journals/ai/KaelblingLC98}.
A common objective in a POMDP is for an agent to compute a policy that maximizes the expected discounted reward.
While POMDPs have been studied extensively, a key assumption planning methods for POMDPs rely on is that the model dynamics, \ie, the transition and observation probabilities, are precisely known.
Under that assumption, it is known that an optimal policy of a POMDP is the solution to a fully observable infinite-state \emph{belief} MDP~\cite{DBLP:journals/ai/KaelblingLC98}.

In the fully observable setting, Markov decision processes (MDPs)~\cite{DBLP:books/wi/Puterman94} have been extended to \emph{robust} MDPs (RMDPs) to account for an additional layer of uncertainty around the probabilities that govern the model dynamics known as the uncertainty set.
These RMDPs have been studied extensively, in terms of their semantics~\cite{DBLP:journals/mor/Iyengar05,DBLP:journals/ior/NilimG05,DBLP:journals/mor/WiesemannKR13}, efficient algorithms to solve specific classes of RMDPs~\cite{DBLP:conf/nips/BehzadianPH21,DBLP:journals/jmlr/HoPW21,DBLP:conf/icml/WangHP23}, and their application in reinforcement learning~\cite{DBLP:journals/jmlr/JakschOA10,DBLP:conf/nips/PetrikS14,DBLP:conf/nips/SuilenS0022,DBLP:journals/make/MoosHASCP22}.

Robust MDPs can be seen as games between the \emph{agent}, who aims to maximize their reward by choosing an action at each state, and \emph{nature}, who aims to minimize the agent's reward by selecting adversarial probability distributions from the uncertainty set.
As a consequence, RMDPs and zero-sum stochastic games (SG)~\cite{shapley1953stochastic,gillette1957stochastic} are closely related, see in particular~\cite[Section~5]{DBLP:journals/mor/Iyengar05} for a reduction from (finite horizon) RMDP to SG.

For RMDPs, two semantics exist for nature's behavior when encountering the same state and action twice.
\emph{Static} uncertainty semantics require nature to always select the same probability distribution, while \emph{dynamic} uncertainty semantics allow nature to make a new choice every time a state-action pair is encountered.
\cite[Lemma 3.3]{DBLP:journals/mor/Iyengar05} established that for finite horizon and discounted infinite horizon reward maximization in certain RMDPs, static and dynamic uncertainty semantics coincide, meaning that for a given agent's policy, both semantics result in precisely the same value.

Extensions to robust POMDPs (RPOMDPs) exist~\cite{DBLP:conf/icml/Osogami15, DBLP:conf/cdc/ChamieM18, DBLP:journals/jet/Saghafian18,DBLP:conf/ijcai/Suilen0CT20, DBLP:journals/siamjo/NakaoJS21,DBLP:conf/aaai/Cubuktepe0JMST21,bovy2023thesis}, but primarily focus on algorithmic approaches to compute optimal policies.
Notably, these algorithms compute optimal policies under different implicit assumptions on the semantics of RPOMDPs, particularly concerning static and dynamic uncertainty.

\paragraph{Contributions.}
This paper sets out to clarify and expand the theoretical understanding of RPOMDPs.
Specifically, we define semantics with associated value functions and policies for RPOMDPs under various assumptions on the uncertainty.
We explicitly define the semantics of RPOMDPs via zero-sum two-sided partially observable stochastic games (POSGs)~\cite{Springer:HSVI}.
Our key contributions are: 
\begin{enumerate}
    \item \textbf{Uncertainty assumptions matter.}
    We introduce a continuum of uncertainty assumptions for RPOMDPs called \emph{stickiness}.
    Stickiness determines when nature's choices for resolving the uncertainty become fixed.
    The two extremes, immediately and never, coincide with the static and dynamic uncertainty semantics of RMDPs.
    We show in \Cref{thm:uncertainty:matters} that, in contrast to RMDPs, these two extremes no longer coincide for RPOMDPs. 
    Specifically, they may lead to different optimal values.
    Moreover, the \emph{order of play} (whether the agent or nature makes the first move) matters. 
    We show that the differences in these assumptions can lead to significant differences in optimal values.
    We account for these results by providing a new RPOMDP definition that explicitly accounts for these uncertainty assumptions in \Cref{def:rpomdp}. 
    \item \textbf{Robust POMDPs are POSGs.} 
    We provide a formal POSG semantic for RPOMDPs with explicit stickiness and order of play.
    We establish a direct correspondence between policies of POSGs and RPOMDPs that ensure equal values for both models (\Cref{thm:equivalent:values}).
    Moreover, different uncertainty assumptions in the RPOMDP lead to semantically different POSGs and hence explain the result listed in Contribution 1.
    Finally, we use the POSG semantics to prove the existence of Nash equilibria, which we use in turn to prove the existence of optimal values in finite horizon RPOMDPs (\Cref{thm:exist:nash}).
    \item \textbf{Classification of existing RPOMDP works.}
    We provide a classification of existing RPOMDP literature into our semantic framework (\Cref{sec:related:work}).
\end{enumerate}
The extended version of this paper, with all the appendices, can be found at \cite{bovy2024imprecise}.

\section{Preliminaries}\label{sec:preliminaries}
A discrete probability distribution over a finite set $X$ is a function $\mu \colon X \to [0,1]$ such that $\sum_{x \in X} \mu(x) = 1$.
For infinite sets, we only consider finite probability distributions.
That is, for an infinite set $X$, a finite probability distribution over $X$ is a function $\lambda \colon X \to [0,1]$ with finitely many $x\in X.\, \lambda(x) \neq 0$ and $\sum_{x \in X} \lambda(x) = 1$.
The set of all probability distributions over $X$ is denoted as $\dist{X}$, and $\mathcal{P}(X)$ is the powerset of $X$.
By $(X \to Y)$, we denote the set of all functions $f \colon X \to Y$, and $f \colon X \pto Y$ for a partial function.
The symbol $\perp$ is used for undefined.
Finally, we use Currying to describe functions that map to functions, \eg, $f \colon X \to (Y \to Z)$ represents a function that maps each $x \in X$ to a function $g_x \colon Y \to Z$.

\subsection{Markov Models}\label{subsec:markov:models}

\begin{definition}[POMDP]
    A partially observable Markov decision process (POMDP) is a tuple $\tup{S,A,T,R,Z,O}$ where $S$, $A$, $Z$ are finite sets of states, actions, and observations, respectively.
    $T \colon S  \times A \to \dist{S}$, $R \colon S  \times A \to \RR$, and $O \colon S\to Z$ are the transition, reward, and observation functions, respectively.
\end{definition}
\noindent
This definition uses POMDPs with \emph{deterministic observations}, in contrast to the more standard stochastic observation functions~\cite{DBLP:journals/ai/KaelblingLC98}.
However, every POMDP with stochastic observations can be transformed into such a POMDP~\cite{DBLP:journals/ai/ChatterjeeCGK16}. 
For convenience, we sometimes write $T(s,a,s')$ for $T(s,a)(s')$.

A \emph{Markov decision process} (MDP) is a POMDP where all states are fully observable. 
We simplify the tuple definition to $\tup{S,A,T,R}$ in the MDP case.

\paragraph{Paths and histories. }
A \emph{path} in a (PO)MDP $M$ is a sequence of successive states and actions: $\mypath = \tup{s_0,a_0,\dots,s_n} \in (S \times A)^* \times S$ such that $T(s_i,a_i,s_{i+1}) > 0$ for all $i \geq 0$.
We denote the set of paths in $M$ by $\Paths^M$.
The concatenation of two paths is written as $\mypath \concat \mypath'$.
A history in a POMDP is a sequence of observations and actions observed from a path $\tup{s_0,a_0,\dots}$: $h \in (Z \times A)^* \times Z$ such that $h = \tup{O(s_0),a_0,O(s_1),a_1\dots}$.

\paragraph{Policies. }
A history-based \emph{stochastic} policy\footnote{Also known as a behavioral strategy.} is a function that maps histories to distributions over actions, that is, $\pi \colon (Z \times A)^*\times Z \to \dist{A}$.
The policy $\pi$ is \emph{deterministic}, or pure, if it only maps to single actions, and stationary if its domain is $Z$, \ie, it only maps the current observation.
The set of all history-based stochastic policies is denoted by $\Pi$
and the set of all history-based deterministic policies by $\Pi^\detr$.
A history-based \emph{mixed} policy is a probability distribution over the set of history-based deterministic policies, that is, $\pi^{\mix} \in \dist{\Pi^\detr}$
The set of all history-based mixed policies is denoted by $\Pi^\mix$.
Throughout the rest of the text, unless otherwise mentioned, all policies are history-based
, and unless indicated by either $\detr$ or $\mix$, the (sets of) policies are stochastic.

\paragraph{Values.}
We maximize the expected reward, either with a finite horizon $K \in \NN$ (denoted fh) or in the infinite horizon with a discount factor $\gamma \in (0,1)$ (denoted dih).
We denote these \emph{objectives} by $\phi \in \{\text{fh},\text{dih}\}$.
The value of a policy $\pi \in \Pi$ in a (PO)MDP for the objective $\phi$ is given by the value function $V_\phi^\pi$, and the optimal value is $V_\phi^*$.
The value of a policy for either objective is~\cite{DBLP:books/sp/12/Spaan12}:%
\begin{align*}
    V_\text{fh}^\pi = \EE \left[ \sum_{t=0}^{K-1} r_t \given \pi \right], \quad
    V_\text{dih}^\pi = \EE \left[ \sum_{t=0}^\infty \gamma^t r_t \given \pi \right],
\end{align*}
where $r_t$ is the reward collected at time $t$ under policy $\pi$.
The optimal value $V_\phi^*$ is defined as $\sup_{\pi \in \Pi} V_\phi^\pi$.

\subsection{Robust MDPs}\label{subsec:RMDP}
Robust MDPs extend standard MDPs by defining an uncertainty set of probability distributions that a state-action pair can map to instead of a single fixed and known distribution. 
Let $U$ be a finite set of variables, and define $\bm{U} \subseteq (U \to \RR)$ as the uncertainty set.
Let $\bm{U}$ be non-empty, a robust MDP is then defined as follows.

\begin{definition}[RMDP]\label{def:RMDP}
    A robust MDP (RMDP) is a tuple $\tup{S,A,\bm{T},R}$ where $S,A$, and $R$ are again states, actions, and the reward function.
    $\bm{T} \colon \bm{U} \to (S \times A \to \dist{S})$ is the uncertain transition function, consisting of a possibly infinite set of transition functions $T \colon S \times A \to \dist{S}$, where every $T \in \bm{T}$ is determined by a variable assignment $(U \to \RR) \in \bm{U}$.
\end{definition}

\begin{remark}
    The variable assignment $\bm{U}$ maps the variables to $\RR$ and not to $[0,1]$ as mapping to the reals gives more freedom in defining the uncertainty set, allowing for more complicated dependencies between transitions.
    The uncertain transition function $\bm{T}$ ensures that all state-action pairs are mapped to probability distributions.
\end{remark}

\paragraph{Game interpretation.}
As already mentioned in the introduction, we interpret RMDPs as games between the agent, who selects actions through a policy $\pi: (S\times A)^* \times S \to \dist{A}$, and nature, who uses its policy $\theta: (S \times A \times \bm{U})^* \times S \to \dist{\bm{U}}$ to select variable assignments $u \in \bm{U}$ from the uncertainty set to determine the probability distributions, such that $\bm{T}$ is non-empty.
That is, any variable selection $u$ must yield a valid probability distribution for all state-action pairs:
\[
\forall s \in S, a \in A. \, \bm{T}(u)(s,a) \in \dist{S}.
\]
The sets of the agent's and nature's policies are again $\Pi$ and $\Theta$, respectively.
The sets of deterministic and mixed policies are constructed analogously as for POMDPs. 

The maximal value that a policy can achieve over all possible ways to resolve the uncertainty is defined for both objectives, respectively, as
\begin{equation*}\label{eq:RMDP:supinf}
V_\text{fh}^* = \sup_{\pi \in \Pi} \inf_{\theta \in \Theta} \EE \left[ \sum_{t=0}^{K-1} r_t \right], \,\, V_\text{dih}^* = \sup_{\pi \in \Pi} \inf_{\theta \in \Theta} \EE \left[ \sum_{t=0}^\infty \gamma^t r_t \right].
\end{equation*}
It is often assumed that nature plays stationary and deterministic in RMDPs.
Under certain conditions on the uncertainty set, this assumption is non-restrictive as nature's best policy falls within this class~\cite{DBLP:journals/mor/Iyengar05,DBLP:journals/mor/WiesemannKR13,DBLP:journals/corr/abs-2312-03618}.

\begin{remark}
Our definition of RMDPs is more general than common definitions: 
Most RMDP definitions assume a form of independence in the uncertainty set between different states (or actions),  known as $s$- (or $(s,a)$-) \emph{rectangularity}.
Our definition subsumes these rectangular RMDPs. 
While rectangular RMDPs satisfy a saddle point condition, meaning the $\sup\inf$ may be reversed in the definition of $V_\phi^*$, this has not been shown for RMDPs in general.
Our result in \Cref{thm:exist:nash} shows that the saddle point condition holds for RPOMDPs in general for finite horizon.
This extends to RMDPs using a fully observable observation function.
We refer to~\cite{DBLP:journals/mor/WiesemannKR13} for a more standard definition of rectangularity and an overview of the computational properties of rectangular RMDPs, and~\cite{DBLP:conf/birthday/0001JK22} for an overview on non-rectangular RMDPs.
\end{remark}

\begin{figure}[t]
    \centering
    \resizebox{\columnwidth}{!}{
    \begin{tikzpicture}[state/.append style={shape = ellipse}, >=stealth,
    bobbel/.style={minimum size=1mm,inner sep=0pt,fill=black,circle},
    mynode/.style={rectangle,fill=white,anchor=center}]]
    \node[state] (s1) at (1,0) {$s_1$};
    \node[state] (s2) at (5,0) {$s_2$};
    \node[bobbel] (s1b) at (3,0.5) {};
    \node[bobbel] (s2b) at (3,-0.5) {};
    \draw[<-] (s1.west) -- +(-0.6,0);
    \draw (s1) -- (s1b);
    \draw (s2) -- (s2b);
    \draw (s1b) edge[->, bend left = 10] node[above right]{$p$} (s2);
    \draw (s1b) edge[->, bend right = 30] node[above left]{$1-p$} (s1.north);
    \draw (s2b) edge[->, bend left = 10] node[below right]{$q$} (s1);
    \draw (s2b) edge[->, bend right = 30] node[below right]{$1-q$} (s2.south);
    \node[anchor = west, text width=5cm] at (5.8,0.25) {\small $\bm{U}^1 = \left\{{p} \in [0.1,0.9], {q} \in [0.1,0.9]\right\}$};
    \node[anchor = west, text width=4cm] at (5.8,-0.25) {\small $\bm{U}^2 = \left\{{p} \in [0.1,0.4], {q = 2p}\right\}$};
\end{tikzpicture}
    }
    \caption{An example RMDP with two uncertainty sets.}
    \label{fig:small_example}
\end{figure}
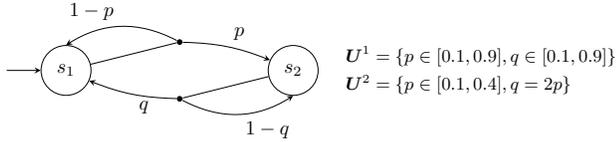

\begin{example}\label{exam:RMDP:uncertainty:set}
    \Cref{fig:small_example} depicts a small RMDP together with two possible uncertainty sets $\bm{U}^1$ and $\bm{U}^2$.
    In this RMDP, the agent only has singleton choices, while nature chooses variable assignments for $p$ and $q$.
    Given an uncertainty set and a variable assignment in that uncertainty set, for example, $u = \{p \mapsto 0.3, q \mapsto 0.5\} \in \bm{U}^1$, we get a fully defined transition function.
    $\bm{U}^1$ is an $(s,a)$-rectangular uncertainty set since each variable influences the transition probabilities in only one state-action pair.
    $\bm{U}^1$ could hence be split into two independent uncertainty sets: $\bm{U}^1 = \{p \in [0.1,0.9]\} \times \{q\in[0.1,0.9]\}$.
    In contrast, $\bm{U}^2$ is not $(s,a)$-rectangular, since the value of $q$ depends on $p$, so $p$ influences transitions from state $s_1$ as well as from state $s_2$.
\end{example}

\paragraph{Static and dynamic uncertainty.} 
A prominent semantic concern on RMDPs is whether nature must play consistently when a state is repeatedly visited.
\emph{Static uncertainty} semantics require nature to choose a single variable assignment $u\in \bm{U}$ once-and-for-all, fixing all probability distributions form the start.
On the other hand, \emph{dynamic uncertainty} semantics allow nature to choose a new variable assignment independently each time a state is visited.
In~\cite[Lemma 3.3]{DBLP:journals/mor/Iyengar05}, it is shown that on $(s,a)$-rectangular RMDPs with a finite horizon or discounted infinite horizon objective, these semantics, and thus the values, coincide.

\begin{remark}
    Although our use of variables in the transition function is similar to, \eg,~\cite{DBLP:journals/mor/WiesemannKR13}, it is not standard.
    Often, the transition function directly maps to uncertainty sets, \eg,~\cite{DBLP:journals/mor/Iyengar05, DBLP:journals/ior/NilimG05, DBLP:conf/icml/HoPW18}.
    The use of variables has the following benefits over directly mapping to uncertainty sets: (1)~support for various semantics, such as different forms of rectangularity, without changing the signature of the uncertain transition function~$\bm{T}$;
    (2)~it allows us to keep track of partial restriction on nature's choice, which is needed when moving to the partially observable setting (\Cref{subsec:stickiness}).
\end{remark}

\section{RPOMDPs and Uncertainty Assumptions}\label{sec:RPOMDP}
In this section, we define a game-based framework for robust POMDP semantics that can be instantiated by making different \emph{uncertainty assumptions}.
Specifically, we incorporate two key assumptions into our RPOMDP definition: \emph{stickiness} and \emph{order of play}.
Stickiness concerns the moment at which nature must choose the values of the variables $U$ and extends static and dynamic uncertainty from RMDPs to the partially observable setting.
The order of play specifies whether the agent or nature moves first. 
It determines the moment nature observes the most recent agent action.

This section is structured as follows.
We briefly discuss our assumptions about partial observability to introduce notation needed and then
formally define RPOMDPs.
Next, we clarify how notions such as paths and histories carry over from POMDPs and RMDPs to RPOMDPs.
We briefly describe the order-of-play assumption and provide a more elaborate discussion of stickiness in \Cref{subsec:stickiness}.
Finally, in \Cref{subsec:valueOfRPOMDP}, we discuss the optimal value of RPOMDPs under different uncertainty assumptions and demonstrate that these assumptions matter, \ie, yield different optimal values (\Cref{thm:uncertainty:matters}).

\paragraph{RPOMDPs.}
Analogous to RMDPs, we interpret RPOMDPs as a game between the agent and nature. 
To make our RPOMDP definition as general as possible, we assume partial observability for both the agent and nature.
We factorize the observations into three parts: \emph{private} observations of agent and nature, respectively, and \emph{public} observations that both players observe.
Hence, each player obtains two observations in each state.
For the remainder of the paper, we use $\agent$ and $\nature$ to denote whether a set or function belongs to the agent or to nature, respectively.
Likewise, we use $\priv$ and $\publ$ to denote whether a set or function relates to private or public observations.

\begin{definition}[RPOMDP]\label{def:rpomdp}
A robust POMDP (RPOMDP) is a tuple 
$\tup{S, A, \bm{T}, R, \Zagent, \Znature, \Zpub, O^\agent_\priv, O^\nature_\priv, O_\publ,\sticky,\player}$, 
where $S, A, \bm{T}$, and $R$ are sets of states and actions, the uncertain transition function, and the reward function, as in RMDPs. 
The sets $\Zagent, \Znature$, and $\Zpub$ are the private observations for the agent, for nature, and the public observations, respectively.
$O^\agent_\priv \colon S \to \Zagent, O^\nature_\priv \colon S \to \Znature$, and $O_\publ \colon S \to \Zpub$ are the observation functions belonging to the agent, nature, and public observations.
$\sticky\colon U \times \Znature \times \Zpub \times A \to \{0,1\}$ is the \emph{stickiness function}, and $\player \in \{\agent, \nature\}$ the \emph{order of play}, \ie, which player moves first.
\end{definition}
\noindent
As for POMDPs, we consider deterministic observations.
We show in
\Cref{app:det.obs}
that RPOMDPs with stochastic or uncertain observations can be rewritten in RPOMDPs with deterministic observations.

\paragraph{Paths and histories.}
A path through an RPOMDP $M$ is a sequence $\mypath=\tup{s_0,a_0,u_0,s_1, \dots, s_n} \in (S \times A \times \bm{U})^* \times S$ that consists of environment states, agent actions, and nature's variable assignments $u \in \bm{U}$, such that for all $i > 0$:
\begin{align*}
\bm{T}(u_{i-1})(s_{i-1},a_{i-1},s_i) >0.
\end{align*}
As before, we denote the set of paths in $M$ by $\Paths^M$.
A history is the observable fragment of a path for either the agent or nature.
The agent's histories are sequences in $H^{\agent,M} \subseteq ( \Zagent \times \Zpub \times A)^* \times \Zagent \times \Zpub$, observing the agent's private and public observations of the states and its own actions.
Nature's histories are sequences in $H^{\nature,M} \subseteq (\Znature \times \Zpub \times A \times \bm{U} )^* \times \Znature \times \Zpub$, observing its private and public observations of the states, the agent's actions, and variable assignments $u \in \bm{U}$ that resolve the uncertainty. 
The histories for the agent and nature are obtained from a path by applying the relevant observation functions, respectively, similar to POMDPs. 
We give an explicit mapping in 
\Cref{app:appendix_prelim}.

\paragraph{Order of play.}
For any given path, both the agent and nature must make a move. 
We consider turn-based games and must, therefore, select who picks their move first\footnote{In our setting, the case that both players pick their actions simultaneously is equivalent to letting nature move first, as we assume the agent never directly observes the selection of nature.
See \cite{DBLP:conf/mfcs/KwiatkowskaNPSY22} for more information about simultaneous stochastic games.
}. 
We encode this information directly in the signature of the nature policy below.  
We remark that after both players have made their move, the resulting state is equivalent as we assume that nature always observes the actions picked previously. %

\paragraph{Policies.}
As with RMDPs, we denote the agent's policies by $\pi \in \Pi$ and nature's by $\theta \in \Theta$.
Specifically, the agent's policies are defined as maps from the agent's histories to distributions over actions $\pi \colon H^{\agent,M} \to \dist{A}$.
Nature's policies are maps from nature's histories and the last agent action to \emph{finite} distributions over variable assignments $\theta \colon H^{\nature,M} \times A \to \dist{\bm{U}}$.
When nature moves first, the last agent action is not available and therefore not part of nature's policy: $\theta \colon H^{\nature,M} \to \dist{\bm{U}}$.
The sets of deterministic and mixed policies are constructed analogously as for POMDPs in \Cref{sec:preliminaries}.

\subsection{Stickiness: Restricting Nature's Choices}\label{subsec:stickiness}
Stickiness describes whether nature's choice at one point should remain fixed (`stick') in the future\footnote{The name follows from the idea that nature always chooses values for all variables, but some values stick for the rest of time. 
Whether a variable sticks is determined by the stickiness.}.
The simplest instances of stickiness are when nature's choices never stick or when they all stick from the start.
If nature's choices never stick, so values never stick to variables, we say the RPOMDP has \emph{zero} stickiness.
If nature's choices stick from the start, so values directly stick to all variables, we say the RPOMDP has \emph{full} stickiness.
Zero and full stickiness correspond to dynamic and static uncertainty in RMDPs, respectively.

Zero and full stickiness are only the two extremes of a spectrum of different stickiness types.
In addition, RPOMDPs admit partial types of stickiness, where nature may have to fix variable values but can delay some choices depending on the specific stickiness function.
We now give an intuitive example on stickiness before moving to the formal definition.
For explicit examples of stickiness, including so-called \emph{observation-based} stickiness, see 
\Cref{app:observation:stickiness}.

\begin{example}[Stickiness]\label{exam:stickiness_concrete}
    Consider the following drone delivery problem, naturally modeled as (R)POMDP.
    The agent controls a drone that has to deliver packages.
    States encode the drone's location, actions are direction and speed adjustments, and observations are location estimations.
    The transition probabilities represent the chance of reaching adjacent locations.
    Different types of stickiness can model different sources of uncertainty on those probabilities:

    \textbf{Full stickiness. }
    The drone experiences an unknown drift probability caused by, \eg, a dented blade.
    The agent must account for this unknown but \emph{fixed probability}.

    \textbf{Zero stickiness. }
    Wind influences the probability of reaching adjacent states.
    While predictable to a certain degree, a margin of uncertainty will remain.
    As the wind changes over time, the agent has to account for \emph{changing probabilities}.
    
    \textbf{Partial stickiness. }
    We need partial stickiness when nature eventually has to commit to a probability, but not from the start.
    Suppose we extend our problem %
    with a municipality that has created no-fly zones and will install monitors in these zones to detect violations.
    We encode the no-fly zones in the state space to reason about the probability of the agent being detected.
    Initially, the municipality will try out possible placements for their monitors.
    The probability of being detected, hence, lies in an uncertainty set formed by the different placements of monitors.
    Once the placement of the monitors is final, the probability of getting caught in a no-fly zone becomes fixed.
    A partial stickiness function that returns $1$ when observing a drone in a no-fly zone, fixing the number of monitors at that point, captures such scenarios.
\end{example}

We allow partial stickiness to depend on what nature observes, \ie, its private observations $\Znature$, public observations $\Zpub$, and the agent's actions $A$.

\begin{definition}
    The \emph{stickiness} of an RPOMDP is a Boolean function indicating whether nature's choice of a value for variable $v \in U$ should remain fixed:
    \[
    \sticky\colon U \times \Znature \times \Zpub \times A \to \{0,1\}.
    \]
\end{definition}
Below, we describe how we use the $\sticky$ function to compute restrictions on nature's choices and, with that, define valid nature policies.

\paragraph{Fixed variables and agreeing assignments.}
Depending on the stickiness of the RPOMDP, past choices of nature may restrict its future choices.
Let $\bm{U}^{\pset}$ denote the set of partial variable assignments $U \pto \RR$.
Let $u^\bot \in \bm{U}^\pset$ be the totally undefined variable assignment: $\forall v\in U. u^\bot(v) = \bot$.
We define a function $\fixed \colon \Paths^M \to \bm{U}^\pset$ such that $\fixed(\mypath)$ defines the partial variable assignment that remains fixed based on the stickiness function. 
This function is inductively defined as $\fixed(s_I) = \emptyset = u^\bot$ for the initial path $s_I$, and 
\begin{align*}
&\fixed(\mypath \concat \tup{a, u, s'})(v)  = \\
& \,
\begin{cases} 
 u(v) & \text{ if } \fixed(\mypath)(v) \text{ undefined, } v \in \Ustick(\last(\mypath), a), \\ 
\fixed(\mypath)(v) & \text{ otherwise,}
\end{cases}\\
& \text{using}\quad\Ustick(s,a) = \{ v \mid \sticky(v, O^\nature_\priv(s), O_\publ(s), a) = 1 \} 
\end{align*} 
to denote the variables that stick. 
We can straightforwardly lift the definition of $\fixed$ to nature's histories using 
\[ 
\Ustick_h(z^\nature_\priv, z_\publ,a) = \{ v \mid \sticky(v, z^\nature_\priv, z_\publ, a) = 1 \}. 
\] 
Two partial functions agree if they assign equal values to all defined inputs.
We use $\bm{U}^\agrees(u)$ for the variable assignments that agree with partial variable assignment $u$.

\paragraph{Valid paths, histories, and policies.}
Let $\mypath = \tup{s_0,a_0,u_0,s_1, \dots, s_n}\in\Paths^M$.
For $k< n$, we denote the prefix $\mypath_{0:k} = \tup{s_0,a_0,u_0,s_1, \dots, s_k}$.
A path is valid, if for every $k < n$, $u_{k} \in \bm{U}^\agrees(\fixed(\mypath_{0:k}))$. 
A history is valid if it corresponds to some valid path. 
A nature policy is valid if all variable assignments that nature randomizes over given a history and action are in the set of variable assignments that agree with the variable restrictions generated by the history. That is, $\forall h^\nature\in H^\nature, \forall a\in A, \forall u \in \bm{U}.\,$
\[\theta(h^\nature,a)(u) > 0 \implies u \in \bm{U}^\agrees(\fixed(h^\nature)).\]
From here on, all paths, histories, and policies are assumed to be valid.

\subsection{The Value of an RPOMDP}\label{subsec:valueOfRPOMDP}

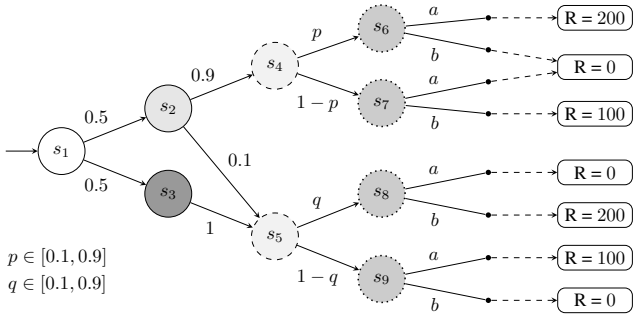
\begin{figure}[t]
    \centering
    \resizebox{\columnwidth}{!}{
    \begin{tikzpicture}[state/.append style={shape = circle}, >=stealth,
    bobbel/.style={minimum size=1mm,inner sep=0pt,fill=black,circle}]
    \node[state] (s1) at (1,0) {$s_1$};
    \node[state, fill=black!10] (s2) at ($(s1) + (2,0.8)$) {$s_2$};
    \node[state, fill=black!40] (s3) at ($(s1) + (2,-0.8)$) {$s_3$};
    \node[state, fill=black!5, thin, dashed] (s4) at ($(s2) + (2,0.8)$) {$s_4$};
    \node[state, fill=black!5, thin, dashed] (s5) at ($(s3) + (2,-0.8)$) {$s_5$};
    \node[state, fill=black!20, thick, dotted] (s6) at ($(s4) + (2,0.7)$) {$s_6$};
    \node[state, fill=black!20, thick, dotted] (s7) at ($(s4) + (2,-0.7)$) {$s_7$};
    \node[state, fill=black!20, thick, dotted] (s8) at ($(s5) + (2,0.8)$) {$s_8$};
    \node[state, fill=black!20, thick, dotted] (s9) at ($(s5) + (2,-0.8)$) {$s_9$};
    \node[bobbel] (s6ba) at ($(s6) + (2,0.2)$) {};
    \node[bobbel] (s6bb) at ($(s6) + (2,-0.4)$) {};
    \node[bobbel] (s7ba) at ($(s7) + (2,0.4)$) {};
    \node[bobbel] (s7bb) at ($(s7) + (2,-0.2)$) {};
    \node[bobbel] (s8ba) at ($(s8) + (2,0.4)$) {};
    \node[bobbel] (s8bb) at ($(s8) + (2,-0.4)$) {};
    \node[bobbel] (s9ba) at ($(s9) + (2,0.4)$) {};
    \node[bobbel] (s9bb) at ($(s9) + (2,-0.4)$) {};
    \node[rectangle, draw, minimum width=14mm, rounded corners] (rPos1) at ($(s6ba) + (2,0)$) {R = $200$};
    \node[rectangle, draw, minimum width=14mm, rounded corners] (rNeg1) at ($(s6bb) + (2,-0.325)$)  {R = $0$};
    \node[rectangle, draw, minimum width=14mm, rounded corners] (rNeu1) at ($(s7bb) + (2,0)$) {R = $100$};
    \node[rectangle, draw, minimum width=14mm, rounded corners] (rNeg2a) at ($(s8ba) + (2,0)$) {R = $0$};
    \node[rectangle, draw, minimum width=14mm, rounded corners] (rPos2) at ($(s8bb) + (2,0)$) {R = $200$};
    \node[rectangle, draw, minimum width=14mm, rounded corners] (rNeu2) at ($(s9ba) + (2,0)$) {R = $100$};
    \node[rectangle, draw, minimum width=14mm, rounded corners] (rNeg2b) at ($(s9bb) + (2,0)$) {R = $0$};
    \draw[<-] (s1.west) -- +(-0.6,0);
    \draw (s1) edge[->] node[above left]{$0.5$} (s2);
    \draw (s1) edge[->] node[below left]{$0.5$} (s3);
    \draw (s2) edge[->] node[above left]{$0.9$} (s4);
    \draw (s2) edge[->] node[above right]{$0.1$} (s5);
    \draw (s3) edge[->] node[below left]{$1$} (s5);
    \draw (s4) edge[->] node[above left]{$p$} (s6);
    \draw (s4) edge[->] node[below, xshift=-2mm, yshift=-1mm]{$1-p$} (s7);
    \draw (s5) edge[->] node[above left]{$q$} (s8);
    \draw (s5) edge[->] node[below, xshift=-2mm, yshift=-1mm]{$1-q$} (s9);
    \draw (s6) edge node[above left]{$a$} (s6ba);
    \draw (s6) edge node[below left]{$b$} (s6bb);
    \draw (s7) edge node[above left]{$a$} (s7ba);
    \draw (s7) edge node[below left]{$b$} (s7bb);
    \draw (s8) edge node[above left]{$a$} (s8ba);
    \draw (s8) edge node[below left]{$b$} (s8bb);
    \draw (s9) edge node[above left]{$a$} (s9ba);
    \draw (s9) edge node[below left]{$b$} (s9bb);
    \draw [dashed] (s6ba) edge[->] (rPos1);
    \draw [dashed] (s6bb) edge[->] (rNeg1);
    \draw [dashed] (s7ba) edge[->] (rNeg1);
    \draw [dashed] (s7bb) edge[->] (rNeu1);
    \draw [dashed] (s8ba) edge[->] (rNeg2a);
    \draw [dashed] (s8bb) edge[->] (rPos2);
    \draw [dashed] (s9ba) edge[->] (rNeu2);
    \draw [dashed] (s9bb) edge[->] (rNeg2b);
    \node[text width=4cm] at (2,-2) {$p \in [0.1,0.9]$};
    \node[text width=4cm] at (2,-2.5) {$q \in [0.1,0.9]$};
\end{tikzpicture}
    }
    \caption{An RPOMDP where full and zero stickiness do not coincide in their optimal value.}
    \label{fig:full_vs_zero_sticky_rPOMDP}
\end{figure}

\begin{figure}[t]
    \centering
    \resizebox{0.9\columnwidth}{!}{
    \begin{tikzpicture}[state/.append style={shape = circle}, >=stealth,
    bobbel/.style={minimum size=1mm,inner sep=0pt,fill=black,circle}]
    \node[state] (s1) at (1,0) {$s_1$};
    \draw[<-] (s1.west) -- +(-0.6,0);
    \node[bobbel] (s1ab) at ($(s1) + (2,0.7)$) {};
    \node[bobbel] (s1bb) at ($(s1) + (2,-0.7)$) {};
    \node[bobbel] (s1apr) at ($(s1ab) + (2,0.4)$) {};
    \node[bobbel] (s1a-pr) at ($(s1ab) + (2,-0.4)$) {};
    \node[bobbel] (s1bpr) at ($(s1bb) + (2,0.4)$) {};
    \node[bobbel] (s1b-pr) at ($(s1bb) + (2,-0.4)$) {};
    \node[rectangle, draw, minimum width=14mm, rounded corners] (rPos1) at ($(s1apr) + (2,0)$) {R = $300$};
    \node[rectangle, draw, minimum width=14mm, rounded corners] (rNeg) at ($(s1a-pr) + (2,-0.3)$) {R = $0$};
    \node[rectangle, draw, minimum width=14mm, rounded corners] (rPos2) at ($(s1b-pr) + (2,0)$) {R = $300$};
    \draw (s1) edge[-] node[above left]{$a$} (s1ab);
    \draw (s1) edge[-] node[below left]{$b$} (s1bb);
    \draw (s1ab) edge[->] node[above left]{$p$} (s1apr);
    \draw (s1ab) edge[->] node[below, xshift=-2mm]{$1-p$} (s1a-pr);
    \draw (s1bb) edge[->] node[above left]{$p$} (s1bpr);
    \draw (s1bb) edge[->] node[below, xshift=-2mm]{$1-p$} (s1b-pr);
    \draw [dashed] (s1apr) edge[->] (rPos1);
    \draw [dashed] (s1a-pr) edge[->] (rNeg);
    \draw [dashed] (s1bpr) edge[->] (rNeg);
    \draw [dashed] (s1b-pr) edge[->] (rPos2);
    \node[] at ($(rPos1.east) + (1.5,0)$) {p $\in [0.1,0.9]$};
\end{tikzpicture}
    }
    \caption{An RPOMDP where nature first and agent first semantics do not coincide in their optimal value.}
    \label{fig:agent_vs_nature_first_RPOMDP_small}
\end{figure}

\paragraph{Values.}
The values of an RPOMDP given agent policy $\pi \in~\Pi$ and nature policy $\theta \in \Theta$ for both the finite horizon and discounted infinite horizon objective are
\[
V_\text{fh}^{\pi,\theta} = \EE \left[ \sum_{t=0}^{K-1} r_t \mid \pi, \theta \right], \quad V_\text{dih}^{\pi,\theta} = \EE \left[ \sum_{t=0}^\infty \gamma^t r_t \mid \pi, \theta \right].
\]
Optimal values are defined as $V_\phi^* = \sup_{\pi \in \Pi} \inf_{\theta \in \Theta} V_\phi^{\pi,\theta}$.
To the best of our knowledge, it is as of yet unknown whether such optimal values and their policies exist for every RPOMDP.
Various RPOMDP papers claim the existence of an optimal value for their specific RPOMDP, but these results do not extend to the general RPOMDPs we consider in this paper~\cite{DBLP:conf/icml/Osogami15,DBLP:journals/siamjo/NakaoJS21}.
We prove that the optimal value for finite horizon exists for general RPOMDPs in \Cref{thm:exist:nash}.

By changing the stickiness or order of play of an RPOMDP, the optimal value may change:
\begin{restatable}[Uncertainty assumptions matter]{theorem}{theoremI}\label{thm:uncertainty:matters}
For an RPOMDP $M$, let $V_\text{fh}^{*,M}$ denote its optimal value for the finite horizon.
In general, RPOMDPs with different stickiness functions, including static and dynamic uncertainty, may lead to different optimal values.
Furthermore, a different order of play may also lead to different optimal values.
Formally:
\begin{enumerate}
    \item There exist RPOMDPs $M_1, M_2$ that only differ in their stickiness functions, such that $V_\text{fh}^{*,M_1} \neq V_\text{fh}^{*,M_2}$,
    \item There exist RPOMDPs $M_1, M_2$ that only differ in their order of play, such that $V_\text{fh}^{*,M_1} \neq V_\text{fh}^{*,M_2}$.
\end{enumerate}
\end{restatable}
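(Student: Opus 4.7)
\emph{Proof plan.} I would prove both claims by exhibiting concrete witness RPOMDPs and computing their optimal finite-horizon values under the two competing assumptions. The two witnesses are the small RPOMDPs depicted in \Cref{fig:full_vs_zero_sticky_rPOMDP} (for stickiness) and \Cref{fig:agent_vs_nature_first_RPOMDP_small} (for the order of play); both are deliberately tiny so that the values can be checked by hand.

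For Part~1, the witness is designed so that the agent obtains the same observation on all four terminal ``dotted'' states $\{s_6,s_7,s_8,s_9\}$ but can still distinguish whether its history passed through $s_2$ or $s_3$. An agent policy is therefore characterised by two mixing weights $\alpha_2, \alpha_3 \in [0,1]$, and expanding the expected reward gives an expression that is bilinear in $(\alpha_2,\alpha_3)$ and $(p,q)$. Under \emph{full} stickiness nature must commit to a single pair $(p,q)\in[0.1,0.9]^2$ before play, and I would exhibit an agent strategy that simultaneously zeroes out the coefficients of $p$ and $q$ in the bilinear form, yielding a definite value independent of nature. Under \emph{zero} stickiness nature may choose $p$ adaptively each time $s_4$ is reached and $q$ afresh at each visit of $s_5$, and in particular may choose $q$ differently on the $s_2$- and $s_3$-branches; this decouples the agent's two-branch problem and forces a strictly smaller value, because the agent loses the leverage it gets from nature's commitment to a shared $q$. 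Comparing the two numerical values concludes Part~1.

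For Part~2, the witness is a one-step RPOMDP with a single variable $p \in [0.1, 0.9]$ appearing in the transitions of both actions but with the rewards swapped, so that the uncertainty is deliberately non-rectangular across actions. When the agent moves first, nature observes the realised action and can collapse the expected reward of any pure or behavioural strategy to its worst case; when nature moves first, it must commit to $p$ without seeing the action, and the resulting one-step bilinear zero-sum game has a unique interior mixed saddle point with a strictly larger value. The gap between these two numbers establishes Part~2.

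The main obstacle is the full-stickiness optimisation in Part~1: the inner minimisation $\min_{(p,q)\in[0.1,0.9]^2}$ of the bilinear objective is piecewise linear with four regions determined by the signs of the coefficients of $p$ and $q$, and I must verify that the global maximum over the agent's $(\alpha_2,\alpha_3)$ is attained at the single boundary point where \emph{both} coefficients vanish. I would handle this by writing the minimised value explicitly on each of the four regions, checking monotonicity in $\alpha_2$ and $\alpha_3$ within each region, and observing that all four regional optima coincide at the common boundary point. Everything else---the zero-stickiness decoupling and the entire Part~2---reduces to routine linear/bilinear optimisations on a compact box.
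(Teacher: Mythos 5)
Your proposal is correct and takes essentially the same route as the paper: the same two witness RPOMDPs (\Cref{fig:full_vs_zero_sticky_rPOMDP} and \Cref{fig:agent_vs_nature_first_RPOMDP_small}), the same bilinear value-function expansions over the agent's two branch-dependent mixing weights, and the same saddle-point computations (full stickiness $66\tfrac{2}{3}$ vs.\ zero stickiness $65\tfrac{1}{2}$, where nature exploits branch-dependent choices of $q$; agent-first $30$ vs.\ nature-first $150$). The one step you leave implicit---that the inner infimum may be taken over deterministic assignments $(p,q)$ in the box rather than over stochastic or mixed nature policies---is exactly the linearity-plus-convexity reduction the paper proves in its appendix, so it is a routine addition rather than a gap.
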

\noindent We sketch the proof here, for details see
\Cref{app:assumptions_matter}.

\begin{proof}[Proof sketch]
We construct explicit RPOMDPs and show that the optimal values do not coincide.
For the first point regarding stickiness, consider the finite horizon RPOMDP in \Cref{fig:full_vs_zero_sticky_rPOMDP}.
For zero stickiness, 
the value is $65\frac{1}{2}$, with agent policy 
$\pi = \{\tsW\tsLG\tsDash\tsDot \mapsto \{a \mapsto \frac{1}{3}, b \mapsto \frac{2}{3}\}, 
\tsW\tsDG\tsDash\tsDot \mapsto \{a \mapsto \frac{2}{3}, b \mapsto \frac{1}{3}\}\}$ 
and nature policy 
$\theta = \{\tsW\tsLG\tsDash \mapsto \{p \mapsto \frac{83}{270}, q\mapsto \frac{1}{10}\},
\tsW\tsDG\tsDash \mapsto \{p \mapsto \_, q\mapsto \frac{1}{3}\}\}$.
For full stickiness, 
the value is $66\frac{2}{3}$, with agent policy 
$\pi = \{\tsW\tsLG\tsDash\tsDot \mapsto \{a \mapsto \frac{1}{3}, b\mapsto \frac{2}{3}\},
\tsW\tsDG\tsDash\tsDot \mapsto \{a \mapsto \frac{7}{10}, b\mapsto \frac{3}{10}\}$
and nature policy $\theta = \{\tsW \mapsto \{p \mapsto \frac{1}{3}, q\mapsto \frac{1}{3}\}\}$.

For the order of play, consider the finite horizon RPOMDP in \Cref{fig:agent_vs_nature_first_RPOMDP_small}.
For the agent first, the value is $30$, with nature policy 
$\theta = \{\tup{\tsW,a} \mapsto \{p \mapsto 0.1\}, \tup{\tsW,b} \mapsto \{p \mapsto 0.9\}\}$
and any agent policy.
For nature first, the value is $150$, with nature policy 
$\theta = \{\tsW \mapsto 
\{p \mapsto 0.5\}\}$
and agent policy
$\pi = \{\tsW \mapsto \{a \mapsto 0.5, b\mapsto 0.5\}\}$.
\end{proof}

Note that the optimal nature policies in these two RPOMDPs are deterministic.
We show in
\cref{app:assumptions_matter} 
that deterministic policies suffice in these specific RPOMDPs due to the linearity of the value function in the nature policies.
Furthermore, the RPOMDP we use to show that the order of play matters is fully observable and non-rectangular.
In 
\Cref{app:assumptions_matter},
we show that the order of play still matters under some form of rectangularity.

\begin{remark}
    For $(s, a)$-rectangular RMDPs,~\cite[Theorem 2.2]{DBLP:journals/mor/Iyengar05} shows that static and dynamic semantics in RMDPs lead to the same optimal value.
    Iyengar establishes that in $(s, a)$-rectangular RMDPs, memoryless policies are sufficient for the agent.
    In response, nature may also play memoryless, as there is no incentive for nature to change its choice after its initial choice. 
    As a consequence, zero and full stickiness coincide.
    This statement does not apply to RPOMDPs, where agents generally use memory.
    As shown in the $(s, a)$-rectangular RPOMDP in \Cref{fig:full_vs_zero_sticky_rPOMDP} and \Cref{thm:uncertainty:matters}, the optimal nature policy in this model's zero stickiness case uses information from previous observations, resulting in a smaller reward.
\end{remark}

\section{POSG Semantics for RPOMDPs}\label{sec:POSG}

We formalize the underlying game of an RPOMDP as a zero-sum two-sided partially observable stochastic game (POSG)~\cite{Springer:HSVI}, which is more widely studied than RPOMDPs.
Our transformation allows us to carry over results from POSGs to RPOMDPs.
In particular, we prove that our POSGs always have a Nash equilibrium for the finite horizon objective, which shows that optimal values and agent policies always exist in our finite horizon RPOMDPs.

\paragraph{Tracking fixed assignments.}
In our game, we explicitly keep track of the fixed variable assignments $u^\pset$.
The update function $\upd\colon \bm{U}^\pset \times \bm{U} \times \Znature \times \Zpub \times A \to \bm{U}^\pset$ updates the restricted variables after each valid nature choice following the stickiness of the RPOMDP $M$.
\begin{align*}
& \upd(u^\pset, u,z^\nature_\priv,z_\publ,a)(v) = \begin{cases}
    u(v) &  \text{ if } v \in \Ustick_h(z^\nature_\priv,z_\publ,a),\\
    u^\pset(v) &  \text{otherwise.}
\end{cases}
\end{align*}
By construction, recursively applying the update function on a path $\mypath$ yields $\fixed(\mypath)$.

\begin{definition}%
\label{def:equivalent:agent:zsposg}
Given an (agent first) RPOMDP $\tup{S, A, \bm{T}, R, \Zagent, \Znature, \Zpub, O^\agent_\priv, O^\nature_\priv, O_\publ,\sticky,\agent}$, we define the POSG  $\tup{\mathcal{S^\agent, S^\nature, A^\agent, A^\nature, T, R, Z^\agent, Z^\nature, O^\agent, O^\nature}}$, with a set $\mathcal{S}^\agent = S \times \bm{U}^\pset$ of \emph{agent states},  a set $\mathcal{S}^\nature = S \times \bm{U}^\pset \times A$ of \emph{nature states}, a finite set $\mathcal{A}^\agent = A$ of \emph{agent actions}, and a set $\mathcal{A}^\nature = \bm{U}$ of \emph{nature actions}. The observations are defined as follows:
$\mathcal{Z}^\agent = \Zagent \times \Zpub$ is the finite set of the agent's observations, and $\mathcal{Z}^\nature = \Znature \times \Zpub \times (A \cup \bot)$ the finite set of nature's observations.
The transition, reward, and observation functions are then defined as:
\begin{itemize}
    \item $\mathcal{T} = \mathcal{T}^\agent \cup \mathcal{T}^\nature$, the transition function, where
         $\mathcal{T}^\agent \colon \mathcal{S}^\agent \times \mathcal{A}^\agent \to \mathcal{S}^\nature$ is the agent's transition function, defined by \\
         $\mathcal{T^\agent}(\tup{s,u^\pset},a) = \tup{s,u^\pset,a} \in S^\nature$
     and
         $\mathcal{T}^\nature \colon \mathcal{S}^\nature \times \mathcal{A}^\nature \to \dist{\mathcal{S}^\agent}$ is nature's transition function, such that \\
        $\mathcal{T^\nature}(\tup{s,u^\pset,a},u,\tup{s',\upd(u^\pset,u,O^\nature_\priv(s),O_\publ(s),a)}) =$\\
    $\strut\quad \begin{cases}
       \bm{T}(u)(s,a,s') & \quad \text{if $u \in \bm{U}^\agrees(u^\pset)$,}\\
       0 & \quad \text{otherwise.}
    \end{cases}$
    \item $\mathcal{R}\colon \mathcal{S^\agent} \times \mathcal{A^\agent} \to \RR$ the reward function, given by
    $\mathcal{R}(\tup{s,u^\pset},a) = R(s,a)$. 
    State-action pairs $S^\nature \times A^\nature$ have zero reward.
    \item $\mathcal{O}^\agent\colon (\mathcal{S}^\agent\cup \mathcal{S}^\nature) \to \mathcal{Z}^\agent$ the deterministic observations function of the agent defined as:\\
    $\mathcal{O^\agent}(s) = \begin{cases}
        \tup{O^\agent_\priv(s'), O_\publ(s')} & \text{if $s = \tup{s',u^\pset}\in \mathcal{S^\agent}$,}\\
        \tup{O^\agent_\priv(s'), O_\publ(s')} & \text{if $s = \tup{s',u^\pset,a} \in \mathcal{S^\nature}$.}
    \end{cases}$
    \item $\mathcal{O}^\nature\colon (\mathcal{S}^\agent\cup \mathcal{S}^\nature) \to \mathcal{Z}^\nature$ the deterministic observations function of nature defined as:\\
    $\mathcal{O^\nature}(s) = \begin{cases}
        \tup{O^\nature_\priv(s'), O_\publ(s'),\bot} & \text{if $s = \tup{s',u^\pset} \in \mathcal{S^\agent}$,}\\
        \tup{O^\nature_\priv(s'), O_\publ(s'),a} &  \text{if $s = \tup{s',u^\pset,a} \in \mathcal{S^\nature}$.}
    \end{cases}$
\end{itemize}
\end{definition}

\paragraph{Game behavior.}
This game starts in an $\mathcal{S}^\agent$ state consisting of the initial state $s_I \in S$ of the RPOMDP and the totally undefined variable assignment $u^\bot \in \bm{U}^\pset$.
At any agent state $\tup{s, u^\pset}$, both players observe their private and public observations of state $s$.
After the agent chooses their action $a$, the game transitions deterministically to a nature state $\tup{s, u^\pset, a}$.
Again, both players observe their private and public observations of state $s$, with which nature observes the agent's last action.
Next, nature selects a variable assignment $u \in \bm{U}^\agrees(u^\pset)$ from the set of variable assignments that agree with nature's past choices and hence account for the stickiness of the RPOMDP.
Then the uncertain transition function $\bm{T}$ is resolved with $u$ after which the game stochastically moves to the next agent state $\tup{s',\upd(u^\pset,u, O^\nature_\priv(s), O^\publ(s), a)}$, where $s'$ can be reached from $s$ given action $a$ and the resolved transition function.

\paragraph{Nature chooses first.}
The POSG above is defined for RPOMDPs where the agent plays first, where $\player = \agent$.
As nature can observe the agent's action choice, it may use this information to choose a transition function from the uncertainty set.
If we assume that nature plays first, this information is not available yet; hence, the structure of the POSG needs to be changed to reflect this. 
For the remainder of the main paper, we focus on the case where the agent moves first, \ie, RPOMDPs with $\player = \agent$. 
Our results carry over to RPOMDPs, where nature moves first. See 
\Cref{app:nature_first}.

\paragraph{Paths and histories. }
A path in a POSG is a sequence of successive states and actions that alternate between agent and nature: $\tup{s^\agent_0,a^\agent_0,s^\nature_0,a^\nature_0,s^\agent_1,a^\agent_1,\dots} \in (\mathcal{S}^\agent \times \mathcal{A}^\agent \times \mathcal{S}^\nature \times \mathcal{A}^\nature)^* \times \mathcal{S}^\agent$,
A path is valid if $\forall s^\agent_i, a^\agent_i, s^\nature_i. \, \mathcal{T}^\agent(s^{\agent}_{i}, a^{\agent}_{i}, s^{\nature}_{i}) > 0$, and $\forall s^\nature_i, a^\nature_i, s^\agent_{i+1}. \, \mathcal{T}^\nature(s^\nature_i, a^\nature_i, s^\agent_{i+1}) > 0$.
The set of paths in $G$ is $\Paths^G$.
In the POSGs we consider, players only observe their own actions.
A history for the agent or nature is a path mapped to their respective observations: the agent only observes agent actions, and their histories are sequences of the form $\tup{\mathcal{O}^\agent(s^\agent_0),a^\agent_0,\mathcal{O}^\agent(s^\nature_0),\mathcal{O}^\agent(s^\agent_1),a^\agent_1,\mathcal{O}^\agent(s^\nature_1),\dots} \in (\mathcal{Z}^\agent \times \mathcal{A}^\agent \times \mathcal{Z}^\agent)^* \times \mathcal{Z}^\agent$, while the histories of nature are sequences in $(\mathcal{Z}^\nature \times \mathcal{Z}^\nature \times \mathcal{A}^\nature)^* \times \mathcal{Z}^\nature$.
The sets of agent and nature histories in $G$ are $H^{\agent,G}$ and $H^{\nature, G}$, respectively.

\paragraph{Policies and values.}
A policy for the agent in POSG $G$ is a function $\pi \colon  H^{\agent, G} \to \dist{A^\agent}$, and a policy for nature is a function $\theta \colon H^{\nature,G} \times \mathcal{Z^\nature} \to \dist{A^\nature}$.
The sets of all agent and nature policies in $G$ are denoted by $\Pi^G$ and $\Theta^G$, respectively.
The sets of deterministic and mixed policies are constructed analogously as for POMDPs in \Cref{sec:preliminaries}.
The value of a POSG is the expected reward collected under both players' policies $\pi, \theta$:
\[
V_\text{fh}^{\pi,\theta} = \EE \left[ \sum_{t=0}^{K-1} r_t \mid \pi, \theta \right], \quad V_\text{dih}^{\pi,\theta} = \EE \left[ \sum_{t=0}^\infty \gamma^t r_t \mid \pi, \theta \right].
\]

\subsection{Correctness of the Transformation}
In the following, we show the correctness of our transformation from RPOMDP to POSG.
We do this by (1) constructing bijections between paths and histories of an RPOMDP and its POSG, (2) using these bijections to derive bijections between the agent and nature policies for both RPOMDP and POSG, and (3) concluding with an equivalence between the values for both models.
All proofs, including the explicit construction of all bijections, can be found in 
\Cref{app:value:function:proofs}.

\begin{proposition}[Bijection between paths and histories]
Let $M$ be an RPOMDP, and $G$ the POSG of $M$.
There exists a bijection $f \colon \Paths^M \to \Paths^G$ and bijections between individual players' histories:
    \begin{itemize}
        \item Let $H^{\agent,M}$ and $H^{\agent,G}$ be the set of all agent histories in $M$ and $G$, respectively.
        There exists a bijection $f^{\agent,h}\colon H^{\agent,M} \to H^{\agent,G}$.
        \item Let $H^{\nature,M}$ and $H^{\nature,G}$ be the set of all nature histories in $M$ and $G$, respectively.
        There exists a bijection $f^{\nature,h}\colon H^{\nature,M} \to H^{\nature,G}$.
    \end{itemize}
\end{proposition}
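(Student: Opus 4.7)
The plan is to construct all three bijections explicitly and verify their invertibility. For the path bijection $f$, I map an RPOMDP path $\mypath = \tup{s_0, a_0, u_0, s_1, \ldots, s_n} \in \Paths^M$ to the POSG path obtained by interleaving each RPOMDP transition $(a_i, u_i)$ with a fresh nature state, and by augmenting every state with the accumulated partial variable assignment $u^\pset_i := \fixed(\mypath_{0:i})$:
\[
f(\mypath) = \tup{\tup{s_0, u^\pset_0}, a_0, \tup{s_0, u^\pset_0, a_0}, u_0, \tup{s_1, u^\pset_1}, \ldots, \tup{s_n, u^\pset_n}}.
\]

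Well-definedness is a direct check against \Cref{def:equivalent:agent:zsposg}: the agent transition $\mathcal{T}^\agent(\tup{s_i,u^\pset_i},a_i) = \tup{s_i,u^\pset_i,a_i}$ is the required deterministic move, nature's transition $\mathcal{T}^\nature(\tup{s_i,u^\pset_i,a_i}, u_i, \tup{s_{i+1}, u^\pset_{i+1}})$ equals $\bm{T}(u_i)(s_i,a_i,s_{i+1}) > 0$ because validity of $\mypath$ gives $u_i \in \bm{U}^\agrees(u^\pset_i)$, and the identity $u^\pset_{i+1} = \upd(u^\pset_i, u_i, O^\nature_\priv(s_i), O_\publ(s_i), a_i)$ follows from a short induction comparing the recursive definitions of $\upd$ and $\fixed$. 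Injectivity of $f$ is immediate: $s_i$, $a_i$, and $u_i$ can be read off directly from the first coordinate of $s^\agent_i$, from the agent actions, and from the nature actions of the POSG path. For surjectivity, I would strip the $u^\pset$ coordinates from any valid POSG path to obtain a candidate RPOMDP preimage; the same induction shows the stripped coordinates must agree with $\fixed$ of the preimage (forced by $\mathcal{T}^\nature$ requiring $u^\pset_{i+1} = \upd(u^\pset_i, u_i, \ldots)$), and validity of the POSG path translates directly to validity of the preimage.

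For the two history bijections I would work directly on sequences, bypassing the path bijection. Because $\mathcal{O}^\agent(\tup{s,u^\pset}) = \mathcal{O}^\agent(\tup{s,u^\pset,a}) = \tup{O^\agent_\priv(s), O_\publ(s)}$, the POSG agent history is obtained from the RPOMDP agent history by duplicating the observation immediately following each action; thus $f^{\agent,h}$ is this duplication and its inverse simply deletes each duplicated copy. Similarly, $\mathcal{O}^\nature(\tup{s,u^\pset})$ and $\mathcal{O}^\nature(\tup{s,u^\pset,a})$ differ only in their third coordinate ($\bot$ versus $a$), so $f^{\nature,h}$ splits each RPOMDP nature frame $\tup{z^\nature_{\priv,i}, z_{\publ,i}, a_i, u_i}$ into $\tup{z^\nature_{\priv,i}, z_{\publ,i}, \bot}, \tup{z^\nature_{\priv,i}, z_{\publ,i}, a_i}$ followed by the nature action $u_i$, an operation that is evidently reversible. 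Validity on each side corresponds to the existence of an underlying valid path, which is already handled by $f$.

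The main technical obstacle is the accounting around the partial-assignment coordinate $u^\pset$ carried in POSG states: one must verify that it is a redundant record uniquely determined by the prefix via $\upd$, and that this determination coincides exactly with $\fixed$ on the corresponding RPOMDP path. Once this redundancy is pinned down, the three bijections collapse to transparent interleaving-and-deduplication operations on sequence spaces whose inverses can be written down on the spot.
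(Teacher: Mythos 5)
Your proposal is correct and is essentially the paper's own argument: the paper also defines $f$ by augmenting states with the partial assignment maintained by $\upd$ (and notes it coincides with $\fixed$), establishes injectivity and surjectivity by induction on path length, and realizes the history bijections as exactly the duplication/splitting operations on observation sequences, with validity of histories routed through the path bijection. The only cosmetic differences are that the paper defines $f$ recursively via a helper on path segments rather than in closed form, and proves a joint-history bijection first before specializing to the agent's and nature's histories.
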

\noindent
Using the bijection between histories, we relate agent policies $\Pi^M$ with $\Pi^G$ and nature policies $\Theta^M$ with $\Theta^G$.
\begin{proposition}[Bijection between policies]\label{cor:bijection:policies}
    Let $M$ be an RPOMDP, and $G$ the POSG of $M$.
    There exist bijections $f^{\pi}\colon \Pi^M \to \Pi^G$ and $f^{\theta}\colon \Theta^M \to \Theta^G$ between the agent's and nature's policies in $M$ and $G$, respectively.
\end{proposition}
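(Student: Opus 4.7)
The plan is to lift the history bijections from the preceding proposition to the policy level by precomposition. For the agent, given $\pi \in \Pi^M$, I would define
\[
f^\pi(\pi)(h^G) = \pi\bigl((f^{\agent,h})^{-1}(h^G)\bigr)
\]
for every $h^G \in H^{\agent,G}$; since $\mathcal{A}^\agent = A$, the codomain $\dist{A}$ is unchanged. The two-sided inverse is $(f^\pi)^{-1}(\hat\pi) = \hat\pi \circ f^{\agent,h}$, and bijectivity of $f^\pi$ follows immediately from bijectivity of $f^{\agent,h}$.

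For nature, the construction is analogous but the signatures differ: $\theta^M \colon H^{\nature,M} \times A \to \dist{\bm{U}}$, whereas $\theta^G \colon H^{\nature,G} \times \mathcal{Z}^\nature \to \dist{\bm{U}}$ (using $\mathcal{A}^\nature = \bm{U}$). The key observation is that the extra $\mathcal{Z}^\nature$ argument of $\theta^G$ is the observation of a nature state $\tup{s, u^\pset, a}$, which by $\mathcal{O}^\nature$ has the form $\tup{z^\nature_\priv, z_\publ, a}$ and thus carries the agent's last action as its third component. I would therefore set
\[
f^\theta(\theta^M)(h^G, \tup{z^\nature_\priv, z_\publ, a}) = \theta^M\bigl((f^{\nature,h})^{-1}(h^G), a\bigr),
\]
with inverse obtained by stripping $a$ from the extra observation and feeding it into $\theta^M$. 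Bijectivity then reduces to bijectivity of $f^{\nature,h}$ together with the fact that $a$ is uniquely recoverable from the third slot of the extra observation (which is never $\bot$ at a nature state, by definition of $\mathcal{O}^\nature$).

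Finally, two validity checks are needed to close the loop. A valid RPOMDP nature policy puts positive mass only on $u \in \bm{U}^\agrees(\fixed(h^\nature))$; because $\fixed$ on an RPOMDP history coincides (by the recursive definition of $\upd$) with the $u^\pset$ component recorded in the corresponding POSG state, the image policy puts positive mass only on actions for which $\mathcal{T}^\nature$ is nonzero, which is the matching validity condition on the POSG side. The reverse direction is symmetric. The main obstacle I expect is precisely this bookkeeping: verifying carefully that the agent-action argument threaded separately through $\theta^M$ is consistently reconstructed from (and only from) the third component of the extra $\mathcal{Z}^\nature$ argument of $\theta^G$, and that the stickiness constraints encoded via $\fixed$ and via $u^\pset$ correspond step-by-step under the history bijection. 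Once this correspondence is made explicit, that the two maps are mutually inverse is a routine diagram chase.
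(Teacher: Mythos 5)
Your construction is exactly the paper's: the agent map is precomposition with $(f^{\agent,h})^{-1}$, and the nature map recovers the last agent action from the third component of the extra $\mathcal{Z}^\nature$ argument, i.e., $f^{\theta}(\theta^M)(h^{\nature,G},\tup{z^\nature_\priv,z_\publ,a}) = \theta^M((f^{\nature,h})^{-1}(h^{\nature,G}),a)$, with bijectivity inherited from the history bijections. The extra validity bookkeeping you flag (matching $\fixed$ with the $u^\pset$ component) is consistent with the paper's framework and does not change the argument.
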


An agent RPOMDP policy $\pi^M \in \Pi^M$ and an agent POSG policy $\pi^G \in \Pi^G$ are \emph{corresponding} if $\pi^M$ maps to $\pi^G$ via the bijection~$f^{\pi}$, \ie, $\pi^G = f^{\pi}(\pi^M)$.
Similarly, a nature RPOMDP policy $\theta^M$ and a nature POSG policy $\theta^G$ are \emph{corresponding} if $\theta^G = f^\theta(\theta^M)$. 
From \Cref{cor:bijection:policies} it then follows that for two corresponding agent policies and two corresponding nature policies, the values of the RPOMDP and the POSG coincide.

\begin{restatable}[Equivalent values]{theorem}{theoremII}\label{thm:equivalent:values}
Let $M$ be an RPOMDP, and $G$ the POSG of $M$.
Let $\pi^M \in \Pi^{M}, \pi^G = f^\pi(\pi^M) \in \Pi^{G}$ be corresponding agent policies, and $\theta^M \in \Theta^{M}, \theta^G = f^\theta(\theta^M) \in \Theta^{G}$ be corresponding nature policies.
Then, their values for the RPOMDP and POSG coincide:
\[
V_{\phi}^{\pi^M,\theta^M} = V_{\phi}^{\pi^G,\theta^G}.
\]
\end{restatable}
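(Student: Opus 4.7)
The plan is to leverage the three bijections already established, path bijection $f$, agent-history bijection $f^{\agent,h}$, nature-history bijection $f^{\nature,h}$, and policy bijections $f^\pi, f^\theta$, to show that corresponding policies induce the same probability distribution on corresponding (RPOMDP, POSG) paths, and that the total reward accumulated along corresponding paths coincides. Equality of values then follows by linearity of expectation.

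First, I would unfold what the path bijection does concretely. An RPOMDP path $\tup{s_0, a_0, u_0, s_1, a_1, u_1, \ldots, s_n}$ maps to a POSG path of the form $\tup{\tup{s_0, u^\bot}, a_0, \tup{s_0, u^\bot, a_0}, u_0, \tup{s_1, u_1^\pset}, a_1, \ldots}$ where $u_i^\pset$ is obtained by iteratively applying $\upd$; by construction $u_i^\pset = \fixed(\mypath_{0:i})$. This makes explicit that each RPOMDP transition corresponds to exactly two POSG transitions: an agent transition (deterministic) followed by a nature transition (stochastic via $\bm{T}$).

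Next, I would prove by induction on path length that for every finite prefix $\mypath$ of an RPOMDP path, $\PP_M[\mypath \mid \pi^M, \theta^M] = \PP_G[f(\mypath) \mid \pi^G, \theta^G]$. The base case is immediate since both measures start at $\tup{s_I, u^\bot}$ with probability one. For the inductive step I use three facts: (i) the agent-action probability at step $i$ equals $\pi^M(h^\agent_i)(a_i) = \pi^G(f^{\agent,h}(h^\agent_i))(a_i)$ by definition of the policy bijection; (ii) the deterministic agent transition in $G$ contributes factor $1$; (iii) the nature-action probability equals $\theta^M(h^\nature_i, a_i)(u_i) = \theta^G(f^{\nature,h}(h^\nature_i), \ldots)(u_i)$, supported on $\bm{U}^\agrees(\fixed(\mypath_{0:i}))$ in both cases thanks to the validity condition on nature policies and the identity $u_i^\pset = \fixed(\mypath_{0:i})$; (iv) the nature transition in $G$ yields $\bm{T}(u_i)(s_i, a_i, s_{i+1})$, identical to the RPOMDP step. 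Combining these, each factor in the product defining $\PP_G[f(\mypath)]$ matches a factor in $\PP_M[\mypath]$.

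Finally, I would observe that rewards agree along corresponding paths: the POSG reward $\mathcal{R}(\tup{s, u^\pset}, a) = R(s, a)$ matches $R(s,a)$ in the RPOMDP, while nature states contribute zero; hence the reward collected at the $t$-th agent decision in $G$ equals $r_t$ in $M$. For the finite horizon this gives
\[
V_{\text{fh}}^{\pi^M,\theta^M} = \sum_{\mypath} \PP_M[\mypath] \sum_{t=0}^{K-1} r_t(\mypath) = \sum_{\mypath} \PP_G[f(\mypath)] \sum_{t=0}^{K-1} r_t(f(\mypath)) = V_{\text{fh}}^{\pi^G,\theta^G},
\]
and for the discounted infinite horizon the same identity holds termwise under the expectation before passing to the limit via dominated convergence. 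The main subtlety I expect to handle carefully is the discount bookkeeping: one must fix the convention that the index $t$ in the POSG value counts \emph{agent} decisions (equivalently, pairs of POSG transitions), so that $\gamma^t$ multiplies the reward collected at the $t$-th agent state in both models; once this is made explicit, the argument above goes through uniformly for both objectives $\phi \in \{\text{fh}, \text{dih}\}$.
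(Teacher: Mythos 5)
Your proposal is correct and follows essentially the same route as the paper: show rewards coincide along corresponding paths ($\mathcal{R}(\tup{s,u^\pset},a)=R(s,a)$, nature states contributing zero) and that corresponding policy pairs induce the same distribution over corresponding paths via the established bijections, then conclude by expectation. The only difference is one of detail: you make explicit, by induction on path length, the distribution-equality step (and the discount-indexing convention) that the paper states without elaboration.
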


By showing that there is a bijection between RPOMDP and POSG policies and that the values coincide, we have established that these POSGs form an operational model for RPOMDP semantics.

\subsection{Existence of Nash Equilibria}\label{sec:nash_equilibrium}
Using the RPOMDP to POSG transformation, we prove the existence of optimal values and policies for the agent in an RPOMDP for the finite horizon objective.
That is, the existence of maximal values agent policies can achieve against all nature policies, such that
$V_\text{fh}^* = \sup_{\pi \in \Pi} \inf_{\theta \in \Theta} \EE [ \sum_{t=0}^{K-1} r_t \mid \pi, \theta]$.
From \Cref{thm:equivalent:values}, it follows that if the values $V_\phi^*$ exist in the POSG $G$ of an RPOMDP $M$, they also exist in $M$.

The value $V_\phi^{\pi, \theta}$ of a POSG $G$ is a \emph{Nash equilibrium}, and both players' policies are Nash optimal, denoted $\pi^*, \theta^*$, if there is no incentive for either player to change their policy.
That is, for either objective $\phi \in \{\text{fh},\text{dih}\}$ we have:
\begin{align*}
\forall \pi \in \Pi^G.  V_\phi^{\pi^*,\theta^*} \geq  V_\phi^{\pi,\theta^*} \,\, \wedge \,\,  
\forall \theta \in \Theta^G. V_\phi^{\pi^*,\theta^*} \leq  V_\phi^{\pi^*,\theta}.
\end{align*}

Since the uncertainty set is infinite, our POSGs do not meet the standard requirements for a Nash equilibrium to exist~\cite{Book_game_theory_multi-leveled, DBLP:journals/corr/abs-2305-10546}.
Yet, our POSGs exhibit enough structure to show that a Nash equilibrium always exists for the finite horizon objective.

\begin{restatable}[Existence of finite horizon Nash equilibrium]{theorem}{theoremIII}\label{thm:exist:nash}
Let $M$ be an RPOMDP 
and $G$ the POSG of $M$. 
For the finite horizon objective $V_\emph{fh}^{\pi,\theta} = \sum_{t=0}^{k-1} [r_t \mid \pi, \theta]$ we have the following saddle point condition in $G$:
\begin{equation}\label{eq:nash:equilibrium}
    \sup_{\pi\in \Pi^G}\inf_{\theta\in \Theta^G}V_\emph{fh}^{\pi,\theta} = \inf_{\theta\in \Theta^G}\sup_{\pi\in \Pi^G}V_\emph{fh}^{\pi,\theta}.
\end{equation}
From \Cref{eq:nash:equilibrium}, the existence of a Nash equilibrium in $G$ follows immediately~\cite{Book_game_theory_multi-leveled}.
\end{restatable}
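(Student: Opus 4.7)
The plan is to reduce the saddle-point equality in \Cref{eq:nash:equilibrium} to a standard minimax theorem on compact convex strategy sets, exploiting the finite-horizon structure of $G$. First, by Kuhn's theorem---applicable because both players have perfect recall in $G$ (each player's history records all of their own past actions and observations)---every behavioral policy is realization-equivalent to a mixed policy over deterministic policies, so we may pass freely between behavioral and mixed policies without changing $V_\text{fh}^{\pi,\theta}$.

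Next I would unpack the strategy spaces over horizon $K$. Because $K$ is finite and the observation and action sets $\mathcal{Z}^\agent, \mathcal{Z}^\nature, \mathcal{A}^\agent$ are finite, only finitely many agent decision histories are reachable in $G$, so an agent behavioral policy is determined by a finite collection of distributions in $\dist{\mathcal{A}^\agent}$ and lives in a finite product of simplices; this is compact and convex. For nature, the set of reachable decision histories is likewise finite, but at each such history nature chooses a distribution over a constrained slice of $\bm{U} = \mathcal{A}^\nature$, where the constraint is the agreement condition $\bm{U}^\agrees(u^\pset)$ induced by $\fixed$ and $\upd$. Assuming $\bm{U}$ is a compact subset of $\RR^U$ and each slice is closed, the space of probability measures on each slice is weak-$*$ compact and convex, so nature's strategy space is a finite product of compact convex sets.

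The last step is to verify that $V_\text{fh}^{\pi,\theta}$ is separately affine and jointly continuous in the two behavioral-policy profiles, then invoke a minimax theorem. Unfolding the finite-horizon expectation as a finite sum over paths of length at most $K$, each summand is a product of action-selection probabilities, stage rewards, and transition values $\bm{T}(u)(s,a,s')$; fixing one profile, this is affine in the distributions comprising the other. Continuity in $\theta$ reduces to continuity of $u \mapsto \bm{T}(u)(s,a,s')$, which holds for all standard parameterizations of uncertainty sets in the RPOMDP literature. Sion's minimax theorem (or Glicksberg's theorem for compact-metric action spaces) then yields \Cref{eq:nash:equilibrium}, whence the Nash equilibrium follows via the cited textbook. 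The main obstacle is precisely the infinite action set $\bm{U}$: the finite-game minimax theorem of von Neumann no longer applies, so I must move to an infinite-dimensional variant and carefully justify weak-$*$ compactness of nature's strategy space together with continuity of the expected-value functional on it. This will likely require making explicit the regularity assumptions on $\bm{U}$ and $\bm{T}$ that \Cref{def:RMDP} leaves implicit, and verifying that the stickiness-induced constraints propagated by $\upd$ preserve closedness along every branch of the finite game tree.
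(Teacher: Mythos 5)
Your route diverges from the paper's and, as written, has genuine gaps. The most concrete one is the claim that nature's set of reachable decision histories is finite. It is not: nature's histories in $G$ lie in $(\mathcal{Z}^\nature \times \mathcal{Z}^\nature \times \mathcal{A}^\nature)^* \times \mathcal{Z}^\nature$ and therefore record nature's own past actions $u \in \bm{U}$, an infinite set; so even with finite horizon, finite observations, and finite agent actions, nature's behavioral-policy space is an uncountable product of distribution spaces, not "a finite product of compact convex sets," and the stickiness constraints $\bm{U}^\agrees(u^\pset)$ make the admissible slice at each history depend on the continuum of previously chosen assignments. Relatedly, your appeal to Kuhn's theorem is not available off the shelf: the game is infinite (nature's action set is $\bm{U}$), and the paper's policy classes only allow \emph{finitely supported} distributions, which is exactly why the paper proves the stochastic/mixed equivalence by hand (via relevant histories and a representative-class construction) instead of citing Kuhn. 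Finally, your minimax step needs compactness of $\bm{U}$, closedness of every agreement slice, and continuity of $u \mapsto \bm{T}(u)(s,a,s')$ — none of which appear in \Cref{def:rpomdp} or in the theorem statement — and it silently enlarges nature's strategy set to all (weak-$*$ compact) probability measures, so even if Sion/Glicksberg goes through you still owe an argument that the resulting equality transfers back to $\Theta^G$, whose elements have finite support.

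The paper avoids all of this by exploiting the asymmetry of the game rather than seeking compactness on nature's side: it builds an occupancy (sufficient-statistic) game, shows stochastic and finite-support mixed policies induce the same path distributions, and then observes that the agent's mixed policies form a finite-dimensional simplex (the agent's deterministic policies are finitely many) while nature's mixed policies are finite-support distributions over an arbitrary infinite index set. This is precisely a convex semi-infinite game in the sense of Lopez and Vercher, whose saddle-point theorem applies because the agent's polytope is bounded (trivial recession cone), with no topological assumptions on $\bm{U}$ or $\bm{T}$ whatsoever. If you want to salvage your approach, you would have to add the regularity hypotheses explicitly and handle nature's history-dependent, infinite-dimensional strategy space (e.g., via Tychonoff-type compactness and a continuity argument for the finite-horizon value), at which point you would be proving a strictly weaker statement than the one in the paper.
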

\noindent We sketch the proof here; for details see
\Cref{app:nashEquilibrium}.

\begin{proof}[Proof sketch]
    We show the existence of the Nash equilibrium for our RPOMDPs by first defining a sufficient statistic 
    (\Cref{app:suff.stat}).
    This statistic tracks histories and nature's policy and is an adaptation of the definition of \cite{Springer:HSVI}.
    We use the sufficient statistic to construct the state space of a non-observable occupancy game 
    (\Cref{app:occupacy_game}) 
    between agent and nature.
    Additionally, we show that we can reason about the optimal value and policies of the occupancy game, and thus those of the POSG, with the sets of mixed agent and nature policies instead of the sets of stochastic policies 
    (\Cref{app:mixed_policies}).
    Using the sets of mixed policies, we show that the constructed occupancy game is a semi-infinite convex game, as defined by~\cite{Convex_semi-infinite_games}
    (\Cref{app:convex_semi-infinite_game}).
    Finally, we show that our occupancy game meets the conditions given by~\cite{Convex_semi-infinite_games} for the existence of a saddle point.
    From the existence of the saddle point, the existence of the Nash equilibrium and an optimal agent policy immediately follows~\cite{Book_game_theory_multi-leveled}.  
\end{proof}

Whether a Nash equilibrium exists in the POSG $G$ for discounted infinite horizon objective $V_\emph{dih}^{\pi,\theta} = \sum_{t=0}^{\infty} [\gamma^t r_t \mid \pi, \theta]$ or a saddle point condition that would imply this Nash equilibrium remains an open problem.

\paragraph{Other semantic implications for RPOMDPs.}
To shed light on the reason why two RPOMDPs that only differ in either their stickiness or order of play can lead to different optimal values, we look at the structure of the POSGs of these RPOMDPs.
Specifically, the RPOMDP from \Cref{fig:full_vs_zero_sticky_rPOMDP} with either zero or full stickiness leads to the two POSGs depicted in \Cref{fig:stickiness:POSGs}.
The key difference between these POSGs is that in the zero stickiness case, every variable assignment by nature leads to the same two successor states, while in the full stickiness case, any variable assignment by nature leads to two \emph{unique} successor states and thus an infinitely branching POSG.
A similar structural difference can be seen in the two POSGs depicted in \Cref{fig:order:POSGs}, which show the difference in the order of play for the RPOMDP in \Cref{fig:agent_vs_nature_first_RPOMDP_small}.

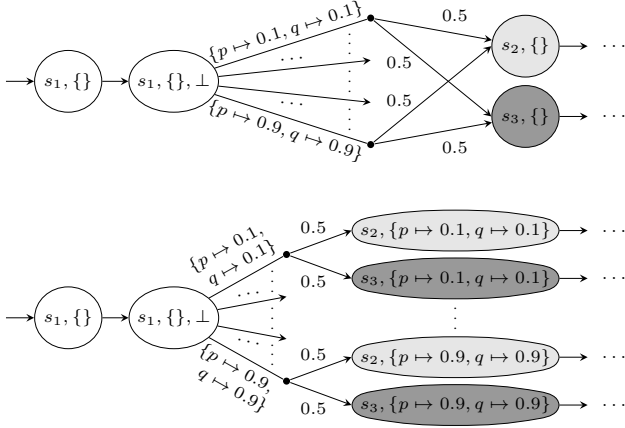
\begin{figure}[t]
    \centering
    \begin{subfigure}[b]{\columnwidth}
        \resizebox{\columnwidth}{!}{
        \begin{tikzpicture}[state/.append style={shape = ellipse}, >=stealth,
    bobbel/.style={minimum size=1mm,inner sep=0pt,fill=black,circle}]]
    \node[state, inner sep=-0.2pt] (s1) at (1,0) {\scriptsize $s_1,\{\}$};
    \node[state, inner sep=-2pt] (s1_bot) at (2.5,0) {\scriptsize $s_1,\{\},\bot$};
    \node[bobbel] (s1b_u1) at (5.3,0.9) {};
    \node[bobbel] (s1b_u2) at (5.3,-0.9) {};
    \node[state, fill=black!10, inner sep=-0.2pt] (s2) at (7.5,0.5) {\scriptsize $s_2,\{\}$};
    \node[state, fill=black!40, inner sep=-0.2pt] (s3) at (7.5,-0.5) {\scriptsize $s_3,\{\}$};
    \draw[<-] (s1.west) -- +(-0.4,0);
    \draw[->] (s2.east) -- +(0.4,0);
    \draw[->] (s3.east) -- +(0.4,0);
    \node[] at ($(s2.east) + (0.8,0)$) {\scriptsize $\dots$};
    \node[] at ($(s3.east) + (0.8,0)$) {\scriptsize $\dots$};
    \draw (s1) edge[->] (s1_bot);
    \draw (s1b_u1) edge[->] node[above right]{\scriptsize $0.5$} (s2);
    \draw (s1b_u1) edge[->] node[below left, pos = 0.4, yshift=1.5mm]{\scriptsize $0.5$} (s3.west);
    \draw (s1b_u2) edge[->] node[above left, pos = 0.4, yshift=-1.5mm]{\scriptsize $0.5$} (s2.west);
    \draw (s1b_u2) edge[->] node[below right]{\scriptsize $0.5$} (s3);
    \draw (s1_bot) edge[->] node[sloped, anchor=center, above]{\scriptsize $\dots$} (5.3,0.3);
    \draw (s1_bot) edge[->] node[sloped, anchor=center, below]{\scriptsize $\dots$} (5.3,-0.3);
    \node[] at (5,0.1) {\tiny $\vdots$};
    \node[] at (5,0.64) {\tiny $\vdots$};
    \node[] at (5,-0.44) {\tiny $\vdots$};
    \draw (s1_bot) edge node[sloped, anchor=center, above, xshift=-0.2mm, yshift=-0.6mm]{\scriptsize $\{p\mapsto 0.1,q\mapsto 0.1\}$} (s1b_u1);
    \draw (s1_bot) edge node[sloped, anchor=center, below, xshift=-0.2mm, yshift=0.6mm]{\scriptsize $\{p\mapsto 0.9,q\mapsto 0.9\}$} (s1b_u2);
\end{tikzpicture}%
        }
    \end{subfigure}\\[3mm]
    \begin{subfigure}[b]{\columnwidth}
        \resizebox{\columnwidth}{!}{
        \begin{tikzpicture}[state/.append style={shape = ellipse}, >=stealth,
    bobbel/.style={minimum size=1mm,inner sep=0pt,fill=black,circle}]]
    \node[state, inner sep=-0.2pt] (s1) at (1,0) {\scriptsize $s_1,\{\}$};
    \node[state, inner sep=-2pt] (s1_bot) at (2.5,0) {\scriptsize $s_1,\{\},\bot$};
    \node[bobbel] (s1b_u1) at (4.1,0.9) {};
    \node[bobbel] (s1b_u2) at (4.1,-0.9) {};
    \coordinate (c_s2_u1) at ($(s1b_u1) + (2.4,0.34)$);
    \coordinate (c_s3_u1) at ($(s1b_u1) + (2.4,-0.34)$);
    \coordinate (c_s2_u2) at ($(s1b_u2) + (2.4,0.34)$);
    \coordinate (c_s3_u2) at ($(s1b_u2) + (2.4,-0.34)$);
    \draw[fill=black!10] ($(c_s2_u1) + (-1.47,0)$) 
                        to [out = 85, in = 190] ($(c_s2_u1) + (-1.2,0.19)$) 
                        to [out = 10, in = 180] ($(c_s2_u1) + (0,0.29)$)
                        to [out = 0, in = 170] ($(c_s2_u1) + (1.2,0.19)$)
                        to [out = -10, in = 95] ($(c_s2_u1) + (1.47,0)$)
                        to [out = -95, in = 10] ($(c_s2_u1) + (1.2,-0.19)$)
                        to [out = 190, in = 0] ($(c_s2_u1) + (0,-0.29)$)
                        to [out = 180, in = -10] ($(c_s2_u1) + (-1.2,-0.19)$)
                        to [out = 170, in = -85] ($(c_s2_u1) + (-1.47,0)$);
    \draw[fill=black!40] ($(c_s3_u1) + (-1.47,0)$) 
                        to [out = 85, in = 190] ($(c_s3_u1) + (-1.2,0.19)$) 
                        to [out = 10, in = 180] ($(c_s3_u1) + (0,0.29)$)
                        to [out = 0, in = 170] ($(c_s3_u1) + (1.2,0.19)$)
                        to [out = -10, in = 95] ($(c_s3_u1) + (1.47,0)$)
                        to [out = -95, in = 10] ($(c_s3_u1) + (1.2,-0.19)$)
                        to [out = 190, in = 0] ($(c_s3_u1) + (0,-0.29)$)
                        to [out = 180, in = -10] ($(c_s3_u1) + (-1.2,-0.19)$)
                        to [out = 170, in = -85] ($(c_s3_u1) + (-1.47,0)$);
    \draw[fill=black!10] ($(c_s2_u2) + (-1.47,0)$) 
                        to [out = 85, in = 190] ($(c_s2_u2) + (-1.2,0.19)$) 
                        to [out = 10, in = 180] ($(c_s2_u2) + (0,0.29)$)
                        to [out = 0, in = 170] ($(c_s2_u2) + (1.2,0.19)$)
                        to [out = -10, in = 95] ($(c_s2_u2) + (1.47,0)$)
                        to [out = -95, in = 10] ($(c_s2_u2) + (1.2,-0.19)$)
                        to [out = 190, in = 0] ($(c_s2_u2) + (0,-0.29)$)
                        to [out = 180, in = -10] ($(c_s2_u2) + (-1.2,-0.19)$)
                        to [out = 170, in = -85] ($(c_s2_u2) + (-1.47,0)$);
    \draw[fill=black!40] ($(c_s3_u2) + (-1.47,0)$) 
                        to [out = 85, in = 190] ($(c_s3_u2) + (-1.2,0.19)$) 
                        to [out = 10, in = 180] ($(c_s3_u2) + (0,0.29)$)
                        to [out = 0, in = 170] ($(c_s3_u2) + (1.2,0.19)$)
                        to [out = -10, in = 95] ($(c_s3_u2) + (1.47,0)$)
                        to [out = -95, in = 10] ($(c_s3_u2) + (1.2,-0.19)$)
                        to [out = 190, in = 0] ($(c_s3_u2) + (0,-0.29)$)
                        to [out = 180, in = -10] ($(c_s3_u2) + (-1.2,-0.19)$)
                        to [out = 170, in = -85] ($(c_s3_u2) + (-1.47,0)$);
    \node[state, draw=none, inner sep=-9pt, minimum width = 29mm] (s2_u1) at (c_s2_u1) {\scriptsize $s_2,\{p\mapsto 0.1,q\mapsto 0.1\}$};
    \node[state, draw=none, inner sep=-9pt, minimum width = 29mm] (s3_u1) at (c_s3_u1) {\scriptsize $s_3,\{p\mapsto 0.1,q\mapsto 0.1\}$};
    \node[] at (6.5,0.1) {\tiny $\vdots$};
    \node[state, draw=none, inner sep=-9pt, minimum width = 29mm] (s2_u2) at (c_s2_u2) {\scriptsize $s_2,\{p\mapsto 0.9,q\mapsto 0.9\}$};
    \node[state, draw=none, inner sep=-9pt, minimum width = 29mm] (s3_u2) at (c_s3_u2) {\scriptsize $s_3,\{p\mapsto 0.9,q\mapsto 0.9\}$};  
    \draw[<-] (s1.west) -- +(-0.4,0);
    \draw[->] (s2_u1.east) -- +(0.4,0);
    \draw[->] (s3_u1.east) -- +(0.4,0);
    \draw[->] (s2_u2.east) -- +(0.4,0);
    \draw[->] (s3_u2.east) -- +(0.4,0);
    \node[] at ($(s2_u1.east) + (0.8,0)$) {\scriptsize $\dots$};
    \node[] at ($(s3_u1.east) + (0.8,0)$) {\scriptsize $\dots$};
    \node[] at ($(s2_u2.east) + (0.8,0)$) {\scriptsize $\dots$};
    \node[] at ($(s3_u2.east) + (0.8,0)$) {\scriptsize $\dots$};
    \draw (s1) edge[->] (s1_bot);
    \draw (s1b_u1) edge[->] node[above left, xshift=2mm]{\scriptsize $0.5$} (s2_u1.west);
    \draw (s1b_u1) edge[->] node[below left, xshift=2mm]{\scriptsize $0.5$} (s3_u1.west);
    \draw (s1b_u2) edge[->] node[above left, xshift=2mm]{\scriptsize $0.5$} (s2_u2.west);
    \draw (s1b_u2) edge[->] node[below left, xshift=2mm]{\scriptsize $0.5$} (s3_u2.west);
    \draw (s1_bot) edge[->] node[sloped, anchor=center, above]{\scriptsize $\dots$} (4.1,0.3);
    \draw (s1_bot) edge[->] node[sloped, anchor=center, below]{\scriptsize $\dots$} (4.1,-0.3);
    \node[] at (3.9,0.1) {\tiny $\vdots$};
    \node[] at (3.9,0.62) {\tiny $\vdots$};
    \node[] at (3.9,-0.42) {\tiny $\vdots$};
    \draw (s1_bot) edge node[sloped, anchor=center, above, xshift=-0.2mm, yshift=-0.6mm] {\scriptsize$\begin{aligned}
        \{ &p\mapsto 0.1,\\[-1mm]
        &q\mapsto 0.1\}
        \end{aligned}$} (s1b_u1);
    \draw (s1_bot) edge node[sloped, anchor=center, below, xshift=-0.2mm, yshift=0.6mm]{\scriptsize $\begin{aligned}
        \{ &p\mapsto 0.9,\\[-1mm]
        &q\mapsto 0.9\}
        \end{aligned}$} (s1b_u2);
\end{tikzpicture}%
        }
    \end{subfigure}
    \caption{First states of zero stickiness (top) and full stickiness (bottom) POSGs of the RPOMDP in  \Cref{fig:full_vs_zero_sticky_rPOMDP}.}
    \label{fig:stickiness:POSGs}
\end{figure}

\begin{figure}[t]
    \centering
    \begin{subfigure}[b]{0.9\columnwidth}
        \resizebox{\columnwidth}{!}{
        \begin{tikzpicture}[state/.append style={shape = ellipse}, >=stealth,
    bobbel/.style={minimum size=1mm,inner sep=0pt,fill=black,circle},
    mynode/.style={rectangle,fill=white,anchor=center}]]
    \node[state, inner sep=-0.2pt] (s1) at (1,0) {\scriptsize $s_1,\{\}$};
    \node[state, inner sep=-2pt] (s1_a) at (3.2,1.15) {\scriptsize $s_1,\{\},a$};
    \node[state, inner sep=-2pt] (s1_b) at (3.2,-1.15) {\scriptsize $s_1,\{\},b$};
    \node[bobbel] (s1ba_u1) at ($(s1_a) + (2.5,0.85)$) {};
    \node[bobbel] (s1ba_u2) at ($(s1_a) + (2.5,-0.85)$) {};
    \node[bobbel] (s1bb_u1) at ($(s1_b) + (2.5,0.85)$) {};
    \node[bobbel] (s1bb_u2) at ($(s1_b) + (2.5,-0.85)$) {};
    \node[rectangle, draw, minimum width=11mm, rounded corners] (rPosa) at ($(s1ba_u1) + (1.5,-0.25)$) {\scriptsize R = $300$};
    \node[rectangle, draw, minimum width=11mm, rounded corners] (rNega) at ($(s1ba_u2) + (1.5,0.25)$) {\scriptsize R = $0$};
    \node[rectangle, draw, minimum width=11mm, rounded corners] (rNegb) at ($(s1bb_u1) + (1.5,-0.25)$) {\scriptsize R = $0$};
    \node[rectangle, draw, minimum width=11mm, rounded corners] (rPosb) at ($(s1bb_u2) + (1.5,0.25)$) {\scriptsize R = $300$};
    \draw[<-] (s1.west) -- +(-0.4,0);
    \draw (s1) edge[->] node[above left, yshift=-0.5mm, xshift=1mm]{\scriptsize $a$} (s1_a);
    \draw (s1) edge[->] node[below left, yshift=0.5mm, xshift=1mm]{\scriptsize $b$} (s1_b);
    \draw (s1ba_u1) edge[->] node[above, yshift=-0.3mm, xshift=1mm]{\scriptsize $0.1$} (rPosa);
    \draw (s1ba_u1) edge[->] node[below left, pos=0.4, yshift=1mm, xshift=0.8mm]{\scriptsize $0.9$} (rNega);
    \draw (s1ba_u2) edge[->] node[above left, pos=0.4, yshift=-1mm, xshift=0.8mm]{\scriptsize $0.9$} (rPosa);
    \draw (s1ba_u2) edge[->] node[below, yshift=0.3mm, xshift=1mm]{\scriptsize $0.1$} (rNega);
    \draw (s1_a) edge[->] node[sloped, anchor=center, above]{\scriptsize $\dots$} ($(s1ba_u1) + (0,-0.55)$);
    \draw (s1_a) edge[->] node[sloped, anchor=center, below]{\scriptsize $\dots$} ($(s1ba_u2) + (0,0.55)$);
    \node[] at ($(s1_a -| s1ba_u1) + (-0.25,0.6)$) {\tiny $\vdots$};
    \node[] at ($(s1_a -| s1ba_u1) + (-0.25,0.1)$) {\tiny $\vdots$};
    \node[] at ($(s1_a -| s1ba_u1) + (-0.25,-0.4)$) {\tiny $\vdots$};
    \draw (s1_a) edge node[sloped, anchor=center, above, xshift=-0.2mm, yshift=-0.6mm]{\scriptsize $\{p\mapsto 0.1\}$} (s1ba_u1);
    \draw (s1_a) edge node[sloped, anchor=center, below, xshift=-0.2mm, yshift=0.6mm]{\scriptsize $\{p\mapsto 0.9\}$} (s1ba_u2);
    \draw (s1bb_u1) edge[->] node[above, yshift=-0.3mm, xshift=1mm]{\scriptsize $0.1$} (rNegb);
    \draw (s1bb_u1) edge[->] node[below left, pos=0.4, yshift=1mm, xshift=0.8mm]{\scriptsize $0.9$} (rPosb);
    \draw (s1bb_u2) edge[->] node[above left, pos=0.4, yshift=-1mm, xshift=0.8mm]{\scriptsize $0.9$} (rNegb);
    \draw (s1bb_u2) edge[->] node[below, yshift=0.3mm, xshift=1mm]{\scriptsize $0.1$} (rPosb);
    \draw (s1_b) edge[->] node[sloped, anchor=center, above]{\scriptsize $\dots$} ($(s1bb_u1) + (0,-0.55)$);
    \draw (s1_b) edge[->] node[sloped, anchor=center, below]{\scriptsize $\dots$} ($(s1bb_u2) + (0,0.55)$);
    \node[] at ($(s1_b -| s1bb_u1) + (-0.25,0.6)$) {\tiny $\vdots$};
    \node[] at ($(s1_b -| s1bb_u1) + (-0.25,0.1)$) {\tiny $\vdots$};
    \node[] at ($(s1_b -| s1bb_u1) + (-0.25,-0.4)$) {\tiny $\vdots$};
    \draw (s1_b) edge node[sloped, anchor=center, above, xshift=-0.2mm, yshift=-0.6mm]{\scriptsize $\{p\mapsto 0.1\}$} (s1bb_u1);
    \draw (s1_b) edge node[sloped, anchor=center, below, xshift=-0.2mm, yshift=0.6mm]{\scriptsize $\{p\mapsto 0.9\}$} (s1bb_u2);
\end{tikzpicture}%
        }
    \end{subfigure}\\[3mm]
    \begin{subfigure}[b]{0.9\columnwidth}
        \resizebox{\columnwidth}{!}{
        \begin{tikzpicture}[state/.append style={shape = ellipse}, >=stealth,
    bobbel/.style={minimum size=1mm,inner sep=0pt,fill=black,circle}]
    \node[state, inner sep=-0.2pt] (s1) at (1,0) {\scriptsize $s_1,\{\}$};
    \node[state, inner sep=-4pt] (s1_u1) at (4.2,0.85) {\scriptsize $s_1,\{\},\{p\mapsto 0.1\}$};
    \node[state, inner sep=-4pt] (s1_u2) at (4.2,-0.85) {\scriptsize $s_1,\{\},\{p\mapsto 0.9\}$};
    \node[bobbel] (s1_u1_ab) at ($(s1_u1) + (1.5,0.6)$) {};
    \node[bobbel] (s1_u1_bb) at ($(s1_u1) + (1.5,-0.6)$) {};
    \node[bobbel] (s1_u2_ab) at ($(s1_u2) + (1.5,0.6)$) {};
    \node[bobbel] (s1_u2_bb) at ($(s1_u2) + (1.5,-0.6)$) {};
    \node[rectangle, draw, minimum width=11mm, rounded corners] (rPosa) at ($(s1_u1_ab) + (1.5,-0.25)$) {\scriptsize R = $300$};
    \node[rectangle, draw, minimum width=11mm, rounded corners] (rNega) at ($(s1_u1_bb) + (1.5,0.25)$) {\scriptsize R = $0$};
    \node[rectangle, draw, minimum width=11mm, rounded corners] (rPosb) at ($(s1_u2_ab) + (1.5,-0.25)$) {\scriptsize R = $300$};
    \node[rectangle, draw, minimum width=11mm, rounded corners] (rNegb) at ($(s1_u2_bb) + (1.5,0.25)$) {\scriptsize R = $0$};
    \draw[<-] (s1.west) -- +(-0.4,0);
    \draw (s1) edge[->] node[sloped, anchor=center, above, xshift=-0.2mm, yshift=-0.6mm]{\scriptsize $\{p\mapsto 0.1\}$} (s1_u1.west);
    \draw (s1) edge[->] node[sloped, anchor=center, below, xshift=-0.2mm, yshift=0.6mm]{\scriptsize $\{p\mapsto 0.9\}$} (s1_u2.west);
    \draw (s1) edge[->] node[sloped, anchor=center, above]{\scriptsize $\dots$} (s1_u1.west |- 0,0.3);
    \draw (s1) edge[->] node[sloped, anchor=center, below]{\scriptsize $\dots$} (s1_u1.west |- 0,-0.3);
    \node[] at ($(s1_u1.west |- 1,0.6) + (-0.25,0)$) {\tiny $\vdots$};
    \node[] at ($(s1_u1.west |- 1,0.1) + (-0.25,0)$) {\tiny $\vdots$};
    \node[] at ($(s1_u1.west |- 1,-0.38) + (-0.25,0)$) {\tiny $\vdots$};
    \node[] at (s1_u1 |- 1,0.1) {\tiny $\vdots$};
    \draw (s1_u1) edge node[above left, yshift=-0.5mm, xshift=1mm]{\scriptsize $a$} (s1_u1_ab);
    \draw (s1_u1) edge node[below left, yshift=0.5mm, xshift=1mm]{\scriptsize $b$} (s1_u1_bb);
    \draw (s1_u1_ab) edge[->] node[above, yshift=-0.3mm, xshift=1mm]{\scriptsize $0.1$} (rPosa);
    \draw (s1_u1_ab) edge[->] node[below left, pos=0.4, yshift=1.5mm]{\scriptsize $0.9$} (rNega);
    \draw (s1_u1_bb) edge[->] node[above left, pos=0.4, yshift=-1.5mm]{\scriptsize $0.9$} (rPosa);
    \draw (s1_u1_bb) edge[->] node[below, yshift=0.3mm, xshift=1mm]{\scriptsize $0.1$} (rNega);
    \draw (s1_u2) edge node[above left, yshift=-0.5mm, xshift=1mm]{\scriptsize $a$} (s1_u2_ab);
    \draw (s1_u2) edge node[below left, yshift=0.5mm, xshift=1mm]{\scriptsize $b$} (s1_u2_bb);
    \draw (s1_u2_ab) edge[->] node[above, yshift=-0.3mm, xshift=1mm]{\scriptsize $0.9$} (rPosb);
    \draw (s1_u2_ab) edge[->] node[below left, pos=0.4, yshift=1.5mm]{\scriptsize $0.1$} (rNegb);
    \draw (s1_u2_bb) edge[->] node[above left, pos=0.4, yshift=-1.5mm]{\scriptsize $0.1$} (rPosb);
    \draw (s1_u2_bb) edge[->] node[below, yshift=0.3mm, xshift=1mm]{\scriptsize $0.9$} (rNegb);
\end{tikzpicture}%
        }
    \end{subfigure}
    \caption{Agent first (top) and nature first (bottom) POSGs of the RPOMDP in \Cref{fig:agent_vs_nature_first_RPOMDP_small}.}
    \label{fig:order:POSGs}
\end{figure}
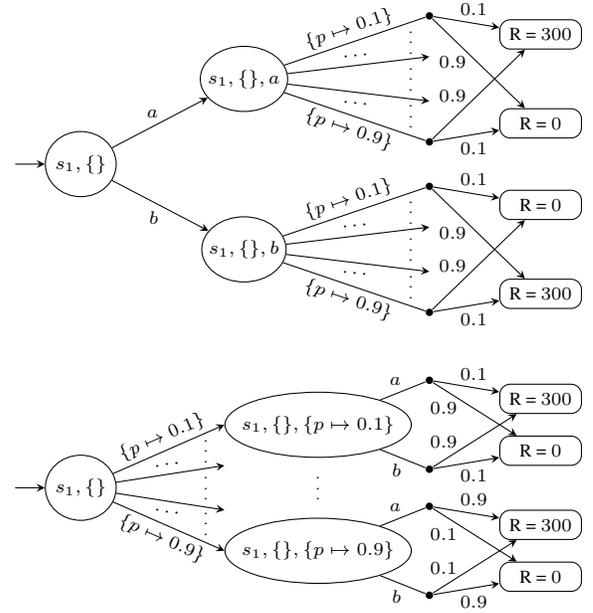

\section{Related Work}\label{sec:related:work}
In this section, we first classify the existing RPOMDP literature into the different assumptions discussed in this paper, and then we provide a general overview of the related work.

\subsection{Classification of RPOMDP Methods}
\Cref{tab:classification} provides an overview of RPOMDP solution methods within our game semantics, specifically classifying the type of stickiness and the order of play for these methods.

Note that in the table, full stickiness and nature first order of play are always combined, as are zero stickiness and agent first order of play. 
This can be explained by those combinations being the most intuitive extensions of static and dynamic uncertainty to the partially observable setting.
We also remark that~\cite{DBLP:journals/jet/Saghafian18} defines their problem by one fixed but unknown model that is chosen non-deterministically from the start, implying full stickiness and nature first, but their algorithmic solution method operates with zero stickiness and agent first semantics.

\begin{table}[t]
    \centering
    \resizebox{\columnwidth}{!}{
    \begin{tabular}{lll}
    \toprule
        Reference & Stickiness & Order of play\\
        \midrule
        \cite{DBLP:conf/icml/Osogami15} & Zero  & Agent first\\
        \cite{DBLP:conf/cdc/ChamieM18} & Zero  & Agent first\\
        \cite{DBLP:journals/jet/Saghafian18} & Zero  & Agent first\\
        \cite{DBLP:journals/siamjo/NakaoJS21} & Zero  & Agent first\\
        \cite{DBLP:conf/ijcai/Suilen0CT20} & Full  & Nature first\\
        \cite{DBLP:conf/aaai/Cubuktepe0JMST21} & Full  & Nature first\\
        \bottomrule
    \end{tabular}
    }
    \caption{Classification of existing RPOMDP literature.}
    \label{tab:classification}
\end{table}

\subsection{Further Related Work}

The connection between rectangular RMDPs and stochastic games is well-established, see for instance~\cite{DBLP:journals/mor/Iyengar05,DBLP:conf/nips/XuM10,DBLP:journals/mor/WiesemannKR13}.
Yet, a key difference is that in RMDPs, nature is typically assumed to play stationary, as already mentioned in \Cref{subsec:RMDP}.
This assumption is common because it is either sufficient for nature to play stationary or there are computational reasons.
In SGs, on the other hand, history-based policies, as we also use, are common for both agent and nature.
For a more elaborate discussion, see~\cite[Section 2.2]{DBLP:journals/corr/abs-2312-03618}. 
Recent work explores the connection between RMDPs and SGs in more depth~\cite{chatterjee2023solving}.

For RPOMDPs, the connection with POSGs has also been alluded to before.
\cite{DBLP:conf/icml/Osogami15} briefly mention zero-sum games in their proof of convexity of the value function.
\cite{DBLP:journals/jet/Saghafian18} draws a link to nonzero-sum games, as they assume non-adversarial behavior for nature.
\cite{Rasouli2018RobustPO} states a correspondence between the perfect Bayesian equilibrium in a zero-sum and the optimal value and policies in their RPOMDPs.
Finally, \cite{DBLP:journals/siamjo/NakaoJS21} reasons about their RPOMDPs via games as well, but they assume the agent can observe nature's earlier choices.

\section{Conclusion}

This paper provides a semantic study of RPOMDPs, \ie, the extension of RMDPs to the partially observable setting. 
We demonstrate that semantic choices that are irrelevant on RMDPs are important in RPOMDPs. We concretely provide semantics expressed as partially observable stochastic games and use this to derive novel results about the existence of Nash equilibria. 
Finally, we categorize algorithms from the literature based on our semantic framework. 
For future work, we aim to adapt solution methods for POSGs, like~\cite{Springer:HSVI}, to solve RPOMDPs.
We also plan to investigate the existence of a Nash equilibrium in the infinite horizon case.

\clearpage
\newpage
\section*{Acknowledgements}
We would like to thank the anonymous reviewers for their valuable feedback.
This research has been partially funded by the NWO grant OCENW.KLEIN.187, the NWO Veni grant 222.147 (ProMiSe), and the ERC Starting Grant 101077178 (DEUCE).

\bibliographystyle{named}
\bibliography{biblio}

\clearpage
\newpage
\onecolumn
\appendix
{
\parindent0pt
\setlength{\parskip}{3pt}
\section{Appendix Overview \& Additional Preliminaries}\label{app:notation_and_prelim}
These appendices contain additional results and proofs for the claims made in the main text.
In particular, the appendices are structured as follows:
\begin{itemize}
    \item \Cref{app:notation_and_prelim} contains an overview of key notation used and additional preliminaries (mostly around paths and histories) used in subsequent appendices.
    \item  \Cref{app:det.obs} show how an arbitrary RPOMDP, with uncertainty in both transitions and observations, can be transformed into an RPOMDP with deterministic state-based observations as we use throughout the main text inline with \Cref{def:rpomdp}.
    \item \Cref{app:observation:stickiness} defines observation-based stickiness and shows its workings in an example.
    \item \Cref{app:assumptions_matter} details the result from \Cref{thm:uncertainty:matters}. 
    That is, it shows that different forms of stickiness or a different order of play can lead to different optimal values in RPOMDPs.
    \item  \Cref{app:nature_first} discussed the required changes to the constructions from the main paper when considering RPOMDPs with nature first semantics.
    \item \Cref{app:value:function:proofs} contains the proof of \Cref{thm:equivalent:values} and the propositions it builds on.
    \item \Cref{app:nashEquilibrium} contains the proof of \Cref{thm:exist:nash}.
\end{itemize}

\subsection{Glossary of Key Notation}

\begin{table*}[ht!]
    \centering
    \begin{tabular}{ll}
    \toprule
        Notation & Description  \\
        \midrule
        $K \in \NN$ & Finite horizon bound  \\
        $\gamma$ & Discount factor \\
        $\phi \in \{\text{fh,dih}\}$ & Objective, either finite-horizon (fh) or discounted infinite horizon (dih) 
        \\
        $v \in U$ & Variable $v$ in the set of variables $U$ \\
        $\bm{U} \subseteq (U \to \RR)$ & Uncertainty set, defined as a set of admissible variable assignments $u \in \bm{U}$ \\
        $\bm{T}(u) \colon S \times A \to \dist{S}$ & The uncertain transition function $\bm{T}$ instantiated by variable assignment $u$\\
        $\bm{U}^{\pset}$ & Set of partial variable assignments $U \pto \RR$ \\
        $u^\perp$ & The completely undefined variable assignment \\
        $\bm{U}^\agrees(u^\pset)$ & Set of variable assignments that agree with partial assignment $u^\pset$ \\  
        $|\mypath|$ & Horizon length of a path $\mypath$ \\
        $|h|$ & Horizon length of a history $h$ \\
        $\concat$ & Concatenation of two tuples or paths \\
        $\mypath_{0:k}$ & Prefix of horizon length $k$ of path $\mypath$\\
        $H_t$ & The subset of histories of length $t$ \\
        $H_{0:t}$ & The subset of histories of length $0$ to $t$ \\
        $\pi \in \Pi$ & Stochastic agent policies \\
        $\theta \in \Theta$ & Stochastic nature policies \\
        $\pi^\detr \in \Pi^\detr$ & Deterministic agent policies \\
        $\theta^\detr \in \Theta^\detr$ & Deterministic nature policies \\
        $\pi^\mix \in \Pi^\mix$ & Mixed agent policies \\
        $\theta^\mix \in \Theta^\mix$ & Mixed nature policies \\
        $\pi_t, \theta_t$ & A policy defined on histories of length $t$ (can be part of a larger policy $\pi/\theta$) \\
        $\pi_{0:t}, \theta_{0:t}$ & A policy defined on histories of length $0$ to $t$ (can be part of a larger policy $\pi/\theta$)\\
        $\Pi_t, \Theta_t$ & The set of policies defined on histories of length $t$\\
        $\Pi_{0:t}, \Theta_{0:t}$ & The set of policies defined on histories of length $0$ to $t$\\
        $\agent$, $\nature$ & Agent and nature \\
        $z_\priv$, $z_\publ$ & Private and public observations \\ 
        $\vec{x} = \tup{x_0,x_1,\dots,x_{n-1}} \in \RR^n$ & Vector \\
        \bottomrule
    \end{tabular}
    \caption{Glossary of key notation.}
    \label{tab:notation}
\end{table*}

\clearpage
\newpage
\subsection{Additional Preliminaries}\label{app:appendix_prelim}
This appendix contains additional definitions and concepts used throughout the rest of the appendices.

\paragraph{Rectangularity.}
Rectangularity concerns dependencies between transitions in the model.
If all transitions originating from different states are independent, we call the model and the uncertainty set $s$-rectangular.
An $s$-rectangular uncertainty set $\bm{U}$ can be rewritten as the Cartesian product of smaller, $s$ related uncertainty sets:
\[
\bm{U} = \bigtimes_{s\in S} \bm{U}^s.
\]
Similarly, if all transitions following different actions are independent, we call the model and the uncertainty set $a$-rectangular~\cite{DBLP:journals/mor/WiesemannKR13}.
An $a$-rectangular uncertainty set $\bm{U}$ can be rewritten as the Cartesian product of smaller, $a$ related uncertainty sets:
\[\bm{U} = \bigtimes_{a\in A} \bm{U}^a.\]
Combining these two, so if all transitions originating from different states and following from different actions are independent, we call the model and the uncertainty set $s,a$-rectangular~\cite{DBLP:journals/mor/Iyengar05}.
An $s,a$-rectangular uncertainty set $\bm{U}$ can be rewritten as the Cartesian product of smaller, $s$ and $a$ related uncertainty sets:
\[\bm{U} = \bigtimes_{\substack{s\in S,\\a\in A}} \bm{U}^{s,a}.\]
If we do not have any known independencies, we call the model and the uncertainty set non-rectangular.

\paragraph{Belief.}
A belief is a distribution over the set of states $\dist{S}$ representing the probability of being in a state given the \aoh{}.
We compute a belief given an \aoh{} by repeatedly applying the belief update \cite{DBLP:journals/ai/KaelblingLC98}, starting from the initial belief based on the initial state.
Note that we adjusted the belief update to work with three observation functions.

\begin{definition}[Belief update]
    Given a belief $b$, an agent action $a$, a nature action $u$, and observations $z^\agent_\priv,z^\nature_\priv,z_\publ$, we compute the successor belief $b'$ for each state $s' \in S$ as follows:
    \begin{align*}
    b'(s') &= \pr{s'}{b,a,u,z^\agent_\priv,z^\nature_\priv,z_\publ}\\
    &= \frac{O^\agent_\priv(s',z^\agent_\priv)O^\nature_\priv(s',z^\nature_\priv)O_\publ(s',z_\publ)\sum_{s\in S}b(s)\cdot \bm{T}(u)(s,a,s')}{\sum_{s'\in S}O^\agent_\priv(s',z^\agent_\priv)O^\nature_\priv(s',z^\nature_\priv)O_\publ(s',z_\publ)\sum_{s\in S}b(s)\cdot \bm{T}(u)(s,a,s')}.\\
    \end{align*}
\end{definition}
We use the belief update in our proof of the existence of a Nash equilibrium in \Cref{app:nashEquilibrium}.

\paragraph{Dirac distribution.}
A distribution is called \emph{Dirac} if it assigns probability one to precisely one element. %
We use Dirac distributions in our proof of the existence of a Nash equilibrium in \Cref{app:nashEquilibrium}.

\paragraph{Graph-preserving}
Given an RPOMDP $M = \tup{S, A, \bm{T}, R, \Zagent, \Znature, \Zpub, O^\agent_\priv, O^\nature_\priv, O_\publ,\sticky,\player}$ with uncertainty set $\bm{U}$, the uncertainty set is called graph-preserving if all variable assignments preserve the underlying structure of the RPOMDP.
In other words, if a transition is possible given one variable assignment, it is possible given all variable assignment:
\[
    \forall s,s' \in S, \forall a \in A. \left( \exists u\in \bm{U}.\; \bm{T}(u)(s,a,s') = 0 \implies \forall u\in \bm{U}.\; \bm{T}(u)(s,a,s') = 0\right).
\]

\paragraph{Convex set}
A subset $X$ of a Euclidean space is convex if given any two elements in the set the line drawn between those two elements is entirely contained in the set. Given 2 elements $x_0, x_1 \in X$ and scalar $\alpha \in [0,1]$ we have that:
\[
    \alpha x_0 + (1-\alpha) x_1 \in X.
\]

As a result, a convex set has the property that any convex combination of its elements is again contained in that set. Given $k$ elements $x_0, x_1, \dots, x_{k-1} \in X$ and $k$ non-negative scalars $\lambda_0, \lambda_1, \dots, \lambda_{k-1} \in [0,1]$ such that $\sum_{i=0}^{k-1} \lambda_i = 1$, we have that:
\[
    \sum_{i=0}^k \lambda_i x_i \in X.
\]

\paragraph{Joint histories.}
The \emph{joint history} combines the agent and nature histories in a single sequence:
\[H^M\subseteq (\Zagent \times \Znature \times \Zpub \times A \times \bm{U} )^* \times \Zagent \times \Znature \times \Zpub,\]
\[H^G\subseteq (\mathcal{Z}^\agent \times \mathcal{Z}^\nature \times \mathcal{A}^\agent \times \mathcal{Z}^\agent \times \mathcal{Z}^\nature \times \mathcal{A}^\nature)^* \times \mathcal{Z}^\agent \times \mathcal{Z}^\nature.\]
Neither player can observe the joint history.
Given a joint history $h \in H^M$ of $h\in H^G$, we use the superscripts $\agent$ and $\nature$ to indicate the agent and nature observable parts of the history $h$.
So we get $h^\agent \in H^{\agent,M}$ (or $\in H^{\agent,G}$) and $h^\nature \in H^{\nature,M}$ (or $\in H^{\nature,G}$).
We use joint histories in \Cref{app:value:function:proofs,app:nashEquilibrium}.

\paragraph{Paths to histories.}
The following six functions map paths to histories in the RPOMDP and POSGs.
The sets of histories in the RPOMDP and POSGs are constructed by applying these mappings to the sets of paths.
$\Paths^\suff$ indicates the set of all path segments, see \Cref{app:value:function:proofs}.
\begin{definition}[Paths to joint histories in RPOMDPs]\label{app:def:paths_to_joint_histories_RPOMDP}\strut\\
    Let $O^{M,\suff}: \Paths^{M,\suff} \to H^{M,\suff}$ defined by:
    \begin{align*}
    O^{M,\suff}(\tup{s}) &= \tup{O^\agent_\priv(s),O^\nature_\priv(s),O_\publ(s)}.\\
    O^{M,\suff}(\tup{s,a,u}) &= \tup{O^\agent_\priv(s),O^\nature_\priv(s),O_\publ(s),a,u}.\\
    O^{M,\suff}(\tup{s,a,u}\concat{\mypath^M}') &= O^{M,\suff}(\tup{s,a,u})\concat O^{M,\suff}({\mypath^M}').
    \intertext{Let $O^{M}: \Paths^{M} \to H^{M}$ defined by:}
    O^M(\tup{s}) &= \tup{O^\agent_\priv(s),O^\nature_\priv(s),O_\publ(s)}.\\
    O^M(\tup{s,a,u}) &= \tup{O^\agent_\priv(s),O^\nature_\priv(s),O_\publ(s),a,u}.\\
    O^M(\tup{s,a,u}\concat{\mypath^M}') &= O^M(\tup{s,a,u})\concat O^{M,\suff}({\mypath^M}').
    \end{align*}
\end{definition}

\begin{definition}[Paths to agent histories in RPOMDPs]\label{app:def:paths_to_agent_histories_RPOMDP}\strut\\
    Let $O^{\agent, M,\suff}: \Paths^{M,\suff} \to H^{\agent, M,\suff}$ defined by:
    \begin{align*} 
    O^{\agent, M,\suff}(\tup{s}) &= \tup{O^\agent_\priv(s),O_\publ(s)}.\\
    O^{\agent, M,\suff}(\tup{s,a,u}) &= \tup{O^\agent_\priv(s),O_\publ(s),a}.\\
    O^{\agent, M,\suff}(\tup{s,a,u}\concat{\mypath^M}') &= O^{\agent, M,\suff}(\tup{s,a,u})\concat O^{\agent, M,\suff}({\mypath^M}').
    \intertext{Let $O^{\agent, M}: \Paths^{M} \to H^{\agent, M}$ defined by:}
    O^{\agent, M}(\tup{s}) &= \tup{O^\agent_\priv(s),O^\nature_\priv(s),O_\publ(s)}.\\
    O^{\agent, M}(\tup{s,a,u}) &= \tup{O^\agent_\priv(s),O_\publ(s),a}.\\
    O^{\agent, M}(\tup{s,a,u}\concat{\mypath^M}') &= O^{\agent, M}(\tup{s,a,u})\concat O^{\agent, M,\suff}({\mypath^M}').
    \end{align*}
\end{definition}

\begin{definition}[Paths to nature histories in RPOMDPs]\label{app:def:paths_to_nature_histories_RPOMDP}\strut\\
    Let $O^{\nature, M,\suff}: \Paths^{M,\suff} \to H^{\nature, M,\suff}$ defined by:
    \begin{align*}
    O^{\nature, M,\suff}(\tup{s}) &= \tup{O^\nature_\priv(s),O_\publ(s)}.\\
    O^{\nature, M,\suff}(\tup{s,a,u}) &= \tup{O^\nature_\priv(s),O_\publ(s),a,u}.\\
    O^{\nature, M,\suff}(\tup{s,a,u}\concat{\mypath^M}') &= O^{\nature, M,\suff}(\tup{s,a,u})\concat O^{M,\suff}({\mypath^M}').
    \intertext{Let $O^{\nature, M}: \Paths^{M} \to H^{\nature, M}$ defined by:}
    O^{\nature, M}(\tup{s}) &= \tup{O^\nature_\priv(s),O_\publ(s)}.\\
    O^{\nature, M}(\tup{s,a,u}) &= \tup{O^\nature_\priv(s),O_\publ(s),a,u}.\\
    O^{\nature, M}(\tup{s,a,u}\concat{\mypath^M}') &= O^{\nature, M}(\tup{s,a,u})\concat O^{\nature, M,\suff}({\mypath^M}').
    \end{align*}
\end{definition}

\begin{definition}[Paths to joint histories in POSGs]\label{app:def:paths_to_joint_histories_POSG}\strut\\
    Similarly, let $O^{G,\suff}: \Paths^{G,\suff} \to H^{G,\suff}$ defined by:
    \begin{align*}
    O^{G,\suff}(\tup{\tup{s,u^\pset}}) &= \tup{\tup{O^\agent_\priv(s),O_\publ(s)},\tup{O^\nature_\priv(s),O_\publ(s),\bot}}.\\
    O^{G,\suff}(\tup{\tup{s,u^\pset},a,\tup{s,u^\pset,a},u}) &= \tup{\tup{O^\agent_\priv(s),O_\publ(s)},\tup{O^\nature_\priv(s),O_\publ(s),\bot},a,\tup{O^\agent_\priv(s),O_\publ(s)},\tup{O^\nature_\priv(s),O_\publ(s),a},u}.\\
    O^{G,\suff}(\tup{\tup{s,u^\pset},a,\tup{s,u^\pset,a},u}\concat{\mypath^G}') &= O^{G,\suff}(\tup{\tup{s,u^\pset},a,\tup{s,u^\pset,a},u})\concat O^{G,\suff}({\mypath^G}').
    \intertext{Let $O^{G}: \Paths^{G} \to H^{G}$ defined by:}
    O^{G}(\tup{\tup{s,u^\pset}}) &= \tup{\tup{O^\agent_\priv(s),O_\publ(s)},\tup{O^\nature_\priv(s),O_\publ(s),\bot}}.\\
    O^{G}(\tup{\tup{s,u^\pset},a,\tup{s,u^\pset,a},u}) &= \tup{\tup{O^\agent_\priv(s),O_\publ(s)},\tup{O^\nature_\priv(s),O_\publ(s),\bot},a,\tup{O^\agent_\priv(s),O_\publ(s)},\tup{O^\nature_\priv(s),O_\publ(s),a},u}.\\
    O^{G}(\tup{\tup{s,u^\pset},a,\tup{s,u^\pset,a},u}\concat{\mypath^G}') &= O^{G}(\tup{\tup{s,u^\pset},a,\tup{s,u^\pset,a},u})\concat O^{G,\suff}({\mypath^G}').
    \end{align*}
\end{definition}

\begin{definition}[Paths to agent histories in POSGs]\label{app:def:paths_to_agent_histories_POSG}\strut\\
    Similarly, let $O^{\agent, G,\suff}: \Paths^{G,\suff} \to H^{\agent, G,\suff}$ defined by:
    \begin{align*}
    O^{\agent, G,\suff}(\tup{\tup{s,u^\pset}}) &= \tup{\tup{O^\agent_\priv(s),O_\publ(s)}}.\\
    O^{\agent, G,\suff}(\tup{\tup{s,u^\pset},a,\tup{s,u^\pset,a},u}) &= \tup{\tup{O^\agent_\priv(s),O_\publ(s)},a,\tup{O^\agent_\priv(s),O_\publ(s)}}.\\
    O^{\agent, G,\suff}(\tup{\tup{s,u^\pset},a,\tup{s,u^\pset,a},u}\concat{\mypath^G}') &= O^{\agent, G,\suff}(\tup{\tup{s,u^\pset},a,\tup{s,u^\pset,a},u})\concat O^{\agent, G,\suff}({\mypath^G}').
    \intertext{Let $O^{\agent, G}: \Paths^{G} \to H^{\agent, G}$ defined by:}
    O^{\agent, G}(\tup{\tup{s,u^\pset}}) &= \tup{\tup{O^\agent_\priv(s),O_\publ(s)}}.\\
    O^{\agent, G}(\tup{\tup{s,u^\pset},a,\tup{s,u^\pset,a},u}) &= \tup{\tup{O^\agent_\priv(s),O_\publ(s)},a,\tup{O^\agent_\priv(s),O_\publ(s)}}.\\
    O^{\agent, G}(\tup{\tup{s,u^\pset},a,\tup{s,u^\pset,a},u}\concat{\mypath^G}') &= O^{\agent, G}(\tup{\tup{s,u^\pset},a,\tup{s,u^\pset,a},u})\concat O^{\agent, G,\suff}({\mypath^G}').
    \end{align*}
\end{definition}

\begin{definition}[Paths to nature histories in POSGs]\label{app:def:paths_to_nature_histories_POSG}\strut\\
    Similarly, let $O^{\nature, G,\suff}: \Paths^{G,\suff} \to H^{\nature, G,\suff}$ defined by:
    \begin{align*}
    O^{\nature, G,\suff}(\tup{\tup{s,u^\pset}}) &= \tup{\tup{O^\nature_\priv(s),O_\publ(s),\bot}}.\\
    O^{\nature, G,\suff}(\tup{\tup{s,u^\pset},a,\tup{s,u^\pset,a},u}) &= \tup{\tup{O^\nature_\priv(s),O_\publ(s),\bot},\tup{O^\nature_\priv(s),O_\publ(s),a},u}.\\
    O^{\nature, G,\suff}(\tup{\tup{s,u^\pset},a,\tup{s,u^\pset,a},u}\concat{\mypath^G}') &= O^{\nature, G,\suff}(\tup{\tup{s,u^\pset},a,\tup{s,u^\pset,a},u})\concat O^{\nature, G,\suff}({\mypath^G}').
    \intertext{Let $O^{\nature, G}: \Paths^{G} \to H^{\nature, G}$ defined by:}
    O^{\nature, G}(\tup{\tup{s,u^\pset}}) &= \tup{\tup{O^\nature_\priv(s),O_\publ(s),\bot}}.\\
    O^{\nature, G}(\tup{\tup{s,u^\pset},a,\tup{s,u^\pset,a},u}) &= \tup{\tup{O^\nature_\priv(s),O_\publ(s),\bot},\tup{O^\nature_\priv(s),O_\publ(s),a},u}.\\
    O^{\nature, G}(\tup{\tup{s,u^\pset},a,\tup{s,u^\pset,a},u}\concat{\mypath^G}') &= O^{\nature, G}(\tup{\tup{s,u^\pset},a,\tup{s,u^\pset,a},u})\concat O^{\nature, G,\suff}({\mypath^G}').
    \end{align*}
\end{definition}

\paragraph{Relevant histories.}\label{app:relevant_histories}
Since our nature policies are restricted to finite probability distributions, we can generate a finite subset of all joint histories that possibly have a non-zero probability at time $t$ given a nature policy $\theta$ up to time $t$.
For simplicity, we use the RPOMDP histories.
In \Cref{app:value:function:proofs} we show that reasoning via POSG histories is equivalent.
\begin{definition}[Relevant joint history]
    Given a deterministic policy $\theta^\detr \in \Theta^\detr$, $\rel\colon \Theta^\detr \to \cP(H^M)$ gives the set of joint histories which the deterministic policy can reach.
    \[
        \rel(\theta^\detr) = \{O^M(\tup{s_I})\}\cup\{h \concat \tup{a,u,z^\agent_\priv,z^\nature_\priv,z_\publ} \in H^M \mid \theta^\detr(h^\nature, a) = u \land h \in \rel(\theta^\detr)\}.
    \]
    Where $h^\nature$ is the nature observable part of the joint history $h$.
    
    Given a mixed policy $\theta^\mix \in \Theta^\mix$, $\rel\colon \Theta^\mix \to \cP(H^M)$ gives the set of joint histories which the mixed policy can reach.
    This comes down to the histories that are relevant to one of the deterministic policies the mixed policy randomizes over.
    \[
        \rel(\theta^\mix) = \{h \in H^M \mid \exists \theta^\detr \in \Theta^\detr.\, \theta^\mix(\theta^\detr) > 0 \land h \in \rel(\theta^\detr)\}.
    \]

    Given a stochastic policy $\theta \in \Theta$, $\rel\colon \Theta \to \cP(H^M)$ gives the set of joint histories that the stochastic policy can reach.
    \begin{align*}
        \rel(\theta) &= \{O^M(\tup{s_I})\}\cup \{h\concat\tup{a,u,z^\agent_\priv,z^\nature_\priv,z_\publ} \in H^M \mid \theta(h^\nature,a)(u) > 0 \land h^\nature \in \rel(\theta)\}.
    \end{align*}
    Where $h^\nature$ is the nature observable part of the joint history $h$.
\end{definition}

This construction generalizes to relevant nature histories, indicated by $\rel^\nature$.
We use the sets of relevant histories in our proof of the existence of a Nash equilibrium in \Cref{app:nashEquilibrium}.

\paragraph{Policy types}
As introduced in \Cref{sec:preliminaries}, stochastic policies map histories to (finite) distributions over actions.
Mixed policies, on the other hand, are (finite) distributions over deterministic policies, which in turn map histories to actions deterministically.
For convenience, we repeat all types of policies we consider below.

For RPOMDPs:
\begin{equation*}
\begin{aligned}
    \text{Stochastic:}\\
    \text{Deterministic:}\\
    \text{Mixed:}
\end{aligned}
\qquad
\begin{aligned}[c]
    \pi \colon&H^{\agent,M} \to \dist{A},\\
    \pi^\detr \colon&H^{\agent,M} \to A,\\
    \pi^\mix \in&\;\dist{H^{\agent,M} \to A},
\end{aligned}
\qquad\qquad
\begin{aligned}[c]
    \theta \colon&H^{\nature,M} \times A \to \dist{\bm{U}},\\
    \theta^\detr \colon&H^{\nature,M} \times A \to \bm{U},\\
    \theta^\mix \in&\;\dist{H^{\nature,M} \times A \to \bm{U}}.
\end{aligned}
\end{equation*}
And for POSGs:
\begin{equation*}
\begin{aligned}
    \text{Stochastic:}\\
    \text{Deterministic:}\\
    \text{Mixed:}
\end{aligned}
\qquad
\begin{aligned}[c]
    \pi \colon&H^{\agent,G} \to \dist{\mathcal{A}^\agent},\\
    \pi^\detr \colon&H^{\agent,G} \to \mathcal{A}^\agent,\\
    \pi^\mix \in&\;\dist{H^{\agent,G} \to \mathcal{A}^\agent},
\end{aligned}
\qquad\qquad
\begin{aligned}[c]
    \theta \colon&H^{\nature,G} \times \mathcal{Z^\nature} \to \dist{\mathcal{A}^\nature},\\
    \theta^\detr \colon&H^{\nature,G} \times \mathcal{Z^\nature} \to \mathcal{A}^\nature,\\
    \theta^\mix \in&\;\dist{H^{\nature,G} \times \mathcal{Z^\nature} \to \mathcal{A}^\nature}.
\end{aligned}
\end{equation*}

We write $\Pi^M, \Pi^G, \Pi^{\detr,M}, \Pi^{\detr,G}, \Pi^{\mix,M},$ and $\Pi^{\mix,G}$ for the stochastic, deterministic, and mixed agent policies in RPOMDP and POSG models, respectively.
Similarly, we write $\Theta^M, \Theta^G, \Theta^{\detr,M}, \Theta^{\detr,G}, \Theta^{\mix,M},$ and $\Theta^{\mix,G}$ for the stochastic, deterministic, and mixed nature policies in RPOMDP and POSG models, respectively.
Note that we often omit the model indication superscript when it is clear from context in which type of model we operate or the results are equivalent.

The set of deterministic policies can be viewed as a subset of both the set of stochastic and the set of mixed policies using only Dirac distributions.
The value function of a mixed policy is computed as follows:
\begin{align*}
    V^{\pi^\mix,\theta^\mix}_\phi &= \sum_{\pi^\detr\in \Pi^\detr} \Bigl\{ \pi^\mix(\pi^\detr) \cdot \sum_{\theta^\detr\in \Theta^\detr} \theta^\mix(\theta^\detr) \cdot V^{\pi^\detr,\theta^\detr}_\phi\Bigr\}\\
    &= \sum_{\pi^\detr\in \Pi^\detr} \sum_{\theta^\detr\in \Theta^\detr} \Bigl\{ \pi^\mix(\pi^\detr) \cdot  \theta^\mix(\theta^\detr) \cdot V^{\pi^\detr,\theta^\detr}_\phi\Bigr\}.
\end{align*}

\subsection{Nature first}
Below, we define the same paths to histories mapping for the POSGs of nature first RPOMDPs.
The changes to the paths and histories are discussed in \Cref{app:nature_first}.
\begin{definition}[Paths to joint histories in POSGs]\strut\\
    Similarly, let $O^{G,\suff}: \Paths^{G,\suff} \to H^{G,\suff}$ defined by:
    \begin{align*}
    O^{G,\suff}(\tup{\tup{s,u^\pset},a'}) &= \tup{\tup{O^\nature_\priv(s),O_\publ(s),a'},\tup{O^\agent_\priv(s),O_\publ(s)}}.\\
    O^{G,\suff}(\tup{\tup{s,u^\pset,a'},u,\tup{s,u^\pset,u},a}) &= \tup{\tup{O^\nature_\priv(s),O_\publ(s),a'},\tup{O^\agent_\priv(s),O_\publ(s)},u,\tup{O^\nature_\priv(s),O_\publ(s),\bot},\tup{O^\agent_\priv(s),O_\publ(s)},a}.\\
    O^{G,\suff}(\tup{\tup{s,u^\pset,a'},u,\tup{s,u^\pset,u},a}\concat{\mypath^G}') &= O^{G,\suff}(\tup{\tup{s,u^\pset,a'},u,\tup{s,u^\pset,u},a})\concat O^{G,\suff}({\mypath^G}').
    \intertext{Let $O^{G}: \Paths^{G} \to H^{G}$ defined by:}
    O^{G}(\tup{\tup{s,u^\pset},\bot}) &= \tup{\tup{O^\nature_\priv(s),O_\publ(s),\bot},\tup{O^\agent_\priv(s),O_\publ(s)}}.\\
    O^{G}(\tup{\tup{s,u^\pset,\bot},u,\tup{s,u^\pset,u},a}) &= \tup{\tup{O^\nature_\priv(s),O_\publ(s),\bot},\tup{O^\agent_\priv(s),O_\publ(s)},u,\tup{O^\nature_\priv(s),O_\publ(s),\bot},\tup{O^\agent_\priv(s),O_\publ(s)},a}.\\
    O^{G}(\tup{\tup{s,u^\pset,a'},u,\tup{s,u^\pset,u},a}\concat{\mypath^G}') &= O^{G}(\tup{\tup{s,u^\pset,a'},u,\tup{s,u^\pset,u},a})\concat O^{G,\suff}({\mypath^G}').
    \end{align*}
\end{definition}

\begin{definition}[Paths to agent histories in POSGs]\strut\\
    Similarly, let $O^{\agent, G,\suff}: \Paths^{G,\suff} \to H^{\agent, G,\suff}$ defined by:
    \begin{align*}
    O^{G,\suff}(\tup{\tup{s,u^\pset},a'}) &= \tup{\tup{O^\agent_\priv(s),O_\publ(s)}}.\\
    O^{G,\suff}(\tup{\tup{s,u^\pset,a'},u,\tup{s,u^\pset,u},a}) &= \tup{\tup{O^\agent_\priv(s),O_\publ(s)},\tup{O^\agent_\priv(s),O_\publ(s)},a}.\\
    O^{G,\suff}(\tup{\tup{s,u^\pset,a'},u,\tup{s,u^\pset,u},a}\concat{\mypath^G}') &= O^{G,\suff}(\tup{\tup{s,u^\pset,a'},u,\tup{s,u^\pset,u},a})\concat O^{G,\suff}({\mypath^G}').
    \intertext{Let $O^{\agent, G}: \Paths^{G} \to H^{\agent, G}$ defined by:}
    O^{G}(\tup{\tup{s,u^\pset},\bot}) &= \tup{\tup{O^\agent_\priv(s),O_\publ(s)}}.\\
    O^{G}(\tup{\tup{s,u^\pset,\bot},u,\tup{s,u^\pset,u},a}) &= \tup{\tup{O^\agent_\priv(s),O_\publ(s)},\tup{O^\agent_\priv(s),O_\publ(s)},a}.\\
    O^{G}(\tup{\tup{s,u^\pset,a'},u,\tup{s,u^\pset,u},a}\concat{\mypath^G}') &= O^{G}(\tup{\tup{s,u^\pset,a'},u,\tup{s,u^\pset,u},a})\concat O^{G,\suff}({\mypath^G}').
    \end{align*}
\end{definition}

\begin{definition}[Paths to nature histories in POSGs]\strut\\
    Similarly, let $O^{\nature, G,\suff}: \Paths^{G,\suff} \to H^{\nature, G,\suff}$ defined by:
    \begin{align*}
    O^{G,\suff}(\tup{\tup{s,u^\pset},a'}) &= \tup{\tup{O^\nature_\priv(s),O_\publ(s),a'}}.\\
    O^{G,\suff}(\tup{\tup{s,u^\pset,a'},u,\tup{s,u^\pset,u},a}) &= \tup{\tup{O^\nature_\priv(s),O_\publ(s),a'},u,\tup{O^\nature_\priv(s),O_\publ(s),\bot}}.\\
    O^{G,\suff}(\tup{\tup{s,u^\pset,a'},u,\tup{s,u^\pset,u},a}\concat{\mypath^G}') &= O^{G,\suff}(\tup{\tup{s,u^\pset,a'},u,\tup{s,u^\pset,u},a})\concat O^{G,\suff}({\mypath^G}').
    \intertext{Let $O^{\nature, G}: \Paths^{G} \to H^{\nature, G}$ defined by:}
    O^{G}(\tup{\tup{s,u^\pset},\bot}) &= \tup{\tup{O^\nature_\priv(s),O_\publ(s),\bot}}.\\
    O^{G}(\tup{\tup{s,u^\pset,\bot},u,\tup{s,u^\pset,u},a}) &= \tup{\tup{O^\nature_\priv(s),O_\publ(s),\bot},u,\tup{O^\nature_\priv(s),O_\publ(s),\bot}}.\\
    O^{G}(\tup{\tup{s,u^\pset,a'},u,\tup{s,u^\pset,u},a}\concat{\mypath^G}') &= O^{G}(\tup{\tup{s,u^\pset,a'},u,\tup{s,u^\pset,u},a})\concat O^{G,\suff}({\mypath^G}').
    \end{align*}
\end{definition}

Since nature policies no longer depend on the last played agent action in a nature first model, the set of relevant joint histories changes as follows:
\begin{definition}[Relevant joint history]
    Given a deterministic policy $\theta^\detr \in \Theta^\detr$, $\rel\colon \Theta^\detr \to \cP(H^M)$ gives the set of joint histories which the deterministic policy can reach.
    \[
        \rel(\theta^\detr) = \{O^M(\tup{s_I})\}\cup\{h \concat \tup{a,u,z^\agent_\priv,z^\nature_\priv,z_\publ} \in H^M \mid \theta^\detr(h^\nature) = u \land h \in \rel(\theta^\detr)\}.
    \]
    Where $h^\nature$ is the nature observable part of the joint history $h$.
    
    Given a mixed policy $\theta^\mix \in \Theta^\mix$, $\rel\colon \Theta^\mix \to \cP(H^M)$ gives the set of joint histories which the mixed policy can reach.
    This comes down to the histories that are relevant to one of the deterministic policies the mixed policy randomizes over.
    \[
        \rel(\theta^\mix) = \{h \in H^M \mid \exists \theta^\detr \in \Theta^\detr.\, \theta^\mix(\theta^\detr) > 0 \land h \in \rel(\theta^\detr)\}.
    \]

    Given a stochastic policy $\theta \in \Theta$, $\rel\colon \Theta \to \cP(H^M)$ gives the set of joint histories that the stochastic policy can reach.
    \begin{align*}
        \rel(\theta) &= \{O^M(\tup{s_I})\}\cup \{h\concat\tup{a,u,z^\agent_\priv,z^\nature_\priv,z_\publ} \in H^M \mid \theta(h^\nature)(u) > 0 \land h^\nature \in \rel(\theta)\}.
    \end{align*}
    Where $h^\nature$ is the nature observable part of the joint history $h$.
\end{definition}

This construction, again, generalizes to relevant nature histories, indicated by $\rel^\nature$.
The sets of relevant histories for nature first RPOMDPs are needed to adjust the proof of the existence of a Nash equilibrium in \Cref{app:nashEquilibrium} to the nature first setting.

\clearpage
\newpage
\section{From General RPOMDP to RPOMDP With Deterministic Observations}\label{app:det.obs}
This appendix shows that our definition of RPOMDPs with deterministic state-based observations is non-restrictive.

Similar to in~\cite{DBLP:journals/ai/ChatterjeeCGK16}, we can transform an RPOMDP with a stochastic or uncertain observation function into an equivalent one with a deterministic observation function.
Let $M = (S,A,\bm{TO},R,\Zagent, \Znature, \Zpub)$ be an RPOMDP with an \emph{uncertain transition observation function} defined as $\bm{TO} \colon \bm{U} \to (S \times A \to \dist{S \times \Zagent \times \Znature \times \Zpub})$.
Note that this definition combines the transition and observation functions into one to allow for more intricate dependencies.
Multiple independent functions can replace the $\bm{TO}$ function.
This does not change the transformation below.

From the RPOMDP $M$, we construct a larger, equivalent, RPOMDP $M' = (S',A,\bm{T},R',\Zagent, \Znature, \Zpub, {O^\agent_\priv},{O^\nature_\priv}, {O_\publ})$ with $S' = S \times \Zagent \times \Znature \times \Zpub$, adjusting the reward function according to the new state space $R' \colon S' \times A \to \RR$.
We split the original transition observation function $\bm{TO}$ in a transition function $\bm{T} \colon \bm{U} \to (S' \times A \to \dist{S'})$ and three separate deterministic observation functions ${O^\agent_\priv}'\colon S' \to \Zagent$,${O^\nature_\priv}' \colon S' \to \Znature$, and ${O_\publ}' \colon S' \to \Zpub$.
The functions are then defined as follows:
\begin{itemize}
    \item $\bm{T}(u)(\tup{s,z^\agent_\priv,z^\nature_\priv,z_\publ}, a, \tup{s',{z^\agent_\priv}',{z^\nature_\priv}',{z_\publ}'}) = \bm{TO}(u)(s,a,s',{z^\agent_\priv}',{z^\nature_\priv}',{z_\publ}')$,
    \item $R'(\tup{s,z^\agent_\priv,z^\nature_\priv,z_\publ},a) = R(s,a)$,
    \item ${O^\agent_\priv}(\tup{s,z^\agent_\priv,z^\nature_\priv,z_\publ}) = z^\agent_\priv$,
    \item ${O^\nature_\priv}(\tup{s,z^\agent_\priv,z^\nature_\priv,z_\publ}) = z^\nature_\priv$,
    \item ${O_\publ}(\tup{s,z^\agent_\priv,z^\nature_\priv,z_\publ}) = z_\publ$.
\end{itemize}

The arbitrary RPOMDP $M$ has now been transformed into an RPOMDP $M'$ that satisfies our \Cref{def:rpomdp}, showing that our assumption of deterministic state-based observations is indeed non-restrictive.

\clearpage
\newpage
\section{Stickiness Examples}\label{app:observation:stickiness}
This appendix contains three examples of a stickiness function: zero, full, and observation-based stickiness.

\subsection{Zero and Full Stickiness}
As mentioned in \Cref{subsec:stickiness}, zero and full stickiness are the extremes of the spectrum of stickiness types where nature's choices never or always stick, respectively. We revisit the RMDP in \Cref{fig:small_example} in the main text and discuss the zero and full stickiness interpretations of the model.
\begin{reusefigure}[ht]{fig:small_example}
    \centering
    \resizebox{0.62\textwidth}{!}{
        \begin{tikzpicture}[state/.append style={shape = ellipse}, >=stealth,
    bobbel/.style={minimum size=1mm,inner sep=0pt,fill=black,circle},
    mynode/.style={rectangle,fill=white,anchor=center}]]
    \node[state] (s1) at (1,0) {$s_1$};
    \node[state] (s2) at (5,0) {$s_2$};
    \node[bobbel] (s1b) at (3,0.5) {};
    \node[bobbel] (s2b) at (3,-0.5) {};
    \draw[<-] (s1.west) -- +(-0.6,0);
    \draw (s1) -- (s1b);
    \draw (s2) -- (s2b);
    \draw (s1b) edge[->, bend left = 10] node[above right]{$p$} (s2);
    \draw (s1b) edge[->, bend right = 30] node[above left]{$1-p$} (s1.north);
    \draw (s2b) edge[->, bend left = 10] node[below right]{$q$} (s1);
    \draw (s2b) edge[->, bend right = 30] node[below right]{$1-q$} (s2.south);
    \node[anchor = west, text width=5cm] at (5.8,0.25) {\small $\bm{U}^1 = \left\{{p} \in [0.1,0.9], {q} \in [0.1,0.9]\right\}$};
    \node[anchor = west, text width=4cm] at (5.8,-0.25) {\small $\bm{U}^2 = \left\{{p} \in [0.1,0.4], {q = 2p}\right\}$};
\end{tikzpicture}
    }
    \caption{An example RMDP with two uncertainty sets.}
    \label{app:fig:small_example}
\end{reusefigure}

\begin{example}[Stickiness]\label{app:exam:RMDP:stickiness}
    Consider the RMDP in \Cref{app:fig:small_example} and uncertainty set $\bm{U}^1$.
    We interpret this RMDP as an RPOMDP with full observability for both players.
    Regardless of the stickiness, at the start of the game, nature has to choose a variable assignment $u \in \bm{U}^1$.
    Under full stickiness, the rest of the game is now determined, as nature can only choose the same values for $p$ and $q$ as in the initial variable assignment.
    If we assume zero stickiness, then at the next state and all future states, whether $s_1$ or $s_2$, nature can choose any new variable assignment $u' \in \bm{U}^1$.
\end{example}

\subsection{Observation-Based Stickiness}
We allow a stickiness function to depend on what nature observes, \ie, its private observations $\Znature$, public observations $\Zpub$, and the agent's actions $A$.
To define such stickiness functions, we denote for each variable $v\in U$ the state-action pairs it influences by $v^{s,a}\colon S \times A \to \{0,1\}$.

An example of an observation-based stickiness function is:
\begin{align*}
&\forall v\in U, z^\nature_\priv \in \Znature, z_\publ \in \Zpub, a\in A.\\
&\quad \sticky(v,z^\nature_\priv,z_\publ,a) = 1 \iff \exists s\in S. O^\nature_\priv(s) = z^\nature_\priv \land O_\publ(s) = z_\publ \land v^{s,a}(s,a) = 1.    
\end{align*}

Under observation-based stickiness, a variable only sticks if there is a possibility that it influenced the actual transition based on the last observations and actions.
This means that all variables that influence both a state with observations $z^\nature_\priv,z_\publ$ and the chosen action $a$ stick.
The intuition behind observation-based stickiness is that nature only needs to optimize for the transitions it might influence at that given point in time.
Note that this stickiness does not take the entire \aoh{} into account, so there can still be restrictions on variables that nature knows cannot influence the actual transition, as can be seen in the discussion of the right example below.

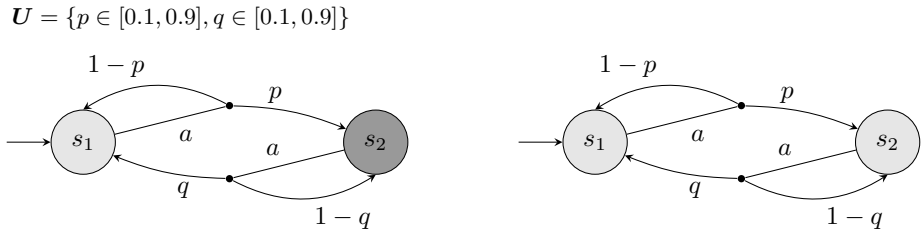
\begin{figure}[ht]
    \centering
    \resizebox{0.7\columnwidth}{!}{
    \begin{tikzpicture}[state/.append style={shape = ellipse}, >=stealth,
    bobbel/.style={minimum size=1mm,inner sep=0pt,fill=black,circle},
    mynode/.style={rectangle,fill=white,anchor=center}]]
    \node[state, fill=black!10] (l_s1) at (1,0) {$s_1$};
    \node[state, fill=black!40] (l_s2) at ($(l_s1) + (4,0)$) {$s_2$};
    \node[bobbel] (l_s1b) at ($(l_s1) + (2,0.5)$) {};
    \node[bobbel] (l_s2b) at ($(l_s1) + (2,-0.5)$) {};
    \draw[<-] (l_s1.west) -- +(-0.6,0);
    \draw (l_s1) -- node[below right]{$a$} (l_s1b);
    \draw (l_s2) -- node[above left]{$a$} (l_s2b);
    \draw (l_s1b) edge[->, bend left = 10] node[above left]{$p$} (l_s2);
    \draw (l_s1b) edge[->, bend right = 30] node[above left]{$1-p$} (l_s1.north);
    \draw (l_s2b) edge[->, bend left = 10] node[below right]{$q$} (l_s1);
    \draw (l_s2b) edge[->, bend right = 30] node[below right]{$1-q$} (l_s2.south);
    \node[anchor = west] at ($(l_s1) + (-1.1,1.7)$) {\small $\bm{U} = \left\{{p} \in [0.1,0.9], {q} \in [0.1,0.9]\right\}$};
    \node[state, fill=black!10] (r_s1) at ($(l_s1) + (7,0)$) {$s_1$};
    \node[state, fill=black!10] (r_s2) at ($(r_s1) + (4,0)$) {$s_2$};
    \node[bobbel] (r_s1b) at ($(r_s1) + (2,0.5)$) {};
    \node[bobbel] (r_s2b) at ($(r_s1) + (2,-0.5)$) {};
    \draw[<-] (r_s1.west) -- +(-0.6,0);
    \draw (r_s1) -- node[below right]{$a$} (r_s1b);
    \draw (r_s2) -- node[above left]{$a$} (r_s2b);
    \draw (r_s1b) edge[->, bend left = 10] node[above left]{$p$} (r_s2);
    \draw (r_s1b) edge[->, bend right = 30] node[above left]{$1-p$} (r_s1.north);
    \draw (r_s2b) edge[->, bend left = 10] node[below right]{$q$} (r_s1);
    \draw (r_s2b) edge[->, bend right = 30] node[below right]{$1-q$} (r_s2.south);
\end{tikzpicture}
    }
    \caption{Two example RPOMDPs with the same uncertainty set.}
    \label{fig:small_example_observation_sticky}
\end{figure}

\begin{example}\label{exam:RMDP:observation_sticky}
    \Cref{fig:small_example_observation_sticky} depicts two RPOMDPs.
    For simplicity, these RPOMDPs have no private observations.
    Furthermore, we interpret these RPOMDPs with the agent first semantics.
    Note that the left RPOMDP corresponds to the fully observable RPOMDP interpretation of the RMDP in \Cref{app:fig:small_example}.
    For both RPOMDPs, we have that:
    \[
        p^{s,a}(s_1,a) = 1, p^{s,a}(s_2,a) = 0, q^{s,a}(s_1,a) = 0, q^{s,a}(s_2,a) = 1.
    \]
    So variable $p$ only influences transitions from state $s_1$, and variable $q$ only influences transitions from state $s_2$.
    Like in \Cref{app:exam:RMDP:stickiness}, in both RPOMDPs, nature has to choose a variable assignment $u \in \bm{U}$ at the start of the game.
    First, looking at the left RPOMDP and assuming observation-based stickiness, the variable $p$ becomes restricted after this initial choice, since $O_\publ(s_1) = \tsLG \land p^{s,a}(s_1,a) = 1$, so $\sticky(p,\bot,\tsLG,a) = 1$.
    Variable $q$ remains unrestricted, as state $s_2$, the only state that $q$ influences, has a different observation.
    As long as the agent remains in state $s_1$, nature may choose different assignments for $q$.
    Once the agent reaches state $s_2$, whichever value it assigns to $q$ next will stick, and from then on, the game is fully determined as also with full stickiness.

    Looking at the right RPOMDP, again assuming observation-based stickiness, both variables immediately become restricted after the initial choice.
    $p$ again becomes restricted because $O_\publ(s_1) = \tsLG \land p^{s,a}(s_1,a) = 1$, so we have $\sticky(p,\bot,\tsLG,a) = 1$.
    Now, since $s2$ has the same observation as $s_1$, we have $O_\publ(s2) = \tsLG \land q^{s,a}(s_2,a) = 1$, which gives us $\sticky(q,\bot,\tsLG,a) = 1$.
\end{example}

\clearpage
\newpage
\section{Uncertainty Assumptions Matter}\label{app:assumptions_matter}
In this appendix, we elaborate on the results established in \Cref{thm:uncertainty:matters} and its proof.
{\label{app:thm_assumptions_matter}\theoremI*}
In the following four subsections, we compare tuples of R(PO)MDPs which only differ in either the stickiness or the order of play.
The first two subsections focus on differences in the stickiness, and the latter two focus on differences in the order of play:
\begin{enumerate}[noitemsep]
    \item Full stickiness versus zero stickiness in an $(s,a)$-rectangular model (\Cref{app:stickiness_matters_I}).
    \item Full stickiness versus observation-based stickiness versus zero stickiness in an $a$-rectangular model (\Cref{app:stickiness_matters_II}).
    \item Agent first versus nature first in a simple model (\Cref{app:order_of_play_matters_I}).
    \item Agent first versus nature first in an $a$-rectangular full sticky model (\Cref{app:order_of_play_matters_II}).
\end{enumerate}

For each of the tuples of RPOMDPs, we first state the value functions given agent and nature policies. 
In principle, we use the sets of mixed nature policies $\Theta^{\mix}$ for computing the optimal value and policy, which are equivalent to the original sets of stochastic policies, as shown in \Cref{app:mixed_policies}.
However, in three of the four RPOMDP tuples, we can consider deterministic nature policies instead of mixed nature policies, as discussed in more detail in \Cref{app:stickiness_matters_I}.
For legibility reasons, we switch back to the stochastic policy notation when writing the optimal policy.

Given the value functions and the types of optimal policies, we used the 3D calculator of \cite{geogebra} to search for the optimal value.
We plotted the value functions for finding the optimal agent and nature policies separately.
In both cases, we look for the policy values that optimize the worst-case scenario from the player's perspective.
Once we found the value that both players can achieve regardless of the other player's policy, we found the Nash equilibrium value and policies.
The models used for finding the optimal values and policies can be found at \url{https://github.com/LAVA-LAB/RPOMDP_game_semantics_value_functions}.

The computed optimal values show that the optimal values differ between the RPOMDPs in the tuples.
These tuples of RPOMDPs therefore prove Theorem \hyperref[app:thm_assumptions_matter]{1}.

After the optimal value computation, we discuss the structural differences of the equivalent POSGs.
Although the structural differences between two POSGs do not ensure that they have a different optimal value, the differences do provide an intuition in the cases where we know that the optimal values differ.

\subsection{Stickiness Matters}\label{app:stickiness_matters_I}
We first revisit the RPOMDP in \Cref{fig:full_vs_zero_sticky_rPOMDP} and show how we computed the optimal values to show that stickiness matters in RPOMDPs.

\begin{reusefigure}[ht]{fig:full_vs_zero_sticky_rPOMDP}
    \centering
    \resizebox{0.6\textwidth}{!}{
        \begin{tikzpicture}[state/.append style={shape = circle}, >=stealth,
    bobbel/.style={minimum size=1mm,inner sep=0pt,fill=black,circle}]
    \node[state] (s1) at (1,0) {$s_1$};
    \node[state, fill=black!10] (s2) at ($(s1) + (2,0.8)$) {$s_2$};
    \node[state, fill=black!40] (s3) at ($(s1) + (2,-0.8)$) {$s_3$};
    \node[state, fill=black!5, thin, dashed] (s4) at ($(s2) + (2,0.8)$) {$s_4$};
    \node[state, fill=black!5, thin, dashed] (s5) at ($(s3) + (2,-0.8)$) {$s_5$};
    \node[state, fill=black!20, thick, dotted] (s6) at ($(s4) + (2,0.7)$) {$s_6$};
    \node[state, fill=black!20, thick, dotted] (s7) at ($(s4) + (2,-0.7)$) {$s_7$};
    \node[state, fill=black!20, thick, dotted] (s8) at ($(s5) + (2,0.8)$) {$s_8$};
    \node[state, fill=black!20, thick, dotted] (s9) at ($(s5) + (2,-0.8)$) {$s_9$};
    \node[bobbel] (s6ba) at ($(s6) + (2,0.2)$) {};
    \node[bobbel] (s6bb) at ($(s6) + (2,-0.4)$) {};
    \node[bobbel] (s7ba) at ($(s7) + (2,0.4)$) {};
    \node[bobbel] (s7bb) at ($(s7) + (2,-0.2)$) {};
    \node[bobbel] (s8ba) at ($(s8) + (2,0.4)$) {};
    \node[bobbel] (s8bb) at ($(s8) + (2,-0.4)$) {};
    \node[bobbel] (s9ba) at ($(s9) + (2,0.4)$) {};
    \node[bobbel] (s9bb) at ($(s9) + (2,-0.4)$) {};
    \node[rectangle, draw, minimum width=14mm, rounded corners] (rPos1) at ($(s6ba) + (2,0)$) {R = $200$};
    \node[rectangle, draw, minimum width=14mm, rounded corners] (rNeg1) at ($(s6bb) + (2,-0.325)$)  {R = $0$};
    \node[rectangle, draw, minimum width=14mm, rounded corners] (rNeu1) at ($(s7bb) + (2,0)$) {R = $100$};
    \node[rectangle, draw, minimum width=14mm, rounded corners] (rNeg2a) at ($(s8ba) + (2,0)$) {R = $0$};
    \node[rectangle, draw, minimum width=14mm, rounded corners] (rPos2) at ($(s8bb) + (2,0)$) {R = $200$};
    \node[rectangle, draw, minimum width=14mm, rounded corners] (rNeu2) at ($(s9ba) + (2,0)$) {R = $100$};
    \node[rectangle, draw, minimum width=14mm, rounded corners] (rNeg2b) at ($(s9bb) + (2,0)$) {R = $0$};
    \draw[<-] (s1.west) -- +(-0.6,0);
    \draw (s1) edge[->] node[above left]{$0.5$} (s2);
    \draw (s1) edge[->] node[below left]{$0.5$} (s3);
    \draw (s2) edge[->] node[above left]{$0.9$} (s4);
    \draw (s2) edge[->] node[above right]{$0.1$} (s5);
    \draw (s3) edge[->] node[below left]{$1$} (s5);
    \draw (s4) edge[->] node[above left]{$p$} (s6);
    \draw (s4) edge[->] node[below, xshift=-2mm, yshift=-1mm]{$1-p$} (s7);
    \draw (s5) edge[->] node[above left]{$q$} (s8);
    \draw (s5) edge[->] node[below, xshift=-2mm, yshift=-1mm]{$1-q$} (s9);
    \draw (s6) edge node[above left]{$a$} (s6ba);
    \draw (s6) edge node[below left]{$b$} (s6bb);
    \draw (s7) edge node[above left]{$a$} (s7ba);
    \draw (s7) edge node[below left]{$b$} (s7bb);
    \draw (s8) edge node[above left]{$a$} (s8ba);
    \draw (s8) edge node[below left]{$b$} (s8bb);
    \draw (s9) edge node[above left]{$a$} (s9ba);
    \draw (s9) edge node[below left]{$b$} (s9bb);
    \draw [dashed] (s6ba) edge[->] (rPos1);
    \draw [dashed] (s6bb) edge[->] (rNeg1);
    \draw [dashed] (s7ba) edge[->] (rNeg1);
    \draw [dashed] (s7bb) edge[->] (rNeu1);
    \draw [dashed] (s8ba) edge[->] (rNeg2a);
    \draw [dashed] (s8bb) edge[->] (rPos2);
    \draw [dashed] (s9ba) edge[->] (rNeu2);
    \draw [dashed] (s9bb) edge[->] (rNeg2b);
    \node[text width=4cm] at (2,-2) {$p \in [0.1,0.9]$};
    \node[text width=4cm] at (2,-2.5) {$q \in [0.1,0.9]$};
\end{tikzpicture}
    }
    \caption{An RPOMDP where full and zero stickiness do not coincide in their optimal value.}
    \label{app:fig:full_vs_zero_sticky_rPOMDP}
\end{reusefigure}

For notation purposes, we use the distinguishing second observation to identify the longer history inputs for policies where needed.
For $\pi \in \Pi$, we write $\pi^{\msLG} = \pi(\tsW\tsLG\tsDash\tsDot)$ and $\pi^{\msDG} = \pi(\tsW\tsDG\tsDash\tsDot)$.
For $\theta \in \Theta$ of the full stickiness RPOMDP, we write $\theta^{\msW} = \theta(\tsW)$, and for $\theta \in \Theta$ of the zero stickiness RPOMDP, we write $\theta^{\msLG} = \theta(\tsW\tsLG\tsDash)$ and $\theta^{\msDG} = \theta(\tsW\tsDG\tsDash)$.
Using this notation, we can write the value function for the full stickiness RPOMDP $M_1$ in \Cref{app:fig:full_vs_zero_sticky_rPOMDP} as:
\begin{align*}
    V_\text{fh}^{M_1}(\pi,\theta^\mix) &= \sum_{\theta^\detr\in \Theta^\detr} \theta^\mix(\theta^\detr) \cdot \Bigl( 0.5 \cdot 0.9 \cdot \bigl(\theta^{\detr,\msW}(p) \cdot \pi^{\msLG}(a) \cdot 200 + (1-\theta^{\detr,\msW}(p)) \cdot \pi^{\msLG}(b) \cdot 100\bigr) \\
    & \hspace{85pt} + 0.5 \cdot 0.1 \cdot \bigl(\theta^{\detr,\msW}(q) \cdot \pi^{\msLG}(b) \cdot 200 + (1-\theta^{\detr,\msW}(q)) \cdot \pi^{\msLG}(a) \cdot 100\bigr)\\
    & \hspace{85pt} + 0.5 \cdot \bigl(\theta^{\detr,\msW}(q) \cdot \pi^{\msDG}(b) \cdot 200 + (1-\theta^{\detr,\msW}(q)) \cdot \pi^{\msDG}(a) \cdot 100\bigr) \Bigr)
\intertext{And for the zero stickiness RPOMDP $M_2$:}
    V_\text{fh}^{M_2}(\pi,\theta^\mix) &= \sum_{\theta^\detr\in \Theta^\detr} \theta^\mix(\theta^\detr) \cdot \Bigl( 0.5 \cdot 0.9 \cdot \bigl(\theta^{\detr,\msLG}(p) \cdot \pi^{\msLG}(a) \cdot 200 + (1-\theta^{\detr,\msLG}(p)) \cdot \pi^{\msLG}(b) \cdot 100\bigr) \\
    & \hspace{110pt} + 0.5 \cdot 0.1 \cdot \bigl(\theta^{\detr,\msLG}(q) \cdot \pi^{\msLG}(b) \cdot 200 + (1-\theta^{\detr,\msLG}(q)) \cdot \pi^{\msLG}(a) \cdot 100\bigr)\\
    & \hspace{110pt} + 0.5 \cdot \bigl(\theta^{\detr,\msDG}(q) \cdot \pi^{\msDG}(b) \cdot 200 + (1-\theta^{\detr,\msDG}(q)) \cdot \pi^{\msDG}(a) \cdot 100\bigr) \Bigr).
\end{align*}

We construct these value functions by following all possible paths leading to non-zero rewards.
We combined some of the paths to keep the formula manageable.
The three bigger terms correspond to the three subparts of the RPOMDP in \Cref{app:fig:full_vs_zero_sticky_rPOMDP} shown in \Cref{app:fig:branches_full_vs_zero_sticky_rPOMDP}.
A multiplication corresponds to successive branches, whereas addition corresponds to parallel branches.
Note that we removed the paths leading to rewards of zero.
Also note that in this case, we used the full stickiness theta.
For the zero stickiness theta, the history on which choices are based is different, but the multiplications, additions, and how we wrote down the formula still correspond to the subparts below.

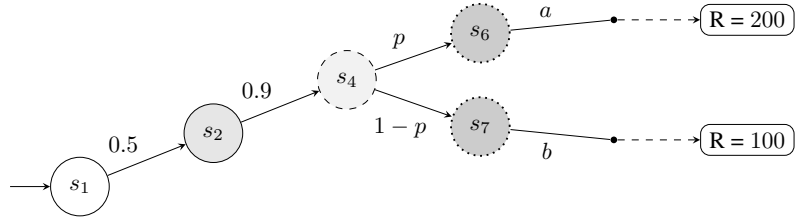
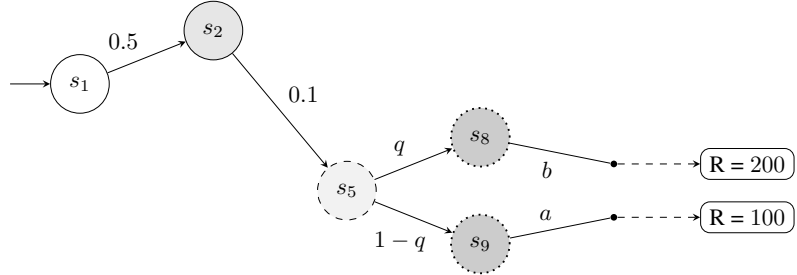
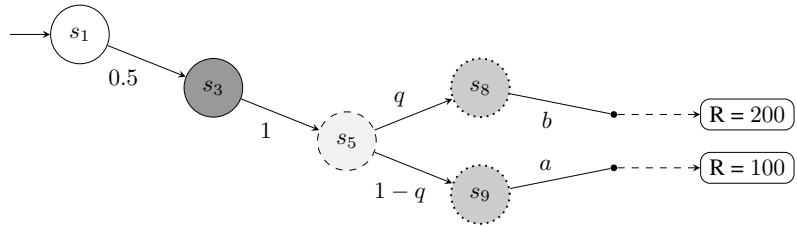
\begin{figure}[ht]
    \centering
    \begin{subfigure}[b]{\textwidth}
        \centering
        \resizebox{0.6\textwidth}{!}{
            \begin{tikzpicture}[state/.append style={shape = circle}, >=stealth,
    bobbel/.style={minimum size=1mm,inner sep=0pt,fill=black,circle}]
    \node[state] (s1) at (1,0) {$s_1$};
    \node[state, fill=black!10] (s2) at ($(s1) + (2,0.8)$) {$s_2$};
    \node[state, fill=black!5, thin, dashed] (s4) at ($(s2) + (2,0.8)$) {$s_4$};
    \node[state, fill=black!20, thick, dotted] (s6) at ($(s4) + (2,0.7)$) {$s_6$};
    \node[state, fill=black!20, thick, dotted] (s7) at ($(s4) + (2,-0.7)$) {$s_7$};
    \node[bobbel] (s6ba) at ($(s6) + (2,0.2)$) {};
    \node[bobbel] (s7bb) at ($(s7) + (2,-0.2)$) {};
    \node[rectangle, draw, minimum width=14mm, rounded corners] (rPos1) at ($(s6ba) + (2,0)$) {R = $200$};
    \node[rectangle, draw, minimum width=14mm, rounded corners] (rNeu1) at ($(s7bb) + (2,0)$) {R = $100$};
    \draw[<-] (s1.west) -- +(-0.6,0);
    \draw (s1) edge[->] node[above left]{$0.5$} (s2);
    \draw (s2) edge[->] node[above left]{$0.9$} (s4);
    \draw (s4) edge[->] node[above left]{$p$} (s6);
    \draw (s4) edge[->] node[below, xshift=-2mm, yshift=-1mm]{$1-p$} (s7);
    \draw (s6) edge node[above left]{$a$} (s6ba);
    \draw (s7) edge node[below left]{$b$} (s7bb);
    \draw [dashed] (s6ba) edge[->] (rPos1);
    \draw [dashed] (s7bb) edge[->] (rNeu1);
\end{tikzpicture}
        }
        \caption{Subpart of the RPOMDP responsible for $0.5 \cdot 0.9 \cdot \bigl(\theta^{\detr,\msW}(p) \cdot \pi^{\msLG}(a) \cdot 200 + (1-\theta^{\detr,\msW}(p)) \cdot \pi^{\msLG}(b) \cdot 100\bigr)$}
    \end{subfigure}\\[3mm]
    \begin{subfigure}[b]{\textwidth}
        \centering
        \resizebox{0.6\textwidth}{!}{
            \begin{tikzpicture}[state/.append style={shape = circle}, >=stealth,
    bobbel/.style={minimum size=1mm,inner sep=0pt,fill=black,circle}]
    \node[state] (s1) at (1,0) {$s_1$};
    \node[state, fill=black!10] (s2) at ($(s1) + (2,0.8)$) {$s_2$};
    \node[state, fill=black!5, thin, dashed] (s5) at ($(s2) + (2,-2.4)$) {$s_5$};
    \node[state, fill=black!20, thick, dotted] (s8) at ($(s5) + (2,0.8)$) {$s_8$};
    \node[state, fill=black!20, thick, dotted] (s9) at ($(s5) + (2,-0.8)$) {$s_9$};
    \node[bobbel] (s8bb) at ($(s8) + (2,-0.4)$) {};
    \node[bobbel] (s9ba) at ($(s9) + (2,0.4)$) {};
    \node[rectangle, draw, minimum width=14mm, rounded corners] (rPos2) at ($(s8bb) + (2,0)$) {R = $200$};
    \node[rectangle, draw, minimum width=14mm, rounded corners] (rNeu2) at ($(s9ba) + (2,0)$) {R = $100$};
    \draw[<-] (s1.west) -- +(-0.6,0);
    \draw (s1) edge[->] node[above left]{$0.5$} (s2);
    \draw (s2) edge[->] node[above right]{$0.1$} (s5);
    \draw (s5) edge[->] node[above left]{$q$} (s8);
    \draw (s5) edge[->] node[below, xshift=-2mm, yshift=-1mm]{$1-q$} (s9);
    \draw (s8) edge node[below left]{$b$} (s8bb);
    \draw (s9) edge node[above left]{$a$} (s9ba);
    \draw [dashed] (s8bb) edge[->] (rPos2);
    \draw [dashed] (s9ba) edge[->] (rNeu2);
\end{tikzpicture}
        }
        \caption{Subpart of the RPOMDP responsible for $0.5 \cdot 0.1 \cdot \bigl(\theta^{\detr,\msW}(q) \cdot \pi^{\msLG}(b) \cdot 200 + (1-\theta^{\detr,\msW}(q)) \cdot \pi^{\msLG}(a) \cdot 100\bigr)$}
    \end{subfigure}\\[3mm]
    \begin{subfigure}[b]{\textwidth}
        \centering
        \resizebox{0.6\textwidth}{!}{
            \begin{tikzpicture}[state/.append style={shape = circle}, >=stealth,
    bobbel/.style={minimum size=1mm,inner sep=0pt,fill=black,circle}]
    \node[state] (s1) at (1,0) {$s_1$};
    \node[state, fill=black!40] (s3) at ($(s1) + (2,-0.8)$) {$s_3$};
    \node[state, fill=black!5, thin, dashed] (s5) at ($(s3) + (2,-0.8)$) {$s_5$};
    \node[state, fill=black!20, thick, dotted] (s8) at ($(s5) + (2,0.8)$) {$s_8$};
    \node[state, fill=black!20, thick, dotted] (s9) at ($(s5) + (2,-0.8)$) {$s_9$};
    \node[bobbel] (s8bb) at ($(s8) + (2,-0.4)$) {};
    \node[bobbel] (s9ba) at ($(s9) + (2,0.4)$) {};
    \node[rectangle, draw, minimum width=14mm, rounded corners] (rPos2) at ($(s8bb) + (2,0)$) {R = $200$};
    \node[rectangle, draw, minimum width=14mm, rounded corners] (rNeu2) at ($(s9ba) + (2,0)$) {R = $100$};
    \draw[<-] (s1.west) -- +(-0.6,0);
    \draw (s1) edge[->] node[below left]{$0.5$} (s3);
    \draw (s3) edge[->] node[below left]{$1$} (s5);
    \draw (s5) edge[->] node[above left]{$q$} (s8);
    \draw (s5) edge[->] node[below, xshift=-2mm, yshift=-1mm]{$1-q$} (s9);
    \draw (s8) edge node[below left]{$b$} (s8bb);
    \draw (s9) edge node[above left]{$a$} (s9ba);
    \draw [dashed] (s8bb) edge[->] (rPos2);
    \draw [dashed] (s9ba) edge[->] (rNeu2);
\end{tikzpicture}
        }
        \caption{Subpart of the RPOMDP responsible for $0.5 \cdot \bigl(\theta^{\detr,\msW}(q) \cdot \pi^{\msDG}(b) \cdot 200 + (1-\theta^{\detr,\msW}(q)) \cdot \pi^{\msDG}(a) \cdot 100\bigr)$}
    \end{subfigure}    
    \caption{Representation of the split used in the construction of value function of the RPOMDP in \Cref{app:fig:full_vs_zero_sticky_rPOMDP}.}
    \label{app:fig:branches_full_vs_zero_sticky_rPOMDP}
\end{figure}

We show that we can reason with deterministic nature policies by showing that the value functions are linear in the deterministic nature policies. 
Combining this with the convexity of the uncertainty sets, we show that any finite probability distribution over the deterministic nature policies, \ie, any mixed nature policy, can be rewritten as another deterministic nature policy that is contained in the policy set. 
We can, therefore, limit ourselves to searching for the optimal policy in the deterministic nature policy set. 
We first prove the following lemma:

\begin{lemma}\label{app:lem_exists_deter_pol}
    Given the full stickiness and zero stickiness RPOMDPs $M_1$ and $M_2$ of \Cref{app:fig:full_vs_zero_sticky_rPOMDP}:
    \[
        \forall \theta^\mix \in \Theta^\mix, \exists \theta^\detr \in \Theta^\detr, \forall \pi\in \Pi.\; V_\text{fh}^{M_1}(\pi,\theta^\mix) = V_\text{fh}^{M_1}(\pi,\theta^\detr),
    \]
    \[
        \forall \theta^\mix \in \Theta^\mix, \exists \theta^\detr \in \Theta^\detr, \forall \pi\in \Pi.\; V_\text{fh}^{M_2}(\pi,\theta^\mix) = V_\text{fh}^{M_2}(\pi,\theta^\detr).
    \]
\end{lemma}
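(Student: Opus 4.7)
The plan is to exploit the structural fact that, for both $M_1$ and $M_2$, the explicit expressions for $V_\text{fh}$ displayed above are \emph{affine} in each of the real-valued coordinates that parameterise a deterministic nature policy. From this affinity, I would replace any mixed policy by a corresponding ``averaged'' deterministic policy without changing the value against any agent policy.

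The first step would be to identify those coordinates. A deterministic nature policy $\theta^\detr$ for $M_1$ is fully specified by the pair $(\theta^{\detr,\msW}(p),\theta^{\detr,\msW}(q))\in[0.1,0.9]^2$, and one for $M_2$ by the triple $(\theta^{\detr,\msLG}(p),\theta^{\detr,\msLG}(q),\theta^{\detr,\msDG}(q))\in[0.1,0.9]^3$. Inspecting the value-function expressions, each such coordinate occurs only to the first power and never in a product with another nature coordinate; only products with agent-policy scalars and numerical constants occur. So, for fixed $\pi$, $V_\text{fh}(\pi,\theta^\detr)$ is of the form $A(\pi)+\sum_i B_i(\pi)\cdot x_i$, where the $x_i$ range over these coordinates.

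Given any mixed $\theta^\mix\in\Theta^\mix$, I would then define an averaged deterministic policy $\theta^\detr_*$ by setting, at each relevant history $h$ and each variable $v\in\{p,q\}$,
\[
\theta^{\detr,h}_*(v) \;=\; \sum_{\theta^\detr\in\Theta^\detr}\theta^\mix(\theta^\detr)\,\theta^{\detr,h}(v).
\]
Since $[0.1,0.9]$ is convex and $\theta^{\detr,h}_*(v)$ is a convex combination of values in $[0.1,0.9]$, $\theta^\detr_*$ is a valid deterministic policy. Substituting into the affine expression and pushing the finite sum through the affine function then yields
\[
V_\text{fh}(\pi,\theta^\mix)=\sum_{\theta^\detr}\theta^\mix(\theta^\detr)\,V_\text{fh}(\pi,\theta^\detr)=V_\text{fh}(\pi,\theta^\detr_*),
\]
uniformly in $\pi$, which is the claim.

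The main obstacle is the affinity check itself. It hinges on a property of the concrete RPOMDPs in \Cref{app:fig:full_vs_zero_sticky_rPOMDP}: every non-zero-reward path passes through at most one uncertain branching, and distinct uncertain branchings combine additively, as made visible by the parallel subtrees in \Cref{app:fig:branches_full_vs_zero_sticky_rPOMDP}. Consequently the variables $p$ and $q$ never enter a common product. I would verify this by traversing each subtree once. In RPOMDPs where the same nature variable is resolved repeatedly along a single trajectory, the value function would become genuinely multilinear rather than separately affine, and the averaging argument would have to be refined, although the underlying convexity idea would still apply.
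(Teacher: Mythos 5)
Your proposal is correct and follows essentially the same route as the paper's proof: expand the mixed-policy value, observe that for these specific models the value is affine in the nature coordinates (no products of nature variables), define the coordinate-wise averaged deterministic policy, justify its validity by convexity of the uncertainty set, and push the finite sum through the affine expression. Your closing caveat about models where a variable is resolved repeatedly also matches the paper's treatment, which indeed resorts to mixed policies for the non-linear example in the subsequent appendix.
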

\begin{proof}
    We focus on the full stickiness RPOMDP $M_1$.
    The result for the zero stickiness RPOMDP $M_2$ follows from the same steps.
    Let $\theta^\mix \in \Theta^\mix$ be an arbitrary mixed nature policy and $\pi \in \Pi$ an arbitrary stochastic agent policy.
    Then, we can compute the value as follows:
    \begin{align*}
        V_\text{fh}^{M_1}(\pi,\theta^\mix) &= \sum_{\theta^\detr\in \Theta^\detr} \theta^\mix(\theta^\detr) \cdot \Bigl( 0.5 \cdot 0.9 \cdot \bigl(\theta^{\detr,\msW}(p) \cdot \pi^{\msLG}(a) \cdot 200 + (1-\theta^{\detr,\msW}(p)) \cdot \pi^{\msLG}(b) \cdot 100\bigr) \\
        & \hspace{85pt} + 0.5 \cdot 0.1 \cdot \bigl(\theta^{\detr,\msW}(q) \cdot \pi^{\msLG}(b) \cdot 200 + (1-\theta^{\detr,\msW}(q)) \cdot \pi^{\msLG}(a) \cdot 100\bigr)\\
        & \hspace{85pt} + 0.5 \cdot \bigl(\theta^{\detr,\msW}(q) \cdot \pi^{\msDG}(b) \cdot 200 + (1-\theta^{\detr,\msW}(q)) \cdot \pi^{\msDG}(a) \cdot 100\bigr) \Bigr).
    \intertext{Simplify:}
        &= \sum_{\theta^\detr\in \Theta^\detr} \theta^\mix(\theta^\detr) \cdot \Bigl( 90 \cdot \pi^{\msLG}(a) \cdot \theta^{\detr,\msW}(p) + 45 \cdot \pi^{\msLG}(b) - 45 \cdot \pi^{\msLG}(b) \cdot \theta^{\detr,\msW}(p)\\
        & \hspace{85pt} + 10 \cdot \pi^{\msLG}(b) \cdot \theta^{\detr,\msW}(q) + 5 \cdot \pi^{\msLG}(a) - 5 \cdot \pi^{\msLG}(a) \cdot \theta^{\detr,\msW}(q) \\
        & \hspace{85pt} + 100 \cdot \pi^{\msDG}(b) \cdot \theta^{\detr,\msW}(q) + 50 \cdot \pi^{\msDG}(a) - 50 \cdot \pi^{\msDG}(a) \cdot \theta^{\detr,\msW}(q) \Bigr).
    \intertext{By definition of probability distributions: $\sum_{\theta^\detr\in \Theta^\detr} \theta^\mix(\theta^\detr) = 1$, so we can move the terms depending only on $\pi$ out of the summation:}
        &= 45 \cdot \pi^{\msLG}(b) + 5 \cdot \pi^{\msLG}(a) + 50 \cdot \pi^{\msDG}(a) + \sum_{\theta^\detr\in \Theta^\detr} \theta^\mix(\theta^\detr) \cdot \Bigl( (90 \cdot \pi^{\msLG}(a) - 45 \cdot \pi^{\msLG}(b)) \cdot \theta^{\detr,\msW}(p)\\
        & \quad + (10 \cdot \pi^{\msLG}(b) - 5 \cdot \pi^{\msLG}(a) + 100 \cdot \pi^{\msDG}(b) - 50 \cdot \pi^{\msDG}(a)) \cdot \theta^{\detr,\msW}(q) \Bigr).\\
    \intertext{Split the summation:}
        &= 45 \cdot \pi^{\msLG}(b) + 5 \cdot \pi^{\msLG}(a) + 50 \cdot \pi^{\msDG}(a) + \sum_{\theta^\detr\in \Theta^\detr} \theta^\mix(\theta^\detr) \cdot \Bigl( (90 \cdot \pi^{\msLG}(a) - 45 \cdot \pi^{\msLG}(b)) \cdot \theta^{\detr,\msW}(p) \Bigr)\\
        & \quad + \sum_{\theta^\detr\in \Theta^\detr} \theta^\mix(\theta^\detr) \cdot \Bigl((10 \cdot \pi^{\msLG}(b) - 5 \cdot \pi^{\msLG}(a) + 100 \cdot \pi^{\msDG}(b) - 50 \cdot \pi^{\msDG}(a)) \cdot \theta^{\detr,\msW}(q) \Bigr).\\
    \intertext{Move the multiplication terms only depending on $\pi$ out of the summations:}
        &= 45 \cdot \pi^{\msLG}(b) + 5 \cdot \pi^{\msLG}(a) + 50 \cdot \pi^{\msDG}(a)\\
        & \quad + (90 \cdot \pi^{\msLG}(a) - 45 \cdot \pi^{\msLG}(b)) \cdot \sum_{\theta^\detr\in \Theta^\detr} \theta^\mix(\theta^\detr) \cdot \theta^{\detr,\msW}(p)\\
        & \quad + (10 \cdot \pi^{\msLG}(b) - 5 \cdot \pi^{\msLG}(a) + 100 \cdot \pi^{\msDG}(b) - 50 \cdot \pi^{\msDG}(a)) \cdot \sum_{\theta^\detr\in \Theta^\detr} \theta^\mix(\theta^\detr) \cdot \theta^{\detr,\msW}(q).
    \end{align*}
    The uncertainty set $\bm{U}$ of $M_1$ is convex.
    We, therefore, know:
    \[
        \forall u,u'\in \bm{U}, \forall \alpha \in [0,1].\; \alpha u + (1-\alpha)u' \in \bm{U}.
    \]
    By definition of valid policies (\Cref{subsec:stickiness}), we know:
    \[
        \forall \theta^\detr \in \Theta^\detr, \forall h\in H^\nature, \forall a\in A.\; \theta^\detr(h,a) \in \bm{U}^\agrees(\fixed(h)).
    \]
    At the non-singleton point of choice for the full stickiness nature policies, \ie, history $\tsW$, $\fixed(\tsW) = \emptyset = u^\bot$.
    We, therefore, know:
    \[
        \forall \theta^\detr \in \Theta^\detr.\; \theta^{\detr,\msW} \in \bm{U}^\agrees(u^\bot) = \bm{U}.
    \]
    So we can create nature policy $\theta^{\prime \detr}$ for which:
    \begin{align*}
        \theta^{\prime \detr,\msW}(p) &= \sum_{\theta^\detr\in \Theta^\detr} \theta^\mix(\theta^\detr) \cdot \theta^{\detr,\msW}(p),\\
        \theta^{\prime \detr,\msW}(q) &= \sum_{\theta^\detr\in \Theta^\detr} \theta^\mix(\theta^\detr) \cdot \theta^{\detr,\msW}(q).
    \end{align*}
    Since at the only non-singleton point of choice, $\theta^{\prime \detr}$ is a convex combination of elements of a convex set, we know that:
    \[
        \theta^{\prime \detr, \msW} \in \bm{U} = \bm{U}^\agrees(u^\bot).
    \]
    As all other choices are singletons, we have that:
    \[
        \forall h\in H^\nature, \forall a\in A.\; \theta^{\prime \detr}(h,a) \in \bm{U}^\agrees(\fixed(h)).
    \]
    From which we can conclude that $\theta^{\prime \detr}$ is a valid deterministic nature policy, \ie, $\theta^{\prime \detr} \in \Theta^\detr$.
    We continue by showing that $V_\text{fh}^{M_1}(\pi,\theta^{\prime \detr}) = V_\text{fh}^{M_1}(\pi,\theta^\mix)$ for an arbitrary agent policy $\pi \in \Pi$.
    \begin{align*}
        V_\text{fh}^{M_1}(\pi,\theta^{\prime \detr}) &= 0.5 \cdot 0.9 \cdot \bigl(\theta^{\prime \detr,\msW}(p) \cdot \pi^{\msLG}(a) \cdot 200 + (1-\theta^{\prime \detr,\msW}(p)) \cdot \pi^{\msLG}(b) \cdot 100\bigr) \\
        & \quad + 0.5 \cdot 0.1 \cdot \bigl(\theta^{\prime \detr,\msW}(q) \cdot \pi^{\msLG}(b) \cdot 200 + (1-\theta^{\prime \detr,\msW}(q)) \cdot \pi^{\msLG}(a) \cdot 100\bigr)\\
        & \quad + 0.5 \cdot \bigl(\theta^{\prime \detr,\msW}(q) \cdot \pi^{\msDG}(b) \cdot 200 + (1-\theta^{\prime \detr,\msW}(q)) \cdot \pi^{\msDG}(a) \cdot 100\bigr).
    \intertext{Simplify:}
        &= 90 \cdot \pi^{\msLG}(a) \cdot \theta^{\prime \detr,\msW}(p) + 45 \cdot \pi^{\msLG}(b) - 45 \cdot \pi^{\msLG}(b) \cdot \theta^{\prime \detr,\msW}(p)\\
        & \quad + 10 \cdot \pi^{\msLG}(b) \cdot \theta^{\prime \detr,\msW}(q) + 5 \cdot \pi^{\msLG}(a) - 5 \cdot \pi^{\msLG}(a) \cdot \theta^{\prime \detr,\msW}(q) \\
        & \quad + 100 \cdot \pi^{\msDG}(b) \cdot \theta^{\prime \detr,\msW}(q) + 50 \cdot \pi^{\msDG}(a) - 50 \cdot \pi^{\msDG}(a) \cdot \theta^{\prime \detr,\msW}(q).
    \intertext{Reorder:}
        &= 45 \cdot \pi^{\msLG}(b) + 5 \cdot \pi^{\msLG}(a) + 50 \cdot \pi^{\msDG}(a)\\
        & \quad + (90 \cdot \pi^{\msLG}(a) - 45 \cdot \pi^{\msLG}(b)) \cdot \theta^{\prime \detr,\msW}(p)\\
        & \quad + (10 \cdot \pi^{\msLG}(b) - 5 \cdot \pi^{\msLG}(a) + 100 \cdot \pi^{\msDG}(b) - 50 \cdot \pi^{\msDG}(a)) \cdot \theta^{\prime \detr,\msW}(q).
    \intertext{Using the definition of $\theta^{\prime \detr,\msW}$:}
        &= 45 \cdot \pi^{\msLG}(b) + 5 \cdot \pi^{\msLG}(a) + 50 \cdot \pi^{\msDG}(a)\\
        & \quad + (90 \cdot \pi^{\msLG}(a) - 45 \cdot \pi^{\msLG}(b)) \cdot \sum_{\theta^\detr\in \Theta^\detr} \theta^\mix(\theta^\detr) \cdot \theta^{\detr,\msW}(p)\\
        & \quad + (10 \cdot \pi^{\msLG}(b) - 5 \cdot \pi^{\msLG}(a) + 100 \cdot \pi^{\msDG}(b) - 50 \cdot \pi^{\msDG}(a)) \cdot \sum_{\theta^\detr\in \Theta^\detr} \theta^\mix(\theta^\detr) \cdot \theta^{\detr,\msW}(q)\\
        &= V_\text{fh}^{M_1}(\pi,\theta^\mix).
    \end{align*}
    Now we have that:
    \[
        \forall \pi\in \Pi.\; V_\text{fh}^{M_1}(\pi,\theta^\mix) = V_\text{fh}^{M_1}(\pi,\theta^{\prime \detr}),
    \]
    So we can conclude that:
    \[
        \forall \theta^\mix \in \Theta^\mix, \exists \theta^\detr \in \Theta^\detr, \forall \pi\in \Pi.\; V_\text{fh}^{M_1}(\pi,\theta^\mix) = V_\text{fh}^{M_1}(\pi,\theta^\detr).
    \]
\end{proof}

We can now prove the following theorem:
\begin{proposition}\label{app:prop_deter_pol}
    Given the full stickiness and zero stickiness RPOMDPs $M_1$ and $M_2$ of \Cref{app:fig:full_vs_zero_sticky_rPOMDP}:
    \[
        \sup_{\pi\in\Pi}\inf_{\theta^\mix\in\Theta^\mix} V_\text{fh}^{M_1}(\pi,\theta^\mix) = \sup_{\pi\in\Pi}\inf_{\theta^\detr\in\Theta^\detr} V_\text{fh}^{M_1}(\pi,\theta^\detr),
    \]
    \[
        \sup_{\pi\in\Pi}\inf_{\theta^\mix\in\Theta^\mix} V_\text{fh}^{M_2}(\pi,\theta^\mix) = \sup_{\pi\in\Pi}\inf_{\theta^\detr\in\Theta^\detr} V_\text{fh}^{M_2}(\pi,\theta^\detr).
    \]
\end{proposition}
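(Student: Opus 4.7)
The plan is to establish the claim pointwise in the agent's policy and then take the supremum. Fix an arbitrary agent policy $\pi \in \Pi$; the goal reduces to showing
\[
\inf_{\theta^\mix \in \Theta^\mix} V_\text{fh}^{M_1}(\pi,\theta^\mix) \;=\; \inf_{\theta^\detr \in \Theta^\detr} V_\text{fh}^{M_1}(\pi,\theta^\detr),
\]
and analogously for $M_2$. Once this holds for every $\pi$, taking $\sup_{\pi \in \Pi}$ on both sides preserves the equality and yields the proposition.

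For the $(\leq)$ direction I would invoke the convention introduced in the preliminaries that every deterministic nature policy $\theta^\detr$ can be identified with the Dirac mixed policy $\delta_{\theta^\detr} \in \Theta^\mix$, under which $V_\text{fh}^{M_i}(\pi, \delta_{\theta^\detr}) = V_\text{fh}^{M_i}(\pi, \theta^\detr)$. This gives an embedding $\Theta^\detr \hookrightarrow \Theta^\mix$, and the infimum of a function over a larger set is no greater than the infimum over a subset.

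For the $(\geq)$ direction I would apply \Cref{app:lem_exists_deter_pol} directly: for any $\theta^\mix \in \Theta^\mix$ there exists $\theta^\detr \in \Theta^\detr$ such that $V_\text{fh}^{M_1}(\pi,\theta^\mix) = V_\text{fh}^{M_1}(\pi,\theta^\detr)$, whence
\[
V_\text{fh}^{M_1}(\pi,\theta^\mix) \;=\; V_\text{fh}^{M_1}(\pi,\theta^\detr) \;\geq\; \inf_{\theta^{\prime\detr} \in \Theta^\detr} V_\text{fh}^{M_1}(\pi,\theta^{\prime\detr}).
\]
Taking the infimum over $\theta^\mix \in \Theta^\mix$ on the left-hand side yields the desired inequality. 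The identical argument, using the second part of \Cref{app:lem_exists_deter_pol}, handles $M_2$.

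There is no genuine obstacle here: the work has already been done in the preceding lemma, which transported each mixed policy to a value-equivalent deterministic one by exploiting the linearity of $V_\text{fh}^{M_i}$ in the nature-side parameters together with the convexity of the uncertainty set. The only small point that needs to be stated explicitly is that the embedding $\Theta^\detr \hookrightarrow \Theta^\mix$ via Dirac distributions preserves values, which is immediate from the definition of the value of a mixed policy given in \Cref{app:appendix_prelim}.
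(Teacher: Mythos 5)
Your proof is correct and follows essentially the same route as the paper: the paper also derives the proposition directly from \Cref{app:lem_exists_deter_pol} (for one inequality) and from the inclusion $\Theta^\detr \subseteq \Theta^\mix$ via Dirac distributions (for the other), only stated more tersely. Your version just makes the pointwise-in-$\pi$ reduction and the value-preserving Dirac embedding explicit, which is fine.
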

\begin{proof}
    For both $M_1$ and $M_2$, the $\leq$ direction directly follows from \Cref{app:lem_exists_deter_pol} and the $\geq$ direction from $\Theta^\detr \subseteq \Theta^\mix$.
\end{proof}
Using \Cref{app:prop_deter_pol}, we can focus on deterministic nature policies in our computation of the optimal value function. 
We can compute the optimal value for the full stickiness model as follows:
\begin{align*}
    V_\text{fh}^{*, M_1} &=  \sup_{\pi\in \Pi}\inf_{\theta^\detr\in \Theta^\detr} \bigl\{45 \cdot \pi^{\msLG}(b) + 5 \cdot \pi^{\msLG}(a) + 50 \cdot \pi^{\msDG}(a)\\
    & \hspace{72pt} + (90 \cdot \pi^{\msLG}(a) - 45 \cdot \pi^{\msLG}(b)) \cdot \theta^{\detr,\msW}(p)\\
    & \hspace{72pt} + (10 \cdot \pi^{\msLG}(b) - 5 \cdot \pi^{\msLG}(a) + 100 \cdot \pi^{\msDG}(b) - 50 \cdot \pi^{\msDG}(a)) \cdot \theta^{\detr,\msW}(q)\bigr\}.\\
\intertext{And for the zero stickiness model, simplified in the same manner as the full stickiness value function:}
    V_\text{fh}^{*,M_2} &= \sup_{\pi\in \Pi}\inf_{\theta^\detr\in \Theta^\detr} \bigl\{ 45 \cdot \pi^{\msLG}(b) + 5 \cdot \pi^{\msLG}(a) + 50 \cdot \pi^{\msDG}(a)\\
    & \hspace{72pt} + (90 \cdot \pi^{\msLG}(a) - 45 \cdot \pi^{\msLG}(b)) \cdot \theta^{\detr,\msLG}(p)\\
    & \hspace{72pt} + (10 \cdot \pi^{\msLG}(b) - 5 \cdot \pi^{\msLG}(a)) \cdot \theta^{\detr,\msLG}(q)\\
    & \hspace{72pt} + (100 \cdot \pi^{\msDG}(b) - 50 \cdot \pi^{\msDG}(a)) \cdot \theta^{\detr,\msDG}(q) \bigr\}.
\intertext{Since $\theta^{\detr,\msLG}$ is independent from $\theta^{\detr,\msDG}$ and $\pi^{\msLG}$ is independent from $\pi^{\msDG}$, we can rewrite the zero stickiness optimal value function as:}
    V_\text{fh}^{*,M_2} &= \sup_{\pi\in \Pi}\inf_{\theta^\detr\in \Theta^\detr} \bigl\{ 45 \cdot \pi^{\msLG}(b) + 5 \cdot \pi^{\msLG}(a)\\
    & \hspace{72pt} + (90 \cdot \pi^{\msLG}(a) - 45 \cdot \pi^{\msLG}(b)) \cdot \theta^{\detr,\msLG}(p)\\
    & \hspace{72pt} + (10 \cdot \pi^{\msLG}(b) - 5 \cdot \pi^{\msLG}(a)) \cdot \theta^{\detr,\msLG}(q) \bigr\}\\
    & \hspace{12pt} + \sup_{\pi\in \Pi}\inf_{\theta^\detr\in \Theta^\detr} \bigl\{ 50 \cdot \pi^{\msDG}(a) + (100 \cdot \pi^{\msDG}(b) - 50 \cdot \pi^{\msDG}(a)) \cdot \theta^{\detr,\msDG}(q) \bigr\}.
\end{align*}

\Cref{app:tab:full_vs_zero_sticky_rPOMDP}  displays the computed optimal values and policies, showing the differences between the full and zero stickiness assumptions.
An underscore indicates that the value assigned to this variable does not influence the optimal value of the RPOMDP.

{\def\arraystretch{1.3}
\begin{table}[h]
    \centering
    \begin{tabular}{l|p{42mm} p{42mm}}
    \toprule
         & Full stickiness & Zero stickiness\\\midrule
        Optimal value & $66\frac{2}{3}$ & $65\frac{1}{2}$\\\hline
        Optimal agent policy & $\tsW\tsLG\tsDash\tsDot \mapsto \{a \mapsto \frac{1}{3}, b\mapsto \frac{2}{3}\},$\newline$\tsW\tsDG\tsDash\tsDot \mapsto \{a \mapsto \frac{7}{10}, b\mapsto \frac{3}{10}\}$ & $\tsW\tsLG\tsDash\tsDot \mapsto \{a \mapsto \frac{1}{3}, b\mapsto \frac{2}{3}\},$\newline$\tsW\tsDG\tsDash\tsDot \mapsto \{a \mapsto \frac{2}{3}, b\mapsto \frac{1}{3}\}$ \\\hline
        Optimal nature policy & $\tsW \mapsto \{p \mapsto \frac{1}{3}, q\mapsto \frac{1}{3}\}$ & $\tsW\tsLG\tsDash \mapsto \{p \mapsto \frac{83}{270}, q\mapsto \frac{1}{10}\},$\newline$\tsW\tsDG\tsDash \mapsto \{p \mapsto \_, q\mapsto \frac{1}{3}\}$\\
        \bottomrule
    \end{tabular}
    \caption{Optimal values and policies for the full stickiness and zero stickiness interpretations of the RPOMDP in \Cref{app:fig:full_vs_zero_sticky_rPOMDP}.}
    \label{app:tab:full_vs_zero_sticky_rPOMDP}
\end{table}}

\subsubsection{Underlying POSGs}
\Cref{app:fig:stickiness:POSGs} (restated below) depicts the first couple of states of the full and zero stickiness POSGs of the RPOMDP in \Cref{app:fig:full_vs_zero_sticky_rPOMDP}.
We briefly discuss the structural differences.
In the zero stickiness case, we have an infinite action choice for nature at every nature state, but every choice will reach the same unrestricted agent states, as variable assignments never stick in the zero stickiness case.
Due to this, the state space of the zero stickiness POSG is finite, consisting of $S + S\times A$ states, where $S$ is the set of states and $A$ is the set of actions of the original RPOMDP.
This can be seen in \Cref{app:fig:stickiness:POSGs} at nature state $\tup{s_1, \{\}, \bot}$, where each choice reaches the same agent states $\tup{s_2,\{\}}$ and $\tup{s_3,\{\}}$.
At this point, the choice does not influence the transition probability, so all choices go to the agent states with exactly the same probability and essentially have no influence.
This is no longer the case at nature states $\tup{s_4,\{\},\bot}$ and $\tup{s_5,\{\},\bot}$, where the values chosen for $p$ and $q$ directly influence the probability of reaching agent state $\tup{s_6,\{\}}, \tup{s_7,\{\}}, \tup{s_8,\{\}}$, and $\tup{s_9,\{\}}$.

In the full stickiness case, on the other hand, each of the infinite action choices at the first nature state $\tup{s_1,\{\},\bot}$ leads to a \emph{unique} continuation of the POSG, as each game continues to agent states with different restrictions on nature's choice.
The full stickiness case, hence, has an infinite state space but only one infinite action choice, namely the first.
\begin{reusefigure}[ht]{fig:stickiness:POSGs}
    \centering
    \begin{subfigure}[c]{0.48\columnwidth}
        \resizebox{\columnwidth}{!}{
        \begin{tikzpicture}[state/.append style={shape = ellipse}, >=stealth,
    bobbel/.style={minimum size=1mm,inner sep=0pt,fill=black,circle}]]
    \node[state, inner sep=-0.2pt] (s1) at (1,0) {\scriptsize $s_1,\{\}$};
    \node[state, inner sep=-2pt] (s1_bot) at (2.5,0) {\scriptsize $s_1,\{\},\bot$};
    \node[bobbel] (s1b_u1) at (5.3,0.9) {};
    \node[bobbel] (s1b_u2) at (5.3,-0.9) {};
    \node[state, fill=black!10, inner sep=-0.2pt] (s2) at (7.5,0.5) {\scriptsize $s_2,\{\}$};
    \node[state, fill=black!40, inner sep=-0.2pt] (s3) at (7.5,-0.5) {\scriptsize $s_3,\{\}$};
    \draw[<-] (s1.west) -- +(-0.4,0);
    \draw[->] (s2.east) -- +(0.4,0);
    \draw[->] (s3.east) -- +(0.4,0);
    \node[] at ($(s2.east) + (0.8,0)$) {\scriptsize $\dots$};
    \node[] at ($(s3.east) + (0.8,0)$) {\scriptsize $\dots$};
    \draw (s1) edge[->] (s1_bot);
    \draw (s1b_u1) edge[->] node[above right]{\scriptsize $0.5$} (s2);
    \draw (s1b_u1) edge[->] node[below left, pos = 0.4, yshift=1.5mm]{\scriptsize $0.5$} (s3.west);
    \draw (s1b_u2) edge[->] node[above left, pos = 0.4, yshift=-1.5mm]{\scriptsize $0.5$} (s2.west);
    \draw (s1b_u2) edge[->] node[below right]{\scriptsize $0.5$} (s3);
    \draw (s1_bot) edge[->] node[sloped, anchor=center, above]{\scriptsize $\dots$} (5.3,0.3);
    \draw (s1_bot) edge[->] node[sloped, anchor=center, below]{\scriptsize $\dots$} (5.3,-0.3);
    \node[] at (5,0.1) {\tiny $\vdots$};
    \node[] at (5,0.64) {\tiny $\vdots$};
    \node[] at (5,-0.44) {\tiny $\vdots$};
    \draw (s1_bot) edge node[sloped, anchor=center, above, xshift=-0.2mm, yshift=-0.6mm]{\scriptsize $\{p\mapsto 0.1,q\mapsto 0.1\}$} (s1b_u1);
    \draw (s1_bot) edge node[sloped, anchor=center, below, xshift=-0.2mm, yshift=0.6mm]{\scriptsize $\{p\mapsto 0.9,q\mapsto 0.9\}$} (s1b_u2);
\end{tikzpicture}%
        }
    \end{subfigure}%
    \quad
    \begin{subfigure}[c]{0.48\columnwidth}
        \resizebox{\columnwidth}{!}{
        \begin{tikzpicture}[state/.append style={shape = ellipse}, >=stealth,
    bobbel/.style={minimum size=1mm,inner sep=0pt,fill=black,circle}]]
    \node[state, inner sep=-0.2pt] (s1) at (1,0) {\scriptsize $s_1,\{\}$};
    \node[state, inner sep=-2pt] (s1_bot) at (2.5,0) {\scriptsize $s_1,\{\},\bot$};
    \node[bobbel] (s1b_u1) at (4.1,0.9) {};
    \node[bobbel] (s1b_u2) at (4.1,-0.9) {};
    \coordinate (c_s2_u1) at ($(s1b_u1) + (2.4,0.34)$);
    \coordinate (c_s3_u1) at ($(s1b_u1) + (2.4,-0.34)$);
    \coordinate (c_s2_u2) at ($(s1b_u2) + (2.4,0.34)$);
    \coordinate (c_s3_u2) at ($(s1b_u2) + (2.4,-0.34)$);
    \draw[fill=black!10] ($(c_s2_u1) + (-1.47,0)$) 
                        to [out = 85, in = 190] ($(c_s2_u1) + (-1.2,0.19)$) 
                        to [out = 10, in = 180] ($(c_s2_u1) + (0,0.29)$)
                        to [out = 0, in = 170] ($(c_s2_u1) + (1.2,0.19)$)
                        to [out = -10, in = 95] ($(c_s2_u1) + (1.47,0)$)
                        to [out = -95, in = 10] ($(c_s2_u1) + (1.2,-0.19)$)
                        to [out = 190, in = 0] ($(c_s2_u1) + (0,-0.29)$)
                        to [out = 180, in = -10] ($(c_s2_u1) + (-1.2,-0.19)$)
                        to [out = 170, in = -85] ($(c_s2_u1) + (-1.47,0)$);
    \draw[fill=black!40] ($(c_s3_u1) + (-1.47,0)$) 
                        to [out = 85, in = 190] ($(c_s3_u1) + (-1.2,0.19)$) 
                        to [out = 10, in = 180] ($(c_s3_u1) + (0,0.29)$)
                        to [out = 0, in = 170] ($(c_s3_u1) + (1.2,0.19)$)
                        to [out = -10, in = 95] ($(c_s3_u1) + (1.47,0)$)
                        to [out = -95, in = 10] ($(c_s3_u1) + (1.2,-0.19)$)
                        to [out = 190, in = 0] ($(c_s3_u1) + (0,-0.29)$)
                        to [out = 180, in = -10] ($(c_s3_u1) + (-1.2,-0.19)$)
                        to [out = 170, in = -85] ($(c_s3_u1) + (-1.47,0)$);
    \draw[fill=black!10] ($(c_s2_u2) + (-1.47,0)$) 
                        to [out = 85, in = 190] ($(c_s2_u2) + (-1.2,0.19)$) 
                        to [out = 10, in = 180] ($(c_s2_u2) + (0,0.29)$)
                        to [out = 0, in = 170] ($(c_s2_u2) + (1.2,0.19)$)
                        to [out = -10, in = 95] ($(c_s2_u2) + (1.47,0)$)
                        to [out = -95, in = 10] ($(c_s2_u2) + (1.2,-0.19)$)
                        to [out = 190, in = 0] ($(c_s2_u2) + (0,-0.29)$)
                        to [out = 180, in = -10] ($(c_s2_u2) + (-1.2,-0.19)$)
                        to [out = 170, in = -85] ($(c_s2_u2) + (-1.47,0)$);
    \draw[fill=black!40] ($(c_s3_u2) + (-1.47,0)$) 
                        to [out = 85, in = 190] ($(c_s3_u2) + (-1.2,0.19)$) 
                        to [out = 10, in = 180] ($(c_s3_u2) + (0,0.29)$)
                        to [out = 0, in = 170] ($(c_s3_u2) + (1.2,0.19)$)
                        to [out = -10, in = 95] ($(c_s3_u2) + (1.47,0)$)
                        to [out = -95, in = 10] ($(c_s3_u2) + (1.2,-0.19)$)
                        to [out = 190, in = 0] ($(c_s3_u2) + (0,-0.29)$)
                        to [out = 180, in = -10] ($(c_s3_u2) + (-1.2,-0.19)$)
                        to [out = 170, in = -85] ($(c_s3_u2) + (-1.47,0)$);
    \node[state, draw=none, inner sep=-9pt, minimum width = 29mm] (s2_u1) at (c_s2_u1) {\scriptsize $s_2,\{p\mapsto 0.1,q\mapsto 0.1\}$};
    \node[state, draw=none, inner sep=-9pt, minimum width = 29mm] (s3_u1) at (c_s3_u1) {\scriptsize $s_3,\{p\mapsto 0.1,q\mapsto 0.1\}$};
    \node[] at (6.5,0.1) {\tiny $\vdots$};
    \node[state, draw=none, inner sep=-9pt, minimum width = 29mm] (s2_u2) at (c_s2_u2) {\scriptsize $s_2,\{p\mapsto 0.9,q\mapsto 0.9\}$};
    \node[state, draw=none, inner sep=-9pt, minimum width = 29mm] (s3_u2) at (c_s3_u2) {\scriptsize $s_3,\{p\mapsto 0.9,q\mapsto 0.9\}$};  
    \draw[<-] (s1.west) -- +(-0.4,0);
    \draw[->] (s2_u1.east) -- +(0.4,0);
    \draw[->] (s3_u1.east) -- +(0.4,0);
    \draw[->] (s2_u2.east) -- +(0.4,0);
    \draw[->] (s3_u2.east) -- +(0.4,0);
    \node[] at ($(s2_u1.east) + (0.8,0)$) {\scriptsize $\dots$};
    \node[] at ($(s3_u1.east) + (0.8,0)$) {\scriptsize $\dots$};
    \node[] at ($(s2_u2.east) + (0.8,0)$) {\scriptsize $\dots$};
    \node[] at ($(s3_u2.east) + (0.8,0)$) {\scriptsize $\dots$};
    \draw (s1) edge[->] (s1_bot);
    \draw (s1b_u1) edge[->] node[above left, xshift=2mm]{\scriptsize $0.5$} (s2_u1.west);
    \draw (s1b_u1) edge[->] node[below left, xshift=2mm]{\scriptsize $0.5$} (s3_u1.west);
    \draw (s1b_u2) edge[->] node[above left, xshift=2mm]{\scriptsize $0.5$} (s2_u2.west);
    \draw (s1b_u2) edge[->] node[below left, xshift=2mm]{\scriptsize $0.5$} (s3_u2.west);
    \draw (s1_bot) edge[->] node[sloped, anchor=center, above]{\scriptsize $\dots$} (4.1,0.3);
    \draw (s1_bot) edge[->] node[sloped, anchor=center, below]{\scriptsize $\dots$} (4.1,-0.3);
    \node[] at (3.9,0.1) {\tiny $\vdots$};
    \node[] at (3.9,0.62) {\tiny $\vdots$};
    \node[] at (3.9,-0.42) {\tiny $\vdots$};
    \draw (s1_bot) edge node[sloped, anchor=center, above, xshift=-0.2mm, yshift=-0.6mm] {\scriptsize$\begin{aligned}
        \{ &p\mapsto 0.1,\\[-1mm]
        &q\mapsto 0.1\}
        \end{aligned}$} (s1b_u1);
    \draw (s1_bot) edge node[sloped, anchor=center, below, xshift=-0.2mm, yshift=0.6mm]{\scriptsize $\begin{aligned}
        \{ &p\mapsto 0.9,\\[-1mm]
        &q\mapsto 0.9\}
        \end{aligned}$} (s1b_u2);
\end{tikzpicture}%
        }
    \end{subfigure}
    \caption{First states of zero stickiness (left) and full stickiness (right) POSGs of the RPOMDP in  \Cref{app:fig:full_vs_zero_sticky_rPOMDP}.}
    \label{app:fig:stickiness:POSGs}
\end{reusefigure}

\subsection{Observation-based stickiness}\label{app:stickiness_matters_II}
Next, we look at the RPOMDP in \Cref{app:fig:observation_sticky_full_vs_zero_sticky_RPOMDP} to show that observation-based stickiness also differs in value from full and zero stickiness.
Note that this model extends the RPOMDP in \Cref{fig:full_vs_zero_sticky_rPOMDP}.
We interpret this model with nature first semantics.
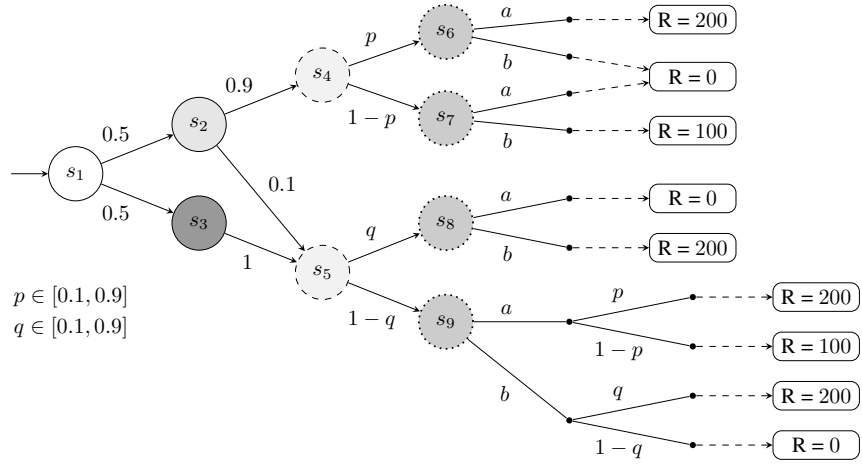
\begin{figure}[H]
    \centering
    \resizebox{0.65\columnwidth}{!}{
    \begin{tikzpicture}[state/.append style={shape = circle}, >=stealth,
    bobbel/.style={minimum size=1mm,inner sep=0pt,fill=black,circle}]
    \node[state] (s1) at (1,0) {$s_1$};
    \node[state, fill=black!10] (s2) at ($(s1) + (2,0.8)$) {$s_2$};
    \node[state, fill=black!40] (s3) at ($(s1) + (2,-0.8)$) {$s_3$};
    \node[state, fill=black!5, thin, dashed] (s4) at ($(s2) + (2,0.8)$) {$s_4$};
    \node[state, fill=black!5, thin, dashed] (s5) at ($(s3) + (2,-0.8)$) {$s_5$};
    \node[state, fill=black!20, thick, dotted] (s6) at ($(s4) + (2,0.7)$) {$s_6$};
    \node[state, fill=black!20, thick, dotted] (s7) at ($(s4) + (2,-0.7)$) {$s_7$};
    \node[state, fill=black!20, thick, dotted] (s8) at ($(s5) + (2,0.8)$) {$s_8$};
    \node[state, fill=black!20, thick, dotted] (s9) at ($(s5) + (2,-0.8)$) {$s_9$};
    \node[bobbel] (s6ba) at ($(s6) + (2,0.2)$) {};
    \node[bobbel] (s6bb) at ($(s6) + (2,-0.4)$) {};
    \node[bobbel] (s7ba) at ($(s7) + (2,0.4)$) {};
    \node[bobbel] (s7bb) at ($(s7) + (2,-0.2)$) {};
    \node[bobbel] (s8ba) at ($(s8) + (2,0.4)$) {};
    \node[bobbel] (s8bb) at ($(s8) + (2,-0.4)$) {};
    \node[bobbel] (s9ba) at ($(s9) + (2,0)$) {};
    \node[bobbel] (s9bb) at ($(s9) + (2,-1.6)$) {};
    \node[bobbel] (s9bap) at ($(s9ba) + (2,0.4)$) {};
    \node[bobbel] (s9ba-p) at ($(s9ba) + (2,-0.4)$) {};
    \node[bobbel] (s9bbq) at ($(s9bb) + (2,0.4)$) {};
    \node[bobbel] (s9bb-q) at ($(s9bb) + (2,-0.4)$) {};
    \node[rectangle, draw, minimum width=14mm, rounded corners] (rPos1) at ($(s6ba) + (2,0)$) {R = $200$};
    \node[rectangle, draw, minimum width=14mm, rounded corners] (rNeg1) at ($(s6bb) + (2,-0.325)$)  {R = $0$};
    \node[rectangle, draw, minimum width=14mm, rounded corners] (rNeu1) at ($(s7bb) + (2,0)$) {R = $100$};
    \node[rectangle, draw, minimum width=14mm, rounded corners] (rNeg2a) at ($(s8ba) + (2,0)$) {R = $0$};
    \node[rectangle, draw, minimum width=14mm, rounded corners] (rPos2) at ($(s8bb) + (2,0)$) {R = $200$};
    \node[rectangle, draw, minimum width=14mm, rounded corners] (rPos3a) at ($(s9bap) + (2,0)$) {R = $200$};
    \node[rectangle, draw, minimum width=14mm, rounded corners] (rNeu3) at ($(s9ba-p) + (2,0)$) {R = $100$};
    \node[rectangle, draw, minimum width=14mm, rounded corners] (rPos3b) at ($(s9bbq) + (2,0)$) {R = $200$};
    \node[rectangle, draw, minimum width=14mm, rounded corners] (rNeg3) at ($(s9bb-q) + (2,0)$) {R = $0$};
    \draw[<-] (s1.west) -- +(-0.6,0);
    \draw (s1) edge[->] node[above left]{$0.5$} (s2);
    \draw (s1) edge[->] node[below left]{$0.5$} (s3);
    \draw (s2) edge[->] node[above left]{$0.9$} (s4);
    \draw (s2) edge[->] node[above right]{$0.1$} (s5);
    \draw (s3) edge[->] node[below left]{$1$} (s5);
    \draw (s4) edge[->] node[above left]{$p$} (s6);
    \draw (s4) edge[->] node[below, xshift=-2mm, yshift=-1mm]{$1-p$} (s7);
    \draw (s5) edge[->] node[above left]{$q$} (s8);
    \draw (s5) edge[->] node[below, xshift=-2mm, yshift=-1mm]{$1-q$} (s9);
    \draw (s6) edge node[above left]{$a$} (s6ba);
    \draw (s6) edge node[below left]{$b$} (s6bb);
    \draw (s7) edge node[above left]{$a$} (s7ba);
    \draw (s7) edge node[below left]{$b$} (s7bb);
    \draw (s8) edge node[above left]{$a$} (s8ba);
    \draw (s8) edge node[below left]{$b$} (s8bb);
    \draw (s9) edge node[above left]{$a$} (s9ba);
    \draw (s9ba) edge node[above left]{$p$} (s9bap);
    \draw (s9ba) edge node[below, xshift=-2mm]{$1-p$} (s9ba-p);
    \draw (s9) edge node[below left]{$b$} (s9bb);
    \draw (s9bb) edge node[above left]{$q$} (s9bbq);
    \draw (s9bb) edge node[below, xshift=-2mm]{$1-q$} (s9bb-q);
    \draw [dashed] (s6ba) edge[->] (rPos1);
    \draw [dashed] (s6bb) edge[->] (rNeg1);
    \draw [dashed] (s7ba) edge[->] (rNeg1);
    \draw [dashed] (s7bb) edge[->] (rNeu1);
    \draw [dashed] (s8ba) edge[->] (rNeg2a);
    \draw [dashed] (s8bb) edge[->] (rPos2);
    \draw [dashed] (s9bap) edge[->] (rPos3a);
    \draw [dashed] (s9ba-p) edge[->] (rNeu3);
    \draw [dashed] (s9bbq) edge[->] (rPos3b);
    \draw [dashed] (s9bb-q) edge[->] (rNeg3);
    \node[text width=4cm] at (2,-2) {$p \in [0.1,0.9]$};
    \node[text width=4cm] at (2,-2.5) {$q \in [0.1,0.9]$};
\end{tikzpicture}
    }
    \caption{An RPOMDP where observation-based stickiness also leads to different values.}
    \label{app:fig:observation_sticky_full_vs_zero_sticky_RPOMDP}
\end{figure}
For $\pi \in \Pi$, we write $\pi^{\msLG} = \pi(\tsW\tsLG\tsDash\tsDot)$ and $\pi^{\msDG} = \pi(\tsW\tsDG\tsDash\tsDot)$.
Similarly, for $\theta \in \Theta$ of the full stickiness RPOMDP, we write $\theta^{\msW} = \theta(\tsW)$, for $\theta \in \Theta$ of the observation-based stickiness RPOMDP, we write $\theta^{\msLG} = \theta(\tsW\tsLG\tsDash)$ and $\theta^{\msDG} = \theta(\tsW\tsDG\tsDash)$, and for $\theta \in \Theta$ of the zero stickiness RPOMDP, we write $\theta^{\msLG} = \theta(\tsW\tsLG\tsDash)$, $\theta^{\msDG} = \theta(\tsW\tsDG\tsDash)$, $\theta^{\msLG\msDot} = \theta(\tsW\tsDG\tsDash\tsDot)$, and $\theta^{\msDG\msDot} = \theta(\tsW\tsDG\tsDash\tsDot)$.
Using this notation, we can construct the value functions for the various stickiness interpretations of the RPOMDP in \Cref{app:fig:full_vs_zero_sticky_rPOMDP}.
We construct these value functions following the same approach as for the value functions of the RPOMDP in \Cref{app:fig:full_vs_zero_sticky_rPOMDP}, see \Cref{app:stickiness_matters_I} and \Cref{app:fig:branches_full_vs_zero_sticky_rPOMDP}.
The value function for the full stickiness RPOMDP $M_1$ is:
\begin{align*}
    V_\text{fh}^{M_1}(\pi,\theta^\mix) &=  \sum_{\theta^\detr\in \Theta^\detr} \theta^\mix(\theta^\detr) \cdot \Bigl(0.5 \cdot 0.9 \cdot \bigl(\theta^{\detr, \msW}(p) \cdot \pi^{\msLG}(a) \cdot 200 + (1-\theta^{\detr, \msW}(p)) \cdot \pi^{\msLG}(b) \cdot 100\bigr) \\
    & \hspace{108pt} + 0.5 \cdot 0.1 \cdot \bigl(\theta^{\detr, \msW}(q) \cdot \pi^{\msLG}(b) \cdot 200 + (1-\theta^{\detr, \msW}(q))\\
    & \hspace{120pt} \cdot (\pi^{\msLG}(a) \cdot (\theta^{\detr, \msW}(p) \cdot 200 + (1-\theta^{\detr, \msW}(p)) \cdot 100)\\
    & \hspace{120pt} + \pi^{\msLG}(b) \cdot \theta^{\detr, \msW}(q) \cdot 200) \bigr)\\
    & \hspace{108pt} + 0.5 \cdot \bigl(\theta^{\detr, \msW}(q) \cdot \pi^{\msDG}(b) \cdot 200 + (1-\theta^{\detr, \msW}(q)) \\
    & \hspace{120pt} \cdot (\pi^{\msDG}(a) \cdot (\theta^{\detr, \msW}(p) \cdot 200 + (1-\theta^{\detr, \msW}(p)) \cdot 100)\\
    & \hspace{120pt} + \pi^{\msDG}(b) \cdot \theta^{\detr, \msW}(q) \cdot 200) \bigr)\Bigr).
\intertext{Which simplifies to:}
    &= \sum_{\theta^\detr\in \Theta^\detr} \theta^\mix(\theta^\detr) \cdot \Bigl( 45 \cdot \pi^{\msLG}(b) + 5 \cdot \pi^{\msLG}(a) + 50 \cdot \pi^{\msDG}(a)\\
    & \hspace{108pt} + (95 \cdot \pi^{\msLG}(a) - 45 \cdot \pi^{\msLG}(b) + 50 \cdot \pi^{\msDG}(a)) \cdot \theta^{\detr, \msW}(p)\\
    & \hspace{108pt} + (20 \cdot \pi^{\msLG}(b) - 5 \cdot \pi^{\msLG}(a) + 200 \cdot \pi^{\msDG}(b) - 50 \cdot \pi^{\msDG}(a)) \cdot \theta^{\detr, \msW}(q)\\
    & \hspace{108pt} - (5 \cdot \pi^{\msLG}(a) + 50 \cdot \pi^{\msDG}(a)) \cdot \theta^{\detr, \msW}(p) \cdot \theta^{\detr, \msW}(q)\\
    & \hspace{108pt} - (10 \cdot \pi^{\msLG}(b) + 100 \cdot \pi^{\msDG}(b)) \cdot \theta^{\detr, \msW}(q)^2 \Bigr).
\end{align*}
This function is not linear in the deterministic nature policies, as quickly follows from the multiplication of $\theta$ terms. 
We, therefore, need to search for the optimal nature policy in the set of mixed nature policies.

We can similarly write the value function for the observation-based stickiness RPOMDP $M_2$:
\begin{align*}
    V_\text{fh}^{M_2}(\pi,\theta^\mix) &=  \sum_{\theta^\detr\in \Theta^\detr} \theta^\mix(\theta^\detr) \cdot \Bigl( 0.5 \cdot 0.9 \cdot \bigl(\theta^{\detr, \msLG}(p) \cdot \pi^{\msLG}(a) \cdot 200 + (1-\theta^{\detr, \msLG}(p)) \cdot \pi^{\msLG}(b) \cdot 100\bigr) \\
    & \hspace{108pt} + 0.5 \cdot 0.1 \cdot \bigl(\theta^{\detr, \msLG}(q) \cdot \pi^{\msLG}(b) \cdot 200 + (1-\theta^{\detr, \msLG}(q))\\
    & \hspace{120pt} \cdot (\pi^{\msLG}(a) \cdot (\theta^{\detr, \msLG}(p) \cdot 200 + (1-\theta^{\detr, \msLG}(p)) \cdot 100)\\
    & \hspace{120pt} + \pi^{\msLG}(b) \cdot \theta^{\detr, \msLG}(q) \cdot 200) \bigr)\\
    & \hspace{108pt} + 0.5 \cdot \bigl(\theta^{\detr, \msDG}(q) \cdot \pi^{\msDG}(b) \cdot 200 + (1-\theta^{\detr, \msDG}(q)) \\
    & \hspace{120pt} \cdot (\pi^{\msDG}(a) \cdot (\theta^{\detr, \msDG}(p) \cdot 200 + (1-\theta^{\detr, \msDG}(p)) \cdot 100)\\
    & \hspace{120pt} + \pi^{\msDG}(b) \cdot \theta^{\detr, \msDG}(q) \cdot 200) \bigr)\Bigr).
\intertext{Which simplifies to:}
    &=  \sum_{\theta^\detr\in \Theta^\detr} \theta^\mix(\theta^\detr) \cdot \Bigl(45 \cdot \pi^{\msLG}(b) + 5 \cdot \pi^{\msLG}(a) + 50 \cdot \pi^{\msDG}(a)\\
    & \hspace{108pt} + (95 \cdot \pi^{\msLG}(a) - 45 \cdot \pi^{\msLG}(b)) \cdot \theta^{\detr, \msLG}(p)\\
    & \hspace{108pt} + (20 \cdot \pi^{\msLG}(b) - 5 \cdot \pi^{\msLG}(a)) \cdot \theta^{\detr, \msLG}(q)\\
    & \hspace{108pt} - 5 \cdot \pi^{\msLG}(a) \cdot \theta^{\detr, \msLG}(p) \cdot \theta^{\detr, \msLG}(q)\\
    & \hspace{108pt} - 10 \cdot \pi^{\msLG}(b) \cdot \theta^{\detr, \msLG}(q)^2\\
    & \hspace{108pt} + 50 \cdot \pi^{\msDG}(a) \cdot \theta^{\detr, \msDG}(p)\\
    & \hspace{108pt} + (200 \cdot \pi^{\msDG}(b) - 50 \cdot \pi^{\msDG}(a)) \cdot \theta^{\detr, \msDG}(q) \\
    & \hspace{108pt} - 50 \cdot \pi^{\msDG}(a) \cdot \theta^{\detr, \msDG}(p) \cdot \theta^{\detr, \msDG}(q)\\
    & \hspace{108pt} - 100 \cdot \pi^{\msDG}(b) \cdot \theta^{\detr, \msDG}(q)^2\Bigr).
\end{align*}
And the value function for the zero stickiness RPOMDP $M_3$:
\begin{align*}
    V_\text{fh}^{M_3}(\pi,\theta^\mix) &=  \sum_{\theta^\detr\in \Theta^\detr} \theta^\mix(\theta^\detr) \cdot \Bigl(0.5 \cdot 0.9 \cdot \bigl(\theta^{\detr, \msLG}(p) \cdot \pi^{\msLG}(a) \cdot 200 + (1-\theta^{\detr, \msLG}(p)) \cdot \pi^{\msLG}(b) \cdot 100\bigr) \\
    & \hspace{108pt} + 0.5 \cdot 0.1 \cdot \bigl(\theta^{\detr, \msLG}(q) \cdot \pi^{\msLG}(b) \cdot 200 + (1-\theta^{\detr, \msLG}(q))\\
    & \hspace{120pt} \cdot (\pi^{\msLG}(a) \cdot (\theta^{\detr, \msLG\msDot}(p) \cdot 200 + (1-\theta^{\detr, \msLG\msDot}(p)) \cdot 100)\\
    & \hspace{120pt} + \pi^{\msLG}(b) \cdot \theta^{\detr, \msLG\msDot}(q) \cdot 200) \bigr)\\
    & \hspace{108pt} + 0.5 \cdot \bigl(\theta^{\detr, \msDG}(q) \cdot \pi^{\msDG}(b) \cdot 200 + (1-\theta^{\detr, \msDG}(q)) \\
    & \hspace{120pt} \cdot (\pi^{\msDG}(a) \cdot (\theta^{\detr, \msDG\msDot}(p) \cdot 200 + (1-\theta^{\detr, \msDG\msDot}(p)) \cdot 100)\\
    & \hspace{120pt} + \pi^{\msDG}(b) \cdot \theta^{\detr, \msDG\msDot}(q) \cdot 200) \bigr)\Bigr).
\intertext{Which simplifies to:}
    &=  \sum_{\theta^\detr\in \Theta^\detr} \theta^\mix(\theta^\detr) \cdot \Bigl(45 \cdot \pi^{\msLG}(b) + 5 \cdot \pi^{\msLG}(a) + 50 \cdot \pi^{\msDG}(a)\\
    & \hspace{108pt} + (90  \cdot \pi^{\msLG}(a) - 45 \cdot \pi^{\msLG}(b)) \cdot \theta^{\detr, \msLG}(p)\\
    & \hspace{108pt} + (10 \cdot \pi^{\msLG}(b) - 5 \cdot \pi^{\msLG}(a)) \cdot \theta^{\detr, \msLG}(q)\\
    & \hspace{108pt} + 5 \cdot \pi^{\msLG}(a) \cdot \theta^{\detr, \msLG\msDot}(p)\\
    & \hspace{108pt} + 10 \cdot \pi^{\msLG}(b) \cdot \theta^{\detr, \msLG\msDot}(q)\\
    & \hspace{108pt} - 5 \cdot \pi^{\msLG}(a) \cdot \theta^{\detr, \msLG}(q) \cdot \theta^{\detr, \msLG\msDot}(p) \\
    & \hspace{108pt} - 10 \cdot \pi^{\msLG}(b) \cdot \theta^{\detr, \msLG}(q) \cdot \theta^{\detr, \msLG\msDot}(q)\\
    & \hspace{108pt} + (100 \cdot \pi^{\msDG}(b) - 50 \cdot \pi^{\msDG}(a)) \cdot \theta^{\detr, \msDG}(q)\\
    & \hspace{108pt} + 50 \cdot \pi^{\msDG}(a) \cdot \theta^{\detr, \msDG\msDot}(p)\\
    & \hspace{108pt} + 100 \cdot \pi^{\msDG}(b) \cdot \theta^{\detr, \msDG\msDot}(q)\\
    & \hspace{108pt} - 50 \cdot \pi^{\msDG}(a) \cdot \theta^{\detr, \msDG}(q) \cdot \theta^{\detr, \msDG\msDot}(p)\\
    & \hspace{108pt} - 100 \cdot \pi^{\msDG}(b) \cdot \theta^{\detr, \msDG}(q) \cdot \theta^{\detr, \msDG\msDot}(q)\Bigr).
\end{align*}

Using the above value functions, we can compute the optimal value for the full stickiness model as follows:
\begin{align*}
    V_\text{fh}^{*, M_1} &= \sup_{\pi\in \Pi}\inf_{\theta^\mix\in \Theta^\mix} \Bigl\{\sum_{\theta^\detr\in \Theta^\detr} \theta^\mix(\theta^\detr) \cdot \Bigl( 45 \cdot \pi^{\msLG}(b) + 5 \cdot \pi^{\msLG}(a) + 50 \cdot \pi^{\msDG}(a)\\
    & \hspace{178pt} + (95 \cdot \pi^{\msLG}(a) - 45 \cdot \pi^{\msLG}(b) + 50 \cdot \pi^{\msDG}(a)) \cdot \theta^{\detr, \msW}(p)\\
    & \hspace{178pt} + (20 \cdot \pi^{\msLG}(b) - 5 \cdot \pi^{\msLG}(a) + 200 \cdot \pi^{\msDG}(b) - 50 \cdot \pi^{\msDG}(a)) \cdot \theta^{\detr, \msW}(q)\\
    & \hspace{178pt} - (5 \cdot \pi^{\msLG}(a) + 50 \cdot \pi^{\msDG}(a)) \cdot \theta^{\detr, \msW}(p) \cdot \theta^{\detr, \msW}(q)\\
    & \hspace{178pt} - (10 \cdot \pi^{\msLG}(b) + 100 \cdot \pi^{\msDG}(b)) \cdot \theta^{\detr, \msW}(q)^2 \Bigr)\Bigr\}.
\intertext{And the optimal value for the observation-based stickiness model:}
    V_\text{fh}^{*, M_2} &= \sup_{\pi\in \Pi}\inf_{\theta^\mix\in \Theta^\mix} \Bigl\{\sum_{\theta^\detr\in \Theta^\detr} \theta^\mix(\theta^\detr) \cdot \Bigl(45 \cdot \pi^{\msLG}(b) + 5 \cdot \pi^{\msLG}(a) + 50 \cdot \pi^{\msDG}(a)\\
    & \hspace{178pt} + (95 \cdot \pi^{\msLG}(a) - 45 \cdot \pi^{\msLG}(b)) \cdot \theta^{\detr, \msLG}(p)\\
    & \hspace{178pt} + (20 \cdot \pi^{\msLG}(b) - 5 \cdot \pi^{\msLG}(a)) \cdot \theta^{\detr, \msLG}(q)\\
    & \hspace{178pt} - 5 \cdot \pi^{\msLG}(a) \cdot \theta^{\detr, \msLG}(p) \cdot \theta^{\detr, \msLG}(q)\\
    & \hspace{178pt} - 10 \cdot \pi^{\msLG}(b) \cdot \theta^{\detr, \msLG}(q)^2\\
    & \hspace{178pt} + 50 \cdot \pi^{\msDG}(a) \cdot \theta^{\detr, \msDG}(p)\\
    & \hspace{178pt} + (200 \cdot \pi^{\msDG}(b) - 50 \cdot \pi^{\msDG}(a)) \cdot \theta^{\detr, \msDG}(q) \\
    & \hspace{178pt} - 50 \cdot \pi^{\msDG}(a) \cdot \theta^{\detr, \msDG}(p) \cdot \theta^{\detr, \msDG}(q)\\
    & \hspace{178pt} - 100 \cdot \pi^{\msDG}(b) \cdot \theta^{\detr, \msDG}(q)^2\Bigr) \Bigr\}.
\intertext{As histories $\tsW\tsLG\tsDash\tsDot$ and $\tsW\tsDG\tsDash\tsDot$, and $\tsW\tsLG\tsDash$ and $\tsW\tsDG\tsDash$ are mutually exclusive, and the related non-singleton choices are independent for both the agent ($\pi^{\msLG}$ and $\pi^{\msDG}$) and nature ($\theta^{\msLG}$ and $\theta^{\msDG}$), we can rewrite observation-based stickiness as:}
    &= \sup_{\pi\in \Pi}\inf_{\theta^\mix\in \Theta^\mix} \Bigl\{\sum_{\theta^\detr\in \Theta^\detr} \theta^\mix(\theta^\detr) \cdot \Bigl(45 \cdot \pi^{\msLG}(b) + 5 \cdot \pi^{\msLG}(a)\\
    & \hspace{178pt} + (95 \cdot \pi^{\msLG}(a) - 45 \cdot \pi^{\msLG}(b)) \cdot \theta^{\detr, \msLG}(p)\\
    & \hspace{178pt} + (20 \cdot \pi^{\msLG}(b) - 5 \cdot \pi^{\msLG}(a)) \cdot \theta^{\detr, \msLG}(q)\\
    & \hspace{178pt} - 5 \cdot \pi^{\msLG}(a) \cdot \theta^{\detr, \msLG}(p) \cdot \theta^{\detr, \msLG}(q)\\
    & \hspace{178pt} - 10 \cdot \pi^{\msLG}(b) \cdot \theta^{\detr, \msLG}(q)^2\Bigr) \Bigr\}\\
    & \hspace{12pt} + \sup_{\pi\in \Pi}\inf_{\theta^\mix\in \Theta^\mix} \Bigl\{\sum_{\theta^\detr\in \Theta^\detr} \theta^\mix(\theta^\detr) \cdot \Bigl(50 \cdot \pi^{\msDG}(a)\\
    & \hspace{190pt} + 50 \cdot \pi^{\msDG}(a) \cdot \theta^{\detr, \msDG}(p)\\
    & \hspace{190pt} + (200 \cdot \pi^{\msDG}(b) - 50 \cdot \pi^{\msDG}(a)) \cdot \theta^{\detr, \msDG}(q) \\
    & \hspace{190pt} - 50 \cdot \pi^{\msDG}(a) \cdot \theta^{\detr, \msDG}(p) \cdot \theta^{\detr, \msDG}(q)\\
    & \hspace{190pt} - 100 \cdot \pi^{\msDG}(b) \cdot \theta^{\detr, \msDG}(q)^2\Bigr) \Bigr\}.
\intertext{And the optimal value for the zero stickiness model:}
    V_\text{fh}^{*, M_3} &= \sup_{\pi\in \Pi}\inf_{\theta^\mix\in \Theta^\mix} \Bigl\{\sum_{\theta^\detr\in \Theta^\detr} \theta^\mix(\theta^\detr) \cdot \Bigl(45 \cdot \pi^{\msLG}(b) + 5 \cdot \pi^{\msLG}(a) + 50 \cdot \pi^{\msDG}(a)\\
    & \hspace{178pt} + (90  \cdot \pi^{\msLG}(a) - 45 \cdot \pi^{\msLG}(b)) \cdot \theta^{\detr, \msLG}(p)\\
    & \hspace{178pt} + (10 \cdot \pi^{\msLG}(b) - 5 \cdot \pi^{\msLG}(a)) \cdot \theta^{\detr, \msLG}(q)\\
    & \hspace{178pt} + 5 \cdot \pi^{\msLG}(a) \cdot \theta^{\detr, \msLG\msDot}(p)\\
    & \hspace{178pt} + 10 \cdot \pi^{\msLG}(b) \cdot \theta^{\detr, \msLG\msDot}(q)\\
    & \hspace{178pt} - 5 \cdot \pi^{\msLG}(a) \cdot \theta^{\detr, \msLG}(q) \cdot \theta^{\detr, \msLG\msDot}(p) \\
    & \hspace{178pt} - 10 \cdot \pi^{\msLG}(b) \cdot \theta^{\detr, \msLG}(q) \cdot \theta^{\detr, \msLG\msDot}(q)\\
    & \hspace{178pt} + (100 \cdot \pi^{\msDG}(b) - 50 \cdot \pi^{\msDG}(a)) \cdot \theta^{\detr, \msDG}(q)\\
    & \hspace{178pt} + 50 \cdot \pi^{\msDG}(a) \cdot \theta^{\detr, \msDG\msDot}(p)\\
    & \hspace{178pt} + 100 \cdot \pi^{\msDG}(b) \cdot \theta^{\detr, \msDG\msDot}(q)\\
    & \hspace{178pt} - 50 \cdot \pi^{\msDG}(a) \cdot \theta^{\detr, \msDG}(q) \cdot \theta^{\detr, \msDG\msDot}(p)\\
    & \hspace{178pt} - 100 \cdot \pi^{\msDG}(b) \cdot \theta^{\detr, \msDG}(q) \cdot \theta^{\detr, \msDG\msDot}(q)\Bigr)\Bigr\}.
\intertext{As histories $\tsW\tsLG\tsDash\tsDot$ and $\tsW\tsDG\tsDash\tsDot$, and $\tsW\tsLG\tsDash$ and $\tsW\tsDG\tsDash$ are mutually exclusive and the related non-singleton choices are independent for both the agent ($\pi^{\msLG}$ and $\pi^{\msDG}$) and nature ($\theta^{\msLG}$, $\theta^{\msDG}$, $\theta^{\msLG\msDot}$, and $\theta^{\msDG\msDot}$), we can rewrite zero stickiness as:}
    &= \sup_{\pi\in \Pi}\inf_{\theta^\mix\in \Theta^\mix} \Bigl\{\sum_{\theta^\detr\in \Theta^\detr} \theta^\mix(\theta^\detr) \cdot \Bigl(45 \cdot \pi^{\msLG}(b) + 5 \cdot \pi^{\msLG}(a)\\
    & \hspace{178pt} + (90  \cdot \pi^{\msLG}(a) - 45 \cdot \pi^{\msLG}(b)) \cdot \theta^{\detr, \msLG}(p)\\
    & \hspace{178pt} + (10 \cdot \pi^{\msLG}(b) - 5 \cdot \pi^{\msLG}(a)) \cdot \theta^{\detr, \msLG}(q)\\
    & \hspace{178pt} + 5 \cdot \pi^{\msLG}(a) \cdot \theta^{\detr, \msLG\msDot}(p)\\
    & \hspace{178pt} + 10 \cdot \pi^{\msLG}(b) \cdot \theta^{\detr, \msLG\msDot}(q)\\
    & \hspace{178pt} - 5 \cdot \pi^{\msLG}(a) \cdot \theta^{\detr, \msLG}(q) \cdot \theta^{\detr, \msLG\msDot}(p) \\
    & \hspace{178pt} - 10 \cdot \pi^{\msLG}(b) \cdot \theta^{\detr, \msLG}(q) \cdot \theta^{\detr, \msLG\msDot}(q)\Bigr)\Bigr\}\\
    & \hspace{12pt} + \sup_{\pi\in \Pi}\inf_{\theta^\mix\in \Theta^\mix} \Bigl\{\sum_{\theta^\detr\in \Theta^\detr} \theta^\mix(\theta^\detr) \cdot \Bigl(50 \cdot \pi^{\msDG}(a)\\
    & \hspace{190pt} + (100 \cdot \pi^{\msDG}(b) - 50 \cdot \pi^{\msDG}(a)) \cdot \theta^{\detr, \msDG}(q)\\
    & \hspace{190pt} + 50 \cdot \pi^{\msDG}(a) \cdot \theta^{\detr, \msDG\msDot}(p)\\
    & \hspace{190pt} + 100 \cdot \pi^{\msDG}(b) \cdot \theta^{\detr, \msDG\msDot}(q)\\
    & \hspace{190pt} - 50 \cdot \pi^{\msDG}(a) \cdot \theta^{\detr, \msDG}(q) \cdot \theta^{\detr, \msDG\msDot}(p)\\
    & \hspace{190pt} - 100 \cdot \pi^{\msDG}(b) \cdot \theta^{\detr, \msDG}(q) \cdot \theta^{\detr, \msDG\msDot}(q)\Bigr)\Bigr\}.
\intertext{We can further simplify it because $p$ and $q$ are independent:}
    &= \sup_{\pi\in \Pi}\inf_{\theta^\mix\in \Theta^\mix} \Bigl\{\sum_{\theta^\detr\in \Theta^\detr} \theta^\mix(\theta^\detr) \cdot \Bigl(45 \cdot \pi^{\msLG}(b) + 5 \cdot \pi^{\msLG}(a)\\
    & \hspace{108pt} + (90  \cdot \pi^{\msLG}(a) - 45 \cdot \pi^{\msLG}(b)) \cdot \theta^{\detr, \msLG}(p)\\
    & \hspace{108pt} + (10 \cdot \pi^{\msLG}(b) - 5 \cdot \pi^{\msLG}(a)) \cdot \theta^{\detr, \msLG}(q)\\
    & \hspace{108pt} + (5 \cdot \pi^{\msLG}(a) - 5 \cdot \pi^{\msLG}(a) \cdot \theta^{\detr, \msLG}(q)) \cdot \inf_{\theta^\mix\in \Theta^\mix} \bigl\{\sum_{\theta^\detr\in \Theta^\detr} \theta^\mix(\theta^\detr) \cdot \theta^{\detr, \msLG\msDot}(p)\bigr\}\\
    & \hspace{108pt} + (10 \cdot \pi^{\msLG}(b) - 10 \cdot \pi^{\msLG}(b) \cdot \theta^{\detr, \msLG}(q)) \cdot \inf_{\theta^\mix\in \Theta^\mix} \bigl\{\sum_{\theta^\detr\in \Theta^\detr} \theta^\mix(\theta^\detr) \cdot \theta^{\detr, \msLG\msDot}(q)\bigr\}\Bigr)\Bigr\}\\
    & \hspace{12pt} + \sup_{\pi\in \Pi}\inf_{\theta^\mix\in \Theta^\mix} \Bigl\{\sum_{\theta^\detr\in \Theta^\detr} \theta^\mix(\theta^\detr) \cdot \Bigl(50 \cdot \pi^{\msDG}(a)\\
    & \hspace{120pt} + (100 \cdot \pi^{\msDG}(b) - 50 \cdot \pi^{\msDG}(a)) \cdot \theta^{\detr, \msDG}(q)\\
    & \hspace{120pt} + (50 \cdot \pi^{\msDG}(a) - 50 \cdot \pi^{\msDG}(a) \cdot \theta^{\detr, \msDG}(q)) \cdot \inf_{\theta^\mix\in \Theta^\mix} \bigl\{\sum_{\theta^\detr\in \Theta^\detr} \theta^\mix(\theta^\detr) \cdot \theta^{\detr, \msDG\msDot}(p)\bigr\}\\
    & \hspace{120pt} + (100 \cdot \pi^{\msDG}(b) - 100 \cdot \pi^{\msDG}(b) \cdot \theta^{\detr, \msDG}(q)) \cdot \inf_{\theta^\mix\in \Theta^\mix} \bigl\{\sum_{\theta^\detr\in \Theta^\detr} \theta^\mix(\theta^\detr) \cdot \theta^{\detr, \msDG\msDot}(q)\bigr\}\Bigr)\Bigr\}.
\end{align*}

\Cref{app:tab:observation_sticky_full_vs_zero_sticky_RPOMDP} displays the computed optimal values and policies, showing the differences between the full, observation-based, and zero stickiness assumptions.
{\def\arraystretch{1.4}
\begin{table}[h]
    \centering
    \resizebox{\textwidth}{!}{
    \begin{tabular}{l|p{50mm} p{43mm} p{43mm}}
    \toprule
         & Full stickiness & Observation-based stickiness & Zero stickiness\\\midrule
        Optimal value & $74\frac{11}{390}$ & $71\frac{9}{10}$& $70\frac{295}{348}$\\\hline
        Optimal agent policy & $\tsW\tsLG\tsDash\tsDot \mapsto \{a \mapsto \frac{17}{117}, b\mapsto \frac{100}{117}\},$\newline$\tsW\tsDG\tsDash\tsDot \mapsto \{a \mapsto \frac{643}{1170}, b\mapsto \frac{527}{1170}\}$ 
        & $\tsW\tsLG\tsDash\tsDot \mapsto \{a \mapsto \frac{10}{31}, b\mapsto \frac{21}{31}\}$\newline$\tsW\tsDG\tsDash\tsDot \mapsto \{a \mapsto \frac{20}{31}, b\mapsto \frac{11}{31}\}$
        & $\tsW\tsLG\tsDash\tsDot \mapsto \{a \mapsto \frac{1}{3}, b\mapsto \frac{2}{3}\}$\newline$\tsW\tsDG\tsDash\tsDot \mapsto \{a \mapsto \frac{18}{29}, b\mapsto \frac{11}{29}\}$ \\\hline
        Optimal nature policy & $\begin{aligned}[t]
            \tsW \mapsto \{&\{p \mapsto 0.1, q \mapsto 0.1\} \mapsto \tfrac{17}{24},\\ &\{p \mapsto 0.9, q \mapsto 0.1\} \mapsto \tfrac{3}{104},\\ &\{p \mapsto 0.9, q \mapsto 0.9\} \mapsto \tfrac{41}{156}\}
        \end{aligned}$ &
        $\begin{aligned}[t]
            &\tsW\tsLG\tsDash \mapsto \{\\
            &\{p \mapsto 0.1, q \mapsto 0.1\} \mapsto \tfrac{1663}{2232},\\
            &\{p \mapsto 0.9, q \mapsto 0.1\} \mapsto \tfrac{569}{2232}\}\\
            &\tsW\tsDG\tsDash \mapsto \{\\
            &\{p \mapsto 0.1, q \mapsto 0.1\} \mapsto \tfrac{187}{248},\\
            &\{p \mapsto 0.1, q \mapsto 0.9\} \mapsto \tfrac{61}{248}\}
        \end{aligned}$ &
        $\begin{aligned}[t]
            &\tsW\tsLG\tsDash \mapsto \{\\
            &\{p \mapsto 0.1, q \mapsto 0.1\} \mapsto \tfrac{1591}{2160},\\
            &\{p \mapsto 0.9, q \mapsto 0.1\} \mapsto \tfrac{569}{2160}\}\\
            &\tsW\tsDG\tsDash \mapsto \{\\
            &\{p \mapsto \_, q \mapsto 0.1\} \mapsto \tfrac{171}{232},\\
            &\{p \mapsto \_, q \mapsto 0.9\} \mapsto \tfrac{61}{232}\}\\
            &\tsW\tsLG\tsDash\tsDot \mapsto \{\\
            &\{p \mapsto 0.1, q \mapsto 0.1\} \mapsto 1\}\\
            &\tsW\tsDG\tsDash\tsDot \mapsto \{\\
            &\{p \mapsto 0.1, q \mapsto 0.1\} \mapsto 1\}
        \end{aligned}$
        \\
        \bottomrule
    \end{tabular}
    }
    \caption{Optimal values and policies for the full stickiness, observation-based stickiness, and zero stickiness interpretations of the RPOMDP in \Cref{app:fig:observation_sticky_full_vs_zero_sticky_RPOMDP}.}
    \label{app:tab:observation_sticky_full_vs_zero_sticky_RPOMDP}
\end{table}}

\subsubsection{Underlying POSGs}
The POSG of the full stickiness interpretation of the RPOMDP in \Cref{app:fig:observation_sticky_full_vs_zero_sticky_RPOMDP} displays the same structural difference with the observation-based stickiness and zero stickiness interpretations as in \Cref{app:fig:stickiness:POSGs}.
The variable assignment chosen at the first nature state $\tup{s_1,\{\},\bot}$ sticks in the full stickiness RPOMDP but not in the observation-based one zero stickiness ones.

The difference between the observation-based stickiness and zero stickiness POSGs occurs at a later stage, namely at nature states $\tup{s4,\{\},\bot}$ and $\tup{s5,\{\},\bot}$.
After one of these states, the observation-based stickiness model follows the same structure as the full stickiness model, leading to infinitely many agent states with different variable restrictions.
The POSG of the zero stickiness interpretation continues with infinitely many transitions going to the same agent states $\tup{s_6,\{\}}, \tup{s_7,\{\}}, \tup{s_8,\{\}}$, and $\tup{s_9,\{\}}$ with the totally undefined variable restriction.

\subsection{Order of Play Matters}\label{app:order_of_play_matters_I}
We first revisit the RPOMDP in \Cref{fig:agent_vs_nature_first_RPOMDP_small} and show how we computed the optimal values to show that order of play matters in RPOMDPs.
\begin{reusefigure}[H]{fig:agent_vs_nature_first_RPOMDP_small}
    \centering
    \resizebox{0.55\textwidth}{!}{
        \begin{tikzpicture}[state/.append style={shape = circle}, >=stealth,
    bobbel/.style={minimum size=1mm,inner sep=0pt,fill=black,circle}]
    \node[state] (s1) at (1,0) {$s_1$};
    \draw[<-] (s1.west) -- +(-0.6,0);
    \node[bobbel] (s1ab) at ($(s1) + (2,0.7)$) {};
    \node[bobbel] (s1bb) at ($(s1) + (2,-0.7)$) {};
    \node[bobbel] (s1apr) at ($(s1ab) + (2,0.4)$) {};
    \node[bobbel] (s1a-pr) at ($(s1ab) + (2,-0.4)$) {};
    \node[bobbel] (s1bpr) at ($(s1bb) + (2,0.4)$) {};
    \node[bobbel] (s1b-pr) at ($(s1bb) + (2,-0.4)$) {};
    \node[rectangle, draw, minimum width=14mm, rounded corners] (rPos1) at ($(s1apr) + (2,0)$) {R = $300$};
    \node[rectangle, draw, minimum width=14mm, rounded corners] (rNeg) at ($(s1a-pr) + (2,-0.3)$) {R = $0$};
    \node[rectangle, draw, minimum width=14mm, rounded corners] (rPos2) at ($(s1b-pr) + (2,0)$) {R = $300$};
    \draw (s1) edge[-] node[above left]{$a$} (s1ab);
    \draw (s1) edge[-] node[below left]{$b$} (s1bb);
    \draw (s1ab) edge[->] node[above left]{$p$} (s1apr);
    \draw (s1ab) edge[->] node[below, xshift=-2mm]{$1-p$} (s1a-pr);
    \draw (s1bb) edge[->] node[above left]{$p$} (s1bpr);
    \draw (s1bb) edge[->] node[below, xshift=-2mm]{$1-p$} (s1b-pr);
    \draw [dashed] (s1apr) edge[->] (rPos1);
    \draw [dashed] (s1a-pr) edge[->] (rNeg);
    \draw [dashed] (s1bpr) edge[->] (rNeg);
    \draw [dashed] (s1b-pr) edge[->] (rPos2);
    \node[] at ($(rPos1.east) + (1.5,0)$) {p $\in [0.1,0.9]$};
\end{tikzpicture}
    }
    \caption{An RPOMDP where agent first and nature first semantics do not coincide in their optimal value.}
    \label{app:fig:agent_vs_nature_first_RPOMDP_small}
\end{reusefigure}

For $\pi \in \Pi$, we write $\pi^{\msW} = \pi(\tsW)$.
Similarly, for $\theta \in \Theta$ of the agent first RPOMDP, we write $\theta^{a} = \theta(\tup{\tsW,a})$ and $\theta^{b} = \theta(\tup{\tsW,b})$, and for $\theta \in \Theta$ of the nature first RPOMDP, we write $\theta^{\msW} = \theta(\tsW)$.
Using this notation, we can construct the value functions for the agent first and nature interpretations of the RPOMDP in \Cref{app:fig:agent_vs_nature_first_RPOMDP_small}.
We construct these value functions following the same approach as for the value functions of the RPOMDP in \Cref{app:fig:full_vs_zero_sticky_rPOMDP}, see \Cref{app:stickiness_matters_I} and \Cref{app:fig:branches_full_vs_zero_sticky_rPOMDP}.
The value function for the agent first RPOMDP $M_1$ is:
\begin{align*}
    V_\text{fh}^{M_1}(\pi,\theta^\mix) &=  \sum_{\theta^\detr\in \Theta^\detr} \theta^\mix(\theta^\detr) \Bigl(\pi^{\msW}(a) \cdot \theta^{\detr, a}(p) \cdot 300 + \pi^{\msW}(b) \cdot (1-\theta^{\detr, b}(p)) \cdot 300\Bigr).
\intertext{And the value function for the nature first RPOMDP $M_2$:}
    V_\text{fh}^{M_2}(\pi,\theta^\mix) &=  \sum_{\theta^\detr\in \Theta^\detr} \theta^\mix(\theta^\detr) \Bigl(\pi^{\msW}(a) \cdot \theta^{\detr, \msW}(p) \cdot 300 + \pi^{\msW}(b) \cdot (1-\theta^{\detr, \msW}(p)) \cdot 300\Bigr).
\end{align*}
Both these functions are linear in the deterministic nature policies.
As we again have a convex uncertainty set, we can follow the same steps as for \Cref{app:prop_deter_pol} and restrict the search for the optimal nature policy to the set of deterministic nature policies.

Using the above functions, we can compute the optimal value for the agent first model as follows:
\begin{align*}
    V_\text{fh}^{*,M_1} &= \sup_{\pi\in \Pi}\inf_{\theta^\detr\in \Theta^\detr} \Bigl\{ \pi^{\msW}(a) \cdot \theta^{\detr, a}(p) \cdot 300 + \pi^{\msW}(b) \cdot (1-\theta^{\detr, b}(p)) \cdot 300 \Bigr\}.
\intertext{And the optimal value for the nature first model:}
    V_\text{fh}^{*,M_2} &= \sup_{\pi\in \Pi}\inf_{\theta^\detr\in \Theta^\detr} \Bigl\{ \pi^{\msW}(a) \cdot \theta^{\detr, \msW}(p) \cdot 300 + \pi^{\msW}(b) \cdot (1-\theta^{\detr, \msW}(p)) \cdot 300 \Bigr\}.
\end{align*}

\Cref{app:tab:agent_vs_nature_first_RPOMDP_small} displays the computed optimal values and policies, showing the differences between the agent and nature first assumptions.
An underscore indicates that the choice at this history does not influence the optimal value of the RPOMDP.
{\def\arraystretch{1.3}
\begin{table}[h]
    \centering
    \begin{tabular}{l|p{32mm} p{37mm}}
    \toprule
         & Agent first & Nature first\\\midrule
        Optimal value & 30 & 150 \\\hline
        Optimal agent policy & $\tsW \mapsto \_$ & $\tsW \mapsto \{a \mapsto 0.5, b\mapsto 0.5\}$\\\hline
        Optimal nature policy & $\tup{\tsW,a} \mapsto \{p \mapsto 0.1\},$\newline$\tup{\tsW,b} \mapsto \{p \mapsto 0.9\}$ & $\tsW \mapsto \{p \mapsto 0.5\}$\\
        \bottomrule
    \end{tabular}
    \caption{Optimal values and policies for the agent first and nature first interpretations of the RPOMDP in \Cref{fig:agent_vs_nature_first_RPOMDP_small}.}
    \label{app:tab:agent_vs_nature_first_RPOMDP_small}
\end{table}}

\subsubsection{Underlying POSGs}
\Cref{app:fig:order:POSGs} (restated below) depicts the agent first and nature first POSGs of the RPOMDP in \Cref{app:fig:agent_vs_nature_first_RPOMDP_small}.
We briefly discuss the structural differences.
As the agent has a finite choice of actions, this will always lead to a finite split in the POSG.
Nature's number of choices depends on the variable restrictions in the nature state.
In this simple model, we only have unrestricted, and hence infinite, nature choices.
The structural difference between the two POSGs is caused entirely by the order of play.

When the agent chooses first, the POSG has a finite number of states.
The infinite choice in the nature states $\tup{s_1,\{\},a}$ and $\tup{s_1,\{\},b}$ all lead to the same reward states, just with different probabilities determined by the chosen variable assignment.

When nature chooses first, we get an infinite number of agent states after nature's infinite choice in nature state $\tup{s_1,\{\}}$, as the chosen variable assignment needs to be recorded for the transition after the agent's choice.
The number of states in the nature first model is hence infinite.
Each resulting agent state only has a finite choice leading to the same reward states.
\begin{reusefigure}[ht]{fig:order:POSGs}
    \centering
    \begin{subfigure}[c]{0.43\columnwidth}
        \resizebox{\columnwidth}{!}{
        \begin{tikzpicture}[state/.append style={shape = ellipse}, >=stealth,
    bobbel/.style={minimum size=1mm,inner sep=0pt,fill=black,circle},
    mynode/.style={rectangle,fill=white,anchor=center}]]
    \node[state, inner sep=-0.2pt] (s1) at (1,0) {\scriptsize $s_1,\{\}$};
    \node[state, inner sep=-2pt] (s1_a) at (3.2,1.15) {\scriptsize $s_1,\{\},a$};
    \node[state, inner sep=-2pt] (s1_b) at (3.2,-1.15) {\scriptsize $s_1,\{\},b$};
    \node[bobbel] (s1ba_u1) at ($(s1_a) + (2.5,0.85)$) {};
    \node[bobbel] (s1ba_u2) at ($(s1_a) + (2.5,-0.85)$) {};
    \node[bobbel] (s1bb_u1) at ($(s1_b) + (2.5,0.85)$) {};
    \node[bobbel] (s1bb_u2) at ($(s1_b) + (2.5,-0.85)$) {};
    \node[rectangle, draw, minimum width=11mm, rounded corners] (rPosa) at ($(s1ba_u1) + (1.5,-0.25)$) {\scriptsize R = $300$};
    \node[rectangle, draw, minimum width=11mm, rounded corners] (rNega) at ($(s1ba_u2) + (1.5,0.25)$) {\scriptsize R = $0$};
    \node[rectangle, draw, minimum width=11mm, rounded corners] (rNegb) at ($(s1bb_u1) + (1.5,-0.25)$) {\scriptsize R = $0$};
    \node[rectangle, draw, minimum width=11mm, rounded corners] (rPosb) at ($(s1bb_u2) + (1.5,0.25)$) {\scriptsize R = $300$};
    \draw[<-] (s1.west) -- +(-0.4,0);
    \draw (s1) edge[->] node[above left, yshift=-0.5mm, xshift=1mm]{\scriptsize $a$} (s1_a);
    \draw (s1) edge[->] node[below left, yshift=0.5mm, xshift=1mm]{\scriptsize $b$} (s1_b);
    \draw (s1ba_u1) edge[->] node[above, yshift=-0.3mm, xshift=1mm]{\scriptsize $0.1$} (rPosa);
    \draw (s1ba_u1) edge[->] node[below left, pos=0.4, yshift=1mm, xshift=0.8mm]{\scriptsize $0.9$} (rNega);
    \draw (s1ba_u2) edge[->] node[above left, pos=0.4, yshift=-1mm, xshift=0.8mm]{\scriptsize $0.9$} (rPosa);
    \draw (s1ba_u2) edge[->] node[below, yshift=0.3mm, xshift=1mm]{\scriptsize $0.1$} (rNega);
    \draw (s1_a) edge[->] node[sloped, anchor=center, above]{\scriptsize $\dots$} ($(s1ba_u1) + (0,-0.55)$);
    \draw (s1_a) edge[->] node[sloped, anchor=center, below]{\scriptsize $\dots$} ($(s1ba_u2) + (0,0.55)$);
    \node[] at ($(s1_a -| s1ba_u1) + (-0.25,0.6)$) {\tiny $\vdots$};
    \node[] at ($(s1_a -| s1ba_u1) + (-0.25,0.1)$) {\tiny $\vdots$};
    \node[] at ($(s1_a -| s1ba_u1) + (-0.25,-0.4)$) {\tiny $\vdots$};
    \draw (s1_a) edge node[sloped, anchor=center, above, xshift=-0.2mm, yshift=-0.6mm]{\scriptsize $\{p\mapsto 0.1\}$} (s1ba_u1);
    \draw (s1_a) edge node[sloped, anchor=center, below, xshift=-0.2mm, yshift=0.6mm]{\scriptsize $\{p\mapsto 0.9\}$} (s1ba_u2);
    \draw (s1bb_u1) edge[->] node[above, yshift=-0.3mm, xshift=1mm]{\scriptsize $0.1$} (rNegb);
    \draw (s1bb_u1) edge[->] node[below left, pos=0.4, yshift=1mm, xshift=0.8mm]{\scriptsize $0.9$} (rPosb);
    \draw (s1bb_u2) edge[->] node[above left, pos=0.4, yshift=-1mm, xshift=0.8mm]{\scriptsize $0.9$} (rNegb);
    \draw (s1bb_u2) edge[->] node[below, yshift=0.3mm, xshift=1mm]{\scriptsize $0.1$} (rPosb);
    \draw (s1_b) edge[->] node[sloped, anchor=center, above]{\scriptsize $\dots$} ($(s1bb_u1) + (0,-0.55)$);
    \draw (s1_b) edge[->] node[sloped, anchor=center, below]{\scriptsize $\dots$} ($(s1bb_u2) + (0,0.55)$);
    \node[] at ($(s1_b -| s1bb_u1) + (-0.25,0.6)$) {\tiny $\vdots$};
    \node[] at ($(s1_b -| s1bb_u1) + (-0.25,0.1)$) {\tiny $\vdots$};
    \node[] at ($(s1_b -| s1bb_u1) + (-0.25,-0.4)$) {\tiny $\vdots$};
    \draw (s1_b) edge node[sloped, anchor=center, above, xshift=-0.2mm, yshift=-0.6mm]{\scriptsize $\{p\mapsto 0.1\}$} (s1bb_u1);
    \draw (s1_b) edge node[sloped, anchor=center, below, xshift=-0.2mm, yshift=0.6mm]{\scriptsize $\{p\mapsto 0.9\}$} (s1bb_u2);
\end{tikzpicture}%
        }
    \end{subfigure}%
    \quad
    \begin{subfigure}[c]{0.43\columnwidth}
        \resizebox{\columnwidth}{!}{
        \begin{tikzpicture}[state/.append style={shape = ellipse}, >=stealth,
    bobbel/.style={minimum size=1mm,inner sep=0pt,fill=black,circle}]
    \node[state, inner sep=-0.2pt] (s1) at (1,0) {\scriptsize $s_1,\{\}$};
    \node[state, inner sep=-4pt] (s1_u1) at (4.2,0.85) {\scriptsize $s_1,\{\},\{p\mapsto 0.1\}$};
    \node[state, inner sep=-4pt] (s1_u2) at (4.2,-0.85) {\scriptsize $s_1,\{\},\{p\mapsto 0.9\}$};
    \node[bobbel] (s1_u1_ab) at ($(s1_u1) + (1.5,0.6)$) {};
    \node[bobbel] (s1_u1_bb) at ($(s1_u1) + (1.5,-0.6)$) {};
    \node[bobbel] (s1_u2_ab) at ($(s1_u2) + (1.5,0.6)$) {};
    \node[bobbel] (s1_u2_bb) at ($(s1_u2) + (1.5,-0.6)$) {};
    \node[rectangle, draw, minimum width=11mm, rounded corners] (rPosa) at ($(s1_u1_ab) + (1.5,-0.25)$) {\scriptsize R = $300$};
    \node[rectangle, draw, minimum width=11mm, rounded corners] (rNega) at ($(s1_u1_bb) + (1.5,0.25)$) {\scriptsize R = $0$};
    \node[rectangle, draw, minimum width=11mm, rounded corners] (rPosb) at ($(s1_u2_ab) + (1.5,-0.25)$) {\scriptsize R = $300$};
    \node[rectangle, draw, minimum width=11mm, rounded corners] (rNegb) at ($(s1_u2_bb) + (1.5,0.25)$) {\scriptsize R = $0$};
    \draw[<-] (s1.west) -- +(-0.4,0);
    \draw (s1) edge[->] node[sloped, anchor=center, above, xshift=-0.2mm, yshift=-0.6mm]{\scriptsize $\{p\mapsto 0.1\}$} (s1_u1.west);
    \draw (s1) edge[->] node[sloped, anchor=center, below, xshift=-0.2mm, yshift=0.6mm]{\scriptsize $\{p\mapsto 0.9\}$} (s1_u2.west);
    \draw (s1) edge[->] node[sloped, anchor=center, above]{\scriptsize $\dots$} (s1_u1.west |- 0,0.3);
    \draw (s1) edge[->] node[sloped, anchor=center, below]{\scriptsize $\dots$} (s1_u1.west |- 0,-0.3);
    \node[] at ($(s1_u1.west |- 1,0.6) + (-0.25,0)$) {\tiny $\vdots$};
    \node[] at ($(s1_u1.west |- 1,0.1) + (-0.25,0)$) {\tiny $\vdots$};
    \node[] at ($(s1_u1.west |- 1,-0.38) + (-0.25,0)$) {\tiny $\vdots$};
    \node[] at (s1_u1 |- 1,0.1) {\tiny $\vdots$};
    \draw (s1_u1) edge node[above left, yshift=-0.5mm, xshift=1mm]{\scriptsize $a$} (s1_u1_ab);
    \draw (s1_u1) edge node[below left, yshift=0.5mm, xshift=1mm]{\scriptsize $b$} (s1_u1_bb);
    \draw (s1_u1_ab) edge[->] node[above, yshift=-0.3mm, xshift=1mm]{\scriptsize $0.1$} (rPosa);
    \draw (s1_u1_ab) edge[->] node[below left, pos=0.4, yshift=1.5mm]{\scriptsize $0.9$} (rNega);
    \draw (s1_u1_bb) edge[->] node[above left, pos=0.4, yshift=-1.5mm]{\scriptsize $0.9$} (rPosa);
    \draw (s1_u1_bb) edge[->] node[below, yshift=0.3mm, xshift=1mm]{\scriptsize $0.1$} (rNega);
    \draw (s1_u2) edge node[above left, yshift=-0.5mm, xshift=1mm]{\scriptsize $a$} (s1_u2_ab);
    \draw (s1_u2) edge node[below left, yshift=0.5mm, xshift=1mm]{\scriptsize $b$} (s1_u2_bb);
    \draw (s1_u2_ab) edge[->] node[above, yshift=-0.3mm, xshift=1mm]{\scriptsize $0.9$} (rPosb);
    \draw (s1_u2_ab) edge[->] node[below left, pos=0.4, yshift=1.5mm]{\scriptsize $0.1$} (rNegb);
    \draw (s1_u2_bb) edge[->] node[above left, pos=0.4, yshift=-1.5mm]{\scriptsize $0.1$} (rPosb);
    \draw (s1_u2_bb) edge[->] node[below, yshift=0.3mm, xshift=1mm]{\scriptsize $0.9$} (rNegb);
\end{tikzpicture}%
        }
    \end{subfigure}
    \caption{Agent first (left) and nature first (right) POSGs of the RPOMDP in \Cref{app:fig:agent_vs_nature_first_RPOMDP_small}.}
    \label{app:fig:order:POSGs}
\end{reusefigure}

\subsection{$a$-Rectangularity}\label{app:order_of_play_matters_II}
Next, we look at an $a$-rectangular RPOMDP to show that order of play still matters under a form of rectangularity and is not only a concern in non-rectangular RPOMDPs.
Consider the RPOMDP in \Cref{app:fig:agent_vs_nature_first_RPOMDP_bigger}.
We interpret this RPOMDP with full stickiness semantics.

\begin{figure}[H]
    \centering
    \resizebox{0.6\columnwidth}{!}{
    \begin{tikzpicture}[state/.append style={shape = circle}, >=stealth,
    bobbel/.style={minimum size=1mm,inner sep=0pt,fill=black,circle}]
    \node[state] (s1) at (1,0) {$s_1$};
    \node[bobbel] (s1ba) at ($(s1) + (2,2)$) {};
    \node[bobbel] (s1bb) at ($(s1) + (2,0)$) {};
    \node[bobbel] (s1bap) at ($(s1ba) + (2,0.4)$) {};
    \node[bobbel] (s1ba-p) at ($(s1ba) + (2,-0.4)$) {};
    \node[bobbel] (s1bbq) at ($(s1bb) + (2,0.8)$) {};
    \node[bobbel] (s1bb-q) at ($(s1bb) + (2,0)$) {};
    \node[state] (s2) at ($(s1bb) + (2,-1.6)$) {$s_2$};
    \node[bobbel] (s2ba) at ($(s2) + (2,0)$) {};
    \node[bobbel] (s2bb) at ($(s2) + (2,-1.6)$) {};
    \node[bobbel] (s2bap) at ($(s2ba) + (2,0.4)$) {};
    \node[bobbel] (s2ba-p) at ($(s2ba) + (2,-0.4)$) {};
    \node[bobbel] (s2bbq) at ($(s2bb) + (2,0.4)$) {};
    \node[bobbel] (s2bb-q) at ($(s2bb) + (2,-0.4)$) {};
    \node[rectangle, draw, minimum width=14mm, rounded corners] (rPos) at ($(s1bap) + (2,0)$) {R = $300$};
    \node[rectangle, draw, minimum width=14mm, rounded corners] (rNeg1) at ($(s1bap) + (2,-1.2)$) {R = $0$};
    \node[rectangle, draw, minimum width=14mm, rounded corners] (rNeu1) at ($(s1bb-q) + (2,0)$) {R = $100$};
    \node[rectangle, draw, minimum width=14mm, rounded corners] (rNeg2) at ($(s2bap) + (2,0)$) {R = $0$};
    \node[rectangle, draw, minimum width=14mm, rounded corners] (rNeu2) at ($(s2bap) + (2,-1.2)$) {R = $100$};
    \node[rectangle, draw, minimum width=14mm, rounded corners] (rNeg3) at ($(s2bb-q) + (2,0)$) {R = $0$};
    \draw[<-] (s1.west) -- +(-0.6,0);
    \draw (s1) edge[-] node[above left]{$a$} (s1ba);
    \draw (s1) edge[-] node[below left]{$b$} (s1bb);
    \draw (s1ba) edge[-] node[above left]{$p$} (s1bap);
    \draw (s1ba) edge[-] node[below, xshift=-2mm]{$1-p$} (s1ba-p);
    \draw (s1bb) edge[-] node[above left]{$q$} (s1bbq);
    \draw (s1bb) edge[-] node[below, pos = 0.6]{$0.5-q$} (s1bb-q);
    \draw (s1bb) edge[->] node[below left]{$0.5$} (s2);
    \draw (s2) edge[-] node[above left]{$a$} (s2ba);
    \draw (s2) edge[-] node[below left]{$b$} (s2bb);
    \draw (s2ba) edge[-] node[above left]{$p$} (s2bap);
    \draw (s2ba) edge[-] node[below, xshift=-2mm]{$1-p$} (s2ba-p);
    \draw (s2bb) edge[-] node[above left]{$q$} (s2bbq);
    \draw (s2bb) edge[-] node[below, xshift=-2mm]{$1-q$} (s2bb-q);
    \draw [dashed] (s1bap) edge[->] (rPos);
    \draw [dashed] (s1ba-p) edge[->] (rNeg1);
    \draw [dashed] (s1bbq) edge[->] (rNeg1);
    \draw [dashed] (s1bb-q) edge[->] (rNeu1);
    \draw [dashed] (s2bap) edge[->] (rNeg2);
    \draw [dashed] (s2ba-p) edge[->] (rNeu2);
    \draw [dashed] (s2bbq) edge[->] (rNeu2);
    \draw [dashed] (s2bb-q) edge[->] (rNeg3);
    \node[anchor = west] at ($(s1) + (-1.1,-3)$) {$p \in [0.1,0.4]$};
    \node[anchor = west] at ($(s1) + (-1.1,-3.5)$) {$q \in [0.1,0.4]$};
\end{tikzpicture}
    }
    \caption{An $a$-rectangular RPOMDP where agent first and nature first semantics do not coincide in their optimal value.}
    \label{app:fig:agent_vs_nature_first_RPOMDP_bigger}
\end{figure}
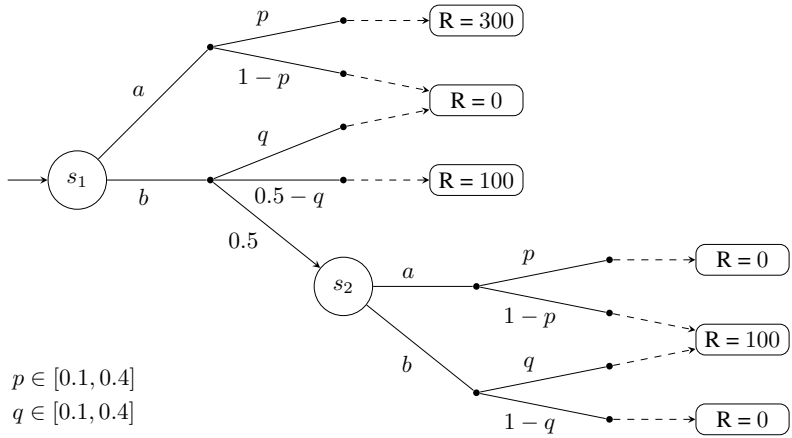

For $\pi \in \Pi$, we write $\pi^{\msW} = \pi(\tsW)$ and $\pi^{\msW\msW} = \pi(\tsW\tsW)$.
Similarly, for $\theta \in \Theta$ of the agent first RPOMDP, we write $\theta^a= \theta(\tup{\tsW,a})$ and $\theta^b= \theta(\tup{\tsW,b})$, and for $\theta \in \Theta$ of the nature first RPOMDP, we write $\theta^{\msW} = \theta(\tsW)$.
Using this notation, we can construct the value functions for the agent first and nature interpretations of the RPOMDP in \Cref{app:fig:agent_vs_nature_first_RPOMDP_bigger}.
We construct these value functions following the same approach as for the value functions of the RPOMDP in \Cref{app:fig:full_vs_zero_sticky_rPOMDP}, see \Cref{app:stickiness_matters_I} and \Cref{app:fig:branches_full_vs_zero_sticky_rPOMDP}.
The value function for the agent first RPOMDP $M_1$ is:
\begin{align*}
    V_\text{fh}^{M_1}(\pi,\theta^\mix) &= \sum_{\theta^\detr\in \Theta^\detr} \theta^\mix(\theta^\detr) \Bigl( \pi^{\msW}(a) \cdot \theta^{\detr,a}(p)\cdot 300 + \pi^{\msW}(b) \cdot \bigl((0.5 - \theta^{\detr,b}(q))\cdot 100 \\
    & \hspace{100pt} + 0.5 \cdot (\pi^{\msW\msW}(a) \cdot (1-\theta^{\detr,b}(p)) \cdot 100 + \pi^{\msW\msW}(b) \cdot \theta^{\detr,b}(q) \cdot 100) \bigr)\Bigr).
\intertext{And the value function for the nature first RPOMDP $M_2$:}
    V_\text{fh}^{M_1}(\pi,\theta^\mix) &= \sum_{\theta^\detr\in \Theta^\detr} \theta^\mix(\theta^\detr) \Bigl( \pi^{\msW}(a) \cdot \theta^{\detr\msW}(p)\cdot 300 + \pi^{\msW}(b) \cdot \bigl((0.5 - \theta^{\detr\msW}(q))\cdot 100 \\
    & \hspace{100pt} + 0.5 \cdot (\pi^{\msW\msW}(a) \cdot (1-\theta^{\detr\msW}(p)) \cdot 100 + \pi^{\msW\msW}(b) \cdot \theta^{\detr\msW}(q) \cdot 100) \bigr)\Bigr).
\end{align*}
Both these functions are linear in the deterministic nature policies.
As we again have a convex uncertainty set, we can follow the same steps as for \Cref{app:prop_deter_pol} and restrict the search for the optimal nature policy to the set of deterministic nature policies.

Using the above functions, we can compute the optimal value for the agent first model as follows:
\begin{align*}
    V_\text{fh}^{*,M_1} &= \sup_{\pi\in \Pi}\inf_{\theta^\detr\in \Theta^\detr} \Bigl\{ \pi^{\msW}(a) \cdot \theta^{\detr,a}(p)\cdot 300 + \pi^{\msW}(b) \cdot \bigl((0.5 - \theta^{\detr,b}(q))\cdot 100 \\
    & \hspace{72pt} + 0.5 \cdot (\pi^{\msW\msW}(a) \cdot (1-\theta^{\detr,b}(p)) \cdot 100 + \pi^{\msW\msW}(b) \cdot \theta^{\detr,b}(q) \cdot 100) \bigr)\Bigr\}.
\intertext{And the optimal value for the nature first RPOMDP $M_2$:}
    V_\text{fh}^{*,M_2} &= \sup_{\pi\in \Pi}\inf_{\theta^\detr\in \Theta^\detr} \Bigl\{ \pi^{\msW}(a) \cdot \theta^{\detr,\msW}(p)\cdot 300 + \pi^{\msW}(b) \cdot \bigl((0.5 - \theta^{\detr,\msW}(q))\cdot 100 \\
    & \hspace{72pt} + 0.5 \cdot (\pi^{\msW\msW}(a) \cdot (1-\theta^{\detr,\msW}(p)) \cdot 100 + \pi^{\msW\msW}(b) \cdot \theta^{\detr,\msW}(q) \cdot 100) \bigr)\Bigr\}.
\end{align*}

\Cref{app:tab:agent_vs_nature_first_RPOMDP_bigger} displays the computed optimal values and policies, showing the differences between the agent and nature first assumptions.
An underscore indicates that the value assigned to this variable does not influence the optimal value of the RPOMDP.
{\def\arraystretch{1.3}
\begin{table}[h]
    \centering
    \begin{tabular}{l|p{45mm} p{40mm}}
    \toprule
         & Agent first & Nature first\\\midrule
        Optimal value & 40 & $51\frac{3}{7}$\\\hline
        Optimal agent policy & $\tsW \mapsto \{a \mapsto 0, b\mapsto 1\}$,\newline$\tsW\tsW \mapsto \{a \mapsto 1, b\mapsto 0\}$ & $\tsW \mapsto \{a \mapsto \frac{1}{7}, b\mapsto \frac{6}{7}\}$,\newline$\tsW\tsW \mapsto \{a \mapsto 1, b\mapsto 0\}$\\\hline
        Optimal nature policy & $\tup{\tsW,a} \mapsto \{p\mapsto 0.1, q\mapsto \_\},$\newline$\tup{\tsW,b} \mapsto \{p\mapsto 0.4, q\mapsto 0.4\}$ & $\tsW \mapsto \{p\mapsto \frac{6}{35}, q\mapsto 0.4\}$\\
        \bottomrule
    \end{tabular}
    \caption{Optimal values and agent policies for the agent first and nature first interpretations of the RPOMDP in \Cref{app:fig:agent_vs_nature_first_RPOMDP_bigger}.}
    \label{app:tab:agent_vs_nature_first_RPOMDP_bigger}
\end{table}}

\subsubsection{Underlying POSGs}
\Cref{fig:order:POSGs_bigger} depicts the agent first and nature first POSGs of the RPOMDP in \Cref{app:fig:agent_vs_nature_first_RPOMDP_bigger}.
The structural difference between these POSGs, like between the POSGs in \Cref{app:fig:order:POSGs}, is caused by the different points of infinite branching.

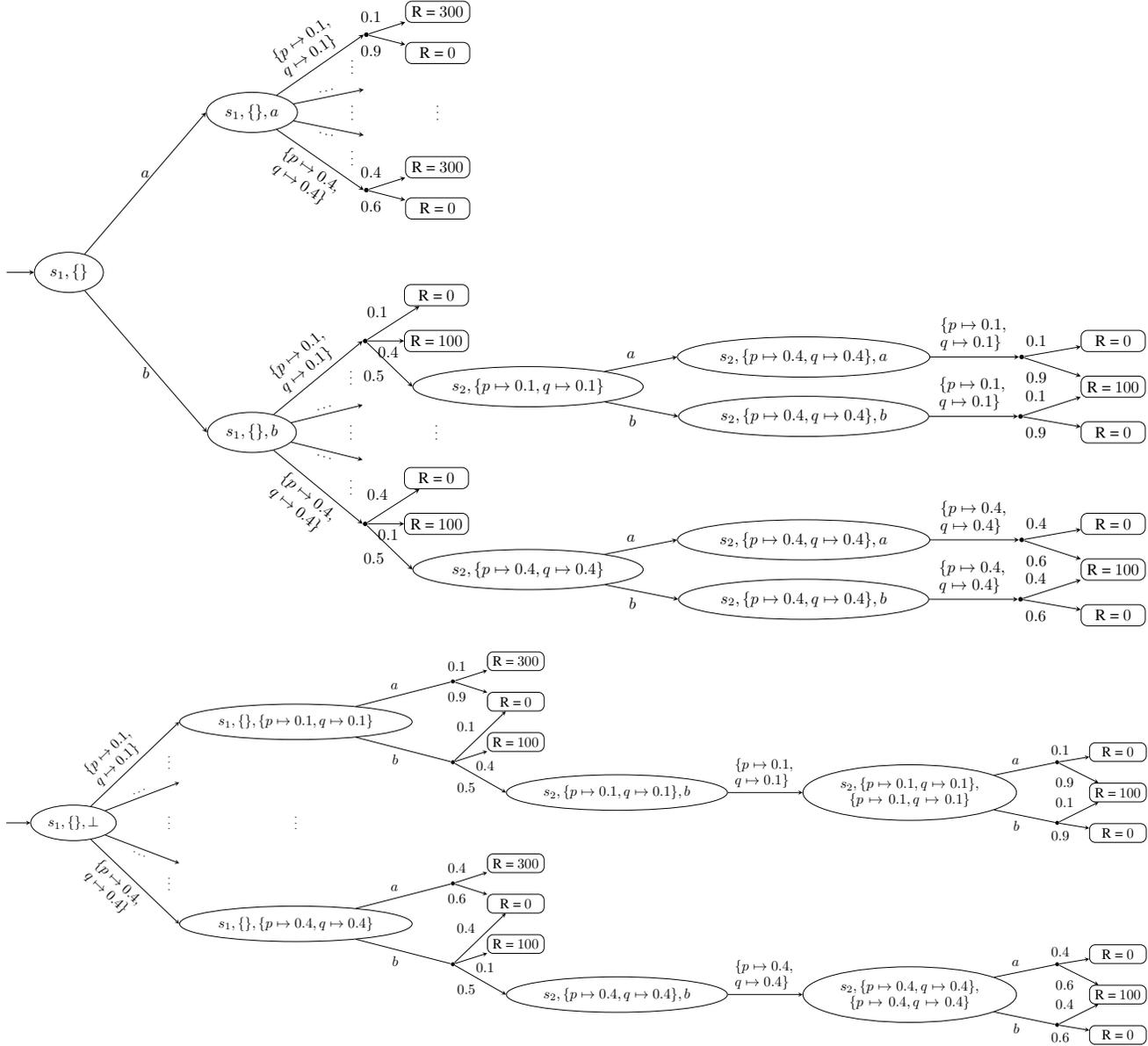
\begin{figure}[H]
    \centering
    \begin{subfigure}[b]{0.98\columnwidth}
        \resizebox{\columnwidth}{!}{
        \begin{tikzpicture}[state/.append style={shape = ellipse}, >=stealth,
    bobbel/.style={minimum size=1mm,inner sep=0pt,fill=black,circle}]
    \node[state] (s1) at (1,0) {$s_1,\{\}$};
    \node[state] (s1_a) at (5,3.5) {$s_1,\{\},a$};
    \node[state] (s1_b) at (5,-3.5) {$s_1,\{\},b$};
    \node[bobbel] (s1_a_u1) at ($(s1_a.east) + (1.5,1.7)$) {};
    \node[bobbel] (s1_a_u2) at ($(s1_a.east) + (1.5,-1.7)$) {};
    \node[bobbel] (s1_b_u1) at ($(s1_b.east) + (1.5,2)$) {};
    \node[bobbel] (s1_b_u2) at ($(s1_b.east) + (1.5,-2)$) {};
    \node[rectangle, draw, minimum width=14mm, rounded corners] (rPos_a_u1) at ($(s1_a_u1.east) + (1.5,0.5)$) {R = $300$};
    \node[rectangle, draw, minimum width=14mm, rounded corners] (rNeg_a_u1) at ($(s1_a_u1.east) + (1.5,-0.4)$) {R = $0$};
    \node[rectangle, draw, minimum width=14mm, rounded corners] (rPos_a_u2) at ($(s1_a_u2.east) + (1.5,0.5)$) {R = $300$};
    \node[rectangle, draw, minimum width=14mm, rounded corners] (rNeg_a_u2) at ($(s1_a_u2.east) + (1.5,-0.4)$) {R = $0$};
    \node[rectangle, draw, minimum width=14mm, rounded corners] (rNeg_b_u1) at ($(s1_b_u1.east) + (1.5,1)$) {R = $0$};
    \node[rectangle, draw, minimum width=14mm, rounded corners] (rNeu_b_u1) at ($(s1_b_u1.east) + (1.5,0)$) {R = $100$};
    \node[rectangle, draw, minimum width=14mm, rounded corners] (rNeg_b_u2) at ($(s1_b_u2.east) + (1.5,1)$) {R = $0$};
    \node[rectangle, draw, minimum width=14mm, rounded corners] (rNeu_b_u2) at ($(s1_b_u2.east) + (1.5,0)$) {R = $100$};
    \node[state] (s2_u1) at ($(s1_b_u1.east) + (3.5,-1)$) {$s_2,\{p\mapsto 0.1,q \mapsto 0.1\}$};
    \node[state] (s2_u2) at ($(s1_b_u2.east) + (3.5,-1)$) {$s_2,\{p\mapsto 0.4,q \mapsto 0.4\}$};
    \node[state] (s2_u1_a) at ($(s2_u1.east) + (3.5,0.65)$) {$s_2,\{p\mapsto 0.4,q \mapsto 0.4\},a$};
    \node[state] (s2_u1_b) at ($(s2_u1.east) + (3.5,-0.65)$) {$s_2,\{p\mapsto 0.4,q \mapsto 0.4\},b$};
    \node[state] (s2_u2_a) at ($(s2_u2.east) + (3.5,0.65)$) {$s_2,\{p\mapsto 0.4,q \mapsto 0.4\},a$};
    \node[state] (s2_u2_b) at ($(s2_u2.east) + (3.5,-0.65)$) {$s_2,\{p\mapsto 0.4,q \mapsto 0.4\},b$};
    \node[bobbel] (s2_u1_ab) at ($(s2_u1_a.east) + (2,0)$) {};
    \node[bobbel] (s2_u1_bb) at ($(s2_u1_b.east) + (2,0)$) {};
    \node[bobbel] (s2_u2_ab) at ($(s2_u2_a.east) + (2,0)$) {};
    \node[bobbel] (s2_u2_bb) at ($(s2_u2_b.east) + (2,0)$) {};
    \node[rectangle, draw, minimum width=14mm, rounded corners] (rNega2_1_1) at ($(s2_u1.east -| s2_u1_ab) + (2,1)$) {R = $0$};
    \node[rectangle, draw, minimum width=14mm, rounded corners] (rNeu2_1) at ($(s2_u1.east -| s2_u1_ab) + (2,0)$) {R = $100$};
    \node[rectangle, draw, minimum width=14mm, rounded corners] (rNega2_1_2) at ($(s2_u1.east -| s2_u1_ab) + (2,-1)$) {R = $0$};
    \node[rectangle, draw, minimum width=14mm, rounded corners] (rNega2_2_1) at ($(s2_u2.east -| s2_u2_ab) + (2,1)$) {R = $0$};
    \node[rectangle, draw, minimum width=14mm, rounded corners] (rNeu2_2) at ($(s2_u2.east -| s2_u2_ab) + (2,0)$) {R = $100$};
    \node[rectangle, draw, minimum width=14mm, rounded corners] (rNega2_2_2) at ($(s2_u2.east -| s2_u2_ab) + (2,-1)$) {R = $0$};
    \draw[<-] (s1.west) -- +(-0.6,0);
    \draw (s1) edge[->] node[above]{$a$} (s1_a.west);
    \draw (s1_a) edge[->] node[text width = 1.5cm, sloped, anchor=center, above]{$\{p\mapsto 0.1,$\\$q \mapsto 0.1\}$} (s1_a_u1.west);
    \draw (s1_a) edge[->] node[text width = 1.5cm, sloped, anchor=center, below]{$\{p\mapsto 0.4,$\\$q \mapsto 0.4\}$} (s1_a_u2.west);
    \draw (s1_a) edge[->] node[sloped, anchor=center, above]{\scriptsize $\dots$} ($(s1_a_u1.west) + (0,-1.2)$);
    \draw (s1_a) edge[->] node[sloped, anchor=center, below]{\scriptsize $\dots$} ($(s1_a_u2.west) + (0,1.2)$);
    \node[] at ($(s1_a_u1.west) + (-0.25,-0.6)$) {\scriptsize $\vdots$};
    \node[] at ($(s1_a_u1.west) + (-0.25,-1.6)$) {\scriptsize $\vdots$};
    \node[] at ($(s1_a_u2.west) + (-0.25,0.8)$) {\scriptsize $\vdots$};
    \node[] at ($(s1_a_u1.east) + (1.5,-1.6)$) {\scriptsize $\vdots$};
    \draw (s1_a_u1) edge[->] node[above left]{$0.1$} (rPos_a_u1);
    \draw (s1_a_u1) edge[->] node[below left]{$0.9$} (rNeg_a_u1);
    \draw (s1_a_u2) edge[->] node[above left]{$0.4$} (rPos_a_u2);
    \draw (s1_a_u2) edge[->] node[below left]{$0.6$} (rNeg_a_u2);
    \draw (s1) edge[->] node[below]{$b$} (s1_b.west);
    \draw (s1_b) edge[->] node[text width = 1.5cm, sloped, anchor=center, above]{$\{p\mapsto 0.1,$\\$q \mapsto 0.1\}$} (s1_b_u1.west);
    \draw (s1_b) edge[->] node[text width = 1.5cm, sloped, anchor=center, below]{$\{p\mapsto 0.4,$\\$q \mapsto 0.4\}$} (s1_b_u2.west);
    \draw (s1_b) edge[->] node[sloped, anchor=center, above]{\scriptsize $\dots$} ($(s1_b_u1.west) + (0,-1.4)$);
    \draw (s1_b) edge[->] node[sloped, anchor=center, below]{\scriptsize $\dots$} ($(s1_b_u2.west) + (0,1.4)$);
    \node[] at ($(s1_b_u1.west) + (-0.25,-0.7)$) {\scriptsize $\vdots$};
    \node[] at ($(s1_b_u1.west) + (-0.25,-1.9)$) {\scriptsize $\vdots$};
    \node[] at ($(s1_b_u2.west) + (-0.25,0.9)$) {\scriptsize $\vdots$};
    \node[] at ($(s1_b_u1.east) + (1.5,-1.9)$) {\scriptsize $\vdots$};
    \draw (s1_b_u1) edge[->] node[above left]{$0.1$} (rNeg_b_u1);
    \draw (s1_b_u1) edge[->] node[below left,pos=1]{$0.4$} (rNeu_b_u1);
    \draw (s1_b_u1) edge[->] node[below left]{$0.5$} (s2_u1.west);
    \draw (s1_b_u2) edge[->] node[above left]{$0.4$} (rNeg_b_u2);
    \draw (s1_b_u2) edge[->] node[below left,pos=1]{$0.1$} (rNeu_b_u2);
    \draw (s1_b_u2) edge[->] node[below left]{$0.5$} (s2_u2.west);
    \draw (s2_u1) edge[->] node[above left]{$a$} (s2_u1_a.west);
    \draw (s2_u1) edge[->] node[below left]{$b$} (s2_u1_b.west);
    \draw (s2_u2) edge[->] node[above left]{$a$} (s2_u2_a.west);
    \draw (s2_u2) edge[->] node[below left]{$b$} (s2_u2_b.west);
    \draw (s2_u1_a.east) edge[->] node[text width = 1.5cm, sloped, anchor=center, above]{$\{p\mapsto 0.1,$\\$q \mapsto 0.1\}$} (s2_u1_ab.west);
    \draw (s2_u1_b.east) edge[->] node[text width = 1.5cm, sloped, anchor=center, above]{$\{p\mapsto 0.1,$\\$q \mapsto 0.1\}$} (s2_u1_bb.west);
    \draw (s2_u2_a.east) edge[->] node[text width = 1.5cm, sloped, anchor=center, above]{$\{p\mapsto 0.4,$\\$q \mapsto 0.4\}$} (s2_u2_ab.west);
    \draw (s2_u2_b.east) edge[->] node[text width = 1.5cm, sloped, anchor=center, above]{$\{p\mapsto 0.4,$\\$q \mapsto 0.4\}$} (s2_u2_bb.west);
    \draw (s2_u1_ab) edge[->] node[above left]{$0.1$} (rNega2_1_1);
    \draw (s2_u1_ab) edge[->] node[below left]{$0.9$} (rNeu2_1);
    \draw (s2_u1_bb) edge[->] node[above left]{$0.1$} (rNeu2_1);
    \draw (s2_u1_bb) edge[->] node[below left]{$0.9$} (rNega2_1_2);
    \draw (s2_u2_ab) edge[->] node[above left]{$0.4$} (rNega2_2_1);
    \draw (s2_u2_ab) edge[->] node[below left]{$0.6$} (rNeu2_2);
    \draw (s2_u2_bb) edge[->] node[above left]{$0.4$} (rNeu2_2);
    \draw (s2_u2_bb) edge[->] node[below left]{$0.6$} (rNega2_2_2);
\end{tikzpicture}
        }
    \end{subfigure}\\[3.5mm]
    \begin{subfigure}[b]{0.98\columnwidth}
        \resizebox{\columnwidth}{!}{
        \begin{tikzpicture}[state/.append style={shape = ellipse}, >=stealth,
    bobbel/.style={minimum size=1mm,inner sep=0pt,fill=black,circle}]
    \node[state] (s1) at (1,0) {$s_1,\{\},\bot$};
    \node[state] (s1_u1) at (6.5,2.5) {$s_1,\{\},\{p\mapsto 0.1, q\mapsto 0.1\}$};
    \node[state] (s1_u2) at (6.5,-2.5) {$s_1,\{\},\{p\mapsto 0.4, q\mapsto 0.4\}$};
    \node[bobbel] (s1_u1_ab) at ($(s1_u1.east) + (1,1)$) {};
    \node[bobbel] (s1_u1_bb) at ($(s1_u1.east) + (1,-1)$) {};
    \node[bobbel] (s1_u2_ab) at ($(s1_u2.east) + (1,1)$) {};
    \node[bobbel] (s1_u2_bb) at ($(s1_u2.east) + (1,-1)$) {};
    \node[rectangle, draw, minimum width=14mm, rounded corners] (rPosa1) at ($(s1_u1_ab.east) + (1.5,0.5)$) {R = $300$};
    \node[rectangle, draw, minimum width=14mm, rounded corners] (rNega1) at ($(s1_u1_ab.east) + (1.5,-0.5)$) {R = $0$};
    \node[rectangle, draw, minimum width=14mm, rounded corners] (rNeu1) at ($(s1_u1_ab.east) + (1.5,-1.5)$) {R = $100$};
    \node[rectangle, draw, minimum width=14mm, rounded corners] (rPosa2) at ($(s1_u2_ab.east) + (1.5,0.5)$) {R = $300$};
    \node[rectangle, draw, minimum width=14mm, rounded corners] (rNega2) at ($(s1_u2_ab.east) + (1.5,-0.5)$) {R = $0$};
    \node[rectangle, draw, minimum width=14mm, rounded corners] (rNeu2) at ($(s1_u2_ab.east) + (1.5,-1.5)$) {R = $100$};
    \node[state] (s2_u1) at ($(s1_u1_bb.east) + (4,-0.75)$) {$s_2,\{p\mapsto 0.1, q\mapsto 0.1\},b$};
    \node[state] (s2_u2) at ($(s1_u2_bb.east) + (4,-0.75)$) {$s_2,\{p\mapsto 0.4, q\mapsto 0.4\},b$};
    \node[state, text width=3.5cm, align=center] (s22_u1) at ($(s2_u1.east) + (4.5,0)$) {$s_2,\{p\mapsto 0.1, q\mapsto 0.1\},$\\$\{p\mapsto 0.1, q\mapsto 0.1\}$};
    \node[state, text width=3.5cm, align=center] (s22_u2) at ($(s2_u2.east) + (4.5,0)$) {$s_2,\{p\mapsto 0.4, q\mapsto 0.4\},$\\$\{p\mapsto 0.4, q\mapsto 0.4\}$};
    \node[bobbel] (s2_u1_ab) at ($(s22_u1.east) + (1,0.75)$) {};
    \node[bobbel] (s2_u1_bb) at ($(s22_u1.east) + (1,-0.75)$) {};
    \node[bobbel] (s2_u2_ab) at ($(s22_u2.east) + (1,0.75)$) {};
    \node[bobbel] (s2_u2_bb) at ($(s22_u2.east) + (1,-0.75)$) {};
    \node[rectangle, draw, minimum width=14mm, rounded corners] (rNega2_1_1) at ($(s22_u1.east) + (2.5,1)$) {R = $0$};
    \node[rectangle, draw, minimum width=14mm, rounded corners] (rNeu2_1) at ($(s22_u1.east) + (2.5,0)$) {R = $100$};
    \node[rectangle, draw, minimum width=14mm, rounded corners] (rNega2_1_2) at ($(s22_u1.east) + (2.5,-1)$) {R = $0$};
    \node[rectangle, draw, minimum width=14mm, rounded corners] (rNega2_2_1) at ($(s22_u2.east) + (2.5,1)$) {R = $0$};
    \node[rectangle, draw, minimum width=14mm, rounded corners] (rNeu2_2) at ($(s22_u2.east) + (2.5,0)$) {R = $100$};
    \node[rectangle, draw, minimum width=14mm, rounded corners] (rNega2_2_2) at ($(s22_u2.east) + (2.5,-1)$) {R = $0$};
    \draw[<-] (s1.west) -- +(-0.6,0);
    \draw (s1) edge[->] node[text width = 2cm,sloped,anchor=center, above]{$\{p\mapsto 0.1,$\\$q\mapsto 0.1\}$} (s1_u1.west);
    \draw (s1) edge[->] node[text width = 2cm,sloped, anchor=center, below]{$\{p\mapsto 0.4,$\\$q\mapsto 0.4\}$} (s1_u2.west);
    \draw (s1) edge[->] node[sloped, anchor=center, above]{\scriptsize $\dots$} (s1_u1.west |- 1,1);
    \draw (s1) edge[->] node[sloped, anchor=center, below]{\scriptsize $\dots$} (s1_u1.west |- 0,-1);
    \node[] at ($(s1_u1.west |- 1,1.6) + (-0.25,0)$) {\scriptsize $\vdots$};
    \node[] at ($(s1_u1.west |- 1,0.1) + (-0.25,0)$) {\scriptsize $\vdots$};
    \node[] at ($(s1_u1.west |- 1,-1.4) + (-0.25,0)$) {\scriptsize $\vdots$};
    \node[] at (s1_u1 |- 1,0.1) {\scriptsize $\vdots$};
    \draw (s1_u1) edge node[above left]{$a$} (s1_u1_ab);
    \draw (s1_u1) edge node[below left]{$b$} (s1_u1_bb);
    \draw (s1_u1_ab) edge[->] node[above left]{$0.1$} (rPosa1);
    \draw (s1_u1_ab) edge[->] node[below left]{$0.9$} (rNega1);
    \draw (s1_u1_bb) edge[->] node[above left]{$0.1$} (rNega1);
    \draw (s1_u1_bb) edge[->] node[below right]{$0.4$} (rNeu1);
    \draw (s1_u1_bb) edge[->] node[below left]{$0.5$} (s2_u1.west);
    \draw (s1_u2) edge node[above left]{$a$} (s1_u2_ab);
    \draw (s1_u2) edge node[below left]{$b$} (s1_u2_bb);
    \draw (s1_u2_ab) edge[->] node[above left]{$0.4$} (rPosa2);
    \draw (s1_u2_ab) edge[->] node[below left]{$0.6$} (rNega2);
    \draw (s1_u2_bb) edge[->] node[above left]{$0.4$} (rNega2);
    \draw (s1_u2_bb) edge[->] node[below right]{$0.1$} (rNeu2);
    \draw (s1_u2_bb) edge[->] node[below left]{$0.5$} (s2_u2.west);
    \draw (s2_u1.east) edge[->] node[text width = 1.5cm, sloped, anchor=center, above]{$\{p\mapsto 0.1,$\\$q \mapsto 0.1\}$} (s22_u1.west);
    \draw (s22_u1) edge node[above left]{$a$} (s2_u1_ab);
    \draw (s22_u1) edge node[below left]{$b$} (s2_u1_bb);
    \draw (s2_u1_ab) edge[->] node[above left]{$0.1$} (rNega2_1_1);
    \draw (s2_u1_ab) edge[->] node[below left]{$0.9$} (rNeu2_1);
    \draw (s2_u1_bb) edge[->] node[above left]{$0.1$} (rNeu2_1);
    \draw (s2_u1_bb) edge[->] node[below left]{$0.9$} (rNega2_1_2);
    \draw (s2_u2.east) edge[->] node[text width = 1.5cm,sloped, anchor=center, above]{$\{p\mapsto 0.4,$\\$q\mapsto 0.4\}$} (s22_u2.west);
    \draw (s22_u2) edge node[above left]{$a$} (s2_u2_ab);
    \draw (s22_u2) edge node[below left]{$b$} (s2_u2_bb);
    \draw (s2_u2_ab) edge[->] node[above left]{$0.4$} (rNega2_2_1);
    \draw (s2_u2_ab) edge[->] node[below left]{$0.6$} (rNeu2_2);
    \draw (s2_u2_bb) edge[->] node[above left]{$0.4$} (rNeu2_2);
    \draw (s2_u2_bb) edge[->] node[below left]{$0.6$} (rNega2_2_2);
\end{tikzpicture}
        }
    \end{subfigure}
    \caption{Agent first (top) and nature first (bottom) POSGs of the RPOMDP in \Cref{app:fig:agent_vs_nature_first_RPOMDP_bigger}.}
    \label{fig:order:POSGs_bigger}
\end{figure}

\clearpage
\newpage
\section{Nature First Semantics}\label{app:nature_first}
Throughout the main paper and the appendix, the definitions and proofs are all written with the agent first order of play.
This appendix discusses the changes required to achieve the same results with the nature first semantics.

\paragraph{Policies in the RPOMDP.}
When nature moves first, nature receives the agent's action after choosing its own action.
Therefore, nature policies in nature first RPOMDPs are of the following types:%
\begin{equation*}
\begin{aligned}
    \text{Stochastic:}\\
    \text{Deterministic:}\\
    \text{Mixed:}
\end{aligned}
\qquad
\begin{aligned}[c]
    \theta \colon&H^{\nature,M} \to \dist{\bm{U}},\\
    \theta^\detr \colon&H^{\nature,M} \to \bm{U},\\
    \theta^\mix \in&\;\dist{H^{\nature,M} \to \bm{U}}.
\end{aligned}
\end{equation*}
The agent policies do not change, as we assume the agent still cannot observe the variable assignments nature chooses.

\paragraph{Nature first POSG.}
Given a nature first RPOMDP, we define its POSG as follows.
\begin{definition}[Equivalent nature first POSG]\label{def:equivalent:nature:zsposg}
Given a robust POMDP $\tup{S, A, \bm{T}, R, \Zagent, \Znature, \Zpub, O^\agent_\priv, O^\nature_\priv, O_\publ}$, we define the POSG where nature chooses first as a tuple $\tup{\mathcal{S^\agent, S^\nature, A^\agent, A^\nature, T, R, Z^\agent, Z^\nature, O^\agent, O^\nature}}$, where 
$\mathcal{S}^\nature, \mathcal{A}^\agent, \mathcal{A}^\nature, \mathcal{Z}^\agent$, and $\mathcal{Z}^\nature$ remain the same as in \Cref{def:equivalent:agent:zsposg}.
The agent's state-space is given by $\mathcal{S}^\agent = S \times \bm{U}^\pset \times \bm{U}$, and the transition, reward, and observation functions are defined as follows:
\begin{itemize}
    \item $\mathcal{T}^\agent \colon \mathcal{S}^\agent \times \mathcal{A}^\agent \to \dist{\mathcal{S}^\nature}$, by $\mathcal{T^\agent}(\tup{s,u^\pset,u},a,\tup{s',\upd(u^\pset,u,O^\nature_\priv(s),O_\publ(s),a),a}) = \bm{T}(u)(s,a,s')$. 
    \item $\mathcal{T}^\nature \colon \mathcal{S}^\nature \times \mathcal{A}^\nature \to \mathcal{S}^\agent$, by $\mathcal{T^\nature}(\tup{s,u^\pset,a},u,\tup{s,u^\pset,u}) = \begin{cases}
       1 & \quad \text{if $u \in \bm{U}^\agrees(u^\pset)$,}\\
       0 & \quad \text{otherwise.}
    \end{cases}$
    \item $\mathcal{R \colon S^\agent \times A^\agent} \to \RR$ by $\mathcal{R}(\tup{s,u^\pset,u},a) = R(s,a)$.
    \item $\mathcal{O}^\agent\colon (\mathcal{S}^\agent\cup \mathcal{S}^\nature) \to \mathcal{Z}^\agent$ by $\mathcal{O^\agent}(s) = \begin{cases}
       \tup{O^\agent_\priv(s'), O_\publ(s')} & \text{if $s = \tup{s',u^\pset,u}\in \mathcal{S^\agent}$,}\\
       \tup{O^\agent_\priv(s'), O_\publ(s')} & \text{if $s = \tup{s',u^\pset,a} \in \mathcal{S^\nature}$.}
    \end{cases}$
    \item $\mathcal{O}^\nature\colon (\mathcal{S}^\agent\cup \mathcal{S}^\nature) \to \mathcal{Z}^\nature$ by\\
    $\mathcal{O^\nature}(s) = \begin{cases}
        \tup{O^\nature_\priv(s'), O_\publ(s'),\bot} & \text{if $s = \tup{s',u^\pset,u} \in \mathcal{S^\agent}$,}\\
        \tup{O^\nature_\priv(s'), O_\publ(s'),a} &  \text{if $s = \tup{s',u^\pset,a} \in \mathcal{S^\nature}$.}
    \end{cases}$
\end{itemize}
\end{definition}
\noindent The $a$ observed in a $\mathcal{S}^\nature$ state corresponds to the previously chosen $a$.
So, the action that nature observes in a nature state $\tup{s',u^\pset,a} \in \mathcal{S^\nature}$ is the action $a \in A$ that was taken to reach the current state $s' \in S$ of the RPOMDP, not the action the agent will take from the current state.
This game starts in a $\mathcal{S}^\nature$ state consisting of the initial state $s_I \in S$ in the RPOMDP, the totally undefined variable assignment $u^\bot \in \bm{U}^\pset$, and a placeholder for the action $\bot$.

\paragraph{Paths and histories.}
When reasoning with the nature first semantics, the order of the paths in the POSG changes:
\[
\Paths^G:(\mathcal{S}^\agent \times \mathcal{A}^\agent \times \mathcal{S}^\nature \times \mathcal{A}^\nature)^* \times \mathcal{S}^\agent \Longrightarrow (\mathcal{S}^\nature \times \mathcal{A}^\nature \times \mathcal{S}^\agent \times \mathcal{A}^\agent)^* \times \mathcal{S}^\nature.
\]
As a result, the histories similarly change:
\[
H^{G}:(\mathcal{Z}^\agent \times \mathcal{Z}^\nature \times \mathcal{A}^\agent \times \mathcal{Z}^\agent \times \mathcal{Z}^\nature \times \mathcal{A}^\nature)^* \times \mathcal{Z}^\agent \times \mathcal{Z}^\nature \Longrightarrow (\mathcal{Z}^\nature \times \mathcal{Z}^\agent \times \mathcal{A}^\nature \times \mathcal{Z}^\nature \times \mathcal{Z}^\agent \times \mathcal{A}^\agent)^* \times \mathcal{Z}^\nature \times \mathcal{Z}^\agent.
\]
\[
H^{\agent,G}:(\mathcal{Z}^\agent \times \mathcal{A}^\agent \times \mathcal{Z}^\agent)^* \times \mathcal{Z}^\agent \Longrightarrow (\mathcal{Z}^\agent \times \mathcal{Z}^\agent \times \mathcal{A}^\agent)^* \times \mathcal{Z}^\agent.
\]
\[
H^{\nature,G}:(\mathcal{Z}^\nature \times \mathcal{Z}^\nature \times \mathcal{A}^\nature)^* \times \mathcal{Z}^\nature \Longrightarrow (\mathcal{Z}^\nature \times \mathcal{A}^\nature \times \mathcal{Z}^\nature)^* \times \mathcal{Z}^\nature.
\]

\paragraph{Policies in the POSG}
As the order of the paths and historis changed, the policy types also change.
The agent now observes an extra state, instead of nature.
Note that this extra observation contains no extra information for the agent, while it did for nature in the agent-first semantics.
\begin{equation*}
\begin{aligned}[c]
    \text{Stochastic:}\quad &&\pi \colon&H^{\agent,G} \to \dist{\mathcal{A}^\agent} &\implies && \pi \colon&H^{\agent,G} \times \mathcal{Z^\agent} \to \dist{\mathcal{A}^\agent},\\
    \text{Deterministic:}\quad &&\pi^\detr \colon&H^{\agent,G} \to \mathcal{A}^\agent &\implies && \pi^\detr \colon&H^{\agent,G} \times \mathcal{Z^\agent}\to \mathcal{A}^\agent,\\
    \text{Mixed:}\quad &&\pi^\mix \in&\;\dist{H^{\agent,G} \to \mathcal{A}^\agent} &\implies && \pi^\mix \in&\;\dist{H^{\agent,G} \times \mathcal{Z^\agent} \to \mathcal{A}^\agent},\\
    \\
    \text{Stochastic:}\quad &&\theta \colon&H^{\nature,G} \times \mathcal{Z^\nature} \to \dist{\mathcal{A}^\nature}, &\implies && \theta \colon&H^{\nature,G} \to \dist{\mathcal{A}^\nature}\\
    \text{Deterministic:}\quad &&\theta^\detr \colon&H^{\nature,G} \times \mathcal{Z^\nature} \to \mathcal{A}^\nature, &\implies && \theta^\detr \colon&H^{\nature,G} \to \mathcal{A}^\nature\\
    \text{Mixed:}\quad &&\theta^\mix \in&\;\dist{H^{\nature,G} \times \mathcal{Z^\nature} \to \mathcal{A}^\nature} &\implies && \theta^\mix \in&\;\dist{H^{\nature,G} \to \mathcal{A}^\nature}.
\end{aligned}
\end{equation*}

\paragraph{Additional adaptations.}
At the end of \Cref{app:notation_and_prelim,app:value:function:proofs,app:nashEquilibrium}, the adjustments required for the definitions or proofs to work with the nature first semantics are briefly discussed.

\clearpage
\newpage
\section{Equivalent Values}\label{app:value:function:proofs}
Given the stickiness and order of play, we show that the value of an RPOMDP and its POSG are equivalent (\Cref{thm:equivalent:values}).
To do so, we construct a bijection between the sets of paths of the two models.
We use this bijection to subsequently construct new bijections between the sets of histories and policies and finally conclude that the values are equivalent.
For convenience, we repeat the proposition from the main text and then split it into several lemmas.

\noindent
\textbf{Proposition 1} (Bijection between paths and histories)\textbf{.} 
\textit{
Let $M$ be an RPOMDP, and $G$ the POSG of $M$.
There exists a bijection $f \colon \Paths^M \to \Paths^G$ and bijections between individual players' histories:
    \begin{itemize}
        \item Let $H^{\agent,M}$ and $H^{\agent,G}$ be the set of all agent histories in $M$ and $G$, respectively.
        There exists a bijection $f^{\agent,h}\colon H^{\agent,M} \to H^{\agent,G}$.
        \item Let $H^{\nature,M}$ and $H^{\nature,G}$ be the set of all nature histories in $M$ and $G$, respectively.
        There exists a bijection $f^{\nature,h}\colon H^{\nature,M} \to H^{\nature,G}$.
    \end{itemize}
}

\begin{lemma}[Bijection between paths]
Let $M$ be an RPOMDP, and $G$ the POSG of $M$.
There exists a bijection $f \colon \Paths^M \to \Paths^G$.
\end{lemma}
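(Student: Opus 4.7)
The plan is to define $f$ by induction on path length, embedding each RPOMDP step $(s_i, a_i, u_i, s_{i+1})$ into four POSG components tracked by the partial-assignment bookkeeping function $\fixed$. Concretely, I would set $f(\tup{s_0}) = \tup{\tup{s_0, u^\bot}}$ and
\[
f(\mypath \concat \tup{a, u, s'}) = f(\mypath) \concat \tup{a,\; \tup{\last(\mypath), \fixed(\mypath), a},\; u,\; \tup{s', \upd(\fixed(\mypath), u, O^\nature_\priv(\last(\mypath)), O_\publ(\last(\mypath)), a)}},
\]
where $\last(\mypath)$ denotes the last state of $\mypath$. This is exactly the construction suggested by \Cref{def:equivalent:agent:zsposg}, since the agent's transition is deterministic into a nature-state that carries the just-played action, and nature's transition both consumes a variable assignment $u$ and advances the partial assignment via $\upd$.

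I would then check three things. First, well-definedness: if $\mypath$ is a valid RPOMDP path then $f(\mypath)$ is a valid POSG path. This reduces to the observation that $\mathcal{T}^\nature(\tup{s, u^\pset, a}, u, \tup{s', \upd(u^\pset, u, \ldots)}) = \bm{T}(u)(s, a, s')$ is positive iff $u \in \bm{U}^\agrees(u^\pset)$ and $\bm{T}(u)(s, a, s') > 0$, which matches the RPOMDP validity and stickiness requirements via the inductive identity $\fixed(\mypath \concat \tup{a, u, s'}) = \upd(\fixed(\mypath), u, O^\nature_\priv(\last(\mypath)), O_\publ(\last(\mypath)), a)$ (direct from comparing the two definitions). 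Second, injectivity: the obvious projection that drops $\bm{U}^\pset$-annotations and intermediate nature states is a left inverse of $f$, so $f$ is injective. Third, surjectivity: given any valid POSG path, the $\bm{U}^\pset$-component of every agent state and the action-component of every nature state are forced, respectively, by the deterministic $\mathcal{T}^\agent$ and by the definition of $\mathcal{T}^\nature$ via $\upd$; starting from $u^\bot$ and iterating, these components coincide with those produced by $f$ on the RPOMDP path obtained by dropping the extra annotations.

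The main obstacle, though largely a matter of careful bookkeeping, is the surjectivity step together with the matching of stickiness constraints: I must argue that every valid POSG path is actually in the image of $f$ from a \emph{valid} RPOMDP path, which amounts to showing that the $\bm{U}^\pset$ components appearing along any POSG path are uniquely determined by the sequence of $(s_i, a_i, u_i)$ and coincide with the iterated $\fixed$ values that certify RPOMDP validity. Once this identification is made, the two lifts to agent histories and to nature histories are immediate: in both cases $f^{\agent,h}$ and $f^{\nature,h}$ arise by composing $f$ with the respective observation-extraction functions $O^{\agent,M}$, $O^{\agent,G}$ and $O^{\nature,M}$, $O^{\nature,G}$ defined in \Cref{app:appendix_prelim}, and the same injectivity/surjectivity argument as above applies after verifying that these observation functions factor through $f$ in a way that preserves the one-to-one correspondence.
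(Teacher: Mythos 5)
Your construction of $f$ is essentially the paper's own (the paper routes the bookkeeping through a helper function carrying the partial assignment, which coincides with your use of $\fixed$ via the identity that iterating $\upd$ along a path yields $\fixed$), and your verification matches the paper's: well-definedness and surjectivity both rest on the equivalence of POSG transition positivity with $u \in \bm{U}^\agrees(\fixed(\cdot))$ together with $\bm{T}(u)(s,a,s') > 0$, and on the fact that the $\bm{U}^\pset$-annotations along any valid POSG path are forced to equal the iterated $\fixed$ values. The only cosmetic difference is injectivity, which you get from the annotation-dropping projection as a left inverse while the paper argues via the first point of deviation; both are sound.
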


\begin{proof}
    Let $\Paths^{M,\suff} \subseteq (S\times A \times \bm{U})^* \times S^?$ with $? \in \{0,1\}$ be the set of all path segments in the RPOMDP, and let $\Paths^{G,\suff} \subseteq (\mathcal{S}^\agent \times \mathcal{A}^\agent \times \mathcal{S}^\nature\times \mathcal{A}^\nature)^* \times (\mathcal{S}^\agent)^?$ with $? \in \{0,1\}$ be the set of all path segments in the POSG.
    With path segment we mean that the path can starts at any time steps $t \in \NN$ and can end at any time step $t' \in \NN, t<=t'$.
    The optional last state is only used for path segments until the horizon.
    Note that $\Paths^M \subseteq \Paths^{M,\suff}$ and $\Paths^G \subseteq \Paths^{G,\suff}$.
    Let $\mypath^M = \tup{s_0,a_0,u_0,s_1, \dots, s_n}\in\Paths^{M}$ and $t \leq n$, then $\mypath^M(t)$ indicates the $t$-th segment $\tup{s_t,a_t,u_t}$ of $\mypath^M$.
    Note that if $t = n$, the segment will only consist of the final state $\tup{s_n}$.
    Similarly, let $\mypath^G = \tup{s^\agent_0, a^\agent_0,s^\nature_0,a^\nature_0,s^\agent_1,\dots,s^\agent_n} = \tup{\tup{s_0,u^\bot},a_0,\tup{s_0,u^\bot,a_0},u_0, \tup{s_1,u^\pset_1}, \dots, \tup{s_n,u^\pset_n}} \in\Paths^G$ and $t \leq n$, then $\mypath^G(t)$ indicates the $t$-th segment $\tup{s^\agent_t,a^\agent_t,s^\nature_t,a^\nature_t} = \tup{\tup{s_t,u^\pset_t},a_t,\tup{s_t,u^\pset_t,a_t},u_t}$ of $\mypath^G$.
    Note that if $t = n$, the segment will only consist of the final agent state $\tup{s^\agent_n} = \tup{\tup{s_n,u^\pset_n}}$.\\

    \noindent Let $g\colon \Paths^{M,\suff} \times \bm{U}^\pset \pto \Paths^{G,\suff}$ defined by:
    \begin{align*}
        g(\tup{s},u^\pset) &= \tup{\tup{s,u^\pset}}.\\
        g(\tup{s,a,u}, u^\pset) &= \begin{cases}
            \tup{\tup{s,u^\pset},a,\tup{s,u^\pset,a},u} & \text{ if } u \in \bm{U}^\agrees(u^\pset),\\
            \bot & \text{ otherwise.}
        \end{cases}\\
        g(\tup{s,a,u}\concat{\mypath^M}', u^\pset) &= \begin{cases}
            g(\tup{s,a,u}, u^\pset)\concat g({\mypath^M}', \upd(u^\pset,u,O^\nature_\priv(s),O_\publ(s),a))& \text{ if } u \in \bm{U}^\agrees(u^\pset),\\
            \bot & \text{ otherwise.}
        \end{cases}
    \end{align*}
    
    Let $f\colon \Paths^M \to \Paths^G$ defined by:
    \begin{align*}
        f(\tup{s}) &= \tup{\tup{s,u^\bot}}.\\
        f(\tup{s,a,u}) &= \tup{\tup{s,u^\bot},a,\tup{s,u^\bot,a},u}.\\
        f(\tup{s,a,u}\concat{\mypath^M}') &= f(\tup{s,a,u})\concat g({\mypath^M}', \upd(u^\bot,u,O^\nature_\priv(s),O_\publ(s),a)).
    \end{align*}
    Where $u^\bot \in \bm{U}^\pset$ is the totally undefined function.
    Note that the results of $f$ and $g$ are in $\Paths^G$ and $\Paths^{G,\suff}$ by construction.
    Also, note that any call to $g$ that originated from a call in $f$ will have a result by construction.\\

    \noindent We show that $f$ is a bijection, meaning $f$ is injective and surjective.
    We first show $f$ is injective, so we show that:
    \[
        \forall \mypath^{1,M}, \mypath^{2,M} \in \Paths^M. \mypath^{1,M} \neq \mypath^{2,M} \implies f(\mypath^{1,M}) \neq f(\mypath^{2,M}).
    \]
    Given arbitrary $\mypath^{1,M}, \mypath^{2,M} \in \Paths^M$, we distinguish between the paths with superscripts $1$ and $2$, respectively.
    Assume $\mypath^{1,M} \neq \mypath^{2,M}$.
    If $\mypath^{1,M}$ and $\mypath^{2,M}$ do not have the same horizon length, then neither do $f(\mypath^{1,M})$ and $f(\mypath^{2,M})$.
    Then trivially, $f(\mypath^{1,M}) \neq f(\mypath^{2,M})$.\\
    
    \noindent Assume $\mypath^{1,M}$ and $\mypath^{2,M}$ have the same horizon length $n$.
    Then $\exists t \leq n$ where $\mypath^{1,M}$ and $\mypath^{2,M}$ deviate, so $\mypath^{1,M}(t) \neq \mypath^{2,M}(t)$.
    Let $q$ be the smallest number where the paths deviate.
    So $\forall t < q. \mypath^{1,M}(t) = \mypath^{2,M}(t)$ and $\mypath^{1,M}(q) \neq \mypath^{2,M}(q)$.
    Assume $q < n$.
    Then we know $\tup{s^1_q,a^1_q,u^1_q} \neq \tup{s^2_q,a^2_q,u^2_q}$, which comes down to: $s^1_q \neq s^2_q \lor a^1_q \neq a^2_q \lor u^1_q \neq u^2_q$.
    \begin{align*}
        f(\mypath^{1,M}) &= f(\mypath^{1,M}(1)\concat {\mypath^{1,M}}')\\
        &= \mypath^{1,G}(1)\concat g({\mypath^{1,M}}',\upd(u^\bot,u^1_0,O^\nature_\priv(s^1_0),O_\publ(s^1_0),a^1_0)).
    \intertext{Unfold $g$ until $q$:}
        &= \bigconcat_{t=0}^{q}(\mypath^{1,G}(t))\concat\tup{\tup{s^1_q,\fixed(\mypath^{1,M}_{0:q})},a^1_q,\tup{s^1_q,\fixed(\mypath^{1,M}_{0:q}),a^1_q},u^1_q}\concat g({\mypath^{1,M}}'',\fixed(\mypath^{1,M}_{0:q+1})).
    \intertext{Since $s^1_q \neq s^2_q \lor a^1_q \neq a^2_q \lor u^1_q \neq u^2_q$:}
        &\neq \bigconcat_{t=0}^{q}(\mypath^{1,G}(t))\concat \tup{\tup{s^2_q,\fixed(\mypath^{1,M}_{0:q})},a^2_q,\tup{s^2_q,\fixed(\mypath^{1,M}_{0:q}),a^2_q},u^2_q}\concat g({\mypath^{2,M}}'',\fixed(\mypath^{1,M}_{0:q+1}))\\
        &= \bigconcat_{t=0}^{q}(\mypath^{2,G}(t))\concat \tup{\tup{s^2_q,\fixed(\mypath^{2.M}_{0:q})},a^2_q,\tup{s^2_q,\fixed(\mypath^{2,M}_{0:q}),a^2_q},u^2_q}\concat g({\mypath^{2,M}}'',\fixed(\mypath^{2,M}_{0:q+1})).
    \intertext{Fold $g$ until $1$:}
        &= \mypath^{2,G}(1)\concat g({\mypath^{2,M}}',\upd(u^\bot,u^2_0,O^\nature_\priv(s^2_0),O_\publ(s^2_0),a^2_0))\\
        &= f(\mypath^{2,M}(1)\concat {\mypath^{2,M}}')\\
        &= f(\mypath^{2,M}).
    \end{align*}
    If $q = n$, then the same result follows by removing everything after $\tup{s^1_q,\fixed(\mypath^{1,M}_{0:q})}, \tup{s^2_q,\fixed(\mypath^{1,M}_{0:q})}$, and $\tup{s^2_q,\fixed(\mypath^{2,M}_{0:q})}$.
    We thus have that $f(\mypath^{1,M}) \neq f(\mypath^{2,M})$, so $f$ is injective.\\
        
    \noindent Next, we show that $f$ is surjective, so we show that:
    \[
        \forall \mypath^G \in \Paths^G, \exists \mypath^M \in \Paths^M. f(\mypath^M) = \mypath^G.
    \]
    We show this holds by induction on the horizon length of the $\mypath^G \in \Paths^G$. 
    We write the length of $\mypath^G$ as $|\mypath^G|$.

    \noindent Assume $|\mypath^G| = 0$.
    Then $\mypath^G = \tup{s_I, u^\bot}$.
    We have that for $\tup{s_I} \in \Paths^M, f(\tup{s_I}) = \tup{\tup{s_I, u^\bot}} = \mypath^G$.
    So for paths of horizon length $0$, $f$ is surjective.\\

    \noindent Now assume we know, given $q\in \NN, q \geq 1$, that:
    \[
        \forall \mypath^G \in \Paths^G. |\mypath^G| = q-1 \implies \exists  \mypath^M \in \Paths^M: f(\mypath^M) = \mypath^G.
    \]
    Take arbitrary $\mypath^G \in \Paths^G$ with horizon length $|\mypath^G| = q$.
    Then we have $\mypath^G = \mypath^G_{0:q-1}\concat \tup{a_{q-1},\tup{s_{q-1}, u^\pset_{q-1},a_{q-1}}, u_{q-1},\tup{s_q, u^\pset_{q}}}$.
    Then $\mypath^G_{0:q-1} \in \Paths^G$ and $|\mypath^G_{0:q-1}| = q-1$.
    By assumption, we get that:
    \[
        \exists \mypath^M_{0:q-1} \in \Paths^M, f(\mypath^M_{0:q-1}) = \mypath^G_{0:q-1}.
    \]
    Let $\mypath^M_{0:q-1} \in \Paths^M$ such that $f(\mypath^M_{0:q-1}) = \mypath^G_{0:q-1}$.
    We then know that in $\mypath^G$:
    \[
        \forall t < q. u^\pset_{t} = \fixed(\mypath^M_{0:t}).
    \]
    And, by \Cref{def:equivalent:agent:zsposg} and the definition of $\Paths^G$, that:
    \[
        u^\pset_{q} = \upd(\fixed(\mypath^M_{0:q-1}),u_{q-1}, O^\nature_\priv(s_{q-1}), O_\publ(s_{q-1}), a_{q-1}).
    \]
    Let $\tup{\tup{s_{q-2}, \fixed(\mypath^M_{0:q-2})}, a_{q-2}, \tup{s_{q-2}, \fixed(\mypath^M_{0:q-2}),a_{q-2}}, u_{q-2},\tup{s_{q-1}, \fixed(\mypath^M_{0:q-1})}}$ be the last two segments of $\mypath^G_{0:q-1}$.
    Then by definition and injectivity of $f$, we know that the last two segments of $\mypath^M_{0:q-1}$ are $\tup{s_{q-2},a_{q-2},u_{q-2},s_{q-1}}$.\\
    
    \noindent Now, by definition \Cref{def:equivalent:agent:zsposg} and the definition of $\Paths^G$, we know that:
    \[\mypath^G = \mypath^G_{0:q-1}\concat\tup{a_{q-1},\tup{s_{q-1}, u^\pset_{q-1},a_{q-1}}, u_{q-1},\tup{s_q, u^\pset_q}} \in \Paths^G \]
    \[\Longleftrightarrow \]
    \[\mypath^G_{0:q-1} \in \Paths^G 
    \land \mathcal{T}^\agent(\tup{s_{q-1},u^\pset_{q-1}},a_{q-1},\tup{s_{q-1},u^\pset_{q-1},a_{q-1}}) > 0\]
    \[\land \mathcal{T}^\nature(\tup{s_{q-1},u^\pset_{q-1}),a_{q-1}}, u_{q-1}, \tup{s_{q},u^\pset_{q}}) > 0\] 
    \[\Longleftrightarrow \]
    \[\mypath^G_{0:q-1} \in \Paths^G 
    \land u_{q-1} \in \bm{U}^\agrees(u^\pset_{q-1})
    \land \bm{T}(u_{q-1})(s_{q-1},a_{q-1},s_{q}) > 0
    \]
    \[\Longleftrightarrow \]
    \[\mypath^G_{0:q-1} \in \Paths^G 
    \land u_{q-1} \in \bm{U}^\agrees(\fixed(\mypath^M_{0:q-1}))
    \land \bm{T}(u_{q-1})(s_{q-1},a_{q-1},s_{q}) > 0.
    \]
    So, since $\mypath^G \in \Paths^G$, we know $u_{q-1} \in \bm{U}^\agrees(\fixed(\mypath^M_{0:q-1}))$ and $\bm{T}(u_{q-1})(s_{q-1},a_{q-1},s_{q}) > 0$, which are the restrictions for $\mypath^M = \mypath^M_{0:q-1}\concat \tup{a_{q-1}, u_{q-1}, s_q} \in \Paths^M$ to hold.
    \begin{align*}
        f(\mypath^M ) &= f(\mypath^M_{0:q-1}\concat \tup{a_{q-1}, u_{q-1}, s_q})\\
        &= \bigconcat_{t=0}^{q-2}\mypath^G_{0:q-1}(t)\concat g(\tup{s_{q-1}, a_{q-1}, u_{q-1},s_q}, \fixed(\mypath^M_{0:q-1}))\\
        &= \bigconcat_{t=0}^{q-2}\mypath^G_{0:q-1}(t)\concat g(\tup{s_{q-1}, a_{q-1}, u_{q-1}}, \fixed(\mypath^M_{0:q-1}))\concat g(\tup{s_q}, \fixed(\mypath^M_{0:q}))\\
        &= \bigconcat_{t=0}^{q-2}\mypath^G_{0:q-1}(t)\concat g(\tup{s_{q-1}, a_{q-1}, u_{q-1}}, \fixed(\mypath^M_{0:q-1}))\concat g(\tup{s_q}, \upd(\fixed(\mypath^M_{0:q-1}),u_{q-1}, O^\nature_\priv(s_{q-1}), O_\publ(s_{q-1}), a_{q-1}))\\
        &= \bigconcat_{t=0}^{q-2}\mypath^G_{0:q-1}(t)\concat g(\tup{s_{q-1}, a_{q-1}, u_{q-1}}, \fixed(\mypath^M_{0:q-1}))\concat g(\tup{s_q}, u^\pset_{q})\\
        &= \bigconcat_{t=0}^{q-2}\mypath^G_{0:q-1}(t)\concat \tup{\tup{s_{q-1}, \fixed(\mypath^M_{0:q-1})},a_{q-1},\tup{s_{q-1}, \fixed(\mypath^M_{0:q-1}),a_{q-1}}, u_{q-1}}\concat g(\tup{s_q}, u^\pset_{q})\\
        &= \bigconcat_{t=0}^{q-2}\mypath^G_{0:q-1}(t)\concat \tup{\tup{s_{q-1}, u^{\pset}_{0:q-1}},a_{q-1},\tup{s_{q-1}, u^{\pset}_{q-1},a_{q-1}}, u_{q-1}}\concat g(\tup{s_q}, u^\pset_{q})\\
        &= \mypath^G_{0:q-1}\concat \tup{a_{q-1},\tup{s_{q-1}, u^{\pset}_{q-1},a_{q-1}}, u_{q-1}}\concat g(\tup{s_q}, u^\pset_{q})\\
        &= \mypath^G_{0:q-1}\concat \tup{a_{q-1},\tup{s_{q-1}, u^{\pset}_{q-1},a_{q-1}}, u_{q-1}, \tup{s_q, u^{\pset}_{q}}}\\
        &= \mypath^G.
    \end{align*}
    So if $f$ is surjective for paths of arbitrary length $q-1 \in \NN$, $f$ is surjective for paths of length $q$.
    Hence, by induction, $f$ is surjective.\\

    \vspace{-1mm}
    \noindent $f$ is injective and surjective, hence $f$ is a bijection.
    \end{proof}

We write $\simeq$ to indicate equivalence between objects in the RPOMDP and the POSG.
\begin{corollary}[Corresponding paths] $f$ is a bijection between $\Paths^M$ and $\Paths^G$, so the set of paths in the RPOMDP is equivalent to the set of paths in the POSG:
    \[\Paths^M \simeq \Paths^G,\vspace{-1mm}\]
where $\forall \mypath^M \in \Paths^M, \forall \mypath^G \in \Paths^G.$
    \[\mypath^M \simeq \mypath^G \Longleftrightarrow f(\mypath^M) = \mypath^G.\]
\end{corollary}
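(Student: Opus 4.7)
The plan is to derive the corollary directly from the preceding lemma, which has already established that $f \colon \Paths^M \to \Paths^G$ is a bijection. Since the constructive and combinatorial work (defining $f$ by induction on path length, then verifying injectivity at the first point of deviation and surjectivity by induction on horizon) has been carried out, the corollary amounts to unpacking the notation $\simeq$ and lifting it from individual paths to the entire path sets.

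Concretely, I would begin by invoking the lemma to assert that $f$ is both injective and surjective. Injectivity says that for any distinct $\mypath^{M}_1, \mypath^{M}_2 \in \Paths^M$ we have $f(\mypath^{M}_1) \neq f(\mypath^{M}_2)$, so no two RPOMDP paths can correspond to the same POSG path under the proposed relation. Surjectivity says that for every $\mypath^G \in \Paths^G$ there exists $\mypath^M \in \Paths^M$ with $f(\mypath^M) = \mypath^G$, so every POSG path admits at least one RPOMDP partner; injectivity then forces that partner to be unique. This is precisely the one-to-one correspondence that the symbol $\simeq$ is meant to encode.

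Next, I would observe that the displayed biconditional $\mypath^M \simeq \mypath^G \Longleftrightarrow f(\mypath^M) = \mypath^G$ is definitional: $\simeq$ is introduced here as a naming device for the pairing induced by $f$. The set-level assertion $\Paths^M \simeq \Paths^G$ is therefore nothing more than an abbreviation for the existence of the bijection $f$, which the lemma has delivered. No further construction, computation, or case analysis is required.

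The main obstacle, such as it is, is not in proving the corollary itself but in ensuring that the shorthand $\simeq$ is consistent with the analogous correspondences that will be needed subsequently for agent histories, nature histories, agent policies, and nature policies (en route to \Cref{thm:equivalent:values}). For the statement as given, however, the proof reduces to citing the lemma and recording the defining biconditional for $\simeq$.
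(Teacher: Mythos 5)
Your proposal is correct and matches the paper's treatment: the corollary is an immediate consequence of the preceding lemma establishing that $f$ is a bijection, with the displayed biconditional serving as the definition of $\simeq$, and the paper accordingly gives no separate argument. Citing the lemma and recording the defining biconditional is exactly what is required.
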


We show and prove a bijection between joint histories, as introduced in \Cref{app:appendix_prelim}.
The individual agent and nature histories' bijection proofs follow the same line of reasoning, omitting elements private to the other player.
\begin{proposition}[Bijection between joint histories]\label{app:theorem_bijection_joint_histories}
Let $M$ be an RPOMDP, and $G$ the POSG of $M$.
There exists a bijection $f^{h}\colon H^{M} \to H^{G}$.
\end{proposition}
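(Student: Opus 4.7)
The plan is to lift the bijection $f \colon \Paths^M \to \Paths^G$ established in the previous lemma to joint histories, using the observation maps $O^M$ and $O^G$ from \Cref{app:def:paths_to_joint_histories_RPOMDP,app:def:paths_to_joint_histories_POSG}. Since $H^M$ and $H^G$ are by definition the images of $\Paths^M$ and $\Paths^G$ under these maps, both maps are surjective by construction. I will set $f^h(h^M) = O^G(f(\mypath^M))$ for any $\mypath^M \in \Paths^M$ with $O^M(\mypath^M) = h^M$, and then verify this recipe defines a bijection.

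The first step is well-definedness, i.e.\ independence of the chosen path representative. An RPOMDP joint history encodes, for each visited state $s$, the triple $\tup{O^\agent_\priv(s), O^\nature_\priv(s), O_\publ(s)}$ together with the played action $a$ and nature choice $u$. Applying $O^G \circ f$ produces, at each step, the pair of agent and nature observations $\tup{O^\agent_\priv(s), O_\publ(s)}$ and $\tup{O^\nature_\priv(s), O_\publ(s), \star}$ (with $\star \in \{\bot, a\}$), together with the same $a$ and $u$ already appearing in the RPOMDP segment. Thus $O^G \circ f$ is a pointwise reshuffling of $O^M$, and whenever $O^M(\mypath_1) = O^M(\mypath_2)$ we automatically have $O^G(f(\mypath_1)) = O^G(f(\mypath_2))$, so $f^h$ is well-defined.

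Bijectivity then follows from two observations. For surjectivity, given $h^G \in H^G$ there is some $\mypath^G$ with $O^G(\mypath^G) = h^G$; setting $h^M = O^M(f^{-1}(\mypath^G))$ gives $f^h(h^M) = h^G$. For injectivity, the reshuffling above is reversible: from each POSG joint-history segment $\tup{\tup{z^\agent_\priv, z_\publ}, \tup{z^\nature_\priv, z_\publ, \bot}, a, \tup{z^\agent_\priv, z_\publ}, \tup{z^\nature_\priv, z_\publ, a}, u}$ one reads off the unique RPOMDP segment $\tup{z^\agent_\priv, z^\nature_\priv, z_\publ, a, u}$, so distinct RPOMDP histories yield distinct POSG histories under $f^h$.

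The main bookkeeping difficulty is to check that the repeated agent and nature observations within a POSG segment are consistent, since the agent move is deterministic in the POSG and does not alter the underlying RPOMDP state, and that the $\bot$ versus $a$ markers in nature's observations appear exactly where the inductive definitions of $O^G$ and $O^{G,\suff}$ prescribe. Once this check is carried out, the argument reduces to an induction on history length that mirrors the one used for the path bijection, and the analogous argument for the individual players' histories $H^{\agent,M}, H^{\agent,G}$ and $H^{\nature,M}, H^{\nature,G}$ is obtained by simply dropping the parts of each segment private to the other player.
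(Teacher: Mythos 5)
Your proposal is correct and rests on the same ingredients as the paper's proof: the path bijection $f$, the surjectivity of $O^M$ and $O^G$ onto the history sets, and the segment-wise identity $f^h \circ O^M = O^G \circ f$ (which in particular uses that the POSG observation functions discard the $u^\pset$ component, so the image depends only on the RPOMDP observations, actions, and assignments). The only difference is organizational — the paper defines $f^h$ explicitly by structural recursion on history segments and verifies this commutation inside its surjectivity argument, whereas you take the commutation as the definition and check well-definedness up front — so this is essentially the paper's approach.
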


\begin{proof}
\vspace{-1mm}
    Let $H^{M,\suff}$ be the joint \aohs{} segments for the RPOMDP and let $H^{G,\suff}$ be the joint \aohs{} segments for the parameterized POSG.
    Again, we have that $H^M \subseteq H^{M,\suff}$ and $H^G \subseteq H^{G,\suff}$.
    Let $h^M = \tup{z^\agent_{\priv,0},z^\nature_{\priv,0},z_{\publ,0},a_0,u_0,z^\agent_{\priv,1},z^\nature_{\priv,1},z_{\publ,1},\dots, z^\agent_{\priv,n},z^\nature_{\priv,n},z_{\publ,n}}\in H^{M}$ and $t \leq n$, then $h^M(t)$ indicates the $t$-th segment $\tup{z^\agent_{\priv,t},z^\nature_{\priv,t},z_{\publ,t},a_t,u_t}$ of $h^M$.
    Note that if $t = n$, the segment will only consist of the final observations $\tup{z^\agent_{\priv,n},z^\nature_{\priv,n},z_{\publ,n}}$.
    Similarly, let $h^G = \tup{\tup{z^\agent_{\priv,0},z_{\publ,0}},\tup{z^\nature_{\priv,0},z_{\publ,0},\bot},a_0,\tup{z^\agent_{\priv,0},z_{\publ,0}},\tup{z^\nature_{\priv,0},z_{\publ,0},a_0},u_0, \tup{z^\agent_{\priv,1},z_{\publ,1}},\tup{z^\nature_{\priv,1},z_{\publ,1},\bot}, \dots, \tup{z^\agent_{\priv,n},z_{\publ,n}},\tup{z^\nature_{\priv,n},z_{\publ,n},\bot}} \in H^G$ and $t \leq n$, then $h^G(t)$ indicates the $t$-th segment $\tup{\tup{z^\agent_{\priv,t},z_{\publ,t}},\tup{z^\nature_{\priv,t},z_{\publ,t},\bot},a_t,\tup{z^\agent_{\priv,t},z_{\publ,t}},\tup{z^\nature_{\priv,t},z_{\publ,t},a_t},u_t}$ of $h^G$.
    Note that if $t = n$, the segment will only consist of the final observations $\tup{\tup{z^\agent_{\priv,n},z_{\publ,n}},\tup{z^\nature_{\priv,n},z_{\publ,n},\bot}}$.\\

    \noindent Let $g^h\colon H^{M,\suff} \to H^{G,\suff}$ defined by:
    \begin{align*}
    g^h(\tup{z^\agent_\priv,z^\nature_\priv,z_\publ}) &= \tup{\tup{z^\agent_\priv,z_\publ},\tup{z^\nature_\priv,z_\publ,\bot}}.\\
    g^h(\tup{z^\agent_\priv,z^\nature_\priv,z_\publ,a,u}) &= \tup{\tup{z^\agent_\priv,z_\publ},\tup{z^\nature_\priv,z_\publ,\bot},a,\tup{z^\agent_\priv,z_\publ},\tup{z^\nature_\priv,z_\publ,a},u}.\\
    g^h(\tup{z^\agent_\priv,z^\nature_\priv,z_\publ,a,u}\concat h') &= g^h(\tup{z^\agent_\priv,z^\nature_\priv,z_\publ,a,u})\concat g^h(h').
    \end{align*}
    
    \noindent Let $f^h\colon H^M \to H^G$ defined by:
    \begin{align*}    
    f^h(\tup{z^\agent_\priv,z^\nature_\priv,z_\publ}) &= \tup{\tup{z^\agent_\priv,z_\publ},\tup{z^\nature_\priv,z_\publ,\bot}}.\\
    f^h(\tup{z^\agent_\priv,z^\nature_\priv,z_\publ,a,u}) &= \tup{\tup{z^\agent_\priv,z_\publ},\tup{z^\nature_\priv,z_\publ,\bot},a,\tup{z^\agent_\priv,z_\publ},\tup{z^\nature_\priv,z_\publ,a},u}.\\
    f^h(\tup{z^\agent_\priv,z^\nature_\priv,z_\publ,a,u}\concat h') &= f^h(\tup{z^\agent_\priv,z^\nature_\priv,z_\publ,a,u})\concat g^h(h').
    \end{align*}
    
    \noindent Note that the results of $f^h$ and $g^h$ are in $H^G$ and $H^{G,\suff}$ by construction. Also, note that these function definitions are similar to those for paths.\\
    
    \noindent We show that $f^h$ is a bijection.
    We first show $f^h$ is injective, so we show that:
    \[
        \forall h^{1,M}, h^{2,M} \in H^M. h^{1,M} \neq h^{2,M} \implies f^h(h^{1,M}) \neq f^h(h^{2,M}).
    \]
    Given arbitrary $h^{1,M}, h^{2,M} \in H^M$, we distinguish between the histories with a superscript $1,2$ respectively.
    Assume $h^{1,M} \neq h^{2,M}$.
    If $h^{1,M}$ and $h^{2,M}$ do not have the same horizon length, then neither do $f^h(h^{1,M})$ and $f^h(h^{2,M})$.
    Then trivially, $f^h(h^{1,M}) \neq f^h(h^{2,M})$.\\
    
    \noindent Assume $f^h(h^{1,M})$ and $f^h(h^{2,M})$ have the same horizon length $n$.
    Then $\exists t \leq n$ where $f^h(h^{1,M})$ and $f^h(h^{2,M})$ deviate, so $h^{1,M}(t) \neq h^{2,M}(t)$.
    Let $q$ be the smallest number where the histories deviate.
    So $\forall t < q. h^{1,M}(t) = h^{2,M}(t)$ and $h^{1,M}(q) \neq h^{2,M}(q)$.
    Assume $q < n$.
    Then we know $\tup{z^{\agent,1}_{\priv,q},z^{\nature,1}_{\priv,q},z^1_{\publ,q},a^1_q,u^1_q} \neq \tup{z^{\agent,2}_{\priv,q},z^{\nature,2}_{\priv,q},z^2_{\publ,q},a^2_q,u^2_q}$, which comes down to: $z^{\agent,1}_{\priv,q} \neq z^{\agent,2}_{\priv,q} \lor z^{\nature,1}_{\priv,q} \neq z^{\nature,2}_{\priv,q} \lor z^1_{\publ,q} \neq z^2_{\publ,q} \lor a^1_q \neq a^2_q \lor u^1_q \neq u^2_q$.
    \begin{align*}
        f^h(h^{1,M}) &= f^h(h^{1,M}(1)\concat {h_1^{M}}')\\
        &= h_1^G(1) \concat g^h({h^{1,M}}').
        \intertext{Unfold $g^h$ until $q$:}
        &= \bigconcat_{t=0}^{q-1}(h_1^G(t))\concat\tup{\tup{z^{\agent,1}_{\priv,q},z^1_{\publ,q}},\tup{z^{\nature,1}_{\priv,q},z^1_{\publ,q},\bot},a^1_q,\tup{z^{\agent,1}_{\priv,q},z^1_{\publ,q}},\tup{z^{\nature,1}_{\priv,q},z_{\publ,q},a^1_q},u^1_q}\concat g^h({h^{1,M}}'').
        \intertext{Since $z^{\agent,1}_{\priv,q} \neq z^{\agent,2}_{\priv,q} \lor z^{\nature,1}_{\priv,q} \neq z^{\nature,2}_{\priv,q} \lor z^1_{\publ,q} \neq z^2_{\publ,q} \lor a^1_q \neq a^2_q \lor u^1_q \neq u^2_q$:}
        &\neq \bigconcat_{t=0}^{q-1}(h_1^G(t))\concat \tup{\tup{z^{\agent,2}_{\priv,q},z^2_{\publ,q}},\tup{z^{\nature,2}_{\priv,q},z^2_{\publ,q},\bot},a^2_q,\tup{z^{\agent,2}_{\priv,q},z^2_{\publ,q}},\tup{z^{\nature,2}_{\priv,q},z_{\publ,q},a^2_q},u^2_q}\concat g^h({h^{2,M}}'')\\
        &= \bigconcat_{t=0}^{q-1}(h^{2,G}(t))\concat\tup{\tup{z^{\agent,2}_{\priv,q},z^2_{\publ,q}},\tup{z^{\nature,2}_{\priv,q},z^2_{\publ,q},\bot},a^2_q,\tup{z^{\agent,2}_{\priv,q},z^2_{\publ,q}},\tup{z^{\nature,2}_{\priv,q},z_{\publ,q},a^2_q},u^2_q}\concat g^h({h^{2,M}}'').
        \intertext{Fold $g^h$ until $1$:}
        &= h^{2,G}(1)\concat g^h({h^{2,M}}')\\
        &= f^h(h^{2,M}(1)\concat {h_2^{M}}')\\
        &= f^h(h^{2,M}).
    \end{align*}
    If $q=n$, then the same result follows by removing everything after $\tup{z^{\nature,1}_{\priv,q},z^1_{\publ,q},\bot}$, and $\tup{z^{\nature,2}_{\priv,q},z^2_{\publ,q},\bot}$.
    We thus have that $f^h(h^{1,M}) \neq f^h(h^{2,M})$, so $f^h$ is injective.\\
        
    \noindent Next, we show that $f^h$ is surjective, so we show that:
    \[
        \forall h^G \in H^G, \exists h^M \in H^M. f^h(h^M) = h^G.
    \]
    Take arbitrary $h^G \in H^G$.
    By construction of $H^G$, $O^G$ (see \Cref{app:appendix_prelim}) is surjective, so we know $\exists \mypath^G \in \Paths^G, O^G(\mypath^G) = h^G$.
    Take $\mypath^G \in \Paths^G$ such that $O^G(\mypath^G) = h^G$.
    Let $\mypath^M \in \Paths^M$ be the corresponding path in the RPOMDP.
    So $f(\mypath^M) = \mypath^G$.
    Then $h^M = O^M(\mypath^M) \in H^M$.\\

    \noindent We proof $f^h(h^M) = h^G$ by contradiction.
    Assume $f^h(h^M) = h_2^G \neq h^G$.
    By construction of $f$, we know $|\mypath^M| = |\mypath^G|$.
    Then, by construction of $O^M$, $O^G$, and $f^h$, which each map a segment to a segment, we know $|h_2^G| = |h^G|$.\\
    
    \noindent Let $n$ be the horizon length of $h_2^G$ and $h^G$.
    Then $\exists t\in \NN$ where $h_2^G$ and $h^G$ deviate, so $h^G(t) \neq h_2^G(t)$.
    Let $q$ be such a number where the horizons deviate.
    So $h^G(q) \neq h_2^G(q)$.
    Note that $q \geq 1$, since there is only one initial state, therefore $h^G(0) = h_2^G(0)$.
    Assume $q < n$ and let $\mypath^G(q) = \tup{\tup{s_{q}, u^{\pset}_{q}}, a_{q}, \tup{s_{q}, u^{\pset}_{q},a_{q}}, u_{q}}$.
    Then by the definition and bijectivity of $f$, we know $\mypath^M(q) = \tup{s_q,a_q,u_q}$.
    Furthermore, by construction, we know that $f^h,g^h, O^G, O^{G,\suff}, O^M$, and $O^{M,\suff}$ all apply on the segments of the paths or histories separately.
    \begin{align*}
        h^G(q) &= O^G(\mypath^G(q))\\
        &= O^{G}(\tup{\tup{s_{q}, u^{\pset}_{q}}, a_{q}, \tup{s_{q}, u^{\pset}_{q},a_{q}}, u_{q}})\\
        &= \tup{\tup{O^\agent_\priv(s_q),O_\publ(s_q)},\tup{O^\nature_\priv(s_q),O_\publ(s_q),\bot},a_q,\tup{O^\agent_\priv(s_q),O_\publ(s_q)},\tup{O^\nature_\priv(s_q),O_\publ(s_q),a_q},u_q}.\\
        h_2^G(q) &= f^h(O^M(\mypath^M(q)))\\
        &= f^h(O^M(\tup{s_q,a_q,u_q}))\\
        &= f^h(\tup{O^\agent_\priv(s_q),O^\nature_\priv(s_q),O_\publ(s_q),a_q,u_q})\\
        &= \tup{\tup{O^\agent_\priv(s_q),O_\publ(s_q)},\tup{O^\nature_\priv(s_q),O_\publ(s_q),\bot},a_q,\tup{O^\agent_\priv(s_q),O_\publ(s_q)},\tup{O^\nature_\priv(s_q),O_\publ(s_q),a_q},u_q}\\
        &= h^G(q).
    \end{align*}
    Hence $\exists t\in \NN: h^G(t) \neq h_2^G(t)$ is false.
    If $q = n$, the same result follows by removing everything from $a_q$.
    We hence get that $f^h(h^M) = h^G$, so $\exists h^M \in H^M, f^h(h^M) = h^G$, therefore $f^h$ is surjective.\\

    \noindent $f^h$ is injective and surjective, hence $f^h$ is a bijection.
    \end{proof}

\begin{lemma}[Bijection between agent and nature histories] 
    Following \Cref{app:theorem_bijection_joint_histories}, we get bijections for the agent and nature histories by omitting the private objects of the other player.
    \noindent Let $g^{\agent,h}\colon H^{\agent,M,\suff} \to H^{\agent,G,\suff}$ defined by:
    \begin{align*}
    g^{\agent,h}(\tup{z^\agent_\priv,z_\publ}) &= \tup{\tup{z^\agent_\priv,z_\publ}}.\\
    g^{\agent,h}(\tup{z^\agent_\priv,z_\publ,a}) &= \tup{\tup{z^\agent_\priv,z_\publ},a,\tup{z^\agent_\priv,z_\publ}}.\\
    g^{\agent,h}(\tup{z^\agent_\priv,z_\publ,a,h'}) &= g^{\agent,h}(\tup{z^\agent_\priv,z_\publ,a})\concat g^{\agent,h}(h').
    \end{align*}

    \noindent Let $f^{\agent,h}\colon H^{\agent,M} \to H^{\agent,G}$ defined by:
    \begin{align*}
    f^{\agent,h}(\tup{z^\agent_\priv,z_\publ}) &= \tup{\tup{z^\agent_\priv,z_\publ}}.\\
    f^{\agent,h}(\tup{z^\agent_\priv,z_\publ,a}) &= \tup{\tup{z^\agent_\priv,z_\publ},a,\tup{z^\agent_\priv,z_\publ}}.\\
    f^{\agent,h}(\tup{z^\agent_\priv,z_\publ,a,h'}) &= f^{\agent,h}(\tup{z^\agent_\priv,z_\publ,a})\concat g^{\agent,h}(h').
    \end{align*}
    $f^{\agent,h}$ is a bijection.\\

    \noindent Let $g^{\nature,h}\colon H^{\nature,M,\suff} \to H^{\nature,G,\suff}$ defined by:
    \begin{align*}
    g^{\nature,h}(\tup{z^\nature_\priv,z_\publ}) &= \tup{\tup{z^\nature_\priv,z_\publ,\bot}}.\\
    g^{\nature,h}(\tup{z^\nature_\priv,z_\publ,a,u}) &= \tup{\tup{z^\nature_\priv,z_\publ,\bot},\tup{z^\nature_\priv,z_\publ,a},u}.\\
    g^{\nature,h}(\tup{z^\nature_\priv,z_\publ,a,u,h'}) &= g^{\nature,h}(\tup{z^\nature_\priv,z_\publ,a,u})\concat g^{\nature,h}(h').
    \end{align*}

    \noindent Let $f^{\nature,h}\colon H^{\nature,M} \to H^{\nature,G}$ defined by:
    \begin{align*}
    f^{\nature,h}(\tup{z^\nature_\priv,z_\publ}) &= \tup{\tup{z^\nature_\priv,z_\publ,\bot}}.\\
    f^{\nature,h}(\tup{z^\nature_\priv,z_\publ,a,u}) &= \tup{\tup{z^\nature_\priv,z_\publ,\bot},\tup{z^\nature_\priv,z_\publ,a},u}.\\
    f^{\nature,h}(\tup{z^\nature_\priv,z_\publ,a,u,h'}) &= f^{\nature,h}(\tup{z^\nature_\priv,z_\publ,a,u})\concat g^{\nature,h}(h').
    \end{align*}
    $f^{\nature,h}$ is a bijection.
\end{lemma}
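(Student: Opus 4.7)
The plan is to mirror the proof of Proposition 1 for joint histories essentially verbatim, since the maps $f^{\agent,h}$ and $f^{\nature,h}$ are obtained from $f^h$ by projecting away the other player's private observations. The definitions given in the statement already exhibit candidate inverse images segment by segment, so what remains is to verify injectivity and surjectivity with the same template used for joint histories.

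For injectivity of $f^{\agent,h}$, I would take two distinct agent histories $h^{1,\agent,M}, h^{2,\agent,M} \in H^{\agent,M}$ and follow the argument used for $f^h$: if the horizons differ, injectivity is immediate because $g^{\agent,h}$ preserves segment counts; otherwise, let $q$ be the first index where the histories disagree, so that at least one of $z^{\agent,1}_{\priv,q}, z^1_{\publ,q}, a^1_q$ differs from its counterpart, and show that the corresponding output segment $\tup{\tup{z^{\agent,\cdot}_{\priv,q}, z^{\cdot}_{\publ,q}}, a^\cdot_q, \tup{z^{\agent,\cdot}_{\priv,q}, z^{\cdot}_{\publ,q}}}$ differs accordingly. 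The same argument applies to $f^{\nature,h}$, where the distinguishing datum at position $q$ may additionally be the variable assignment $u^\cdot_q$, all of which are preserved verbatim in the segment produced by $g^{\nature,h}$.

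For surjectivity, the cleanest route is to leverage the already-established bijection on joint histories and the fact that agent/nature histories are obtained from joint ones by forgetting private components of the other player. Concretely, given $h^{\agent,G} \in H^{\agent,G}$, I would observe from Definition~\ref{app:def:paths_to_agent_histories_POSG} that $h^{\agent,G}$ arises as $O^{\agent,G}(\mypath^G)$ for some POSG path $\mypath^G \in \Paths^G$; by the path bijection, $\mypath^G = f(\mypath^M)$ for some $\mypath^M \in \Paths^M$, and then $h^{\agent,M} = O^{\agent,M}(\mypath^M) \in H^{\agent,M}$ is the desired preimage. The identity $f^{\agent,h}(h^{\agent,M}) = h^{\agent,G}$ is then checked segment by segment using the explicit forms in Definitions~\ref{app:def:paths_to_agent_histories_RPOMDP} and~\ref{app:def:paths_to_agent_histories_POSG}, exactly as in the joint-history surjectivity argument. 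The analogous argument handles $f^{\nature,h}$, using $O^{\nature,G}$ and $O^{\nature,M}$.

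I expect no substantive obstacle: the only bookkeeping subtlety is that the POSG agent history contains two copies of $\tup{O^\agent_\priv(s), O_\publ(s)}$ per original RPOMDP step (one observed in the agent state $\tup{s,u^\pset}$, one in the nature state $\tup{s,u^\pset,a}$), and symmetrically nature observes its state twice per step with the $\bot/a$ slot switching; so I must check that $g^{\agent,h}$ and $g^{\nature,h}$ insert exactly those duplications, which they do by construction. Once these segment-level identities are noted, the induction on history length closes the surjectivity argument and completes the proof.
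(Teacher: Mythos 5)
Your proposal is correct and follows essentially the same route as the paper: the paper proves the joint-history bijection in detail (injectivity via a first-deviation segment argument, surjectivity via the path bijection composed with the observation maps) and then obtains the agent and nature history bijections by the same reasoning with the other player's private elements omitted, which is exactly what you spell out. Your remark about the duplicated per-step observations (and the $\bot$/$a$ slot for nature) is the right bookkeeping point and matches how $g^{\agent,h}$ and $g^{\nature,h}$ are constructed.
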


\begin{corollary}[Corresponding \aohs{}] $f^h$ is a bijection between $H^M$ and $H^G$, so the set of histories in the RPOMDP is equivalent to the set of histories in the parameterized POSG:
    \[H^M \simeq H^G,\]
where $\forall h^M \in H^M, \forall h^G \in H^G.$
    \[h^M \simeq h^G \Longleftrightarrow f^h(h^M) = h^G.\]
Similarly:
    \[H^{\agent,M} \simeq H^{\agent,G},\]
where $\forall h^{\agent,M} \in H^{\agent,M}, \forall h^{\agent,G} \in H^{\agent,G}.$
    \[h^{\agent,M} \simeq h^{\agent,G} \Longleftrightarrow f^{\agent,h}(h^{\agent,M}) = h^{\agent,G}.\]
And:
    \[H^{\nature,M} \simeq H^{\nature,G},\]
where $\forall h^{\agent,M} \in H^{\agent,M}, \forall h^{\agent,G} \in H^{\agent,G}.$
    \[h^{\agent,M} \simeq h^{\agent,G} \Longleftrightarrow f^{\agent,h}(h^{\agent,M}) = h^{\agent,G}.\]
\end{corollary}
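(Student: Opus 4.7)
The plan is to derive this corollary as a direct notational translation of the two preceding bijection results: Proposition~1 (which establishes $f^h \colon H^M \to H^G$ is a bijection) and the immediately preceding Lemma (which establishes the analogous bijections $f^{\agent,h}$ and $f^{\nature,h}$). Since the correspondence relation on elements is defined by $h^M \simeq h^G \Longleftrightarrow f^h(h^M) = h^G$, the lifted equivalence $H^M \simeq H^G$ between sets is by construction the existence of a bijection whose graph realises that element-wise correspondence. So the first conclusion follows immediately from the Proposition, and the agent and nature analogues follow in the same way from the Lemma.

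The actual mathematical content therefore lies in the Lemma on agent and nature history bijections, whose proof the text skips. I would prove it by replaying the template of Proposition~1 on the simpler objects. First, I would note that $f^{\agent,h}$ and $f^{\nature,h}$ both decompose segmentwise, so injectivity can be argued locally: given two distinct RPOMDP histories $h^{1,M}, h^{2,M}$, pick the smallest index $q$ at which they differ, and observe that the $q$-th output segment of $f^{\agent,h}$ (respectively $f^{\nature,h}$) records either the deviating agent action, nature action, or observation, so the images must deviate at position $q$ as well.

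For surjectivity I would proceed by induction on horizon length, mirroring the path bijection argument: given an agent (or nature) POSG history of length $k$, strip off its last segment, apply the inductive hypothesis to get a preimage of the length-$(k-1)$ prefix, and append the unique RPOMDP segment that $f^{\agent,h}$ (or $f^{\nature,h}$) maps onto the removed segment. Alternatively, one can shortcut the argument by lifting through the joint bijection $f^h$ and then projecting via the paths-to-histories maps $O^{\agent,M}$ and $O^{\nature,M}$, using that these projections commute with $f^h$ and $g^h$ by construction.

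The main obstacle, though more bookkeeping than mathematics, is tracking the many parallel history spaces ($H^M, H^{\agent,M}, H^{\nature,M}$ and their POSG counterparts) and verifying that the segmentwise definitions of $f^{\agent,h}$ and $f^{\nature,h}$ really do commute with the relevant projections of $f^h$; once this is set up cleanly, both bijectivity claims follow by literally the same induction and deviation arguments used for the joint case, and the corollary itself is then immediate.
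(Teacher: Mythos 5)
Your proposal is correct and matches the paper's treatment: the corollary is indeed just the element-wise restatement of the bijections $f^h$, $f^{\agent,h}$, $f^{\nature,h}$, and the agent/nature lemma is obtained exactly as you say, by replaying the joint-history argument (first-deviation index for injectivity, induction/lifting for surjectivity) with the other player's private objects omitted. The only minor difference is that the paper establishes surjectivity by passing through the path bijection and the observation maps rather than by segmentwise induction on histories, which is precisely the alternative route you already sketch (and which sidesteps having to verify that the appended RPOMDP segment yields a history realized by a valid path).
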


Proposition 2 from the main text is a direct corollary of the bijections between histories established above. 
For completeness, we repeat the proposition here.\\

\noindent
\textbf{Proposition 2} (Bijection between policies)\textbf{.} \hspace{1em}
\textit{
Let $f^{\pi}\colon \Pi^M \to \Pi^G$ defined by:
\[
    f^{\pi}(\pi^M)(h^{\agent,G}) = \pi^M((f^{\agent,h})^{-1}(h^{\agent,G})),
\]
then $f^{\pi}$ is a bijection.\\\\
Let $f^{\theta}\colon \Theta^M \to \Theta^G$ defined by:
\[
    f^{\theta}(\theta^M)(h^{\nature,G},\tup{z^\nature_\priv,z_\publ,a}) = \theta^M((f^{\nature,h})^{-1}(h^{\nature,G}),a),
\]
then $f^{\theta}$ is a bijection.
}

\begin{corollary}[Corresponding policies]
    $f^{\pi}$ is a bijection between agent policies, so the set of agent policies in the RPOMDP is equivalent to the set of agent policies in the parameterized POSG:
    \[\Pi^{M} \simeq \Pi^{G},\]
where $\forall \pi^M \in \Pi^M, \forall \pi^G \in \Pi^G.$
    \[\pi^M \simeq \pi^G \Longleftrightarrow f^{\pi}(\pi^M) = \pi^G.\]
Similarly:
    \[\Theta^{M} \simeq \Theta^{G},\]
where $\forall \theta^M \in \Theta^M, \forall \theta^G \in \Theta^G.$
    \[\theta^M \simeq \theta^G \Longleftrightarrow f^{\theta}(\theta^M) = \theta^G.\]
\end{corollary}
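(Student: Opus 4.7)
The corollary is essentially a direct consequence of Proposition 2, and the plan is to unpack the $\simeq$ notation introduced in the statement itself. The relation $\simeq$ at the element level is defined by $\pi^M \simeq \pi^G \iff f^{\pi}(\pi^M) = \pi^G$, and analogously for nature; therefore $\Pi^M \simeq \Pi^G$ at the set level is simply the assertion that $f^{\pi}$ pairs up every element of $\Pi^M$ with a unique element of $\Pi^G$ and vice versa. This is exactly the claim of Proposition 2, so no genuinely new argument is needed beyond invoking that proposition.

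To make this fully explicit, I would first recall Proposition 2, which provides the bijection $f^{\pi}\colon \Pi^M \to \Pi^G$ defined by $f^{\pi}(\pi^M)(h^{\agent,G}) = \pi^M((f^{\agent,h})^{-1}(h^{\agent,G}))$. Because $f^{\agent,h}$ is a bijection (established via the history bijection derived from Proposition 1), the inverse $(f^{\agent,h})^{-1}$ is totally defined on $H^{\agent,G}$, so $f^{\pi}$ is well-defined. Injectivity is immediate: if $f^{\pi}(\pi_1^M) = f^{\pi}(\pi_2^M)$, then for every $h^{\agent,M}$, setting $h^{\agent,G} = f^{\agent,h}(h^{\agent,M})$ yields $\pi_1^M(h^{\agent,M}) = \pi_2^M(h^{\agent,M})$. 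For surjectivity, given $\pi^G$, define $\pi^M(h^{\agent,M}) := \pi^G(f^{\agent,h}(h^{\agent,M}))$ and verify $f^{\pi}(\pi^M) = \pi^G$ by a one-line substitution.

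The argument for $f^{\theta}$ runs identically, with the only wrinkle being that nature's policies take an extra input $a \in A$ which is passed through unchanged in the definition $f^{\theta}(\theta^M)(h^{\nature,G},\tup{z^\nature_\priv,z_\publ,a}) = \theta^M((f^{\nature,h})^{-1}(h^{\nature,G}),a)$. Bijectivity of $f^{\theta}$ therefore reduces to the bijectivity of $f^{\nature,h}$ paired with the identity on $A$, which remains a bijection on the product. Thus every nature policy of the RPOMDP is paired uniquely with a nature policy of the POSG via $f^{\theta}$, yielding $\Theta^M \simeq \Theta^G$.

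The main obstacle is not in the corollary itself but in the underlying Proposition 2 and, transitively, in the history bijections of Proposition 1; once those are in hand, the corollary is pure bookkeeping. In fact, the core reason the proof works is a general principle: precomposing a function space $(X \to Y)$ with a bijection $X' \to X$ on the domain yields a bijection of function spaces, and this is what is happening on the agent side (domain $H^{\agent}$, codomain $\dist{A}$) and on the nature side (domain $H^{\nature} \times A$, codomain $\dist{\bm U}$).
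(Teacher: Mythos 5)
Your proposal is correct and takes essentially the same route as the paper: the corollary is treated there as an immediate restatement of Proposition~2, with correspondence defined exactly by $f^{\pi}$ and $f^{\theta}$, which are bijections obtained by precomposing policies with the (inverse) history bijections $f^{\agent,h}$ and $f^{\nature,h}$. Your extra injectivity/surjectivity bookkeeping and the remark about the passed-through action on nature's side match the paper's construction, so no new argument is needed.
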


\noindent
{\theoremII*\label{app:thm_equiv_values}}

{\allowdisplaybreaks
\begin{proof}
    We prove $R(\mypath^M) = \mathcal{R}(\mypath^G)$ for corresponding paths $\mypath^M, \mypath^G$.
    By definition of corresponding paths, we know that $\forall t\in \NN. \mypath^G(t) = g(\mypath^M(t), u^\pset_{t-1})$.
    \begin{align*}
        R(\mypath^M)
        &= \sum_{t\in \NN} R(\mypath^M(t))\\
        &= \sum_{t\in \NN} R(s_t, a_t, u_t)\\
        &= \sum_{t\in \NN} R(s_t, a_t)\\
        &= \sum_{t\in \NN} \mathcal{R}(\tup{s_t, u^{\pset}_{t-1}}, a_t)\\
        &= \sum_{t\in \NN} \mathcal{R}(\tup{s_t, u^{\pset}_{t-1}}, a_t, \tup{s_t, u^{\pset}_{t-1},a_t}, u_t)\\
        &= \sum_{t\in \NN} \mathcal{R}(\mypath^G(t))\\
        &= \mathcal{R}(\mypath^G).
    \end{align*}
    Now, since joint corresponding policies $\pi^M,\theta^M$ and $\pi^G,\theta^G$ lead to the same distribution over corresponding paths, we know that $V_\phi^{\pi^M,\theta^M} = V_\phi^{\pi^G,\theta^G}$.
\end{proof}}

\subsection{Mixed policies}
Given the bijection between histories and stochastic policies, we can define a bijection between mixed policies similar to the one between stochastic policies.
Note that we apply the bijection between stochastic policies to deterministic policies.

\begin{proposition}[Bijection between mixed policies]
    Let $f^{\pi,\mix}\colon \Pi^{M,\mix} \to \Pi^{G,\mix}$ defined by:
    	\[
         f^{\pi,\mix}(\pi^{M,\mix})(\pi^{G,\detr}) = \pi^{M,\mix}((f^{\pi})^{-1}(\pi^{G,\detr})),
         \]
    then $f^{\pi,\mix}$ is a bijection.\\
    Let $f^{\theta,\mix}\colon \Theta^{M,\mix} \to \Theta^{G,\mix}$ defined by:
    	\[
         f^{\theta,\mix}(\theta^{M,\mix})(\theta^{G,\detr}) = \theta^{M,\mix}((f^{\theta})^{-1}(\theta^{G,\detr})),
         \]
    then $f^{\theta,\mix}$ is a bijection.
\end{proposition}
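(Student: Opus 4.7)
The plan is to leverage the bijection $f^\pi\colon \Pi^M \to \Pi^G$ established in Proposition~2 by first observing that it restricts to a bijection between the deterministic policy sets $\Pi^{\detr,M}$ and $\Pi^{\detr,G}$, and then lifting this restricted bijection to the level of probability distributions via pushforward. The definition of $f^{\pi,\mix}$ given in the statement is precisely the pushforward formula: $f^{\pi,\mix}(\pi^{M,\mix})$ assigns to each deterministic POSG policy $\pi^{G,\detr}$ the mass that $\pi^{M,\mix}$ puts on its preimage under $f^\pi$. The corresponding plan applies verbatim to $f^{\theta,\mix}$.

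First I would verify that $f^\pi$ restricts to a bijection $\Pi^{\detr,M}\to\Pi^{\detr,G}$. Since $f^\pi$ is defined pointwise on histories by $f^\pi(\pi^M)(h^{\agent,G})=\pi^M((f^{\agent,h})^{-1}(h^{\agent,G}))$, it sends Dirac distributions to Dirac distributions, hence maps $\Pi^{\detr,M}$ into $\Pi^{\detr,G}$. Conversely, the same pointwise argument applied to $(f^\pi)^{-1}$ (which exists by Proposition~2) shows that it sends $\Pi^{\detr,G}$ into $\Pi^{\detr,M}$, so the restriction is a genuine bijection.

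Next I would check that $f^{\pi,\mix}(\pi^{M,\mix})$ is a well-defined mixed POSG policy: nonnegativity is immediate from nonnegativity of $\pi^{M,\mix}$, and normalisation follows from the bijectivity of the restricted $f^\pi$, since summing $f^{\pi,\mix}(\pi^{M,\mix})(\pi^{G,\detr})$ over $\pi^{G,\detr}\in\Pi^{\detr,G}$ is the same (by reindexing through $(f^\pi)^{-1}$) as summing $\pi^{M,\mix}(\pi^{M,\detr})$ over $\pi^{M,\detr}\in\Pi^{\detr,M}$, which equals $1$. The same reindexing preserves the finite-support property required for mixed policies. Injectivity of $f^{\pi,\mix}$ then follows because if two mixed policies $\pi^{M,\mix}_1$ and $\pi^{M,\mix}_2$ satisfy $f^{\pi,\mix}(\pi^{M,\mix}_1)=f^{\pi,\mix}(\pi^{M,\mix}_2)$, then for every $\pi^{M,\detr}\in\Pi^{\detr,M}$ we can evaluate both sides at $\pi^{G,\detr}:=f^\pi(\pi^{M,\detr})$ to conclude $\pi^{M,\mix}_1(\pi^{M,\detr})=\pi^{M,\mix}_2(\pi^{M,\detr})$. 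For surjectivity, given any $\pi^{G,\mix}\in\Pi^{\mix,G}$ I would define $\pi^{M,\mix}(\pi^{M,\detr}):=\pi^{G,\mix}(f^\pi(\pi^{M,\detr}))$ and verify by a direct computation that $f^{\pi,\mix}(\pi^{M,\mix})=\pi^{G,\mix}$.

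The argument for $f^{\theta,\mix}\colon\Theta^{M,\mix}\to\Theta^{G,\mix}$ is identical once we have the bijection $f^\theta\colon\Theta^M\to\Theta^G$ from Proposition~2, noting that the shift in signature (the extra $\tup{z^\nature_\priv,z_\publ,a}$ argument on the POSG side) is absorbed into the pointwise definition of $f^\theta$ and therefore also preserves determinism. The only mildly delicate point, and the one I expect to require the most care, is tracking the validity constraint on nature's choices (that every played variable assignment must agree with the current $\fixed$-restriction): I would argue that because $f^\theta$ is defined via the bijection on nature histories $f^{\nature,h}$, which preserves the prefix structure and hence the accumulated $\fixed$ information, validity of a deterministic nature policy on the RPOMDP side is equivalent to validity of its image on the POSG side, so the pushforward stays inside $\Theta^{\mix,G}$ rather than leaking outside the set of valid mixed nature policies.
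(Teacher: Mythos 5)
Your proof is correct and takes essentially the same route the paper intends (and largely leaves implicit): restrict $f^{\pi}$ (resp.\ $f^{\theta}$) to the deterministic policy sets and push the mixing distribution forward along that restricted bijection, checking finite support, normalisation, injectivity and surjectivity by reindexing through the inverse. The paper states this proposition without a detailed proof, merely noting that one applies the stochastic-policy bijection to deterministic policies, so your elaboration—including the remark that $f^{\nature,h}$ preserves prefix structure and hence the $\fixed$-based validity of nature policies—simply fills in the details of the paper's intended argument.
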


\begin{corollary}[Corresponding mixed policies]
    $f^{\pi,\mix}$ is a bijection between agent policies, so the set of mixed agent policies in the RPOMDP is equivalent to the set of mixed agent policies in the parameterized POSG:
        \[\Pi^{M,\mix} \simeq \Pi^{G,\mix},\]
    where $\forall \pi^{M,\mix} \in \Pi^{M,\mix}, \forall \pi^{G,\mix} \in \Pi^{G,\mix}.$
        \[\pi^{M,\mix} \simeq \pi^{G,\mix} \Longleftrightarrow f^{\pi,\mix}(\pi^{M,\mix}) = \pi^{G,\mix}.\]
    Similarly:
        \[\Theta^{M,\mix} \simeq \Theta^{G,\mix},\]
    where $\forall \theta^{M,\mix} \in \Theta^{M,\mix}, \forall \theta^{G,\mix} \in \Theta^{G,\mix}.$
        \[\theta^{M,\mix} \simeq \theta^{G,\mix} \Longleftrightarrow f^{\theta,\mix}(\theta^{M,\mix}) = \theta^{G,\mix}.\]
\end{corollary}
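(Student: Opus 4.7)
The plan is to leverage the already established bijection $f^{\pi}\colon \Pi^{M} \to \Pi^{G}$ between stochastic policies. Because deterministic policies are exactly those stochastic policies whose image lies in Dirac distributions, and $f^{\pi}$ is defined pointwise (via the bijection $f^{\agent,h}$ on histories, which does not alter the action output), the restriction $f^{\pi}\restriction_{\Pi^{M,\detr}}\colon \Pi^{M,\detr} \to \Pi^{G,\detr}$ is itself a bijection. I will use this restricted bijection throughout.

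First, I would verify that $f^{\pi,\mix}(\pi^{M,\mix})$ is a valid mixed POSG policy. Given that $\pi^{M,\mix}$ has finite support, say on $\{\pi_1^{M,\detr}, \dots, \pi_k^{M,\detr}\}$, the function $f^{\pi,\mix}(\pi^{M,\mix})$ is nonzero exactly on $\{f^{\pi}(\pi_1^{M,\detr}), \dots, f^{\pi}(\pi_k^{M,\detr})\}$, which is also finite. Moreover, because $f^{\pi}$ restricted to deterministic policies is a bijection, summation over $\Pi^{G,\detr}$ of $f^{\pi,\mix}(\pi^{M,\mix})(\pi^{G,\detr})$ equals summation over $\Pi^{M,\detr}$ of $\pi^{M,\mix}(\pi^{M,\detr})$, which equals $1$.

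Next, I would show injectivity: suppose $f^{\pi,\mix}(\pi_1^{M,\mix}) = f^{\pi,\mix}(\pi_2^{M,\mix})$. Then for every $\pi^{G,\detr} \in \Pi^{G,\detr}$ we have $\pi_1^{M,\mix}((f^{\pi})^{-1}(\pi^{G,\detr})) = \pi_2^{M,\mix}((f^{\pi})^{-1}(\pi^{G,\detr}))$. Since $(f^{\pi})^{-1}$ ranges surjectively over $\Pi^{M,\detr}$, the two distributions agree on every deterministic RPOMDP policy, hence are equal. For surjectivity, given $\pi^{G,\mix} \in \Pi^{G,\mix}$, I would construct its preimage by setting $\pi^{M,\mix}(\pi^{M,\detr}) := \pi^{G,\mix}(f^{\pi}(\pi^{M,\detr}))$; the same argument as for well-definedness shows this is a finite probability distribution, and by direct substitution $f^{\pi,\mix}(\pi^{M,\mix}) = \pi^{G,\mix}$. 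The argument for $f^{\theta,\mix}$ is entirely analogous, using $f^{\theta}$ restricted to $\Theta^{M,\detr}$.

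The only mildly delicate point is confirming that $f^{\pi}$ really does restrict to a bijection on deterministic policies — this requires observing that $f^{\pi}$, being defined by precomposition with $(f^{\agent,h})^{-1}$, maps Dirac-valued stochastic policies to Dirac-valued stochastic policies and vice versa, so the restriction is well-defined and its inverse maps deterministic POSG policies back to deterministic RPOMDP policies. Everything else is routine bookkeeping of finite probability distributions.
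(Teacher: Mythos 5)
Your proposal is correct and follows essentially the same route as the paper: the paper defines $f^{\pi,\mix}$ and $f^{\theta,\mix}$ as pushforwards along the stochastic-policy bijections restricted to deterministic policies (noting only that "we apply the bijection between stochastic policies to deterministic policies") and asserts bijectivity, while you fill in the routine verification of Dirac-preservation, finite support, injectivity, and surjectivity.
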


\begin{theorem}[Equivalent values mixed policies]
    Let $M$ be an RPOMDP, and $G$ the POSG of $M$.
    Let $\pi^{M,\mix} \in \Pi^{M,\mix}, \pi^{G,\mix} = f^{\pi,\mix}(\pi^{M,\mix}) \in \Pi^{G,\mix}$ be corresponding agent policies, and $\theta^{M,\mix} \in \Theta^{M,\mix}, \theta^{G,\mix} = f^{\theta,\mix}(\theta^{M,\mix}) \in \Theta^{G,\mix}$ be corresponding nature policies.
    Then, their values for the RPOMDP and POSG coincide:
    \[
    V_{\phi}^{\pi^{M,\mix},\theta^{M,\mix}} = V_{\phi}^{\pi^{G,\mix},\theta^{G,\mix}}.
    \]
\end{theorem}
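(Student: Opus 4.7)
The plan is to reduce the mixed-policy claim to the already established equivalence for deterministic (hence stochastic) policies via \Cref{thm:equivalent:values}, using the bijections $f^\pi, f^\theta$ between deterministic policies and the construction of $f^{\pi,\mix}, f^{\theta,\mix}$.

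First I would unfold the value of mixed policies on both sides using the formula
\[
V_\phi^{\pi^\mix,\theta^\mix} = \sum_{\pi^\detr \in \Pi^\detr}\sum_{\theta^\detr \in \Theta^\detr} \pi^\mix(\pi^\detr)\cdot \theta^\mix(\theta^\detr)\cdot V_\phi^{\pi^\detr,\theta^\detr}
\]
recalled in the preliminaries. Applied to the RPOMDP $M$, this gives a double sum over $\Pi^{M,\detr}\times \Theta^{M,\detr}$, and applied to the POSG $G$, a double sum over $\Pi^{G,\detr}\times \Theta^{G,\detr}$. Since a deterministic policy is a special stochastic policy (one mapping to Dirac distributions), the bijections $f^\pi\colon \Pi^M \to \Pi^G$ and $f^\theta\colon \Theta^M \to \Theta^G$ from \Cref{cor:bijection:policies} restrict to bijections between the deterministic subsets; I would briefly note this.

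Next I would use the re-indexing granted by these bijections to rewrite the POSG sum as a sum over $\Pi^{M,\detr}\times\Theta^{M,\detr}$ by substituting $\pi^{G,\detr} = f^\pi(\pi^{M,\detr})$ and $\theta^{G,\detr} = f^\theta(\theta^{M,\detr})$. By the defining equations of $f^{\pi,\mix}$ and $f^{\theta,\mix}$, we have
\[
\pi^{G,\mix}(f^\pi(\pi^{M,\detr})) = \pi^{M,\mix}(\pi^{M,\detr}), \qquad \theta^{G,\mix}(f^\theta(\theta^{M,\detr})) = \theta^{M,\mix}(\theta^{M,\detr}).
\]
Moreover \Cref{thm:equivalent:values} applied to the corresponding deterministic pair $(\pi^{M,\detr},\theta^{M,\detr})$ and $(f^\pi(\pi^{M,\detr}), f^\theta(\theta^{M,\detr}))$ yields
\[
V_\phi^{\pi^{M,\detr},\theta^{M,\detr}} = V_\phi^{f^\pi(\pi^{M,\detr}), f^\theta(\theta^{M,\detr})}.
\]
Substituting these three identities termwise gives exactly the expansion of $V_\phi^{\pi^{M,\mix},\theta^{M,\mix}}$, establishing the equality.

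I do not anticipate a real obstacle: the only point that deserves care is the bijectivity of $f^\pi$ and $f^\theta$ restricted to the deterministic subclass, which follows immediately from the fact that $f^{\agent,h}$ and $f^{\nature,h}$ are bijections on histories and the defining equations of $f^\pi$ and $f^\theta$ are pointwise on histories, so Dirac-valuedness is preserved. Once this is noted, the proof is a three-line rewriting exercise relying on \Cref{thm:equivalent:values} and the definitions of the mixed-policy bijections.
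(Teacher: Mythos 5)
Your proposal is correct and takes essentially the same route as the paper: the paper's proof also reduces the mixed-policy claim to corresponding deterministic policies via the bijections $f^{\pi,\mix}$, $f^{\theta,\mix}$ (which by construction preserve the probability assigned to corresponding deterministic policies) and then invokes the value equivalence already established for (deterministic/stochastic) corresponding policy pairs. Your explicit double-sum re-indexing is just a more spelled-out version of the paper's one-sentence argument, so no gap remains.
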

The proof follows the same steps as for Theorem \hyperref[app:thm_equiv_values]{2} for stochastic policies, where the distribution over corresponding paths now follows from the same distribution over corresponding deterministic policies, which in turn leads to the same distributions over corresponding paths.

\subsection{Nature first}
As explained in \Cref{app:nature_first}, when reasoning with nature first semantics, the paths of the POSGs change.
The bijections in this appendix start from the bijection between paths.
Below, we give the bijections for the nature first semantics that differ from the bijections for the agent first semantics.
The bijection proofs follow the same steps in the nature first case as in the agent first case.
We show the adjusted proof for the path bijection to illustrate how to deal with the delayed observation of the last agent action.

\begin{lemma}[Nature first bijection between paths]
	Let M be an RPOMDP, and G the POSG of M.
	There exists a bijection $f\colon \Paths^M \to \Paths^G$
\end{lemma}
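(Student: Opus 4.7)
The plan is to mirror the agent-first path bijection proof verbatim, adjusting only for the reordered POSG step (nature-action before agent-action) and the placeholder $\bot$ that stands in for the ``previous agent action'' at the initial nature-state. I would define a helper $g\colon \Paths^{M,\suff} \times \bm{U}^\pset \times (A \cup \{\bot\}) \pto \Paths^{G,\suff}$ that threads through both the currently fixed partial assignment $u^\pset$ and the previous agent action $a'$. On a one-state segment $\tup{s}$ it returns $\tup{\tup{s, u^\pset, a'}}$; on a step $\tup{s, a, u}$ with $u \in \bm{U}^\agrees(u^\pset)$ it returns $\tup{\tup{s, u^\pset, a'}, u, \tup{s, u^\pset, u}, a}$; and on longer segments it recurses, updating $u^\pset$ via $\upd(u^\pset, u, O^\nature_\priv(s), O_\publ(s), a)$ and replacing $a'$ by the freshly consumed $a$. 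The bijection is then $f(\mypath^M) = g(\mypath^M, u^\bot, \bot)$.

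For injectivity, I would copy the agent-first argument: if $\mypath^{1,M}$ and $\mypath^{2,M}$ first differ at index $q$ in one of $s_q, a_q, u_q$, the images already differ at segment $q$ because $g$ records the variable assignment in the second slot and the action in the fourth slot of the produced segment; the threaded parameters $(u^\pset, a')$ are determined by the common prefix, so they cannot hide the disagreement. For surjectivity, I would induct on $|\mypath^G|$. The base case $\mypath^G = \tup{\tup{s_I, u^\bot, \bot}}$ is the image of $\tup{s_I}$. For the inductive step, apply the hypothesis to $\mypath^G_{0:q-1}$ to recover $\mypath^M_{0:q-1}$; then read off from the transition definitions in Definition~6 (nature-first version) that $u_{q-1} \in \bm{U}^\agrees(\fixed(\mypath^M_{0:q-1}))$ and $\bm{T}(u_{q-1})(s_{q-1}, a_{q-1}, s_q) > 0$, which are exactly the validity conditions for $\mypath^M_{0:q-1} \concat \tup{a_{q-1}, u_{q-1}, s_q}$ in $\Paths^M$. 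An unfolding of $f$ using the fact that $\upd$ threaded along this extended path reproduces the $u^\pset$ coordinates occurring in $\mypath^G$ then concludes the argument.

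The main obstacle, and the only substantive deviation from the agent-first case, is the bookkeeping for the previous action $a'$: unlike the agent-first setting where each POSG segment of length one can be reconstructed from the current state alone, in the nature-first POSG the opening nature-state $\tup{s, u^\pset, a'}$ depends on what the agent did in the \emph{preceding} segment. I address this by enlarging the signature of $g$ to carry $a'$ explicitly and seeding the top-level call with $\bot$; care is then required in the recursive step to pass on the just-consumed $a$, rather than the incoming $a'$, so that successive segments glue correctly across the nature/agent/nature boundary. Once this threading is laid out, the combinatorial manipulations in both directions reduce to the same unfold/fold pattern already used in the agent-first proof.
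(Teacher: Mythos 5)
Your proposal matches the paper's proof essentially step for step: the same helper $g$ threading the pair $(u^\pset, a')$ through path segments (the paper gives $g$ the signature $\Paths^{M,\suff}\times\bm{U}^\pset\times A \pto \Paths^{G,\suff}$ and seeds $f$ with $u^\bot$ and $\bot$, just as you do), the same prefix-deviation argument for injectivity, and the same induction on $|\mypath^G|$ for surjectivity using the POSG transition conditions to recover $u_{q-1}\in\bm{U}^\agrees(\fixed(\mypath^M_{0:q-1}))$ and $\bm{T}(u_{q-1})(s_{q-1},a_{q-1},s_q)>0$. The "threading of the previous action" you flag as the key adjustment is precisely the only change the paper makes relative to the agent-first proof, so your plan is correct and follows the same route.
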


{\allowdisplaybreaks
\begin{proof}
     Let $\Paths^{M,\suff} \subseteq (S\times A \times \bm{U})^* \times S^?$ with $? \in \{0,1\}$ be the set of all path segments in the RPOMDP, and let $\Paths^{G,\suff} \subseteq \mathcal{S}^\nature \times \mathcal{A}^\nature \times \mathcal{S}^\agent\times \mathcal{A}^\agent)^* \times (\mathcal{S}^\nature)^?$ with $? \in \{0,1\}$ be the set of all path segments in the POSG.
    With path segment we mean that the path can starts at any time steps $t \in \NN$ and can end at any time step $t' \in \NN, t<=t'$.
    The optional last state is only used for path segments until the horizon.
Note that $\Paths^M \subseteq \Paths^{M,\suff}$ and $\Paths^G \subseteq \Paths^{G,\suff}$.
Let $\mypath^M = \tup{s_0,a_0,u_0,s_1, \dots, s_n}\in\Paths^{M}$ and $t \leq n$, then $\mypath^M(t)$ indicates the $t$-th segment $\tup{s_t,a_t,u_t}$ of $\mypath^M$.
Note that if $t = n$, the segment will only consist of the final state $\tup{s_n}$.
Similarly, let $\mypath^G = \tup{s^\nature_0, a^\nature_0,s^\agent_0,a^\agent_0,s^\nature_1,\dots,s^\nature_n} = \tup{\tup{s_0,u^\bot,\bot},u_0,\tup{s_0,u^\bot,u_0},a_0, \tup{s_1,u^\pset_1,a_0}, \dots, \tup{s_n,u^\pset_n,a_{n-1}}} \in\Paths^G$ and $t \leq n$, then $\mypath^G(t)$ indicates the $t$-th segment $\tup{s^\nature_t,a^\nature_t,s^\agent_t,a^\agent_t} = \tup{\tup{s_t,u^\pset_t,a_{t-1}},u_t,\tup{s_t,u^\pset_t,u_t},a_t}$ of $\mypath^G$.
Note that if $t = n$, the segment will only consist of the final nature state $\tup{s^\agent_n} = \tup{\tup{s_n,u^\pset_n,a_{n-1}}}$.\\
    
\noindent Let $g\colon \Paths^{M,\suff} \times \bm{U}^\pset \times A \pto \Paths^{G,\suff}$ defined by:\\
    \begin{align*}
    g(\tup{s},u^\pset,a) &= \tup{\tup{s,u^\pset,a'}}.\\
    g(\tup{s,a,u}, u^\pset,a') &= \begin{cases}
        \tup{\tup{s,u^\pset,a'},u,\tup{s,u^\pset,u},a} & \text{ if } u \in \bm{U}^\agrees(u^\pset),\\
        \bot & \text{ otherwise.}
    \end{cases}\\
    g(\tup{s,a,u}\concat{\mypath^M}', u^\pset,a') &= \begin{cases}
        g(\tup{s,a,u}, u^\pset,a')\concat g({\mypath^M}', \upd(u^\pset,u,O^\nature_\priv(s),O_\publ(s),a),a)& \text{ if } u \in \bm{U}^\agrees(u^\pset),\\
        \bot & \text{ otherwise.}
    \end{cases}
    \end{align*}

    Let $f\colon \Paths^M \to \Paths^G$ defined by:
    \begin{align*}
    f(\tup{s})& = \tup{\tup{s,u^\bot,\bot}}.\\
    f(\tup{s,a,u}) &= \tup{\tup{s,u^\bot,\bot},u,\tup{s,u^\bot,u},a}.\\
    f(\tup{s,a,u}\concat{\mypath^M}') &= f(\tup{s,a,u})\concat g({\mypath^M}', \upd(u^\bot,u,O^\nature_\priv(s),O_\publ(s),a),a).
    \end{align*}
    Where $u^\bot \in \bm{U}^\pset$ is the totally undefined function.
    Note that the results of $f$ and $g$ are in $\Paths^G$ and $\Paths^{G,\suff}$ by construction.
    Also, note that any call to $g$ that originated from a call in $f$ will have a result by construction.\\

    \noindent We show that $f$ is a bijection, meaning $f$ is injective and surjective.
    We first show $f$ is injective, so we show that:
    \[\forall \mypath^{1,M}, \mypath^{2,M} \in \Paths^M. \mypath^{1,M} \neq \mypath^{2,M} \implies f(\mypath^{1,M}) \neq f(\mypath^{2,M}).\]
    Given arbitrary $\mypath^{1,M}, \mypath^{2,M} \in \Paths^M$, we distinguish between the paths with superscripts $1$ and $2$, respectively.
    Assume $\mypath^{1,M} \neq \mypath^{2,M}$.
    If $\mypath^{1,M}$ and $\mypath^{2,M}$ do not have the same horizon length, then neither do $f(\mypath^{1,M})$ and $f(\mypath^{2,M})$.
    Then trivially, $f(\mypath^{1,M}) \neq f(\mypath^{2,M})$.\\
    
    \noindent Assume $\mypath^{1,M}$ and $\mypath^{2,M}$ have the same horizon length $n$.
    Then $\exists t \leq n$ where $\mypath^{1,M}$ and $\mypath^{2,M}$ deviate, so $\mypath^{1,M}(t) \neq \mypath^{2,M}(t)$.
    Let $q$ be the smallest number where the paths deviate.
    So $\forall t < q. \mypath^{1,M}(t) = \mypath^{2,M}(t)$ and $\mypath^{1,M}(q) \neq \mypath^{2,M}(q)$.
    Assume $q < n$.
    Then we know $\tup{s^1_q,a^1_q,u^1_q} \neq \tup{s^2_q,a^2_q,u^2_q}$, which comes down to: $s^1_q \neq s^2_q \lor a^1_q \neq a^2_q \lor u^1_q \neq u^2_q$.
    \begin{align*}
        f(\mypath^{1,M}) &= f(\mypath^{1,M}(1)\concat {\mypath^{1,M}}')\\
        &= \mypath^{1,G}(1)\concat g({\mypath^{1,M}}',\upd(u^\bot,u^1_0,O^\nature_\priv(s^1_0),O_\publ(s^1_0),a^1_0),a^1_0).
        \intertext{Unfold $g$ until $q$:}
        &= \bigconcat_{t=0}^{q}(\mypath^{1,G}(t))\concat\tup{\tup{s^1_q,\fixed(\mypath^{1,M}_{0:q}),a^1_{q-1}},u^1_q,\tup{s^1_q,\fixed(\mypath^{1,M}_{0:q}),u^1_q},a^1_q}\concat g({\mypath^{1,M}}'',\fixed(\mypath^{1,M}_{0:q+1}),a^1_q)
        \intertext{Since $s^1_q \neq s^2_q \lor a^1_q \neq a^2_q \lor u^1_q \neq u^2_q$:}
        &\neq \bigconcat_{t=0}^{q}(\mypath^{1,G}(t))\concat \tup{\tup{s^2_q,\fixed(\mypath^{1,M}_{0:q}),a^1_{q-1}},u^2_q,\tup{s^2_q,\fixed(\mypath^{1,M}_{0:q}),u^2_q},a^2_q}\concat g({\mypath^{2,M}}'',\fixed(\mypath^{1,M}_{0:q+1}),a^2_q)\\
        &= \bigconcat_{t=0}^{q}(\mypath^{2,G}(t))\concat \tup{\tup{s^2_q,\fixed(\mypath^{2.M}_{0:q}),a^2_{q-1}},u^2_q,\tup{s^2_q,\fixed(\mypath^{2,M}_{0:q}),u^2_q},a^2_q}\concat g({\mypath^{2,M}}'',\fixed(\mypath^{2,M}_{0:q+1}),a^2_q).
        \intertext{Fold $g$ until $1$:}
        &= \mypath^{2,G}(1)\concat g({\mypath^{2,M}}',\upd(u^\bot,u^2_0,O^\nature_\priv(s^2_0),O_\publ(s^2_0),a^2_0),a^2_0)\\
        &= f(\mypath^{2,M}(1)\concat {\mypath^{2,M}}')\\
        &= f(\mypath^{2,M}).
    \end{align*}
    If $q = n$, then the same result follows by removing everything after $\tup{s^1_q,\fixed(\mypath^{1,M}_{0:q}),a^1_{q-1}}, \tup{s^2_q,\fixed(\mypath^{1,M}_{0:q}),a^1_{q-1}}$, and $\tup{s^2_q,\fixed(\mypath^{2,M}_{0:q}),a^2_{q-1}}$.
    We thus have that $f(\mypath^{1,M}) \neq f(\mypath^{2,M})$, so $f$ is injective.\\
        
    \noindent Next, we show that $f$ is surjective, so we show that:
    \[\forall \mypath^G \in \Paths^G, \exists \mypath^M \in \Paths^M. f(\mypath^M) = \mypath^G.\]
    We show this holds by induction on the horizon length of the $\mypath^G \in \Paths^G$. 
    We write the length of $\mypath^G$ as $|\mypath^G|$.

    \noindent Assume $|\mypath^G| = 0$.
    Then $\mypath^G = \tup{s_I, u^\bot}$.
    We have that for $\tup{s_I} \in \Paths^M, f(\tup{s_I}) = \tup{\tup{s_I, u^\bot}} = \mypath^G$.
    So for paths of horizon length $0$, $f$ is surjective.\\

    \noindent Now assume we know, given $q\in \NN, q \geq 1$, that:
    \[\forall \mypath^G \in \Paths^G. |\mypath^G| = q-1 \implies \exists  \mypath^M \in \Paths^M: f(\mypath^M) = \mypath^G.\]
    Take arbitrary $\mypath^G \in \Paths^G$ with horizon length $|\mypath^G| = q$.
    Then we have $\mypath^G = \mypath^G_{0:q-1}\concat \tup{u_{q-1},\tup{s_{q-1}, u^\pset_{q-1},u_{q-1}}, a_{q-1},\tup{s_q, u^\pset_{q},a_{q-1}}}$.
    Then $\mypath^G_{0:q-1} \in \Paths^G$ and $|\mypath^G_{0:q-1}| = q-1$.
    By assumption, we get that:
    \[\exists \mypath^M_{0:q-1} \in \Paths^M, f(\mypath^M_{0:q-1}) = \mypath^G_{0:q-1}.\]
    Let $\mypath^M_{0:q-1} \in \Paths^M$ such that $f(\mypath^M_{0:q-1}) = \mypath^G_{0:q-1}$.
    We then know that in $\mypath^G$:
    \[\forall t < q. u^\pset_{t} = \fixed(\mypath^M_{0:t}).\]
    And, by \Cref{def:equivalent:nature:zsposg} and the definition of $\Paths^G$, that:
    \[u^\pset_{q} = \upd(\fixed(\mypath^M_{0:q-1}),u_{q-1}, O^\nature_\priv(s_{q-1}), O_\publ(s_{q-1}), a_{q-1}).\]
    Let $\tup{\tup{s_{q-2}, \fixed(\mypath^M_{0:q-2}),a_{q-3}}, u_{q-2}, \tup{s_{q-2}, \fixed(\mypath^M_{0:q-2}),u_{q-2}}, a_{q-2},\tup{s_{q-1}, \fixed(\mypath^M_{0:q-1}),a_{q-2}}}$ be the last two segments of $\mypath^G_{0:q-1}$.
    Then by definition and injectivity of $f$, we know that the last two segments of $\mypath^M_{0:q-1}$ are $\tup{s_{q-2},a_{q-2},u_{q-2},s_{q-1}}$.\\
    
    \noindent Now, by definition \Cref{def:equivalent:nature:zsposg} and the definition of $\Paths^G$, we know that:
    \[\mypath^G = \mypath^G_{0:q-1}\concat \tup{u_{q-1},\tup{s_{q-1}, u^\pset_{q-1},u_{q-1}}, a_{q-1},\tup{s_q, u^\pset_{q},a_{q-1}}} \in \Paths^G \]
    \[\Longleftrightarrow \]
    \[\mypath^G_{0:q-1} \in \Paths^G 
    \land \mathcal{T}^\nature(\tup{s_{q-1},u^\pset_{q-1},a_{q-2}},u_{q-1},\tup{s_{q-1},u^\pset_{q-1},u_{q-1}}) > 0\]
    \[\land \mathcal{T}^\agent(\tup{s_{q-1},u^\pset_{q-1}),u_{q-1}}, a_{q-1}, \tup{s_{q},u^\pset_{q},a_{q-1}}) > 0\] 
    \[\Longleftrightarrow \]
    \[\mypath^G_{0:q-1} \in \Paths^G 
    \land u_{q-1} \in \bm{U}^\agrees(u^\pset_{q-1})
    \land \bm{T}(u_{q-1})(s_{q-1},a_{q-1},s_{q}) > 0
    \]
    \[\Longleftrightarrow \]
    \[\mypath^G_{0:q-1} \in \Paths^G 
    \land u_{q-1} \in \bm{U}^\agrees(\fixed(\mypath^M_{0:q-1}))
    \land \bm{T}(u_{q-1})(s_{q-1},a_{q-1},s_{q}) > 0.
    \]
    So, since $\mypath^G \in \Paths^G$, we know $u_{q-1} \in \bm{U}^\agrees(\fixed(\mypath^M_{0:q-1}))$ and $\bm{T}(u_{q-1})(s_{q-1},a_{q-1},s_{q}) > 0$, which are the restrictions for $\mypath^M = \mypath^M_{0:q-1}\concat \tup{a_{q-1}, u_{q-1}, s_q} \in \Paths^M$ to hold.
    \begin{align*}
        f(\mypath^M ) &= f(\mypath^M_{0:q-1}\concat \tup{a_{q-1}, u_{q-1}, s_q})\\
        &= \bigconcat_{t=0}^{q-2}\mypath^G_{0:q-1}(t)\concat g(\tup{s_{q-1}, a_{q-1}, u_{q-1},s_q}, \fixed(\mypath^M_{0:q-1}),a_{q-2})\\
        &= \bigconcat_{t=0}^{q-2}\mypath^G_{0:q-1}(t)\concat g(\tup{s_{q-1}, a_{q-1}, u_{q-1}}, \fixed(\mypath^M_{0:q-1}),a_{q-2})\concat g(\tup{s_q}, \fixed(\mypath^M_{0:q}),a_{q-1})\\
        &= \bigconcat_{t=0}^{q-2}\mypath^G_{0:q-1}(t)\concat g(\tup{s_{q-1}, a_{q-1}, u_{q-1}}, \fixed(\mypath^M_{0:q-1}),a_{q-2})\concat g(\tup{s_q}, \upd(\fixed(\mypath^M_{0:q-1}),u_{q-1}, O^\nature_\priv(s_{q-1}), O_\publ(s_{q-1}), a_{q-1}),a_{q-1})\\
        &= \bigconcat_{t=0}^{q-2}\mypath^G_{0:q-1}(t)\concat g(\tup{s_{q-1}, a_{q-1}, u_{q-1}}, \fixed(\mypath^M_{0:q-1}),a_{q-2})\concat g(\tup{s_q}, u^\pset_{q},a_{q-1})\\
        &= \bigconcat_{t=0}^{q-2}\mypath^G_{0:q-1}(t)\concat \tup{\tup{s_{q-1}, \fixed(\mypath^M_{0:q-1}),a_{q-2}},u_{q-1},\tup{s_{q-1}, \fixed(\mypath^M_{0:q-1}),u_{q-1}}, a_{q-1}}\concat g(\tup{s_q}, u^\pset_{q},a_{q-1})\\
        &= \bigconcat_{t=0}^{q-2}\mypath^G_{0:q-1}(t)\concat \tup{\tup{s_{q-1}, u^{\pset}_{q-1},a_{q-2}},u_{q-1},\tup{s_{q-1}, u^{\pset}_{q-1},u_{q-1}}, a_{q-1}}\concat g(\tup{s_q}, u^\pset_{q},a_{q-1})\\
        &= \mypath^G_{0:q-1}\concat \tup{u_{q-1},\tup{s_{q-1}, u^{\pset}_{q-1},u_{q-1}}, a_{q-1}}\concat g(\tup{s_q}, u^\pset_{q},a_{q-1})\\
        &= \mypath^G_{0:q-1}\concat \tup{u_{q-1},\tup{s_{q-1}, u^{\pset}_{q-1},u_{q-1}}, a_{q-1}, \tup{s_q, u^{\pset}_{q},a_{q-1}}}\\
        &= \mypath^G.
    \end{align*}
    So if $f$ is surjective for paths of arbitrary length $q-1 \in \NN$, $f$ is surjective for paths of length $q$.
    Hence, by induction, $f$ is surjective.\\
    
    \noindent $f$ is injective and surjective, hence $f$ is a bijection.
\end{proof}}

\begin{corollary}[Nature first bijections between histories]
    \noindent Let $g^h\colon H^{M,\suff} \times A \to H^{G,\suff}$ defined by:
    \begin{align*}
    g^h(\tup{z^\agent_\priv,z^\nature_\priv,z_\publ},a') &= \tup{\tup{z^\nature_\priv,z_\publ,a'},\tup{z^\agent_\priv,z_\publ}}.\\
    g^h(\tup{z^\agent_\priv,z^\nature_\priv,z_\publ,a,u},a') &= \tup{\tup{z^\nature_\priv,z_\publ,a'},\tup{z^\agent_\priv,z_\publ},u,\tup{z^\nature_\priv,z_\publ,\bot},\tup{z^\agent_\priv,z_\publ},a}.\\
    g^h(\tup{z^\agent_\priv,z^\nature_\priv,z_\publ,a,u}\concat h',a') &= g^h(\tup{z^\agent_\priv,z^\nature_\priv,z_\publ,a,u},\bot)\concat g^h(h',a).
    \end{align*}

    \noindent Let $f^h\colon H^M \to H^G$ defined by:
    \begin{align*}
    f^h(\tup{z^\agent_\priv,z^\nature_\priv,z_\publ}) &= \tup{\tup{z^\nature_\priv,z_\publ,\bot},\tup{z^\agent_\priv,z_\publ}}.\\
    f^h(\tup{z^\agent_\priv,z^\nature_\priv,z_\publ,a,u}) &= \tup{\tup{z^\nature_\priv,z_\publ,\bot},\tup{z^\agent_\priv,z_\publ},u,\tup{z^\nature_\priv,z_\publ,\bot},\tup{z^\agent_\priv,z_\publ},a}.\\
    f^h(\tup{z^\agent_\priv,z^\nature_\priv,z_\publ,a,u}\concat h') &= f^h(\tup{z^\agent_\priv,z^\nature_\priv,z_\publ,a,u})\concat g^h(h',a).
    \end{align*}

    \noindent Let $g^{\agent,h}\colon H^{\agent,M,\suff} \to H^{\agent,G,\suff}$ defined by:
    \begin{align*}
    g^{\agent,h}(\tup{z^\agent_\priv,z_\publ}) &= \tup{\tup{z^\agent_\priv,z_\publ}}.\\
    g^{\agent,h}(\tup{z^\agent_\priv,z_\publ,a}) &= \tup{\tup{z^\agent_\priv,z_\publ},\tup{z^\agent_\priv,z_\publ},a}.\\
    g^{\agent,h}(\tup{z^\agent_\priv,z_\publ,a,h'}) &= g^{\agent,h}(\tup{z^\agent_\priv,z_\publ,a})\concat g^{\agent,h}(h').
    \end{align*}

    \noindent Let $f^{\agent,h}\colon H^{\agent,M} \to H^{\agent,G}$ defined by:
    \begin{align*}
    f^{\agent,h}(\tup{z^\agent_\priv,z_\publ}) &= \tup{\tup{z^\agent_\priv,z_\publ}}.\\
    f^{\agent,h}(\tup{z^\agent_\priv,z_\publ,a}) &= \tup{\tup{z^\agent_\priv,z_\publ},\tup{z^\agent_\priv,z_\publ},a}.\\
    f^{\agent,h}(\tup{z^\agent_\priv,z_\publ,a,h'}) &= f^{\agent,h}(\tup{z^\agent_\priv,z_\publ,a})\concat g^{\agent,h}(h').
    \end{align*}
    $f^{\agent,h}$ is a bijection.\\

    \noindent Let $g^{\nature,h}\colon H^{\nature,M,\suff} \times A \to H^{\nature,G,\suff}$ defined by:
    \begin{align*}
    g^{\nature,h}(\tup{z^\nature_\priv,z_\publ},a') &= \tup{\tup{z^\nature_\priv,z_\publ,a'}}.\\
    g^{\nature,h}(\tup{z^\nature_\priv,z_\publ,a,u},a') &= \tup{\tup{z^\nature_\priv,z_\publ,a'},u,\tup{z^\nature_\priv,z_\publ,\bot}}.\\
    g^{\nature,h}(\tup{z^\nature_\priv,z_\publ,a,u,h'},a') &= g^{\nature,h}(\tup{z^\nature_\priv,z_\publ,a,u},a')\concat g^{\nature,h}(h',a).
    \end{align*}

    \noindent Let $f^{\nature,h}\colon H^{\nature,M} \to H^{\nature,G}$ defined by:
    \begin{align*}
    f^{\nature,h}(\tup{z^\nature_\priv,z_\publ}) &= \tup{\tup{z^\nature_\priv,z_\publ,\bot}}.\\
    f^{\nature,h}(\tup{z^\nature_\priv,z_\publ,a,u}) &= \tup{\tup{z^\nature_\priv,z_\publ,\bot},u,\tup{z^\nature_\priv,z_\publ,\bot}}.\\
    f^{\nature,h}(\tup{z^\nature_\priv,z_\publ,a,u,h'}) &= f^{\nature,h}(\tup{z^\nature_\priv,z_\publ,a,u})\concat g^{\nature,h}(h',a).
    \end{align*}
    $f^{\nature,h}$ is a bijection.
\end{corollary}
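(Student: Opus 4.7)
The plan is to mirror, in the nature-first setting, the structure used for the agent-first bijection between joint histories (\Cref{app:theorem_bijection_joint_histories}) and then extract the agent and nature versions by projecting out the opponent's private observations. Since the nature-first path bijection $f\colon \Paths^M \to \Paths^G$ has just been established in the preceding lemma, and since the paths-to-histories maps $O^M, O^G, O^{\agent,M}, O^{\agent,G}, O^{\nature,M}, O^{\nature,G}$ (in their nature-first forms defined in \Cref{app:notation_and_prelim}) are surjective by construction of the history sets, the strategy is to lift $f$ through these observation maps.

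First, I would show that $f^h$ is injective. Taking two distinct joint histories $h^{1,M}, h^{2,M} \in H^M$ of common horizon length $n$ (the unequal-length case is immediate because the segment-by-segment definitions of $f^h$ and $g^h$ preserve length), let $q$ be the smallest index at which they disagree. Unfolding $f^h$ segment-by-segment up to index $q$ and using that the $q$-th segment in the POSG history carries the tuple $\tup{\tup{z^\nature_{\priv,q},z_{\publ,q},a_{q-1}},\tup{z^\agent_{\priv,q},z_{\publ,q}},u_q,\ldots}$, I obtain disagreement at the $q$-th POSG segment from at least one of $z^{\agent}_{\priv,q}, z^{\nature}_{\priv,q}, z_{\publ,q}, a_q, u_q$ differing. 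The argument for $f^{\agent,h}$ and $f^{\nature,h}$ is the same with the other player's private components dropped; note in particular that $f^{\nature,h}$ retains the previous action $a_{q-1}$ through the $g^{\nature,h}$ recursion, so disagreement in $a_q$ still manifests in the next recursive call.

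Next, I would show surjectivity. Take an arbitrary $h^G \in H^G$. By surjectivity of $O^G$, pick $\mypath^G \in \Paths^G$ with $O^G(\mypath^G) = h^G$. By the nature-first path bijection, let $\mypath^M = f^{-1}(\mypath^G) \in \Paths^M$, and set $h^M = O^M(\mypath^M) \in H^M$. To conclude $f^h(h^M) = h^G$ I would argue by induction on segment index $t$: at $t=0$ the initial $\bot$ placeholder on nature's side matches both the POSG's first nature-state observation and the $f^h$ base case; at the inductive step, because $f$ preserves segments $\mypath^G(t) = g(\mypath^M(t), \fixed(\mypath^M_{0:t}), a_{t-1})$, the observation maps and $g^h$ agree segmentwise, in particular mapping the RPOMDP segment $\tup{z^\agent_\priv, z^\nature_\priv, z_\publ, a, u}$ with carried action $a'$ to the POSG segment $\tup{\tup{z^\nature_\priv, z_\publ, a'}, \tup{z^\agent_\priv, z_\publ}, u, \tup{z^\nature_\priv, z_\publ, \bot}, \tup{z^\agent_\priv, z_\publ}, a}$. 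Projecting both sides to agent-only and nature-only components yields the analogous surjectivity for $f^{\agent,h}$ and $f^{\nature,h}$.

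The main obstacle will be correctly propagating the "carried" previous action $a'$ in the nature-first setting. Unlike agent-first, where each POSG segment is self-contained, the nature-first $g^h$ and $g^{\nature,h}$ take an extra $A$-argument that records the action from the previous segment (placing it in nature's observation of the next nature-state). This forces the segmentwise induction to carry an extra bookkeeping variable and to verify that the base case, which uses $\bot$ rather than an action, aligns with the $f^h$ definition. Once this bookkeeping is handled consistently across the three functions, the arguments for injectivity and surjectivity proceed exactly as in the agent-first proof, and the projection to $f^{\agent,h}$ and $f^{\nature,h}$ is straightforward since removing the opponent's private data commutes with the segmentwise recursions.
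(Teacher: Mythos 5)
Your proposal is correct and follows essentially the same route the paper intends: the paper leaves this corollary's proof as "same steps as the agent-first history bijections," relying on the nature-first path bijection it proves in detail, which is exactly your plan (injectivity via the first disagreeing segment, surjectivity by lifting a POSG history through $O^G$, the path bijection, and $O^M$, then projecting to the agent/nature components). You also correctly isolate the only genuine adaptation needed, namely the bookkeeping of the carried previous action $a'$ (with the $\bot$ base case) in $g^h$ and $g^{\nature,h}$, which is precisely the subtlety the paper flags.
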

    
The bijection between stochastic nature policies no longer involves an extra state observation, whereas the bijection between stochastic agent policy now does.
Note that the extra agent state observation for the inverse function for the agent policy bijection can be derived from the history input, as this observation is the same as the last nature state observation contained in that history.
\begin{corollary}[Bijection between policies]
Let $f^{\pi}\colon \Pi^M \to \Pi^G$ defined by:
\[
    f^{\pi}(\pi^M)(h^{\agent,G},\tup{z^\agent_\priv,z_\publ}) = \pi^M((f^{\agent,h})^{-1}(h^{\agent,G})),
\]
then $f^{\pi}$ is a bijection.\\

Let $f^{\theta}\colon \Theta^M \to \Theta^G$ defined by:
\[
    f^{\theta}(\theta^M)(h^{\nature,G}) = \theta^M((f^{\nature,h})^{-1}(h^{\nature,G})),
\]
then $f^{\theta}$ is a bijection.

\end{corollary}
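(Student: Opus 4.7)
The plan is to derive both bijection claims directly from the history bijections $f^{\agent,h}\colon H^{\agent,M}\to H^{\agent,G}$ and $f^{\nature,h}\colon H^{\nature,M}\to H^{\nature,G}$ established in the immediately preceding nature-first corollary. In both cases I would check injectivity and surjectivity separately, constructing preimages via $(f^{\agent,h})^{-1}$ and $(f^{\nature,h})^{-1}$ and then verifying that the resulting functions live in the correct policy classes.

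First I would handle $f^{\theta}$, which is the easy case in the nature-first setting because nature no longer carries the extra action argument it has in the agent-first variant: both the RPOMDP and POSG nature-policy signatures collapse to $H^{\nature,\cdot}\to \dist{\bm U}$ (with $\mathcal A^\nature = \bm U$). Injectivity follows at once: if $\theta^M_1 \neq \theta^M_2$ differ on some $h^{\nature,M}$, then their images differ on $f^{\nature,h}(h^{\nature,M})$. For surjectivity, given $\theta^G \in \Theta^G$, I would take the candidate $\theta^M(h^{\nature,M}) := \theta^G(f^{\nature,h}(h^{\nature,M}))$ and verify validity by checking that the fixed variable restriction recorded in the POSG state at $f^{\nature,h}(h^{\nature,M})$ coincides with $\fixed(h^{\nature,M})$; this is exactly the invariant built into $f^{\nature,h}$ via the $\upd$ function, so the induced finite support of $\theta^M(h^{\nature,M})$ automatically lands in $\bm U^\agrees(\fixed(h^{\nature,M}))$.

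Next I would handle $f^{\pi}$, which is the subtle case because the nature-first POSG agent policy carries an extra current-observation argument $\tup{z^\agent_\priv, z_\publ} \in \mathcal Z^\agent$ that the RPOMDP agent policy lacks. Since $f^{\pi}(\pi^M)$ is defined to ignore this argument, injectivity again reduces cleanly to the history bijection applied to any witness history on which $\pi^M_1$ and $\pi^M_2$ disagree. The real work is surjectivity. I would prove a small structural lemma stating that on any input $(h^{\agent,G}, \tup{z^\agent_\priv, z_\publ})$ that corresponds to an actual play of the nature-first POSG, the extra observation $\tup{z^\agent_\priv, z_\publ}$ is already determined by $h^{\agent,G}$: \Cref{def:equivalent:nature:zsposg} makes $\mathcal O^\agent$ output $\tup{O^\agent_\priv(s'), O_\publ(s')}$ both on the preceding nature state $\tup{s',u^\pset,u}$ and on the current agent state, so the last observation recorded in $h^{\agent,G}$ equals $\tup{z^\agent_\priv, z_\publ}$. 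With this invariant in hand, $\pi^M(h^{\agent,M}) := \pi^G(f^{\agent,h}(h^{\agent,M}), z)$ is independent of the choice of $z$ on valid inputs and yields the required preimage.

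The main obstacle is precisely this redundancy lemma for the extra agent observation in the nature-first POSG: it requires spelling out which $(h^{\agent,G}, \tup{z^\agent_\priv, z_\publ})$ pairs are ever exercised by some joint play and then reading off from \Cref{def:equivalent:nature:zsposg} that the current-observation slot is forced by the last observation in $h^{\agent,G}$. Once this invariant is stated, both injectivity and surjectivity of $f^{\pi}$ and $f^{\theta}$ follow, and the resulting bijections can be chained with the path and history bijections to carry Theorem \ref{thm:equivalent:values} over to the nature-first setting without change.
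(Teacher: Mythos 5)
Your proposal is correct and follows essentially the same route as the paper: the policy bijections are obtained directly from the history bijections $f^{\agent,h}$ and $f^{\nature,h}$, and your ``redundancy lemma'' for the extra observation argument of the nature-first POSG agent policy is exactly the paper's remark that this observation can be derived from the history input, since it coincides with the last (nature-state) observation already contained in $h^{\agent,G}$. The paper simply states the corollary with that remark rather than spelling out injectivity/surjectivity, so your write-up is the same argument in more explicit form.
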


\clearpage
\newpage
\section{Nash Equilibrium}\label{app:nashEquilibrium}
This appendix contains all the proofs required to show the existence of a Nash equilibrium in our POSGs, \ie, \Cref{thm:exist:nash}, restated below.
{\label{app:thm_nash_exists}\theoremIII*}
Throughout this appendix, we use the RPOMDP histories, paths, and policies, as these require simpler notation.
We refer to \Cref{app:value:function:proofs} for the bijections between the RPOMDP and POSG paths, histories, and policies.

\subsection{Sufficient Statistic}\label{app:suff.stat}
Where in POMDPs the \aoh{} of the agent is enough to reason optimally, this is not the case for RPODMPs and their equivalent POSGs.
Apart from their own \aoh{}, the players must also consider all possible \aohs{} of the other player.

We adjust the notion of occupancy state used in \cite{Springer:HSVI} to work with the infinite nature action space of our RPOMDP and equivalent POSGs.
As mentioned in the introduction of this appendix, we use the RPOMDP notation.
Given $\pi_{0:t-1} \in \Pi_{0:t-1}, \theta_{0:t-1} \in \Theta_{0:t-1}$, \cite{Springer:HSVI} defines the occupancy state $\ocs{0:t-1}$ as the probability distribution over all joint \aohs{} given agent and nature policies $\pi_{0:t-1}$ and $\theta_{0:t-1}$.
\[\forall h_t \in H_t: \ocs{0:t-1}(h_t) = \Pr(h_t \given \pi_{0:t-1},\theta_{0:t-1}).\]
\[\sum_{h_t\in H_t} \ocs{0:t-1}(h_t) = 1.\]
In the original definition of the occupancy state, nature's action space is finite.
This occupancy state is a sufficient statistic for computing the next occupancy state and the expected reward at time $t$ given the next $\pi_t$ and $\theta_t$ in their POSG models.

However, as we deal with an infinite action space, we must make adjustments to ensure the subset of joint \aohs{} for each occupancy state is finite.
Therefore, we keep track of nature's policy to be able to generate the finite subset of joint \aohs{} that the corresponding occupancy states have a distribution over.

Our version of the occupancy state extends the original occupancy state with the corresponding nature policy.
Given  $\pi_{0:t-1} \in \Pi_{0:t-1}, \theta_{0:t-1} \in \Theta_{0:t-1}$:
\[\Ocs{0:t-1} \stackrel{\text{def}}{=} \tup{\ocs{0:t-1}, \theta_{0:t-1}}.\]
\[\forall h_t \in H_t: \ocs{0:t-1}{\theta_{0:t-1}}(h_t) = \Pr(h_t \given \pi_{0:t-1},\theta_{0:t-1}).\]
\[\sum_{h_t\in H_t} \ocs{0:t-1}(h_t) = 1.\]

We show that the occupancy state $\Ocs{0:t-1}$ together with agent and nature policies $\pi_t, \theta_t$ at time $t$, is a sufficient statistic for computing the next occupancy state $\Ocs{0:t}$ and the expected reward $\mathsf{R}(\Ocs{0:t-1}, \pi_t, \theta_t) = \EE[r_t \given \pi_{0:t-1}, \pi_t, \theta_{0:t-1}, \theta_t]$.
Note that these proofs are based on the proofs in Appendix B of \cite{Springer:HSVI}.
{\allowdisplaybreaks
\begin{align*}
    &\Ocs{0:t}(h_t \concat \tup{a_t, u_t, z_\priv^\agent, z_\priv^\nature, z_\publ}) \stackrel{\text{def}}{=} \tup{\ocs{0:t}(h_t \concat \tup{a_t, u_t, z_\priv^\agent, z_\priv^\nature, z_\publ}), \theta_{0:t}}.\\
\intertext{Where:}
    &\theta_{0:t} \stackrel{\text{def}}{=} \theta_{0:t-1} \concat \theta_t.\\
    &\ocs{0:t}(h_t \concat \tup{a_t, u_t, z_\priv^\agent, z_\priv^\nature, z_\publ}) \stackrel{\text{def}}{=} \Pr(h_t, a_t, u_t, z_\priv^\agent, z_\priv^\nature, z_\publ \given \pi_{0:t}, \theta_{0:t})\\
    &= \sum_{s, s'\in S} \Pr(h_t, a_t, u_t, z_\priv^\agent, z_\priv^\nature, z_\publ, s, s' \given \pi_{0:t}, \theta_{0:t})\\
    &= \sum_{s, s'\in S} \Pr(z_\priv^\agent, z_\priv^\nature, z_\publ \given h_t, a_t, u_t, s, s', \pi_{0:t}, \theta_{0:t})\Pr(h_t, a_t, u_t, s, s' \given \pi_{0:t}, \theta_{0:t}).
\intertext{The chance of an observation only depends on the state:}
    &= \sum_{s, s'\in S} \Pr(z_\priv^\agent, z_\priv^\nature, z_\publ \given s')\Pr(h_t, a_t, u_t, s, s' \given \pi_{0:t}, \theta_{0:t})\\
    &= \sum_{s, s'\in S} \Pr(z_\priv^\agent, z_\priv^\nature, z_\publ \given s')\Pr(s' \given h_t, a_t, u_t, s,\pi_{0:t}, \theta_{0:t})\Pr(h_t, a_t, u_t, s \given \pi_{0:t}, \theta_{0:t}).
\intertext{The chance of reaching a state only depends on the previous state and the agent and nature actions:}
    &= \sum_{s, s'\in S} \Pr(z_\priv^\agent, z_\priv^\nature, z_\publ \given s')\Pr(s' \given a_t, u_t, s)\Pr(h_t, a_t, u_t, s \given \pi_{0:t}, \theta_{0:t})\\
    &= \sum_{s, s'\in S} \Pr(z_\priv^\agent, z_\priv^\nature, z_\publ \given s')\Pr(s' \given a_t, u_t, s)\Pr(u_t \given h_t, a_t, s, \pi_{0:t}, \theta_{0:t})\Pr(h_t, a_t, s \given \pi_{0:t}, \theta_{0:t}).
\intertext{The chance of a nature action only depends on nature's policy at time $t$, the history, and the agent action at time $t$:}
    &= \sum_{s, s'\in S} \Pr(z_\priv^\agent, z_\priv^\nature, z_\publ \given s')\Pr(s' \given a_t, u_t, s)\Pr(u_t \given h_t, a_t, \theta_{t})\Pr(h_t, a_t, s \given \pi_{0:t}, \theta_{0:t})\\
    &= \sum_{s, s'\in S} \Pr(z_\priv^\agent, z_\priv^\nature, z_\publ \given s')\Pr(s' \given a_t, u_t, s)\Pr(u_t \given h_t, a_t, \theta_{t})\Pr(a_t \given h_t, s, \pi_{0:t}, \theta_{0:t})\Pr(h_t, s \given \pi_{0:t}, \theta_{0:t}).
\intertext{The chance of an agent action only depends on the agent's policy at time $t$, and the history:}
    &= \sum_{s, s'\in S} \Pr(z_\priv^\agent, z_\priv^\nature, z_\publ \given s')\Pr(s' \given a_t, u_t, s)\Pr(u_t \given h_t, a_t, \theta_{t})\Pr(a_t \given h_t, \pi_{t})\Pr(h_t, s \given \pi_{0:t}, \theta_{0:t})\\
    &= \sum_{s, s'\in S} \Pr(z_\priv^\agent, z_\priv^\nature, z_\publ \given s')\Pr(s' \given a_t, u_t, s)\Pr(u_t \given h_t, a_t, \theta_{t})\Pr(a_t \given h_t, \pi_{t})\Pr(s \given h_t, \pi_{0:t}, \theta_{0:t})\Pr(h_t \given \pi_{0:t}, \theta_{0:t}).
\intertext{The chance of being in a state can be computed via the belief generated by the joint history (see \Cref{app:appendix_prelim}):}
    &= \sum_{s, s'\in S} \Pr(z_\priv^\agent, z_\priv^\nature, z_\publ \given s')\Pr(s' \given a_t, u_t, s)\Pr(u_t \given h_t, a_t, \theta_{t})\Pr(a_t \given h_t, \pi_{t})\Pr(s \given h_t)\Pr(h_t \given \pi_{0:t}, \theta_{0:t}).
\intertext{The chance of a history at time $t$ does not depend on actions of time $t$:}
    &= \sum_{s, s'\in S} \Pr(z_\priv^\agent, z_\priv^\nature, z_\publ \given s')\Pr(s' \given a_t, u_t, s)\Pr(u_t \given h_t, a_t, \theta_{t})\Pr(a_t \given h_t, \pi_{t})\Pr(s \given h_t)\Pr(h_t \given \pi_{0:t-1}, \theta_{0:t-1})\\
    &= \sum_{s, s'\in S} O^\agent_\priv(s',z_\priv^\agent) O^\nature_\priv(s',z_\priv^\nature) O_\publ(s',z_\publ)\Pr(s' \given a_t, u_t, s)\Pr(u_t \given h_t, a_t, \theta_{t})\Pr(a_t \given h_t, \pi_{t})\Pr(s \given h_t)\Pr(h_t \given \pi_{0:t}, \theta_{0:t})\\
    &= \sum_{s, s'\in S} O^\agent_\priv(s',z_\priv^\agent) O^\nature_\priv(s',z_\priv^\nature) O_\publ(s',z_\publ) \bm{T}(u_t)(s,a_t)(s')\Pr(u_t \given h_t, a_t, \theta_{t})\Pr(a_t \given h_t, \pi_{t})\Pr(s \given h_t)\Pr(h_t \given \pi_{0:t}, \theta_{0:t})\\
    &= \sum_{s, s'\in S} O^\agent_\priv(s',z_\priv^\agent) O^\nature_\priv(s',z_\priv^\nature) O_\publ(s',z_\publ) \bm{T}(u_t)(s,a_t)(s') \theta_t(h^\nature_t,a_t)(u_t)\Pr(a_t \given h_t, \pi_{t})\Pr(s \given h_t)\Pr(h_t \given \pi_{0:t}, \theta_{0:t})\\
    &= \sum_{s, s'\in S} O^\agent_\priv(s',z_\priv^\agent) O^\nature_\priv(s',z_\priv^\nature) O_\publ(s',z_\publ) \bm{T}(u_t)(s,a_t)(s') \theta_t(h^\nature_t,a_t)(u_t)\pi_t(h^\agent_t)(a_t)\Pr(s \given h_t)\Pr(h_t \given \pi_{0:t}, \theta_{0:t})\\
    &= \sum_{s, s'\in S} O^\agent_\priv(s',z_\priv^\agent) O^\nature_\priv(s',z_\priv^\nature) O_\publ(s',z_\publ) \bm{T}(u_t)(s,a_t)(s') \theta_t(h^\nature_t,a_t)(u_t)\pi_t(h^\agent_t)(a_t)b(s, h_t)\Pr(h_t \given \pi_{0:t}, \theta_{0:t}).
\intertext{Where $b(s, h_t)$ is the belief computed by $t$ belief updates given the joint \aoh{} $h_t$ (see \Cref{app:appendix_prelim}).}
    &= \sum_{s, s'\in S} O^\agent_\priv(s',z_\priv^\agent) O^\nature_\priv(s',z_\priv^\nature) O_\publ(s',z_\publ) \bm{T}(u_t)(s,a_t)(s') \theta_t(h^\nature_t,a_t)(u_t)\pi_t(h^\agent_t)(a_t)b(s, h_t)\ocs{0:t-1}(h_t).
\end{align*}
}

This shows that we can compute the successor occupancy state using only the previous occupancy state $\Ocs{0:t-1} = \tup{\ocs{0:t-1}, \theta_{0:t-1}}$ and policies $\pi_t, \theta_t$ at time $t$.
Note that we can use the nature policy $\theta_{0:t}$ to generate the finite subset of relevant histories $\rel(\theta_{0:t}) \subset H$ that possibly have a non-zero probability.
We can then select the relevant histories of $t$ time steps, denoted as $\rel(\theta_{0:t})_t \subset H_t$, to compute the occupancy states.
See \Cref{app:relevant_histories} for details on the set of relevant histories.

Next, we look at the expected reward.
{\allowdisplaybreaks
\begin{align*}
    \EE[r_t \given \pi_{0:t}, \theta_{0:t}] &= \sum_{s\in S}\sum_{a\in A} R(s,a) \Pr(s,a \given \pi_{0:t}, \theta_{0:t})\\
    &= \sum_{s\in S}\sum_{a\in A} R(s,a) \sum_{h_t\in \rel(\theta_{0:t-1})_t}\Pr(s, a, h_t \given \pi_{0:t}, \theta_{0:t})\\
    &= \sum_{s\in S}\sum_{a\in A} R(s,a) \sum_{h_t\in \rel(\theta_{0:t-1})_t}\Pr(a \given s, h_t, \pi_{0:t}, \theta_{0:t})\Pr(s, h_t \given \pi_{0:t}, \theta_{0:t}).
\intertext{The chance of an agent action only depends on the agents's policy at time $t$, and the history:}
    &= \sum_{s\in S}\sum_{a\in A} R(s,a) \sum_{h_t\in \rel(\theta_{0:t-1})_t}\Pr(a \given h_t, \pi_{t})\Pr(s, h_t \given \pi_{0:t}, \theta_{0:t})\\
    &= \sum_{s\in S}\sum_{a\in A} R(s,a) \sum_{h_t\in \rel(\theta_{0:t-1})_t}\Pr(a \given h_t, \pi_{t})\Pr(s \given h_t, \pi_{0:t}, \theta_{0:t})\Pr(h_t \given \pi_{0:t}, \theta_{0:t}).
\intertext{The chance of being in a state can be computed via the belief generated by the joint history (see \Cref{app:appendix_prelim}):}
    &= \sum_{s\in S}\sum_{a\in A} R(s,a) \sum_{h_t\in \rel(\theta_{0:t-1})_t}\Pr(a \given h_t, \pi_{t})\Pr(s \given h_t)\Pr(h_t \given \pi_{0:t}, \theta_{0:t}).
\intertext{The chance of a history at time $t$ does not depend on actions of time $t$:}
    &= \sum_{s\in S}\sum_{a\in A} R(s,a) \sum_{h_t\in \rel(\theta_{0:t-1})_t}\Pr(a \given h_t, \pi_{t})\Pr(s \given h_t)\Pr(h_t \given \pi_{0:t-1}, \theta_{0:t-1})\\
    &= \sum_{s\in S}\sum_{a\in A} R(s,a) \sum_{h_t\in \rel(\theta_{0:t-1})_t}\pi_t(h^\agent_t)(a)\Pr(s \given h_t)\Pr(h_t \given \pi_{0:t-1}, \theta_{0:t-1})\\
    &= \sum_{s\in S}\sum_{a\in A} R(s,a) \sum_{h_t\in \rel(\theta_{0:t-1})_t}\pi_t(h^\agent_t)(a)b(s,h_t)\Pr(h_t \given \pi_{0:t-1}, \theta_{0:t-1})\\
    &= \sum_{s\in S}\sum_{a\in A} R(s,a) \sum_{h_t\in \rel(\theta_{0:t-1})_t}\pi_t(h^\agent_t)(a)b(s,h_t)\ocs{0:t-1}(h_t).
\end{align*}
This shows that we can compute the expected reward at time $t$ using only the previous occupancy state $\Ocs{0:t-1} = (\ocs{0:t-1}, \theta_{0:t-1})$ and policy $\pi_t$ at time $t$.\\

\begin{remark}
    The equivalent formulation of the occupancy state using POSG notation is as follows:
    \begin{align*}
        &\Ocs{0:t}(h_t \concat \tup{\tup{z_\priv^\agent, z_\publ}, \tup{z_\priv^\nature, z_\publ, \bot}, a_t, \tup{z_\priv^\agent, z_\publ}, \tup{z_\priv^\nature, z_\publ, a_t}, u_t})\\
        & \qquad \stackrel{\text{def}}{=} \tup{\ocs{0:t}(h_t \concat \tup{\tup{z_\priv^\agent, z_\publ}, \tup{z_\priv^\nature, z_\publ, \bot}, a_t, \tup{z_\priv^\agent, z_\publ}, \tup{z_\priv^\nature, z_\publ, a_t}, u_t}), \theta_{0:t}}.\\
    \intertext{Where:}
        &\theta_{0:t} \stackrel{\text{def}}{=} \theta_{0:t-1} \concat \theta_t.\\
        &\ocs{0:t}(h_t \concat \tup{\tup{z_\priv^\agent, z_\publ}, \tup{z_\priv^\nature, z_\publ, \bot}, a_t, \tup{z_\priv^\agent, z_\publ}, \tup{z_\priv^\nature, z_\publ, a_t}, u_t}) \stackrel{\text{def}}{=} \Pr(h_t, a_t, u_t, z_\priv^\agent, z_\priv^\nature, z_\publ \given \pi_{0:t}, \theta_{0:t})\\
        &= \sum_{s\in \mathcal{S}^\agent} \sum_{s'\in \mathcal{S}^\agent} \mathcal{O}^\agent(s',\tup{z_\priv^\agent, z_\publ})\mathcal{O}^\nature(s',\tup{z_\priv^\nature, z_\publ})\mathcal{T}^\nature(\mathcal{T}^\agent(s,a_t), u_t, s')\theta_t(h^\nature_t,a_t)(u_t)\pi_t(h^\agent_t)(a_t)b(s, h_t)\ocs{0:t-1}(h_t).
    \end{align*}
    We can compute the expected reward as follows:
    \begin{align*}
        \EE[r_t \given \pi_{0:t}, \theta_{0:t}] &= \sum_{s\in \mathcal{S}^\agent}\sum_{a\in \mathcal{A}^\agent} \mathcal{R}(s,a) \sum_{h_t\in \rel(\theta_{0:t-1})_t}\pi_t(h^\agent_t)(a)b(s,h_t)\ocs{0:t-1}(h_t).
    \end{align*}
    This formulation again shows that the occupancy state $\Ocs{0:t-1}$ together with agent and nature policies $\pi_t, \theta_t$ at time $t$, is a sufficient statistic for computing the next occupancy state $\Ocs{0:t}$ and the expected reward $\mathsf{R}(\Ocs{0:t-1}, \pi_t, \theta_t)$.
\end{remark}

\subsection{Occupancy Game}\label{app:occupacy_game}
Given the occupancy state, we can define a non-observable, non-stochastic game: a zero-sum occupancy game (OG) \cite{Springer:HSVI}.
As the occupancy state is a sufficient statistic, computing a Nash equilibrium in this OG is equivalent to a Nash equilibrium in our original RPOMDP.

\begin{definition}[Occupancy game]\label{app:def:occupancy_game}
Given an RPOMDP $\tup{S, A, \bm{T}, R, \Zagent, \Znature, \Zpub, O^\agent_\priv, O^\nature_\priv, O_\publ,\sticky,\agent}$, and a horizon $K\in \NN$, we define the occupancy game as a tuple $(\mathsf{S}^\agent, \mathsf{S}^\nature, \mathsf{A}^\agent, \mathsf{A}^\nature, \mathsf{T}, \mathsf{R})$ where the sets of states and actions are defined as follows:
$\mathsf{S}^\agent = \bigcup_{t=0}^{K-1} \bigcup_{\pi_{0:t} \in \Pi_{0:t}} \bigcup_{\theta_{0:t}\in\Theta_{0:t}} \Ocs{0:t}$ is the infinite set of agent states, and $\mathsf{S}^\nature = \bigcup_{t=0}^{K-1} (\bigcup_{\pi_{0:t} \in \Pi_{0:t}} \bigcup_{\theta_{0:t}\in\Theta_{0:t}} \Ocs{0:t} \times \Pi_{t+1})$ the infinite set of nature states;
$\mathsf{A}^\agent = \bigcup_{t=0}^{K-1} \Pi_{t}$ is the infinite set of agent actions, and $\mathsf{A}^\nature = \bigcup_{t=0}^{K-1} \Theta_{t}$ the infinite set of nature actions;
The transition and reward functions are then defined as:
\begin{itemize}
    \item $\mathsf{T} = \mathsf{T}^\agent \cup \mathsf{T}^\nature$, the transition function, where:
    \begin{itemize}
        \item $\mathsf{T}^\agent \colon \mathsf{S}^\agent \times \mathsf{A}^\agent \pto \mathsf{S}^\nature$ the agent's transition function.
        \item $\mathsf{T}^\nature \colon \mathsf{S}^\nature \times \mathsf{A}^\nature \pto \mathsf{S}^\agent$ nature's transition function.
    \end{itemize}
    \item $\mathsf{R}\colon \mathsf{S}^\agent \times \mathsf{A}^\agent \to \RR$ the reward function.
\end{itemize}
Where:
\begin{itemize}
    \item $\mathsf{R}(\tup{\ocs{0:t}, \theta_{0:t}},\pi_{t+1}) = \sum_{s\in S}\sum_{a\in A} R(s,a) \sum_{h_t\in \rel(\theta_{0:t-1})_t}\pi_t(h^\agent_t, a)b(s,h_t)\ocs{0:t-1}(h_t)$.
    \item $\mathsf{T^\agent}(\tup{\ocs{0:t}, \theta_{0:t}},\pi_{t+1}) = \tup{\tup{\ocs{0:t}, \theta_{0:t}},\pi_{t+1}}$.
    \item $\mathsf{T^\nature}(\tup{\tup{\ocs{0:t}, \theta_{0:t}},\pi_{t+1}},\theta_{t+1}) = \tup{\ocs{0:t+1}, \theta_{0:t+1}}$, where:\\
    \begin{itemize}
        \item $\theta_{0:t+1} = \theta_{0:t} \concat \theta_{t+1}$.
        \item $\forall h_{t+1}\in \rel(\theta_{0:t})_{t+1}, \forall a \in \mathcal{A}^\agent, \forall u \in \mathcal{A}^\nature, \forall z^\agent_\priv, z^\nature_\priv, z_\publ \in \Zagent \times \Znature \times \Zpub, \ocs{0:t+1}(\tup{h_{t+1}, a, u, z^\agent_\priv, z^\nature_\priv, z_\publ}) =$\\
    \[ \sum_{s, s'\in S} O^\agent_\priv(s',z_\priv^\agent) O^\nature_\priv(s',z_\priv^\nature) O_\publ(s',z_\publ) \bm{T}(u_t)(s,a_t)(s') \theta_t(h^\nature_t,a_t)(u_t)\pi_t(h^\agent_t)(a_t)b(s, h_t)\ocs{0:t-1}(h_t).\]
    \end{itemize}
\end{itemize}
\end{definition}
Where $b(s, h_t)$ is the belief computed by $t$ belief updates given the joint \aoh{} $h_t$ (see \Cref{app:appendix_prelim}).
For deriving the reward and transition functions, see \Cref{app:suff.stat}.

\subsection{Mixed policies}\label{app:mixed_policies}
As shown in \Cref{app:suff.stat}, the occupancy state is a sufficient statistic for the POSG and, hence, for the RPOMDP.
In \cref{app:convex_semi-infinite_game}, we show that the occupancy game has a Nash equilibrium for finite horizon reward maximization, and an optimal policy for the agent exists.
To prove that this Nash equilibrium exists, we need to reason with mixed policies instead of stochastic policies.
In this section, we prove that reasoning with the set of mixed agent and nature policies results in the same set of distributions over paths, and hence the same possible values, as reasoning with the set of stochastic policies.

Concretely, we need to show that for every stochastic policy, there exists a mixed policy that behaves equivalently, meaning it results in the same distribution over paths.
We focus on the RPOMDP policies.
The same results follow for the POSG policies using the bijections from \Cref{app:value:function:proofs}.
\begin{restatable}[Existence of equivalent mixed policy]{theorem}{existsMixedPolicy}\label{app:thm_exists_mixed_policy}
    Let $\mu^{\pi,\theta} \in \dist{\Paths^{M}}$ be the probability distribution over paths in the RPOMDP resulting from executing agent policy $\pi$ and nature policy $\theta$. 
    Then:
    \[
        \forall \pi \in \Pi, \exists \pi^\mix \in \Pi^\mix, \forall \theta\in \Theta \cup \Theta^\mix.\, \mu^{\pi,\theta} = \mu^{\pi^\mix,\theta},
    \]\[
        \forall \theta \in \Theta, \exists \theta^\mix \in \Theta^\mix, \forall \pi\in \Pi \cup \Pi^\mix.\, \mu^{\pi,\theta} = \mu^{\pi,\theta^\mix}.
    \]
\end{restatable}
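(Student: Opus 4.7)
The plan is to prove this equivalence via a product-measure construction, exploiting the perfect-recall structure of history-based policies (essentially, a finite-horizon instance of Kuhn's theorem).

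For the agent direction, fix $\pi \in \Pi$. Working in finite horizon $K$, the agent history space $H^{\agent,M}$ of length at most $K$ is finite since states, actions, and observations are finite, and hence so is $\Pi^\detr = A^{H^{\agent,M}}$. I would define
\[
\pi^\mix(\pi^\detr) = \prod_{h \in H^{\agent,M}} \pi(h)(\pi^\detr(h)),
\]
which is automatically a finite-support distribution in $\dist{\Pi^\detr}$; that it sums to $1$ follows from $\sum_{a \in A} \pi(h)(a) = 1$ at every $h$.

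To show $\mu^{\pi,\theta} = \mu^{\pi^\mix,\theta}$ for every $\theta \in \Theta$, I would induct on path length. The key lemma is that the marginal distribution over the agent action chosen at any fixed history $h$ under $\pi^\mix$ equals $\pi(h)$; this is the defining property of a product measure, arising from the independence of factors across histories. Substituting this into the recursive decomposition of path probabilities from \Cref{app:suff.stat} yields exactly the same path probabilities as under $\pi$.

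For nature the construction is analogous, but $\bm{U}$ may be infinite, so one more step is needed. Because $\theta$ uses only finite distributions and only finitely many nature histories are reachable in horizon $K$, the set of variable assignments ever given positive probability by $\theta$ is a finite set $\bm{U}_\theta \subseteq \bm{U}$. The product construction then lives on the finite space $\bm{U}_\theta^{H^{\nature,M} \times A}$, yielding a finite-support $\theta^\mix \in \Theta^\mix$. Validity of each deterministic policy in the support is inherited: the stickiness constraint $\theta^\detr(h^\nature, a) \in \bm{U}^\agrees(\fixed(h^\nature))$ is per-history, and the product draws from $\mathrm{supp}(\theta(h^\nature, a)) \subseteq \bm{U}^\agrees(\fixed(h^\nature))$.

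The extension from stochastic to mixed opposing policies is then immediate by linearity of $\mu$ in each player's policy. The main obstacle I anticipate is the nature case: I must check that the per-history validity constraint accommodates the independent product construction without introducing cross-history incompatibilities. This works because $\fixed(h^\nature)$ already encodes every restriction generated by the history itself, so choices drawn independently at distinct histories cannot conflict with one another, and the resulting deterministic policies are automatically valid nature policies.
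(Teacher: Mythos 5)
Your agent-side argument is fine and matches the paper's route (a Kuhn-style product construction, with the same marginal/induction argument the paper carries out via its helper function $\eta$). The gap is in the nature case, and it is exactly the point the paper's proof is designed to handle. A deterministic nature policy is a function on \emph{all} of $H^{\nature,M} \times A$, and $H^{\nature,M}$ is infinite even for horizon $K$, because nature histories record the chosen assignments $u \in \bm{U}$ and $\bm{U}$ is infinite. So your claim that the product construction ``lives on the finite space $\bm{U}_\theta^{H^{\nature,M}\times A}$'' is false as stated: the exponent is infinite, the product over all histories is an infinite product (typically assigning probability $0$ to every single deterministic policy), and the resulting object is not a finite-support element of $\Theta^\mix = \dist{\Theta^\detr}$. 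What your finiteness observation actually gives you is a finite-support distribution over \emph{partial} choice functions defined on the finitely many histories reachable under $\theta$; that is not yet a distribution over $\Theta^\detr$, and you do not say how to pass from one to the other.

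The paper closes precisely this hole: it introduces the equivalence relation $\hsim$ identifying deterministic policies that agree on the relevant histories $\rel^\nature(\theta)$, fixes one (valid) representative per class, defines $g(\theta)$ by the product of $\theta$'s probabilities \emph{only over $h^\nature \in \rel^\nature(\theta)$ and $a \in A$} on these representatives, and sets it to zero elsewhere; two auxiliary lemmas (non-relevant deterministic policies contribute probability zero to a path, and all relevant ones contribute the same probability) then make your marginal argument go through in the induction. So to repair your proof you would need this representative-set device (or an explicit canonical extension of each partial choice function to a valid full policy), after which your outline coincides with the paper's proof. Your remarks on validity are fine once this is in place, since the per-history constraint $\mathrm{supp}\,\theta(h^\nature,a) \subseteq \bm{U}^\agrees(\fixed(h^\nature))$ indeed transfers to the representatives.
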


Before proving the theorem above, we consider the other key results that follow.
We immediately get the following corollary from \Cref{app:thm_exists_mixed_policy}.
\begin{corollary}[stochastic policies $\subseteq$ mixed policies]\label{app:cor_subset_mixed_policies}
    Let $\mu^{\pi,\theta} \in \dist{\Paths^{M}}$ be the probability distribution over paths in the RPOMDP resulting from executing agent policy $\pi$ and nature policy $\theta$.
    Then we have the following:
    \[
        \{\mu^{\pi,\theta}\mid \pi\in \Pi,\theta \in \Theta\} \subseteq \{\mu^{\pi^\mix,\theta^\mix}\mid \pi^\mix\in \Pi^\mix,\theta^\mix \in \Theta^\mix\}.
    \]
\end{corollary}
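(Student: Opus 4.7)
The plan is to derive the corollary as a direct two-step application of \Cref{app:thm_exists_mixed_policy}. The statement quantifies over all pairs $(\pi,\theta) \in \Pi \times \Theta$ and asks for an equivalent mixed pair. Since the theorem already gives a one-sided replacement for the agent and a one-sided replacement for nature, I would simply chain these replacements: first swap $\pi$ for an equivalent mixed agent policy (keeping the stochastic nature policy fixed), then swap that stochastic nature policy for an equivalent mixed nature policy (keeping the mixed agent policy fixed).

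Concretely, I would proceed as follows. Fix an arbitrary $\mu^{\pi,\theta}$ with $\pi \in \Pi$ and $\theta \in \Theta$; the goal is to exhibit $\pi^\mix \in \Pi^\mix$ and $\theta^\mix \in \Theta^\mix$ with $\mu^{\pi,\theta} = \mu^{\pi^\mix,\theta^\mix}$. By the first clause of \Cref{app:thm_exists_mixed_policy}, there exists $\pi^\mix \in \Pi^\mix$ such that $\mu^{\pi,\theta'} = \mu^{\pi^\mix,\theta'}$ for every $\theta' \in \Theta \cup \Theta^\mix$; instantiating $\theta' = \theta$ yields $\mu^{\pi,\theta} = \mu^{\pi^\mix,\theta}$. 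Next, by the second clause of the theorem, there exists $\theta^\mix \in \Theta^\mix$ such that $\mu^{\pi',\theta} = \mu^{\pi',\theta^\mix}$ for every $\pi' \in \Pi \cup \Pi^\mix$; instantiating $\pi' = \pi^\mix$ yields $\mu^{\pi^\mix,\theta} = \mu^{\pi^\mix,\theta^\mix}$. Chaining these two equalities gives $\mu^{\pi,\theta} = \mu^{\pi^\mix,\theta^\mix}$, which places $\mu^{\pi,\theta}$ in the right-hand set.

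There is essentially no obstacle here, provided \Cref{app:thm_exists_mixed_policy} is stated with the quantifier ``$\forall \theta \in \Theta \cup \Theta^\mix$'' (respectively ``$\forall \pi \in \Pi \cup \Pi^\mix$''), which it is. The only subtlety worth a brief remark in the write-up is that the second substitution must be applied against the already-mixed $\pi^\mix$, which is exactly why the theorem's inner quantifier ranges over $\Pi \cup \Pi^\mix$ and not just $\Pi$; without that strengthening, the chaining step would not go through and one would need an independent argument. Because the corollary only claims set inclusion and not equality, no converse direction is needed.
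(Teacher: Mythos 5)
Your proposal is correct and matches the paper's intent: the paper derives this corollary as an immediate consequence of \Cref{app:thm_exists_mixed_policy}, and your chaining argument (replace $\pi$ first, then $\theta$, using the fact that the theorem's inner quantifiers range over $\Theta \cup \Theta^\mix$ and $\Pi \cup \Pi^\mix$) is exactly the reasoning that makes the "immediately" valid. No gaps.
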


We also need to show that for every mixed policy, there exists a stochastic policy that behaves equivalently.
This means that there are no new behaviors, and consequently no new values, introduced by looking at the set of mixed policies.
\begin{restatable}[Existence of equivalent stochastic policy]{theorem}{existsstochasticPolicy}\label{app:thm_exists_stochastic_policy}
    Let $\mu^{\pi,\theta} \in \dist{\Paths^{M}}$ be the probability distribution over paths in the RPOMDP resulting from executing agent policy $\pi$ and nature policy $\theta$. Then:
    \[
        \forall \pi^\mix \in \Pi^\mix, \exists \pi \in \Pi, \forall \theta\in \Theta \cup \Theta^\mix. \mu^{\pi^\mix,\theta} = \mu^{\pi,\theta},
    \]\[
        \forall \theta^\mix \in \Theta^\mix, \exists \theta \in \Theta, \forall \pi\in \Pi \cup \Pi^\mix.\, \mu^{\pi,\theta^\mix} = \mu^{\pi,\theta}.
    \]
\end{restatable}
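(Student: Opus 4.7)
The plan is to construct, from any mixed policy, an equivalent stochastic policy via a Kuhn-style realization-equivalence argument, exploiting the perfect recall each player has of their own moves. For the agent, given $\pi^\mix \in \Pi^\mix$, I would define $\pi \in \Pi$ as follows. For an agent history $h^\agent \in H^{\agent,M}$ of the form $\tup{z^\agent_{\priv,0},z_{\publ,0},a_0,\dots,z^\agent_{\priv,n},z_{\publ,n}}$, let $C(h^\agent) \subseteq \Pi^\detr$ denote the set of deterministic agent policies \emph{consistent} with $h^\agent$, i.e.\ those $\pi^\detr$ for which $\pi^\detr(h^\agent_{0:k}) = a_k$ at every action position of $h^\agent$. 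Whenever $\sum_{\pi^\detr \in C(h^\agent)} \pi^\mix(\pi^\detr) > 0$, set
\[
\pi(h^\agent)(a) \;=\; \frac{\sum_{\pi^\detr \in C(h^\agent\cdot a)} \pi^\mix(\pi^\detr)}{\sum_{\pi^\detr \in C(h^\agent)} \pi^\mix(\pi^\detr)},
\]
and otherwise assign $\pi(h^\agent)$ arbitrarily (such histories carry zero probability). For nature, given $\theta^\mix \in \Theta^\mix$, use the analogous construction on pairs $(h^\nature,a)$, with consistency defined with respect to nature's recorded variable assignments and observed agent actions. Because $\theta^\mix$ has finite support, the induced $\theta(h^\nature,a)$ is a finite distribution over $\bm{U}$, so $\theta \in \Theta$.

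First I would check well-definedness: $\sum_{a\in A} \pi(h^\agent)(a) = 1$ because every $\pi^\detr \in C(h^\agent)$ selects exactly one action on $h^\agent$, so $\{C(h^\agent\cdot a)\}_{a\in A}$ partitions $C(h^\agent)$. For nature, I would argue that $\theta$ is valid in the sense of \Cref{subsec:stickiness}: each $\theta^\detr$ in the support of $\theta^\mix$ is valid, so it only assigns variable assignments in $\bm{U}^\agrees(\fixed(h^\nature))$ at $h^\nature$; hence the numerator of $\theta(h^\nature,a)(u)$ is zero whenever $u \notin \bm{U}^\agrees(\fixed(h^\nature))$. Next I would prove $\mu^{\pi^\mix,\theta} = \mu^{\pi,\theta}$ by induction on path length. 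The base case is the initial-state distribution, on which both agree. For the inductive step on $\mypath = \mypath_{0:k-1}\concat\tup{a_{k-1},u_{k-1},s_k}$, write
\[
\Pr(\mypath \mid \pi^\mix,\theta) \;=\; \sum_{\pi^\detr \in \Pi^\detr} \pi^\mix(\pi^\detr)\,\Pr(\mypath \mid \pi^\detr,\theta),
\]
and observe that a deterministic $\pi^\detr$ contributes to $\Pr(\mypath \mid \pi^\detr,\theta)$ iff $\pi^\detr \in C(h^\agent_{0:k-1}\cdot a_{k-1})$, where $h^\agent_{0:k-1}$ is the agent-observable image of $\mypath_{0:k-1}$. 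Factoring out the environment/nature contribution (which does not depend on which $\pi^\detr$ is chosen, only on the action sequence) and telescoping the ratios in the definition of $\pi$ along the $k$ action positions of $\mypath$, the sum collapses to $\Pr(\mypath \mid \pi,\theta)$. The symmetric argument produces the stochastic $\theta$ from $\theta^\mix$, and the mixed-versus-mixed cases follow by composing the two constructions and invoking \Cref{app:thm_exists_mixed_policy}.

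The main obstacle will be the bookkeeping that keeps the factorization honest: the agent conditions only on $h^\agent$ and nature only on $h^\nature$, yet the joint path probability weaves both histories together with the uncertain transition. The standard Kuhn argument relies crucially on perfect recall of one's own moves, which in our setting means the agent remembers its own actions and nature remembers both its variable assignments and the agent actions it has seen; this is exactly what makes the sets $C(\cdot)$ a product structure along each player's history and lets the telescoping go through. A second subtlety is that the constructed $\theta$ must produce the same \emph{distribution over paths}, not merely the same marginal action probabilities—here the inductive hypothesis, combined with the observation that $\fixed(h^\nature)$ depends only on nature's observable past, ensures that validity constraints are respected uniformly across all $\pi^\detr$ in the support, so no probability mass is lost or added when passing from the mixed to the stochastic representation.
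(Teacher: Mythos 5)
Your proposal is correct and follows essentially the same route as the paper: a Kuhn-style construction where the stochastic policy's choice probability at a history is the mixed policy's mass on deterministic policies consistent with (relevant for) that history and choosing that action, normalized by the total consistent mass, followed by a well-definedness/validity check and an induction on path length that uses exactly the two facts the paper isolates as lemmas (inconsistent deterministic policies contribute zero, and consistent ones share the same environment factor). The only cosmetic difference is that you spell out the agent side and declare nature analogous, while the paper spells out nature's side (the infinite-support case, where it notes Kuhn's theorem cannot be applied directly) and declares the agent analogous.
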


\Cref{app:thm_exists_stochastic_policy} comes with the following corollary.
\begin{corollary}[stochastic policies $\supseteq$ mixed policies]\label{app:cor_subset_stochastic_policies}
    Let $\mu^{\pi,\theta} \in \dist{\Paths^{M}}$ be the probability distribution over paths in the RPOMDP resulting from executing agent policy $\pi$ and nature policy $\theta$.
    Then we have the following:
    \[
        \{\mu^{\pi,\theta}\mid \pi\in \Pi,\theta \in \Theta\} \supseteq \{\mu^{\pi^\mix,\theta^\mix}\mid \pi^\mix\in \Pi^\mix,\theta^\mix \in \Theta^\mix\}.
    \]
\end{corollary}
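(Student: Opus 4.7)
The plan is to obtain the corollary as an immediate two-step application of \Cref{app:thm_exists_stochastic_policy}. Given an arbitrary element $\mu^{\pi^\mix,\theta^\mix}$ of the right-hand set with $\pi^\mix \in \Pi^\mix$ and $\theta^\mix \in \Theta^\mix$, the goal is to exhibit a pair $(\pi,\theta) \in \Pi \times \Theta$ such that $\mu^{\pi,\theta} = \mu^{\pi^\mix,\theta^\mix}$, which places the element in the left-hand set.

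First I would invoke the first clause of \Cref{app:thm_exists_stochastic_policy} on $\pi^\mix$ to secure some $\pi \in \Pi$ satisfying $\mu^{\pi^\mix,\theta'} = \mu^{\pi,\theta'}$ for every $\theta' \in \Theta \cup \Theta^\mix$. Instantiating with $\theta' = \theta^\mix$ yields $\mu^{\pi^\mix,\theta^\mix} = \mu^{\pi,\theta^\mix}$. Then I would apply the second clause of the theorem to $\theta^\mix$, producing $\theta \in \Theta$ with $\mu^{\pi',\theta^\mix} = \mu^{\pi',\theta}$ for every $\pi' \in \Pi \cup \Pi^\mix$. Instantiating this with $\pi' = \pi$, the stochastic agent policy built in the previous step, gives $\mu^{\pi,\theta^\mix} = \mu^{\pi,\theta}$. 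Chaining the two equalities yields $\mu^{\pi^\mix,\theta^\mix} = \mu^{\pi,\theta}$ with both policies purely stochastic, which is the required membership.

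Since the argument reduces to a single direct invocation of an already-established theorem, there is no substantive obstacle. The only conceptual point worth flagging is that the universal quantifier in \Cref{app:thm_exists_stochastic_policy} must range over both stochastic and mixed opponent policies; it is precisely this strengthened form that permits composing the two substitutions, since after the first step the nature argument is still mixed (so the second clause must apply against a stochastic $\pi \in \Pi$, which the $\Pi \cup \Pi^\mix$ quantifier guarantees).
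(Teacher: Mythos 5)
Your proposal is correct and matches the paper, which states this corollary as an immediate consequence of \Cref{app:thm_exists_stochastic_policy}; you simply spell out the two substitutions (first replacing $\pi^\mix$ by a stochastic $\pi$ against $\theta^\mix$, then replacing $\theta^\mix$ by a stochastic $\theta$ against $\pi$) that the paper leaves implicit. Your remark that the quantifier over $\Pi \cup \Pi^\mix$ and $\Theta \cup \Theta^\mix$ is exactly what makes the chaining legitimate is the right observation and is precisely why the theorem was stated in that strengthened form.
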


By combining \Cref{app:cor_subset_mixed_policies,app:cor_subset_stochastic_policies}, it follows that the sets of mixed policies give exactly the same sets of distributions over paths in our original RPOMDP as the sets of stochastic policies do.
\begin{corollary}[Equivalent set of mixed policies]\label{app:cor_equiv_policies}
    Let $\mu^{\pi,\theta} \in \dist{\Paths^{M}}$ be the probability distribution over paths in the RPOMDP resulting from executing agent policy $\pi$ and nature policy $\theta$.
    Then we have the following:
    \[
        \{\mu^{\pi,\theta}\mid \pi\in \Pi,\theta \in \Theta\} = \{\mu^{\pi^\mix,\theta^\mix}\mid \pi^\mix\in \Pi^\mix,\theta^\mix \in \Theta^\mix\}.
    \]
    It follows that the sets of possible values, \ie, the subset of $\mathbb{R}$ the value function can attain under all stochastic policies and all mixed policies, is the same:
    \[
        \{V^{\pi,\theta}\mid \pi \in \Pi, \theta \in \Theta\} = \{V^{\pi^\mix,\theta^\mix}\mid \pi^\mix \in \Pi^\mix, \theta^\mix \in \Theta^\mix\}.
    \]
\end{corollary}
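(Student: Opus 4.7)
The proof is essentially a bookkeeping combination of the two preceding corollaries, so my plan is to spell out the two implications carefully and then translate the distributional equality into an equality of value sets.

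First, I would establish the set equality for induced path distributions by direct application of \Cref{app:cor_subset_mixed_policies} and \Cref{app:cor_subset_stochastic_policies}. The former gives the inclusion $\{\mu^{\pi,\theta}\} \subseteq \{\mu^{\pi^\mix,\theta^\mix}\}$, which I would obtain by taking an arbitrary stochastic pair $(\pi,\theta)$, invoking \Cref{app:thm_exists_mixed_policy} twice (once to replace $\pi$ by an equivalent mixed $\pi^\mix$ with $\mu^{\pi,\theta} = \mu^{\pi^\mix,\theta}$, and once more to replace $\theta$ by an equivalent $\theta^\mix$ with $\mu^{\pi^\mix,\theta} = \mu^{\pi^\mix,\theta^\mix}$), and then transitivity of equality. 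The reverse inclusion $\supseteq$ follows in precisely the same way from \Cref{app:thm_exists_stochastic_policy}. Combining the two inclusions yields the first claimed equality of sets of path distributions.

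Second, I would lift this to the sets of values. The key observation is that for either objective $\phi\in\{\text{fh},\text{dih}\}$, the value $V_\phi^{\pi,\theta}$ is an expectation of a path-measurable reward functional $R_\phi\colon \Paths^M \to \RR$ (namely $R_\text{fh}(\tau) = \sum_{t=0}^{K-1} r_t(\tau)$ and $R_\text{dih}(\tau) = \sum_{t=0}^{\infty} \gamma^t r_t(\tau)$), so
\[
V_\phi^{\pi,\theta} \;=\; \int_{\Paths^M} R_\phi(\tau)\, \mathrm{d}\mu^{\pi,\theta}(\tau),
\]
and analogously for mixed policies via the definition $V_\phi^{\pi^\mix,\theta^\mix} = \sum_{\pi^\detr,\theta^\detr} \pi^\mix(\pi^\detr)\theta^\mix(\theta^\detr) V_\phi^{\pi^\detr,\theta^\detr}$, which by linearity can be rewritten as an integral against $\mu^{\pi^\mix,\theta^\mix}$. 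Hence the value $V_\phi^{\pi,\theta}$ is a function solely of the induced distribution $\mu^{\pi,\theta}$, and the set equality for distributions transports directly to the set equality for values.

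I do not expect a real obstacle: once the two subset corollaries are in hand the argument is purely routine. The only point requiring a little care is confirming that the value function factors through the path distribution rather than depending on the policy more directly; this is immediate once one writes the value as an expectation of the cumulative reward along a sampled path and notes that the reward $R(s,a)$ depends only on the path and not on the mechanism (stochastic vs.\ mixed) that generated it. With that observation stated once, the final equality of value sets is a one-line consequence of the equality of path-distribution sets.
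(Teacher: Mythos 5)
Your proposal is correct and follows essentially the same route as the paper: the paper also obtains the distribution-set equality by combining \Cref{app:cor_subset_mixed_policies} and \Cref{app:cor_subset_stochastic_policies} (which rest on \Cref{app:thm_exists_mixed_policy} and \Cref{app:thm_exists_stochastic_policy}), and then treats the value-set equality as an immediate consequence since the value only depends on the induced distribution over paths. Your explicit remark that the value factors through $\mu^{\pi,\theta}$ merely spells out what the paper leaves implicit.
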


Below, we first define some definitions and lemmas and then prove \Cref{app:thm_exists_mixed_policy,app:thm_exists_stochastic_policy}.
We focus on the nature policies, as these require dealing with an infinite action space and, therefore, with an infinite set of deterministic policies.
The proofs for the agent policies follow the same steps.

\subsection*{Additional definitions and lemmas}
We begin by defining the set of relevant histories given a policy.
The actions chosen for histories outside the relevant history set do not influence the results of a game since the policy never reaches them.

Using the relevant histories, we can define the set of relevant deterministic policies given a history.
\begin{definition}[Relevant deterministic policies]
    Given a history $h^\nature\in H^\nature$, we define the set of relevant deterministic policies $\Theta^{\detr,h}$, containing all deterministic policies that could have generated the current history.
    \[\Theta^{\detr, h^\nature} = \{\theta^\detr \in \Theta^\detr\mid h^\nature \in \rel^\nature(\theta^\detr)\}.\]
\end{definition}

We define a helper function $\eta^{\pi,\theta}\colon \dist{\Paths^M}$ to compute the probability over paths for all stochastic and deterministic policies $\pi \in \Pi$ and $\theta \in \Theta$:
\begin{align*}
    \eta^{\pi,\theta}(\tup{s_I}) &= 1,\\
    \eta^{\pi,\theta}(\mypath'\concat \tup{s,a,u,s'}) &= \eta^{\pi,\theta}(\mypath'\concat\tup{s}) \cdot \pi(O^{\agent,M}(\mypath'\concat\tup{s}))(a) \cdot \theta(O^{\nature,M}(\mypath'\concat\tup{s}),a)(u) \cdot \bm{T}(u)(s,a,s').
\end{align*}

We now prove the following lemmas about the probability distribution over paths for deterministic policies.

Given a path, if a deterministic policy is not relevant for generating the history of that path, then the probability of reaching that path with the deterministic policy is zero.
\begin{lemma}[Zero probability of non-relevant paths]\label{app:lem_eta_zero}
    Given a path $\mypath$ and policies $\pi\in \Pi$, $\theta^\detr\in \Theta^\detr$, we have that:
    \[
    \theta^\detr \notin \Theta^{\detr, O^{\nature, M}(\mypath)} \implies \eta^{\pi,\theta^\detr}(\mypath) = 0.
    \]
\end{lemma}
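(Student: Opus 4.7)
The plan is to prove the contrapositive by induction on the length of the path $\mypath$, exploiting the fact that a deterministic nature policy $\theta^\detr$ assigns probability $1$ to exactly one variable assignment at each (history, action) pair, so any deviation from $\theta^\detr$'s prescribed choice immediately kills the $\eta$-product via a factor of $0$.

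For the base case, take $\mypath = \tup{s_I}$. By the definition of $\rel^\nature$ (lifted from $\rel$ to nature histories exactly as in \Cref{app:relevant_histories}), the one-element history $O^{\nature,M}(\tup{s_I}) = \tup{O^\nature_\priv(s_I),O_\publ(s_I)}$ belongs to $\rel^\nature(\theta^\detr)$ for every $\theta^\detr$, so the hypothesis $\theta^\detr \notin \Theta^{\detr, O^{\nature,M}(\mypath)}$ is false and the implication holds vacuously.

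For the inductive step, write $\mypath = \mypath'\concat\tup{s,a,u,s'}$ and assume the claim for $\mypath'\concat\tup{s}$. Set $h^\nature = O^{\nature,M}(\mypath'\concat\tup{s})$ and observe that $O^{\nature,M}(\mypath) = h^\nature \concat \tup{a,u,O^\nature_\priv(s'),O_\publ(s')}$. Suppose $O^{\nature,M}(\mypath) \notin \rel^\nature(\theta^\detr)$. By the recursive definition of $\rel^\nature(\theta^\detr)$, this forces either (i) $h^\nature \notin \rel^\nature(\theta^\detr)$, or (ii) $h^\nature \in \rel^\nature(\theta^\detr)$ but $\theta^\detr(h^\nature,a) \neq u$. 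In case~(i) the induction hypothesis gives $\eta^{\pi,\theta^\detr}(\mypath'\concat\tup{s}) = 0$, and the recursive definition of $\eta$ yields $\eta^{\pi,\theta^\detr}(\mypath) = 0$. In case~(ii), determinism of $\theta^\detr$ implies $\theta^\detr(h^\nature,a)(u) = 0$, which again makes the product defining $\eta^{\pi,\theta^\detr}(\mypath)$ equal to $0$. This finishes the induction.

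The only delicate point is lining up the observation mapping $O^{\nature,M}$ with the recursive definition of $\rel^\nature$: one must verify that appending the transition $\tup{s,a,u,s'}$ to a path corresponds, at the history level, to appending precisely the block $\tup{a,u,O^\nature_\priv(s'),O_\publ(s')}$ that appears in the successor clause of $\rel^\nature$. This is immediate from the definitions in \Cref{app:def:paths_to_nature_histories_RPOMDP} but is the only step that requires more than a line to check.
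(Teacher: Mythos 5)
Your proof is correct and uses essentially the same idea as the paper: non-relevance of the nature history forces some prefix at which $\theta^\detr$'s (Dirac) choice differs from the $u$ on the path, so the corresponding factor $\theta^\detr(h^\nature,a)(u)=0$ annihilates the $\eta$-product. The paper phrases this by unfolding $\rel^\nature$ to exhibit the violating prefix directly, while you package the same unfolding as an induction on path length with a case split at the last block; the content is identical.
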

\begin{proof}
Take arbitrary path $\mypath$ and policies $\pi\in \Pi$, $\theta^\detr\in \Theta^\detr$.
\[
\theta^\detr \notin \Theta^{\detr, O^{\nature,M}(\mypath)} \iff O^{\nature,M}(\mypath) \notin \rel^\nature(\theta^\detr).
\]
By definition, we know that $O^{\nature,M}(\tup{s_I}) \in \rel^\nature(\theta^\detr)$.
Furthermore, we know that:
\[
O^{\nature,M}(\mypath'\concat\tup{a',u',s}) \notin \rel^\nature(\theta^\detr) \iff \theta^\detr(O^{\nature,M}(\mypath'),a') \neq u' \lor O^{\nature,M}(\mypath') \notin \rel^\nature(\theta^\detr).
\]
Since $O^{\nature,M}(\tup{s_I}) \in \rel^\nature(\theta^\detr)$, we will eventually reach a prefix of $\mypath$, for which the condition is violated. So then we get:
\begin{align*}
    O^{\nature,M}(\mypath) \notin \rel^\nature(\theta^\detr) &\iff \exists \mypath''.\, \mypath'' \concat \tup{a'',u'',s',\dots} = \mypath \land \theta^\detr(O^{\nature,M}(\mypath''),a'') \neq u''\\
    &\iff \exists \mypath''.\, \mypath'' \concat \tup{a'',u'',s',\dots} = \mypath \land \theta^\detr(O^{\nature,M}(\mypath''),a'')(u'') = 0\\
    &\:\implies \exists \mypath''.\, \mypath'' \concat \tup{a'',u'',s',\dots} = \mypath \land \eta^{\pi,\theta^\detr}(\mypath''\concat \tup{a'',u'',s'}) = 0\\
    &\:\implies \eta^{\pi,\theta^\detr}(\mypath) = 0.
\end{align*}
Hence:
\[
    \theta^\detr \notin \Theta^{\detr, O^{\nature,M}(\mypath)} \implies \eta^{\pi,\theta^\detr}(\mypath) = 0.
\]
\end{proof}

The next lemma states that, given a path, if two deterministic policies are both relevant for generating the history of that path, then the probability of reaching that path is the same for both policies.
\begin{lemma}[Constant probability of paths for relevant deterministic policies]\label{app:lem_eta_const}
    Given a path $\mypath$ and policy $\pi\in \Pi$, we have that:
    \[
    \forall \theta^\detr, \theta^{\detr'}\in \Theta^{\detr, O^{\nature,M}(\mypath)}.\, \eta^{\pi,\theta^\detr}(\mypath) = \eta^{\pi,\theta^{\detr'}}(\mypath).
    \]
\end{lemma}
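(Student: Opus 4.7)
The plan is to unfold the recursive definition of $\eta^{\pi,\theta^\detr}$ along the path $\mypath$ and argue that, factor-by-factor, the two deterministic nature policies must give identical contributions whenever both are relevant for the history $O^{\nature,M}(\mypath)$. The only factors in $\eta^{\pi,\theta^\detr}$ that depend on the nature policy are of the form $\theta^\detr(O^{\nature,M}(\mypath_{0:k}), a_k)(u_k)$; all the remaining factors (initial state, agent policy, transition function) are independent of $\theta^\detr$ and therefore cancel between the two expressions.

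I would proceed by induction on the length $n$ of $\mypath = \tup{s_0,a_0,u_0,s_1,\dots,s_n}$. The base case $n = 0$ is immediate because $\eta^{\pi,\theta^\detr}(\tup{s_I}) = 1 = \eta^{\pi,\theta^{\detr'}}(\tup{s_I})$ by definition. For the inductive step, assume the claim holds for the prefix $\mypath_{0:n-1}$ and write
\[
\eta^{\pi,\theta^\detr}(\mypath) = \eta^{\pi,\theta^\detr}(\mypath_{0:n-1}) \cdot \pi(O^{\agent,M}(\mypath_{0:n-1}))(a_{n-1}) \cdot \theta^\detr(O^{\nature,M}(\mypath_{0:n-1}), a_{n-1})(u_{n-1}) \cdot \bm{T}(u_{n-1})(s_{n-1},a_{n-1},s_n).
\]
Since $\theta^\detr, \theta^{\detr'} \in \Theta^{\detr, O^{\nature,M}(\mypath)}$, we have $O^{\nature,M}(\mypath) \in \rel^\nature(\theta^\detr) \cap \rel^\nature(\theta^{\detr'})$, and by downward closure of $\rel^\nature$ under prefixes, the same holds for every prefix $\mypath_{0:k}$. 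Hence the induction hypothesis applies at step $n-1$, giving equality of the prefix factors.

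The crucial remaining step, and the only place where determinism is used, is to show that both policies must assign $u_{n-1}$ at the relevant nature history: by the characterisation of $\rel^\nature$ used in the proof of Lemma \ref{app:lem_eta_zero}, membership of $O^{\nature,M}(\mypath)$ in $\rel^\nature(\theta^\detr)$ forces $\theta^\detr(O^{\nature,M}(\mypath_{0:n-1}), a_{n-1}) = u_{n-1}$, so that $\theta^\detr(O^{\nature,M}(\mypath_{0:n-1}), a_{n-1})(u_{n-1}) = 1$; the same holds for $\theta^{\detr'}$. The two nature factors in the unfolded products therefore coincide, and since all other factors do not depend on the nature policy, we conclude $\eta^{\pi,\theta^\detr}(\mypath) = \eta^{\pi,\theta^{\detr'}}(\mypath)$.

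I do not expect a serious obstacle here; the only subtlety is making explicit that $\rel^\nature$ is closed under taking prefixes so that the induction hypothesis is available, and that in the deterministic setting relevance pins down exactly one value for the nature action at each history along the path. Both are direct consequences of the definitions of $\rel^\nature$ and of deterministic policies as assigning Dirac distributions.
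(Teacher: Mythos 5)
Your proof is correct and follows essentially the same route as the paper's: induction on the path length, using prefix-closure of $\rel^\nature$ (which the paper obtains by unfolding its recursive definition) to apply the induction hypothesis, and the observation that relevance plus determinism forces $\theta^\detr(O^{\nature,M}(\mypath_{0:n-1}),a_{n-1})(u_{n-1})=1$ for both policies, so the nature-dependent factors agree.
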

\begin{proof}
    Take arbitrary path $\mypath$, policy $\pi\in \Pi$ and $\theta^\detr, \theta^{\detr'}\in \Theta^{\detr, O^{\nature,M}(\mypath)}$.
    We show $\eta^{\pi,\theta^\detr}(\mypath) = \eta^{\pi,\theta^{\detr'}}(\mypath)$ by induction on the length of $\mypath$.

    Assume $|\mypath| = 0$.
    Then $\mypath = \tup{s_I}$.
    Then:
    \[
    \eta^{\pi,\theta^\detr}(\tup{s_I}) = 1 = \eta^{\pi,\theta^{\detr'}}(\tup{s_I}).
    \]
    So for paths $\mypath$ of length $0$, we know that $\eta^{\pi,\theta^\detr}(\mypath) = \eta^{\pi,\theta^{\detr'}}(\mypath)$.

    Now assume we know, given $q\in \NN, q \geq 1$, that:
    \[
    \forall \mypath \in \Paths^M. |\mypath| = q-1 \implies \eta^{\pi,\theta^\detr}(\mypath) = \eta^{\pi,\theta^{\detr'}}(\mypath).
    \]
    Take arbitrary $\mypath \in \Paths^M$ with horizon length $|\mypath| = q$.
    Then we have:
    \[
    \mypath = \mypath_{0:q-1}\concat \tup{a_{q-1},u_{q-1},s_q} = \mypath_{0:q-2}\concat \tup{a_{q-2},u_{q-2},s_{q-1},a_{q-1},u_{q-1},s_q}.
    \]
    Then $\mypath_{0:q-1} \in \Paths^M$ and $|\mypath_{0:q-1}| = q-1$.
    By assumption, we get that:
    \[
    \eta^{\pi,\theta^\detr}(\mypath_{0:q-1}) = \eta^{\pi,\theta^{\detr'}}(\mypath_{0:q-1}).
    \]
    Additionally, we know that:
    \begin{align*}
        \theta^\detr \in \Theta^{\detr, O^{\nature,M}(\mypath)} &\iff O^{\nature,M}(\mypath) \in \rel^\nature(\theta^\detr)\\
        &\iff \theta^\detr(O^{\nature,M}(\mypath_{0:q-1}), a_{q-1}) = u_{q-1} \land O^{\nature,M}(\mypath_{0:q-1}) \in \rel^\nature(\theta^\detr)\\
        &\iff \theta^\detr(O^{\nature,M}(\mypath_{0:q-1}), a_{q-1})(u_{q-1})= 1 \land O^{\nature,M}(\mypath_{0:q-1}) \in \rel^\nature(\theta^\detr).
    \end{align*}
    So we have that:
    \[
    \theta^\detr(O^{\nature,M}(\mypath_{0:q-1}), a_{q-1})(u_{q-1}) = 1 = \theta^{\detr'}(O^{\nature,M}(\mypath_{0:q-1}), a_{q-1})(u_{q-1}).
    \]
    Finally, we get:
    \begin{align*}
        \eta^{\pi,\theta^\detr}(\mypath) &= \eta^{\pi,\theta^\detr}(\mypath_{0:q-1}) \cdot \pi(O^{\agent,M}(\mypath_{0:q-1}))(a_{q-1}) \cdot \theta^\detr(O^{\nature,M}(\mypath_{0:q-1}),a_{q-1})(u_{q-1}) \cdot \bm{T}(u_{q-1})(s_{q-1},a_{q-1},s_q)\\
        &= \eta^{\pi,\theta^{\detr'}}(\mypath_{0:q-1}) \cdot \pi(O^{\agent,M}(\mypath_{0:q-1}))(a_{q-1}) \cdot \theta^{\detr'}(O^{\nature,M}(\mypath_{0:q-1}),a_{q-1})(u_{q-1}) \cdot \bm{T}(u_{q-1})(s_{q-1},a_{q-1},s_q)\\
        &= \eta^{\pi,\theta^{\detr'}}(\mypath).
    \end{align*}
    So, if $\eta^{\pi,\theta^\detr}(\mypath) = \eta^{\pi,\theta^{\detr'}}(\mypath)$ holds for paths of arbitrary length $q-1 \in \NN$, then $\eta^{\pi,\theta^\detr}(\mypath) = \eta^{\pi,\theta^{\detr'}}(\mypath)$ holds for paths of length $q$.
    Hence, by induction, $\eta^{\pi,\theta^\detr}(\mypath) = \eta^{\pi,\theta^{\detr'}}(\mypath)$.

    As $\theta^\detr, \theta^{\detr'}\in \Theta^{\detr, O^{\nature,M}(\mypath)}$ were arbitrarily chosen, we conclude that:
    \[
    \forall \theta^\detr, \theta^{\detr'}\in \Theta^{\detr, O^{\nature,M}(\mypath)}.\, \eta^{\pi,\theta^\detr}(\mypath) = \eta^{\pi,\theta^{\detr'}}(\mypath).
    \]
\end{proof}

Using our helper function $\eta$, we can define the four ways of computing the probability distribution over paths depending on the type of policies involved as follows:
\begin{align*}
    \intertext{(1)~$\pi \in \Pi$ and $\theta \in \Theta$:}
    \mu^{\pi,\theta}(\mypath) &= \eta^{\pi,\theta}(\mypath).
    \intertext{(2)~$\pi \in \Pi$ and $\theta^\mix \in \Theta^\mix$:}
    \mu^{\pi,\theta^\mix}(\mypath) &= \sum_{\theta^\detr\in \Theta^\detr} \theta^\mix(\theta^\detr) \cdot \eta^{\pi,\theta^\detr}(\mypath).
    \intertext{(3)~$\pi^\mix \in \Pi^\mix$ and $\theta \in \Theta$:}
    \mu^{\pi^\mix,\theta}(\mypath) &= \sum_{\pi^\detr\in \Pi^\detr} \pi^\mix(\pi^\detr) \cdot \eta^{\pi^\detr,\theta}(\mypath).
    \intertext{(4)~$\pi^\mix \in \Pi^\mix$ and $\theta^\mix \in \Theta^\mix$:}
    \mu^{\pi^\mix,\theta^\mix}(\mypath) &= \sum_{\pi^\detr\in \Pi^\detr} \pi^\mix(\pi^\detr) \cdot \sum_{\theta^\detr\in \Theta^\detr} \theta^\mix(\theta^\detr) \cdot \eta^{\pi^\detr,\theta^\detr}(\mypath),\\
    &= \sum_{\pi^\detr\in \Pi^\detr} \sum_{\theta^\detr\in \Theta^\detr} \pi^\mix(\pi^\detr) \cdot  \theta^\mix(\theta^\detr) \cdot \eta^{\pi^\detr,\theta^\detr}(\mypath).
\end{align*}
Recall that a deterministic policy can be interpreted as both a stochastic and a mixed policy using only Dirac distributions.
The probabilities over paths generated by deterministic policies can, therefore, be computed using any of the above formulas.

\subsection*{Proof of \Cref{app:thm_exists_mixed_policy}}
The standard way to define a mixed strategy given a stochastic strategy is to simply assign to each deterministic policy the product of the probabilities the stochastic policy assigns to the same choices \cite{kuhn1953extensive}.
The problem in our case, however, is that, due to the infinite number of nature policies, this leads to infinitely many deterministic policies having a non-zero probability in the resulting mixed policy.

To create a mixed policy with a finite number of deterministic policies with a non-zero probability, we define an equivalence class for deterministic policies that assign the same action for all histories that are relevant to the given stochastic policy.
\begin{definition}
    Given a stochastic policy $\theta\in \Theta$, we define the following equivalence relation $\hsim$ between deterministic policies, which we call $\rel^\nature(\theta)$-equivalent:
    \[
        \forall \theta^\detr, \theta^{\detr'} \in \Theta^\detr.\, \theta^\detr \hsim \theta^{\detr'} \iff \forall h^\nature\in \rel^\nature(\theta), \forall a\in A.\, \theta^\detr(h^\nature,a) = \theta^{\detr'}(h^\nature,a).
    \]
    The reflexivity, symmetry, and transitivity of the $\rel^\nature(\theta)$-equivalence relation follow from the reflexivity, symmetry, and transitivity of the equality relation. 
\end{definition}

The $\rel^\nature(\theta)$-equivalence relation provides us with $\rel^\nature(\theta)$-equivalence classes $[\theta^\detr]_\hsim$, which partition the set of deterministic policies.
We select one member of each $\rel^\nature(\theta)$-equivalence class to define a new set $\theta^{\detr,\hsim}$ called the $\rel^\nature(\theta)$-representation set.
Note that this set is not the same as the quotient set $\Theta^\detr/\hsim$, as the quotient set is a set of sets of deterministic policies, whereas the $\rel^\nature(\theta)$-representation set is a set of deterministic policies.
Clearly, $\Theta^\detr/\hsim \subseteq \Theta^\detr$.

Using the $\rel^\nature(\theta)$-representation set, we define a function $g\colon \Theta \to \Theta^\mix$ with which we will construct our equivalent mixed policy.
This function uses a similar construction as in \cite{kuhn1953extensive} for the deterministic policies in the $\rel^\nature(\theta)$-representation set but gives the rest of the deterministic policies a zero probability automatically.
\[
    g(\theta)(\theta^\detr) = \begin{cases}
        \prod_{h^\nature\in \rel^\nature(\theta), a\in A} \theta(h,a)(\theta^\detr(h^\nature,a)) & \text{ if } \theta^\detr \in \Theta^{\detr,\hsim},\\
        0 & \text{ otherwise.}
    \end{cases}
\]
We first show that $g$ correctly maps to a mixed policy (\Cref{app:lem_g_in_mixed}) and then that this resulting policy results in the same distribution over paths in the RPOMDP given any agent policy (\Cref{app:lem_g_equiv}).

\begin{lemma}\label{app:lem_g_in_mixed}
    $g(\theta)$ is a mixed policy:
    \[
        \forall \theta \in \Theta.\, g(\theta) \in \Theta^\mix.
    \]
\end{lemma}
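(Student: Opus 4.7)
The plan is to verify that $g(\theta)$ satisfies the three defining properties of a mixed policy, namely that every value lies in $[0,1]$, only finitely many deterministic policies receive non-zero mass, and the total mass is exactly $1$. Together these are precisely what membership in $\Theta^\mix$ requires, since mixed policies are defined as finite probability distributions over $\Theta^\detr$.

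First I would dispatch the two routine properties. Boundedness is immediate: each factor $\theta(h^\nature, a)(\theta^\detr(h^\nature, a))$ lies in $[0,1]$, so the finite product does too. For finite support, note that under the finite horizon, $\rel^\nature(\theta)$ is finite, since it is generated by finitely many extension steps, each of which branches only over the finite supports of the distributions $\theta(h^\nature, a)$. Hence $\rel^\nature(\theta)\times A$ is finite, and the product in the definition of $g(\theta)(\theta^\detr)$ can be non-zero only when $\theta^\detr$ chooses, at each pair $(h^\nature, a)\in \rel^\nature(\theta)\times A$, a value in the finite support of $\theta(h^\nature, a)$. Since two representatives in $\Theta^{\detr,\hsim}$ must differ in how they act on $\rel^\nature(\theta)\times A$, at most finitely many representatives yield a non-zero product.

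The main step is evaluating the sum. I would identify the non-zero representatives with choice functions $f\colon \rel^\nature(\theta)\times A \to \bm{U}$ subject to $f(h^\nature, a) \in \bm{U}^\agrees(\fixed(h^\nature))$. The crucial observation is that $\fixed(h^\nature)$ is computed purely from the observations and actions recorded in $h^\nature$, so the validity constraint on $f(h^\nature, a)$ depends only on the history coordinate and not on the values $f$ assigns at other pairs. This decoupling lets me distribute the sum over the product:
\[
\sum_f \prod_{(h^\nature, a)} \theta(h^\nature, a)(f(h^\nature, a)) \;=\; \prod_{(h^\nature, a)} \sum_{u \in \bm{U}^\agrees(\fixed(h^\nature))} \theta(h^\nature, a)(u) \;=\; 1,
\]
where the last equality uses that $\theta$ is itself a valid stochastic policy, so $\theta(h^\nature, a)$ places all of its probability mass on $\bm{U}^\agrees(\fixed(h^\nature))$.

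The main obstacle will be making the independence argument airtight: one has to be sure that the stickiness function does not couple the legal choices across different pairs $(h^\nature, a)$ in some subtle way. Because $\fixed(h^\nature)$ is a function of the history alone (rather than of the policy), the allowed set $\bm{U}^\agrees(\fixed(h^\nature))$ for each pair is determined solely by its history coordinate, so the factorization above holds on the nose.
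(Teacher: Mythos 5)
Your proposal is correct and follows essentially the same route as the paper: restrict attention to the representatives in $\Theta^{\detr,\hsim}$, use finiteness of $\rel^\nature(\theta)$ and of the supports of $\theta(h^\nature,a)$ for finite support, and obtain total mass $1$ by summing the products of the per-history probabilities. Your explicit product-of-sums factorization, together with the observation that validity of $\theta$ places all mass on $\bm{U}^\agrees(\fixed(h^\nature))$ and that this constraint depends only on the history coordinate, just makes rigorous the step the paper states informally.
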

\begin{proof}
    Take arbitrary $\theta \in \Theta$.
    By construction, we have that $g(\theta) \colon \Theta^\detr \to [0,1]$.
    Now to show that $g(\theta) \in \Theta^\mix$, we must show two things: $g(\theta)$ assigns a non-zero probability to a finite number of deterministic policies (finitely randomizing) and $g(\theta)$ is a probability distribution, meaning the probabilities sum up to $1$.
    \begin{align}
        g(\theta) \text{ is finitely randomizing}, \label{eq1_lem_g_in_mixed}\\
        \sum_{\theta^\detr \in \Theta^\detr} g(\theta)(\theta^\detr) = 1.\label{eq2_lem_g_in_mixed}
    \end{align}
    When proving \Cref{eq1_lem_g_in_mixed}, we can restrict ourselves to $\theta^\detr\in \Theta^{\detr,\hsim}$, as we assign a zero probability to all other deterministic policies.
    There can be infinitely many $\rel^\nature(\theta)$-equivalence classes and, therefore, infinitely many elements of $\Theta^{\detr,\hsim}$. 
    However, we know that there is a finite number of nature histories in $\rel^\nature(\theta)$ since $\theta$ is finitely randomizing, $\Znature, \Zpub$, and $A$ are finite, and we consider a finite horizon.
    Furthermore, we know that because $\theta$ is finitely randomizing, there are only finitely many $u$ that can be chosen by the deterministic policies at each history action pair $h^\nature,a$ for which $\theta(h^\nature,a)(u)$ gives a non-zero probability.
    Due to the $\rel^\nature(\theta)$-equivalence classes, we only have one deterministic policy per unique choice combination for all relevant histories.
    Combining this with the fact that there are a finite number of choices that give a non-zero probability, we can conclude that there is a finite number of deterministic policies that give a non-zero probability.

    \Cref{eq2_lem_g_in_mixed} follows from the fact that, by construction, there is exactly one deterministic policy with a non-zero probability for each choice combination of choices available in the stochastic policies over all relevant nature histories of that stochastic policy.
    Each of these deterministic policies is assigned the product of the probabilities assigned to the same choices by the stochastic policy for the relevant nature histories.
    Summing over the deterministic policies will hence equal summing over the product of the probabilities assigned to the choices by the stochastic policy for the relevant nature histories.
    Since the stochastic policy assigns a probability distribution over its choices for each relevant nature history, we also get that:
    \[
    \sum_{\theta^\detr \in \Theta^\detr} g(\theta)(\theta^\detr) = 1.
    \]
\end{proof}

The following lemma is the last but vital ingredient towards the proof of \Cref{app:thm_exists_mixed_policy}.
\begin{lemma}\label{app:lem_g_equiv}
    $g(\theta)$ equivalent to $\theta$:
    \[
        \forall \theta \in \Theta, \forall \pi\in \Pi \cup \Pi^\mix.\, \mu^{\pi,\theta} = \mu^{\pi,g(\theta)}.
    \]
\end{lemma}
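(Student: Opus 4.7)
Fix $\theta \in \Theta$. The goal is $\mu^{\pi,\theta}(\mypath) = \mu^{\pi,g(\theta)}(\mypath)$ for every $\pi \in \Pi \cup \Pi^\mix$ and every $\mypath \in \Paths^M$. First I would reduce to the stochastic case: since $\mu^{\pi^\mix,\theta} = \sum_{\pi^\detr} \pi^\mix(\pi^\detr)\,\mu^{\pi^\detr,\theta}$ and likewise for $g(\theta)$, it suffices to prove, for every $\pi \in \Pi$ and $\mypath$,
\[
\eta^{\pi,\theta}(\mypath) \;=\; \sum_{\theta^\detr \in \Theta^\detr} g(\theta)(\theta^\detr)\cdot \eta^{\pi,\theta^\detr}(\mypath).
\]
Let $h^\nature := O^{\nature,M}(\mypath)$. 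On the right-hand side, $g(\theta)$ is supported on $\Theta^{\detr,\hsim}$, and \Cref{app:lem_eta_zero} kills the terms with $\theta^\detr \notin \Theta^{\detr,h^\nature}$, so only $\theta^\detr \in \Theta^{\detr,\hsim} \cap \Theta^{\detr,h^\nature}$ contribute.

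The core step is a sum-product factorization. I would parametrize $\Theta^{\detr,\hsim}$ by choice functions $f\colon \rel^\nature(\theta) \times A \to \bm{U}$, so that $g(\theta)(\theta^\detr_f) = \prod_{(h,a) \in \rel^\nature(\theta) \times A} \theta(h,a)(f(h,a))$ (the values of the representative outside $\rel^\nature(\theta) \times A$ do not enter this product). When $h^\nature \in \rel^\nature(\theta)$, every prefix pair $(h^\nature_{0:i}, a_i)$ lies in $\rel^\nature(\theta) \times A$, and the constraint $\theta^\detr_f \in \Theta^{\detr,h^\nature}$ forces $f(h^\nature_{0:i}, a_i) = u_i$ at the path steps while leaving all other pairs unconstrained. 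Summing the product factor-by-factor and using $\sum_{u \in \bm{U}} \theta(h,a)(u) = 1$ at each unconstrained pair yields
\[
\sum_{\theta^\detr \in \Theta^{\detr,\hsim} \cap \Theta^{\detr,h^\nature}} g(\theta)(\theta^\detr) \;=\; \prod_{i} \theta\big(h^\nature_{0:i},\,a_i\big)(u_i).
\]
Combining this with \Cref{app:lem_eta_const}, which gives $\eta^{\pi,\theta^\detr}(\mypath) = C(\mypath)$ for every $\theta^\detr \in \Theta^{\detr,h^\nature}$ (where $C(\mypath)$ collects the $\theta$-independent agent-policy and transition factors appearing in $\eta^{\pi,\theta}(\mypath) = C(\mypath) \cdot \prod_i \theta(h^\nature_{0:i}, a_i)(u_i)$), the right-hand side collapses precisely to $\eta^{\pi,\theta}(\mypath)$.

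In the complementary case $h^\nature \notin \rel^\nature(\theta)$, pick the smallest index $i$ with $h^\nature_{0:i-1} \in \rel^\nature(\theta)$ and $\theta(h^\nature_{0:i-1}, a_{i-1})(u_{i-1}) = 0$; such an $i$ exists because the empty-prefix history is always relevant. Direct expansion shows $\eta^{\pi,\theta}(\mypath) = 0$. On the right, any $\theta^\detr \in \Theta^{\detr,\hsim} \cap \Theta^{\detr,h^\nature}$ is forced to have $\theta^\detr(h^\nature_{0:i-1}, a_{i-1}) = u_{i-1}$, so $g(\theta)(\theta^\detr)$ inherits the vanishing factor and the whole sum is zero as well.

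I expect the main obstacle to be the bookkeeping around $\Theta^{\detr,\hsim}$: rigorously showing that $f \mapsto \theta^\detr_f$ gives a bijection between choice functions on $\rel^\nature(\theta) \times A$ and the representation set, and that the values of representatives on irrelevant history-action pairs are invisible to $g(\theta)$. Once that is pinned down, the factorization is a clean application of the distributive identity $\sum_{f} \prod_{(h,a)} \theta(h,a)(f(h,a)) = \prod_{(h,a)} \sum_{u} \theta(h,a)(u)$, constrained appropriately by the path steps.
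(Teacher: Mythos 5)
Your argument is correct, but it takes a genuinely different route from the paper. The paper proves the identity by induction on the length of $\mypath$: in the inductive step it unfolds $\eta$, invokes the induction hypothesis on the prefix, multiplies by a carefully chosen factor equal to one, and then uses \Cref{app:lem_eta_zero}, \Cref{app:lem_eta_const}, and the fact that $\Theta^{\detr,\hsim}$ contains exactly one policy per choice combination on $\rel^\nature(\theta)\times A$ to collapse the expression. You instead compute the marginal probability of the whole path under $g(\theta)$ in one shot, via the Kuhn-style factorization $\sum_{f}\prod_{(h,a)}\theta(h,a)(f(h,a))=\prod_{(h,a)}\sum_{u}\theta(h,a)(u)$ constrained at the prefix-action pairs of $\mypath$, together with the same two supporting lemmas; the degenerate case $O^{\nature,M}(\mypath)\notin\rel^\nature(\theta)$ is handled separately, and the mixed-agent case reduces to the stochastic one by linearity, exactly as in the paper. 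Your route makes the probabilistic content more transparent (it exhibits $\prod_i\theta(h^\nature_{0:i},a_i)(u_i)$ as the marginal of $g(\theta)$ on the path's choices), at the price of the bookkeeping you flag: you must establish the bijection between members of $\Theta^{\detr,\hsim}$ carrying nonzero weight and choice functions on $\rel^\nature(\theta)\times A$ whose values lie in the supports of $\theta(h,a)$, and note that any such combination extends to a \emph{valid} deterministic policy because the validity constraint $\theta^\detr(h,a)\in\bm{U}^\agrees(\fixed(h))$ is imposed history by history and is automatically met on the support of a valid $\theta$. The paper's induction leans on the same exactly-one-representative-per-combination fact (and on \Cref{app:lem_g_in_mixed} for finiteness), so neither approach avoids that step; the induction merely localizes it to a single time step, whereas your argument uses it globally, also relying on finiteness of $\rel^\nature(\theta)$ (finite horizon, finitely randomizing $\theta$) so that the product and the distributive expansion are finite. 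One further small point: where you write $\eta^{\pi,\theta^\detr}(\mypath)=C(\mypath)$, \Cref{app:lem_eta_const} only gives constancy across $\Theta^{\detr,O^{\nature,M}(\mypath)}$; that the common value is the nature-free factor $C(\mypath)$ follows from unfolding $\eta$ for one relevant deterministic policy (whose choice probabilities along the path are all one), which is worth stating explicitly.
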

\begin{proof}
    Take arbitrary $\theta \in \Theta$.
    We show $\forall \pi \in \Pi \cup \Pi^\mix, \forall \mypath \in \Paths^M.\, \mu^{\pi,\theta}(\mypath) = \mu^{\pi,g(\theta)}(\mypath)$ by induction on the length of the path $\mypath$.
    We write the length of $\mypath$ as $|\mypath|$.

    \noindent Assume $|\mypath| = 0$.
    Then $\mypath = \tup{s_I}$.
    Then we have for $\pi \in \Pi$:
    \begin{align*}
        \mu^{\pi,\theta}(\tup{s_I}) &= \eta^{\pi,\theta}(\tup{s_I})\\
        &= 1\\
        &= \sum_{\theta^\detr\in \Theta^\detr} g(\theta)(\theta^\detr) \cdot 1\\
        &= \sum_{\theta^\detr\in \Theta^\detr} g(\theta)(\theta^\detr) \cdot \eta^{\pi,\theta^\detr}(\tup{s_I})\\
        &= \mu^{\pi,g(\theta)}(\tup{s_I}).
    \intertext{And for $\pi^\mix \in \Pi^\mix$:}
        \mu^{\pi^\mix,\theta}(\tup{s_I}) &= \sum_{\pi^\detr\in \Pi^\detr} \pi^\mix(\pi^\detr) \cdot \eta^{\pi,\theta}(\tup{s_I})\\
        &= \sum_{\pi^\detr\in \Pi^\detr} \pi^\mix(\pi^\detr) \cdot 1\\
        &= \sum_{\pi^\detr\in \Pi^\detr} \pi^\mix(\pi^\detr) \cdot \sum_{\theta^\detr\in \Theta^\detr} g(\theta)(\theta^\detr) \cdot 1\\
        &= \sum_{\pi^\detr\in \Pi^\detr} \pi^\mix(\pi^\detr) \cdot \sum_{\theta^\detr\in \Theta^\detr} g(\theta)(\theta^\detr) \cdot \eta^{\pi,\theta^\detr}(\tup{s_I})\\
        &= \mu^{\pi^\mix,g(\theta)}(\tup{s_I}).
    \end{align*}
    So for paths $\mypath$ of length 0, we know that $\forall \pi \in \Pi \cup \Pi^\mix.\, \mu^{\pi,\theta}(\mypath) = \mu^{\pi,g(\theta)}(\mypath)$.

    Now assume we know, given $q\in \NN, q \geq 1$, that:
    \[\forall \mypath \in \Paths^M. |\mypath| = q-1 \implies \forall \pi \in \Pi \cup \Pi^\mix.\, \mu^{\pi,\theta}(\mypath) = \mu^{\pi,g(\theta)}(\mypath).\]
    Take arbitrary $\mypath \in \Paths^M$ with horizon length $|\mypath| = q$.
    Then we have:
    \[\mypath = \mypath_{0:q-1}\concat \tup{a_{q-1},u_{q-1},s_q} = \mypath_{0:q-2}\concat \tup{a_{q-2},u_{q-2},s_{q-1},a_{q-1},u_{q-1},s_q}.\]
    Then $\mypath_{0:q-1} \in \Paths^M$ and $|\mypath_{0:q-1}| = q-1$.
    By assumption, we get that:
    \[\forall \pi \in \Pi \cup \Pi^\mix.\, \mu^{\pi,\theta}(\mypath_{0:q-1}) = \mu^{\pi,g(\theta)}(\mypath_{0:q-1}).\]

    We need to distinguish two cases for the proof: $\pi \in \Pi$ and $\pi \in \Pi^\mix$.
    We write out the more complicated case of mixed agent policies $\pi \in \Pi^\mix$.
    The proof for stochastic agent policies follows along the same lines.
    We highlight subtle changes in the equations using either \changeB{blue} or \changeR{red} text.

    {\small
    \begin{align*}
        \mu^{\pi^\mix,\theta}(\mypath) &= \sum_{\pi^\detr\in \Pi^\detr} \pi^\mix(\pi^\detr) \cdot \eta^{\pi^\detr,\theta}(\mypath).
    \intertext{Unfolding $\eta^{\pi^\detr,\theta}(\mypath)$:}
        &= \sum_{\pi^\detr\in \Pi^\detr} \pi^\mix(\pi^\detr) \cdot \eta^{\pi^\detr,\theta}(\mypath_{0:q-1}) \cdot \pi^\detr(O^{\agent,M}(\mypath_{0:q-1}))(a_{q-1}) \cdot \theta(O^{\nature,M}(\mypath_{0:q-1}),a_{q-1})(u_{q-1}) \cdot \strut\\
        &\qquad \bm{T}(u_{q-1})(s_{q-1},a_{q-1},s_q).
    \intertext{Reordering:}
        &= \bm{T}(u_{q-1})(s_{q-1},a_{q-1},s_q) \cdot \sum_{\pi^\detr\in \Pi^\detr} \pi^\mix(\pi^\detr) \cdot \pi^\detr(O^{\agent,M}(\mypath_{0:q-1}))(a_{q-1}) \cdot \changeB{\eta}^{\pi^\detr,\theta}(\mypath_{0:q-1})\cdot \strut\\
        & \qquad \theta(O^{\nature,M}(\mypath_{0:q-1}),a_{q-1})(u_{q-1}).
    \intertext{Using the definition of $\mu$ for deterministic or stochastic agent and nature policies:}
        &= \bm{T}(u_{q-1})(s_{q-1},a_{q-1},s_q) \cdot \sum_{\pi^\detr\in \Pi^\detr} \pi^\mix(\pi^\detr) \cdot \pi^\detr(O^{\agent,M}(\mypath_{0:q-1}))(a_{q-1}) \cdot \changeB{\mu}^{\pi^\detr,\changeR{\theta}}(\mypath_{0:q-1})\cdot \strut\\
        & \qquad \theta(O^{\nature,M}(\mypath_{0:q-1}),a_{q-1})(u_{q-1}).
    \intertext{Using our assumption $\forall \pi\in \Pi\cup \Pi^\mix.\;\mu^{\pi,\theta}(\mypath_{0:q-1}) = \mu^{\pi,g(\theta)}(\mypath_{0:q-1})$, we get:}
        &= \bm{T}(u_{q-1})(s_{q-1},a_{q-1},s_q) \cdot \sum_{\pi^\detr\in \Pi^\detr} \pi^\mix(\pi^\detr) \cdot \pi^\detr(O^{\agent,M}(\mypath_{0:q-1}))(a_{q-1}) \cdot \mu^{\pi^\detr,\changeR{g(\theta)}}(\mypath_{0:q-1})\cdot \strut\\
        & \qquad \theta(O^{\nature,M}(\mypath_{0:q-1}),a_{q-1})(u_{q-1}).
    \intertext{Unfolding the definition of $\mu$ for deterministic or stochastic agent policies and mixed nature policies:}
        &= \bm{T}(u_{q-1})(s_{q-1},a_{q-1},s_q) \cdot \sum_{\pi^\detr\in \Pi^\detr} \pi^\mix(\pi^\detr) \cdot \pi^\detr(O^{\agent,M}(\mypath_{0:q-1}))(a_{q-1}) \cdot \sum_{\theta^\detr\in\Theta^\detr} g(\theta)(\theta^\detr) \cdot \eta^{\pi^\detr,\theta^\detr}(\mypath_{0:q-1})\cdot \strut\\
        & \qquad \theta(O^{\nature,M}(\mypath_{0:q-1}),a_{q-1})(u_{q-1}).
    \intertext{Multiplying by a term equal to 1:}
        &= \bm{T}(u_{q-1})(s_{q-1},a_{q-1},s_q) \cdot \sum_{\pi^\detr\in \Pi^\detr} \pi^\mix(\pi^\detr) \cdot \pi^\detr(O^{\agent,M}(\mypath_{0:q-1}))(a_{q-1}) \cdot \sum_{\theta^\detr\in\Theta^\detr} g(\theta)(\theta^\detr) \cdot \eta^{\pi^\detr,\theta^\detr}(\mypath_{0:q-1})\cdot \strut\\
        & \qquad \theta(O^{\nature,M}(\mypath_{0:q-1}),a_{q-1})(u_{q-1}) \cdot \dfrac{\theta(O^{\nature,M}(\mypath_{0:q-1}),a_{q-1})(u_{q-1})}{\theta(O^{\nature,M}(\mypath_{0:q-1}),a_{q-1})(u_{q-1})}.
    \intertext{Using that $g(\theta)$ is a probability distribution over the set of deterministic policies, we get:}
        &= \bm{T}(u_{q-1})(s_{q-1},a_{q-1},s_q) \cdot \sum_{\pi^\detr\in \Pi^\detr} \pi^\mix(\pi^\detr) \cdot \pi^\detr(O^{\agent,M}(\mypath_{0:q-1}))(a_{q-1}) \cdot \sum_{\theta^\detr\in\Theta^\detr} g(\theta)(\theta^\detr) \cdot \eta^{\pi^\detr,\theta^\detr}(\mypath_{0:q-1})\cdot \strut\\
        & \qquad \theta(O^{\nature,M}(\mypath_{0:q-1}),a_{q-1})(u_{q-1}) \cdot \dfrac{\theta(O^{\nature,M}(\mypath_{0:q-1}),a_{q-1})(u_{q-1})}{\sum_{\theta^\detr\in\Theta^\detr} g(\theta)(\theta^\detr)\cdot \theta(O^{\nature,M}(\mypath_{0:q-1}),a_{q-1})(u_{q-1})}.
    \intertext{Using that $\Theta^{\detr,\hsim}$ contains exactly one deterministic policy for each choice combination of available choices for $\theta$ for each history relevant for $\theta$, we get:}
        &= \bm{T}(u_{q-1})(s_{q-1},a_{q-1},s_q) \cdot \sum_{\pi^\detr\in \Pi^\detr} \pi^\mix(\pi^\detr) \cdot \pi^\detr(O^{\agent,M}(\mypath_{0:q-1}))(a_{q-1}) \cdot \sum_{\theta^\detr\in\Theta^\detr} g(\theta)(\theta^\detr) \cdot \eta^{\pi^\detr,\theta^\detr}(\mypath_{0:q-1})\cdot \strut\\
        & \qquad \theta(O^{\nature,M}(\mypath_{0:q-1}),a_{q-1})(u_{q-1}) \cdot \dfrac{\sum_{\theta^\detr\in\Theta^{\detr,\hsim}} \prod_{h^\nature\in\rel^\nature(\theta),a\in A} \theta(h^\nature,a)(\theta^\detr(h^\nature,a)) \cdot \theta^\detr(O^{\nature,M}(\mypath_{0:q-1}),a_{q-1})(u_{q-1})}{\sum_{\theta^\detr\in\Theta^\detr} g(\theta)(\theta^\detr)\cdot \theta(O^{\nature,M}(\mypath_{0:q-1}),a_{q-1})(u_{q-1})}.
    \intertext{Using the definition of $g$:}
        &= \bm{T}(u_{q-1})(s_{q-1},a_{q-1},s_q) \cdot \sum_{\pi^\detr\in \Pi^\detr} \pi^\mix(\pi^\detr) \cdot \pi^\detr(O^{\agent,M}(\mypath_{0:q-1}))(a_{q-1}) \cdot \sum_{\theta^\detr\in\Theta^\detr} g(\theta)(\theta^\detr) \cdot \eta^{\pi^\detr,\theta^\detr}(\mypath_{0:q-1})\cdot \strut\\
        & \qquad \theta(O^{\nature,M}(\mypath_{0:q-1}),a_{q-1})(u_{q-1}) \cdot \dfrac{\sum_{\theta^\detr\in\Theta^{\detr,\hsim}} g(\theta)(\theta^\detr) \cdot \theta^\detr(O^{\nature,M}(\mypath_{0:q-1}),a_{q-1})(u_{q-1})}{\sum_{\theta^\detr\in\Theta^\detr} g(\theta)(\theta^\detr)\cdot \theta(O^{\nature,M}(\mypath_{0:q-1}),a_{q-1})(u_{q-1})}.
    \intertext{Let $\theta^{\detr'}$ be an arbitrary deterministic policy in the set of relevant deterministic policies $\Theta^{\detr,O^{\nature,M}(\mypath_{0:q-1})}$, we get:}
        &= \bm{T}(u_{q-1})(s_{q-1},a_{q-1},s_q) \cdot \sum_{\pi^\detr\in \Pi^\detr} \pi^\mix(\pi^\detr) \cdot \pi^\detr(O^{\agent,M}(\mypath_{0:q-1}))(a_{q-1}) \cdot \sum_{\theta^\detr\in\Theta^\detr} g(\theta)(\theta^\detr) \cdot \eta^{\pi^\detr,\theta^\detr}(\mypath_{0:q-1})\cdot \strut\\
        & \qquad \theta(O^{\nature,M}(\mypath_{0:q-1}),a_{q-1})(u_{q-1}) \cdot \dfrac{\eta^{\pi^\detr,\theta^{\detr'}}(\mypath_{0:q-1}) \cdot \sum_{\theta^\detr\in\Theta^{\detr,\hsim}} g(\theta)(\theta^\detr) \cdot \theta^\detr(O^{\nature,M}(\mypath_{0:q-1}),a_{q-1})(u_{q-1})}{\eta^{\pi^\detr,\theta^{\detr'}}(\mypath_{0:q-1}) \cdot \sum_{\theta^\detr\in\Theta^\detr} g(\theta)(\theta^\detr)\cdot \theta(O^{\nature,M}(\mypath_{0:q-1}),a_{q-1})(u_{q-1})}.
    \intertext{Using \Cref{app:lem_eta_zero,app:lem_eta_const} and the fact that $g(\theta)(\theta^\detr) = 0$ when $\theta^\detr \notin \Theta^{\detr,\hsim}$, we get that:}
        &= \bm{T}(u_{q-1})(s_{q-1},a_{q-1},s_q) \cdot \sum_{\pi^\detr\in \Pi^\detr} \pi^\mix(\pi^\detr) \cdot \pi^\detr(O^{\agent,M}(\mypath_{0:q-1}))(a_{q-1}) \cdot \sum_{\theta^\detr\in\Theta^\detr} g(\theta)(\theta^\detr) \cdot \eta^{\pi^\detr,\theta^\detr}(\mypath_{0:q-1})\cdot \strut\\
        & \qquad \theta(O^{\nature,M}(\mypath_{0:q-1}),a_{q-1})(u_{q-1}) \cdot \dfrac{\sum_{\theta^\detr\in\Theta^{\detr,\hsim}} g(\theta)(\theta^\detr) \cdot \eta^{\pi^\detr,\theta^{\detr}}(\mypath_{0:q-1}) \cdot \theta^\detr(O^{\nature,M}(\mypath_{0:q-1}),a_{q-1})(u_{q-1})}{\sum_{\theta^\detr\in\Theta^\detr} g(\theta)(\theta^\detr) \cdot \eta^{\pi^\detr,\theta^{\detr}}(\mypath_{0:q-1}) \cdot \theta(O^{\nature,M}(\mypath_{0:q-1}),a_{q-1})(u_{q-1})}.
    \intertext{The denominator of the fraction is now equal to a term it is multiplied by:}
        &= \bm{T}(u_{q-1})(s_{q-1},a_{q-1},s_q) \cdot \sum_{\pi^\detr\in \Pi^\detr} \pi^\mix(\pi^\detr) \cdot \pi^\detr(O^{\agent,M}(\mypath_{0:q-1}))(a_{q-1})\cdot \strut\\
        & \qquad \sum_{\theta^\detr\in\Theta^{\detr,\hsim}} g(\theta)(\theta^\detr) \cdot \eta^{\pi^\detr,\theta^{\detr}}(\mypath_{0:q-1}) \cdot \theta^\detr(O^{\nature,M}(\mypath_{0:q-1}),a_{q-1})(u_{q-1}).
    \intertext{Using that $\forall \theta^\detr\in \Theta^\detr\backslash\Theta^{\detr,\hsim}.\, g(\theta)(\theta^\detr) = 0$ by definition, we get that:}
        &= \bm{T}(u_{q-1})(s_{q-1},a_{q-1},s_q) \cdot \sum_{\pi^\detr\in \Pi^\detr} \pi^\mix(\pi^\detr) \cdot \pi^\detr(O^{\agent,M}(\mypath_{0:q-1}))(a_{q-1}) \cdot \sum_{\theta^\detr\in \Theta^\detr} g(\theta)(\theta^\detr) \cdot \eta^{\pi^\detr,\theta^\detr}(\mypath_{0:q-1}) \cdot \strut\\
        &\qquad \theta^\detr(O^{\nature,M}(\mypath_{0:q-1}),a_{q-1})(u_{q-1}).
    \intertext{Reordering:}
        &= \sum_{\pi^\detr\in \Pi^\detr} \pi^\mix(\pi^\detr) \cdot \sum_{\theta^\detr\in \Theta^\detr} g(\theta)(\theta^\detr) \cdot \eta^{\pi^\detr,\theta^\detr}(\mypath_{0:q-1}) \cdot \pi^\detr(O^{\agent,M}(\mypath_{0:q-1}))(a_{q-1}) \cdot \strut\\
        &\qquad \theta^\detr(O^{\nature,M}(\mypath_{0:q-1}),a_{q-1})(u_{q-1}) \cdot \bm{T}(u_{q-1})(s_{q-1},a_{q-1},s_q).
    \intertext{Folding $\eta^{\pi^\detr,\theta^\detr}(\mypath)$:}
        &= \sum_{\pi^\detr\in \Pi^\detr} \pi^\mix(\pi^\detr) \cdot \sum_{\theta^\detr\in \Theta^\detr} g(\theta)(\theta^\detr) \cdot \eta^{\pi^\detr,\theta^\detr}(\mypath).
    \intertext{Using the definition of $\mu$ for mixed agent and nature policies:}
        &= \mu^{\pi^\mix,g(\theta)}(\mypath).
    \end{align*}
    }%
    So if $\forall \pi \in \Pi \cup \Pi^\mix.\, \mu^{\pi,\theta}(\mypath) = \mu^{\pi,g(\theta)}(\mypath)$ holds for paths of arbitrary length $q-1 \in \NN$, $\forall \pi \in \Pi \cup \Pi^\mix.\, \mu^{\pi,\theta}(\mypath) = \mu^{\pi,g(\theta)}(\mypath)$ holds for paths of length $q$.
    Hence, by induction, $\forall \mypath\in \Paths^M,\forall \pi \in \Pi \cup \Pi^\mix.\, \mu^{\pi,\theta}(\mypath) = \mu^{\pi,g(\theta)}(\mypath)$.

    As $\theta \in \Theta$ was arbitrarily chosen, we conclude that:
    \[
    \forall \theta \in \Theta, \forall \pi\in \Pi \cup \Pi^\mix.\, \mu^{\pi,\theta} = \mu^{\pi,g(\theta)}.
    \]
\end{proof}

We can now prove the nature case of \Cref{app:thm_exists_mixed_policy}:
\existsMixedPolicy*
\begin{proof}
    Take arbitrary $\theta \in \Theta$.
    By \Cref{app:lem_g_in_mixed}, we know that:
    \[g(\theta) \in \Theta^\mix.\]
    Furthermore, by \Cref{app:lem_g_equiv}, we know that:
    \[\forall \pi\in \Pi \cup \Pi^\mix.\, \mu^{\pi,\theta} = \mu^{\pi,g(\theta)}.\]
    Hence, we know that:
    \[\exists \theta^\mix \in \Theta^\mix, \forall \pi\in \Pi \cup \Pi^\mix.\, \mu^{\pi,\theta} = \mu^{\pi,\theta^\mix}.\]
    As $\theta \in \Theta$ was arbitrarily chosen, we conclude that:
    \[\forall \theta \in \Theta, \exists \theta^\mix \in \Theta^\mix, \forall \pi\in \Pi \cup \Pi^\mix.\, \mu^{\pi,\theta} = \mu^{\pi,\theta^\mix}.\]
\end{proof}

\subsection*{Proof of \Cref{app:thm_exists_stochastic_policy}}
We define a function $f\colon \Theta^\mix \to \Theta$ with which we will construct our equivalent stochastic policy.
This function follows the construction used in \cite{kuhn1953extensive}.
Note that we cannot directly apply Kuhn's theorem, as our game is not finite.
If a nature history is relevant for the mixed policy, the probability for each choice in the resulting stochastic policy will only take the probability of deterministic policies that can reach that nature history into account.
Nature histories that are not relevant for the mixed policy will also not be relevant for the resulting policy, so for those histories, we can just look at the probability of all deterministic policies.
\[
    f(\theta^\mix)(h^\nature,a)(u) = \begin{cases}
        \dfrac{\sum_{\theta^\detr\in \Theta^{\detr,h^\nature}} \theta^\detr(h^\nature,a)(u) \cdot \theta^\mix(\theta^\detr)}{\sum_{\theta^\detr\in \Theta^{\detr,h^\nature}}\theta^\mix(\theta^\detr)} & \text{ if } h^\nature \in \rel^\nature(\theta^\mix),\\[5mm]
        \sum_{\theta^\detr\in \Theta^\detr} \theta^\detr(h^\nature,a)(u) \cdot \theta^\mix(\theta^\detr) & \text{ if } h^\nature \not\in \rel^\nature(\theta^\mix).
    \end{cases}
\]
where
\begin{align*}
    \theta^\detr(h^\nature,a)(u) &= \begin{cases}
        1 & \text{ if } \theta^\detr(h^\nature,a) = u,\\
        0 & \text{ otherwise}.
    \end{cases}
\end{align*}

We first show that $f$ correctly maps to a stochastic policy (\Cref{app:lem_fmix_in_stochastic}) and then that this resulting policy results in the same distribution over paths in the RPOMDP given any agent policy (\Cref{app:lem_mix_equiv_fmix}).
\begin{lemma}[$f(\theta^\mix)$ is a stochastic policy]\label{app:lem_fmix_in_stochastic}
    \[
        \forall \theta^\mix \in \Theta^\mix.\, f(\theta^\mix) \in \Theta.
    \]
\end{lemma}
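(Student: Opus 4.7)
The plan is to verify the three defining properties of a stochastic nature policy: (i) for each pair $(h^\nature, a)$, $f(\theta^\mix)(h^\nature, a)$ is a finite distribution over $\bm{U}$; (ii) the function is well-defined (in the relevant-history case, the denominator is strictly positive); and (iii) validity in the sense of Section 3.1, i.e.\ $f(\theta^\mix)(h^\nature, a)(u) > 0$ only when $u \in \bm{U}^\agrees(\fixed(h^\nature))$.

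For (ii), observe that $h^\nature \in \rel^\nature(\theta^\mix)$ holds iff there exists some $\theta^\detr$ with $\theta^\mix(\theta^\detr) > 0$ and $h^\nature \in \rel^\nature(\theta^\detr)$, i.e.\ $\theta^\detr \in \Theta^{\detr, h^\nature}$. Hence the denominator $\sum_{\theta^\detr \in \Theta^{\detr, h^\nature}} \theta^\mix(\theta^\detr)$ is strictly positive exactly when the relevant branch of the case distinction applies, so $f(\theta^\mix)$ is defined on the whole of $H^\nature \times A$. Non-negativity is immediate from the formula since each factor is non-negative. Finiteness of the support follows from the fact that $\theta^\mix \in \Theta^\mix$ randomises over only finitely many deterministic policies $\theta^\detr$, and each such $\theta^\detr$ contributes weight to a single variable assignment at $(h^\nature, a)$; thus only finitely many $u$ can receive positive probability.

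For (i), I would check that the weights sum to $1$ in each case. In the relevant case,
\[
\sum_{u \in \bm{U}} f(\theta^\mix)(h^\nature, a)(u) = \frac{\sum_{\theta^\detr \in \Theta^{\detr,h^\nature}} \theta^\mix(\theta^\detr) \sum_{u \in \bm{U}} \theta^\detr(h^\nature, a)(u)}{\sum_{\theta^\detr \in \Theta^{\detr,h^\nature}} \theta^\mix(\theta^\detr)} = 1,
\]
since each deterministic $\theta^\detr$ assigns probability $1$ to a single $u$. In the non-relevant case,
\[
\sum_{u \in \bm{U}} \sum_{\theta^\detr \in \Theta^\detr} \theta^\detr(h^\nature, a)(u)\, \theta^\mix(\theta^\detr) = \sum_{\theta^\detr \in \Theta^\detr} \theta^\mix(\theta^\detr) = 1,
\]
using that $\theta^\mix$ is a distribution over $\Theta^\detr$.

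For (iii), fix $(h^\nature, a)$ and $u$ with $f(\theta^\mix)(h^\nature, a)(u) > 0$. In either branch this forces at least one deterministic $\theta^\detr$ contributing a non-zero summand with $\theta^\mix(\theta^\detr) > 0$ and $\theta^\detr(h^\nature, a) = u$. Since $\theta^\detr$ is itself a valid deterministic policy, we have $u = \theta^\detr(h^\nature, a) \in \bm{U}^\agrees(\fixed(h^\nature))$, as required. The main obstacle is the bookkeeping for the two-branch definition, and in particular noting that the denominator is positive precisely on the relevant branch; once that is in place each of the three properties reduces to a short algebraic verification.
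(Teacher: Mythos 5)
Your proposal is correct and follows essentially the same route as the paper: the paper's proof also reduces to checking that $f(\theta^\mix)$ is finitely randomizing (since $\theta^\mix$ has finite support) and valid (since every $u$ receiving positive weight is the choice of some valid deterministic policy at $h^\nature$), treating the normalization and well-definedness you verify explicitly as holding ``by construction.'' Your extra bookkeeping on the positivity of the denominator on the relevant branch and the sums equalling $1$ is a harmless, slightly more thorough elaboration of that same argument.
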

\begin{proof}
    Take arbitrary $\theta^\mix \in \Theta^\mix$.
    By construction, we have that $f(\theta^\mix) \in H^\nature \times A \to \dist{\bm{U}}$.
    Now to show that $f(\theta^\mix) \in \Theta$, we must show two things: $f(\theta^\mix)$ assigns a non-zero probability to a finite number of variable assignments (finitely randomizing) and $f(\theta^\mix)$ is valid, meaning it adheres to the stickiness restrictions (see \Cref{subsec:stickiness}).
    \begin{align}
        \forall h^\nature\in H^\nature, \forall a\in A. f(\theta^\mix)(h^\nature,a) \text{ is finitely randomizing}, \label{eq1_lem_fmix_in_stochastic}\\
        \forall h^\nature\in H^\nature, \forall a\in A, \forall u \in f(\theta^\mix)(h^\nature,a). u \in \bm{U}^{\cP}(\fixed(h^\nature)).\label{eq2_lem_fmix_in_stochastic}
    \end{align}
    \Cref{eq1_lem_fmix_in_stochastic} follows from $\theta^\mix$ being finitely randomizing by definition.
    Every choice in $f(\theta^\mix)$ assigns a non-zero probability to a number of variable assignments less than or equal to the number of the deterministic policies with a non-zero probability in the mixed policy.
    As this number of deterministic policies is finite, the resulting policy $f(\theta^\mix)$ is finitely randomizing.

    \Cref{eq2_lem_fmix_in_stochastic} follows from the fact that the deterministic policies used to construct $f(\theta^\mix)$ are valid policies by definition, so we have 
    \[
        \forall \theta^\detr \in \Theta^\detr, \forall h^\nature\in H^\nature, \forall a\in A, \theta^\detr(h^\nature,a) \in \bm{U}^{\cP}(\fixed(h^\nature)).
    \]
    Combining this with the fact that 
    \[
        \forall h^\nature\in H^\nature, \forall a\in A, \forall u \in f(\theta^\mix)(h^\nature,a), \exists \theta^\detr \in \Theta^\detr.\, \theta^\mix(\theta^\detr) >0 \land \theta^\detr(h^\nature,a) = u,
    \]
    gives us the desired result:
    \[
        \forall h^\nature\in H^\nature, \forall a\in A, \forall u \in f(\theta^\mix)(h^\nature,a). u \in \bm{U}^{\cP}(\fixed(h^\nature)).
    \]
\end{proof}

\begin{lemma}\label{app:lem_mix_equiv_fmix}
    $f(\theta^\mix)$ equivalent to $\theta^\mix$:
    \[
        \forall \theta^\mix \in \Theta^\mix, \forall \pi\in \Pi \cup \Pi^\mix.\, \mu^{\pi,\theta^\mix} = \mu^{\pi,f(\theta^\mix)}.
    \]
\end{lemma}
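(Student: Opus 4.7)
The plan is to mirror the structure of the preceding Lemma (the analogous statement for $g(\theta)$), i.e.\ fix an arbitrary $\theta^\mix \in \Theta^\mix$ and prove $\mu^{\pi,\theta^\mix}(\mypath) = \mu^{\pi,f(\theta^\mix)}(\mypath)$ by induction on $|\mypath|$, for every $\pi \in \Pi \cup \Pi^\mix$. The base case $\mypath = \tup{s_I}$ is immediate because both sides equal $1$ (using that $f(\theta^\mix) \in \Theta$ by \Cref{app:lem_fmix_in_stochastic} and that $\theta^\mix$ is a probability distribution over $\Theta^\detr$). I would split the inductive step according to whether the agent policy is stochastic or mixed, and write out only the mixed case since the stochastic case is obtained by replacing the outer sum $\sum_{\pi^\detr} \pi^\mix(\pi^\detr) \cdot (\cdot)$ by its integrand.

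For the inductive step, consider $\mypath = \mypath_{0:q-1} \concat \tup{a_{q-1}, u_{q-1}, s_q}$ and let $h = O^{\nature,M}(\mypath_{0:q-1})$. Unfolding $\eta^{\pi^\detr,\theta^\detr}$ on the last segment gives
\begin{align*}
\mu^{\pi^\mix,\theta^\mix}(\mypath) &= \bm{T}(u_{q-1})(s_{q-1},a_{q-1},s_q) \sum_{\pi^\detr \in \Pi^\detr} \pi^\mix(\pi^\detr) \pi^\detr(O^{\agent,M}(\mypath_{0:q-1}))(a_{q-1}) \\
&\quad \cdot \sum_{\theta^\detr \in \Theta^\detr} \theta^\mix(\theta^\detr) \, \eta^{\pi^\detr,\theta^\detr}(\mypath_{0:q-1}) \, \theta^\detr(h, a_{q-1})(u_{q-1}).
\end{align*}
The key manipulation is the inner sum over $\theta^\detr$. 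I would split cases on whether $h \in \rel^\nature(\theta^\mix)$ or not, since $f(\theta^\mix)(h, a_{q-1})(u_{q-1})$ is defined differently in these two cases.

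The heart of the argument is the case $h \in \rel^\nature(\theta^\mix)$. By \Cref{app:lem_eta_zero}, $\eta^{\pi^\detr,\theta^\detr}(\mypath_{0:q-1}) = 0$ whenever $\theta^\detr \notin \Theta^{\detr,h}$, so the sum restricts to $\theta^\detr \in \Theta^{\detr,h}$. By \Cref{app:lem_eta_const}, on that set $\eta^{\pi^\detr,\theta^\detr}(\mypath_{0:q-1})$ is a constant $c^{\pi^\detr} := \eta^{\pi^\detr,\theta^{\detr'}}(\mypath_{0:q-1})$ that can be pulled out. Using the definition of $f$, the remaining factor
\[
\sum_{\theta^\detr \in \Theta^{\detr,h}} \theta^\mix(\theta^\detr)\,\theta^\detr(h,a_{q-1})(u_{q-1}) \;=\; f(\theta^\mix)(h,a_{q-1})(u_{q-1}) \cdot \sum_{\theta^\detr \in \Theta^{\detr,h}} \theta^\mix(\theta^\detr),
\]
separates into $f(\theta^\mix)(h, a_{q-1})(u_{q-1})$ times a mass. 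Reabsorbing the mass back into the $\theta^\detr$ sum (noting $c^{\pi^\detr} \sum_{\theta^\detr \in \Theta^{\detr,h}} \theta^\mix(\theta^\detr) = \sum_{\theta^\detr} \theta^\mix(\theta^\detr) \eta^{\pi^\detr,\theta^\detr}(\mypath_{0:q-1})$), I obtain
\[
\mu^{\pi^\mix,\theta^\mix}(\mypath) = \pi^\mix\text{-wrapped factor} \cdot f(\theta^\mix)(h,a_{q-1})(u_{q-1}) \cdot \bm{T}(u_{q-1})(s_{q-1},a_{q-1},s_q) \cdot \mu^{\pi^\mix,\theta^\mix}(\mypath_{0:q-1}),
\]
at which point the inductive hypothesis replaces $\mu^{\pi^\mix,\theta^\mix}(\mypath_{0:q-1})$ by $\mu^{\pi^\mix,f(\theta^\mix)}(\mypath_{0:q-1})$ and folding $\eta^{\pi^\detr,f(\theta^\mix)}$ yields $\mu^{\pi^\mix,f(\theta^\mix)}(\mypath)$.

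The case $h \notin \rel^\nature(\theta^\mix)$ is similar but slightly easier: the sum over $\theta^\detr$ runs over all of $\Theta^\detr$, and the definition of $f$ on non-relevant histories is precisely the unnormalised average $\sum_{\theta^\detr} \theta^\mix(\theta^\detr) \theta^\detr(h,a_{q-1})(u_{q-1})$, so the factoring is immediate; however, one must observe that if $h \notin \rel^\nature(\theta^\mix)$ then $\mu^{\pi^\mix,\theta^\mix}(\mypath_{0:q-1}) = 0$ and, by the inductive hypothesis, also $\mu^{\pi^\mix,f(\theta^\mix)}(\mypath_{0:q-1}) = 0$, which makes both sides of the desired equality zero. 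The main obstacle of the proof is the first case: carefully justifying that the constant $c^{\pi^\detr}$ can be pulled out of the $\theta^\detr$-sum and then reabsorbed on the other side, which is exactly where \Cref{app:lem_eta_zero,app:lem_eta_const} are essential. After the induction, the result follows by universal generalisation on $\theta^\mix$.
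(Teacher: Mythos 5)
Your proposal is correct and follows essentially the same route as the paper's proof: the same induction on $|\mypath|$, the same restriction of the inner sum to $\Theta^{\detr,h}$ via \Cref{app:lem_eta_zero}, pulling out the constant prefix probability via \Cref{app:lem_eta_const}, factoring out $f(\theta^\mix)(h,a_{q-1})(u_{q-1})$ by its definition, applying the inductive hypothesis, and trivially dispatching the case $h \notin \rel^\nature(\theta^\mix)$. The only caveat is wording: the ``$\pi^\mix$-wrapped factor'' cannot be split off as a genuine product, since $\pi^\detr(O^{\agent,M}(\mypath_{0:q-1}))(a_{q-1})$ and the prefix probability are correlated across the mixture, so the inductive hypothesis must be applied inside the sum to each $\mu^{\pi^\detr,\theta^\mix}(\mypath_{0:q-1})$ (which the hypothesis, quantified over all $\pi\in\Pi\cup\Pi^\mix$, permits) --- exactly as the paper does.
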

\begin{proof}
    Take arbitrary $\theta^\mix \in \Theta^\mix$.
    We show $\forall \pi \in \Pi \cup \Pi^\mix, \forall \mypath \in \Paths^M.\, \mu^{\pi,\theta^\mix}(\mypath) = \mu^{\pi,f(\theta^\mix)}(\mypath)$ by induction on the length of the path $\mypath$.
    We write the length of $\mypath$ as $|\mypath|$.

    \noindent Assume $|\mypath| = 0$.
    Then $\mypath = \tup{s_I}$.
    Then we have for $\pi \in \Pi$:
    \begin{align*}
        \mu^{\pi,\theta^\mix}(\tup{s_I}) &= \sum_{\theta^\detr\in \Theta^\detr} \theta^\mix(\theta^\detr) \cdot \eta^{\pi,\theta^\detr}(\tup{s_I})\\
        &= \sum_{\theta^\detr\in \Theta^\detr} \theta^\mix(\theta^\detr) \cdot 1\\
        &= 1\\
        &= \eta^{\pi,f(\theta^\mix)}(\tup{s_I})\\
        &= \mu^{\pi,f(\theta^\mix)}(\tup{s_I}),
    \intertext{and for $\pi^\mix \in \Pi^\mix$:}
        \mu^{\pi^\mix,\theta^\mix}(\tup{s_I}) &= \sum_{\pi^\detr\in \Pi^\detr} \pi^\mix(\pi^\detr) \cdot \sum_{\theta^\detr\in \Theta^\detr} \theta^\mix(\theta^\detr) \cdot \eta^{\pi,\theta^\detr}(\tup{s_I})\\
        &= \sum_{\pi^\detr\in \Pi^\detr} \pi^\mix(\pi^\detr) \cdot \sum_{\theta^\detr\in \Theta^\detr} \theta^\mix(\theta^\detr) \cdot 1\\
        &= \sum_{\pi^\detr\in \Pi^\detr} \pi^\mix(\pi^\detr) \cdot 1\\
        &= \sum_{\pi^\detr\in \Pi^\detr} \pi^\mix(\pi^\detr) \cdot \eta^{\pi,f(\theta^\mix)}(\tup{s_I})\\
        &= \mu^{\pi^\mix,f(\theta^\mix)}(\tup{s_I}).
    \end{align*}
    So for paths $\mypath$ of length 0, we know that $\forall \pi \in \Pi \cup \Pi^\mix.\, \mu^{\pi,\theta^\mix}(\mypath) = \mu^{\pi,f(\theta^\mix)}(\mypath)$.

    Now assume we know, given $q\in \NN, q \geq 1$, that:
    \[\forall \mypath \in \Paths^M. |\mypath| = q-1 \implies \forall \pi \in \Pi \cup \Pi^\mix.\, \mu^{\pi,\theta^\mix}(\mypath) = \mu^{\pi,f(\theta^\mix)}(\mypath).\]
    Take arbitrary $\mypath \in \Paths^M$ with horizon length $|\mypath| = q$.
    Then we have:
    \[\mypath = \mypath_{0:q-1}\concat \tup{a_{q-1},u_{q-1},s_q} = \mypath_{0:q-2}\concat \tup{a_{q-2},u_{q-2},s_{q-1},a_{q-1},u_{q-1},s_q}.\]
    Then $\mypath_{0:q-1} \in \Paths^M$ and $|\mypath_{0:q-1}| = q-1$.
    By assumption, we get that:
    \[\forall \pi \in \Pi \cup \Pi^\mix.\, \mu^{\pi,\theta^\mix}(\mypath_{0:q-1}) = \mu^{\pi,f(\theta^\mix)}(\mypath_{0:q-1}).\]
    We also assume that $O^{\nature,M}(\mypath_{0:q-1}) \in \rel^\nature(\theta^\mix)$.
    If $O^{\nature,M}(\mypath_{0:q-1}) \notin \rel^\nature(\theta^\mix)$, so if the history of the path's prefix is not relevant for the mixed policy, the path will not be generated by the mixed policy or the stochastic policies.
    Then we trivially have:
    \[\forall \pi \in \Pi \cup \Pi^\mix.\, \mu^{\pi,\theta^\mix}(\mypath) = 0 = \mu^{\pi,f(\theta^\mix)}(\mypath).\]

    We need to distinguish two cases for the proof: $\pi \in \Pi$ and $\pi \in \Pi^\mix$.
    We write out the more complicated case: $\pi \in \Pi^\mix$.
    The other proof follows along the same lines.
    We indicate hard-to-read changes in the equations using either \changeB{blue} or \changeR{red} text.

    {\small
    \begin{align*}
        \mu^{\pi^\mix,\theta^\mix}(\mypath) &= \sum_{\pi^\detr\in \Pi^\detr} \pi^\mix(\pi^\detr) \cdot \sum_{\theta^\detr\in \Theta^\detr} \theta^\mix(\theta^\detr) \cdot \eta^{\pi^\detr,\theta^\detr}(\mypath).
    \intertext{Unfolding $\eta^{\pi^\detr,\theta^\detr}(\mypath)$:}
        &= \sum_{\pi^\detr\in \Pi^\detr} \pi^\mix(\pi^\detr) \cdot \sum_{\theta^\detr\in \Theta^\detr} \theta^\mix(\theta^\detr) \cdot \eta^{\pi^\detr,\theta^\detr}(\mypath_{0:q-1}) \cdot \pi^\detr(O^{\agent,M}(\mypath_{0:q-1}))(a_{q-1}) \cdot \strut\\
        &\qquad \theta^\detr(O^{\nature,M}(\mypath_{0:q-1}),a_{q-1})(u_{q-1}) \cdot \bm{T}(u_{q-1})(s_{q-1},a_{q-1},s_q).
    \intertext{Reordering:}
        &= \bm{T}(u_{q-1})(s_{q-1},a_{q-1},s_q) \cdot \sum_{\pi^\detr\in \Pi^\detr} \pi^\mix(\pi^\detr) \cdot \pi^\detr(O^{\agent,M}(\mypath_{0:q-1}))(a_{q-1}) \cdot \strut\\
        &\qquad \sum_{\theta^\detr\in \Theta^\detr} \theta^\mix(\theta^\detr) \cdot \eta^{\pi^\detr,\theta^\detr}(\mypath_{0:q-1}) \cdot \theta^\detr(O^{\nature,M}(\mypath_{0:q-1}),a_{q-1})(u_{q-1}).
    \intertext{Multiplying by a term equal to 1:}
        &= \bm{T}(u_{q-1})(s_{q-1},a_{q-1},s_q) \cdot \sum_{\pi^\detr\in \Pi^\detr} \pi^\mix(\pi^\detr) \cdot \pi^\detr(O^{\agent,M}(\mypath_{0:q-1}))(a_{q-1}) \cdot \strut\\
        &\qquad  \sum_{\theta^\detr\in \Theta^\detr} \theta^\mix(\theta^\detr) \cdot \eta^{\pi^\detr,\theta^\detr}(\mypath_{0:q-1}) \cdot \theta^\detr(O^{\nature,M}(\mypath_{0:q-1}),a_{q-1})(u_{q-1}) \cdot \dfrac{\sum_{\theta^\detr\in \Theta^\detr} \theta^\mix(\theta^\detr) \cdot \eta^{\pi^\detr,\theta^\detr}(\mypath_{0:q-1})}{\sum_{\theta^\detr\in \Theta^\detr} \theta^\mix(\theta^\detr) \cdot \eta^{\pi^\detr,\theta^\detr}(\mypath_{0:q-1})}.
    \intertext{Reordering:}
        &= \bm{T}(u_{q-1})(s_{q-1},a_{q-1},s_q) \cdot \sum_{\pi^\detr\in \Pi^\detr} \pi^\mix(\pi^\detr) \cdot \pi^\detr(O^{\agent,M}(\mypath_{0:q-1}))(a_{q-1}) \cdot \sum_{\theta^\detr\in \Theta^\detr} \theta^\mix(\theta^\detr) \cdot \eta^{\pi^\detr,\theta^\detr}(\mypath_{0:q-1}) \cdot \strut\\
        &\qquad \dfrac{\sum_{\theta^\detr\in \Theta^\detr} \theta^\mix(\theta^\detr) \cdot \eta^{\pi^\detr,\theta^\detr}(\mypath_{0:q-1}) \cdot \theta^\detr(O^{\nature,M}(\mypath_{0:q-1}),a_{q-1})(u_{q-1})}{\sum_{\theta^\detr\in \Theta^\detr} \theta^\mix(\theta^\detr) \cdot \eta^{\pi^\detr,\theta^\detr}(\mypath_{0:q-1})}.
    \intertext{Using the definition of $\mu$ for deterministic or stochastic agent policies and mixed nature policies:}
        &= \bm{T}(u_{q-1})(s_{q-1},a_{q-1},s_q) \cdot \sum_{\pi^\detr\in \Pi^\detr} \pi^\mix(\pi^\detr) \cdot \pi^\detr(O^{\agent,M}(\mypath_{0:q-1}))(a_{q-1}) \cdot \mu^{\pi^\detr,\theta^\mix}(\mypath_{0:q-1}) \cdot \strut\\
        &\qquad \dfrac{\sum_{\theta^\detr\in \changeB{\Theta^\detr}} \theta^\mix(\theta^\detr) \cdot \eta^{\pi^\detr,\theta^\detr}(\mypath_{0:q-1}) \cdot \theta^\detr(O^{\nature,M}(\mypath_{0:q-1}),a_{q-1})(u_{q-1})}{\sum_{\theta^\detr\in \Theta^\detr} \theta^\mix(\theta^\detr) \cdot \eta^{\pi^\detr,\theta^\detr}(\mypath_{0:q-1})}.
    \intertext{Using \cref{app:lem_eta_zero}, we get:}
        &= \bm{T}(u_{q-1})(s_{q-1},a_{q-1},s_q) \cdot \sum_{\pi^\detr\in \Pi^\detr} \pi^\mix(\pi^\detr) \cdot \pi^\detr(O^{\agent,M}(\mypath_{0:q-1}))(a_{q-1}) \cdot \mu^{\pi^\detr,\theta^\mix}(\mypath_{0:q-1}) \cdot \strut\\
        &\qquad \dfrac{\sum_{\theta^\detr\in \Theta^{\detr,O^{\nature,M}(\mypath_{0:q-1})}} \theta^\mix(\theta^\detr) \cdot \eta^{\pi^\detr,\changeR{\theta^\detr}}(\mypath_{0:q-1}) \cdot \theta^\detr(O^{\nature,M}(\mypath_{0:q-1}),a_{q-1})(u_{q-1})}{\sum_{\theta^\detr\in \changeB{\Theta^{\detr,O^{\nature,M}(\mypath_{0:q-1})}}} \theta^\mix(\theta^\detr) \cdot \eta^{\pi^\detr,\changeR{\theta^\detr}}(\mypath_{0:q-1})}.
    \intertext{Let $\theta^{\detr'}$ be an arbitrary deterministic policy in the set of relevant deterministic policies $\Theta^{\detr,O^{\nature,M}(\mypath_{0:q-1})}$. Using \cref{app:lem_eta_const}, we get:}
        &= \bm{T}(u_{q-1})(s_{q-1},a_{q-1},s_q) \cdot \sum_{\pi^\detr\in \Pi^\detr} \pi^\mix(\pi^\detr) \cdot \pi^\detr(O^{\agent,M}(\mypath_{0:q-1}))(a_{q-1}) \cdot \mu^{\pi^\detr,\theta^\mix}(\mypath_{0:q-1}) \cdot \strut\\
        &\qquad \dfrac{\sum_{\theta^\detr\in \Theta^{\detr,O^{\nature,M}(\mypath_{0:q-1})}} \theta^\mix(\theta^\detr) \cdot \eta^{\pi^\detr,\changeR{\theta^{\detr'}}}(\mypath_{0:q-1}) \cdot \theta^\detr(O^{\nature,M}(\mypath_{0:q-1}),a_{q-1})(u_{q-1})}{\sum_{\theta^\detr\in \Theta^{\detr,O^{\nature,M}(\mypath_{0:q-1})}} \theta^\mix(\theta^\detr) \cdot \eta^{\pi^\detr,\changeR{\theta^{\detr'}}}(\mypath_{0:q-1})}.
    \intertext{Reordering:}
        &= \bm{T}(u_{q-1})(s_{q-1},a_{q-1},s_q) \cdot \sum_{\pi^\detr\in \Pi^\detr} \pi^\mix(\pi^\detr) \cdot \pi^\detr(O^{\agent,M}(\mypath_{0:q-1}))(a_{q-1}) \cdot \mu^{\pi^\detr,\theta^\mix}(\mypath_{0:q-1}) \cdot \strut\\
        &\qquad \dfrac{\eta^{\pi^\detr,\theta^{\detr'}}(\mypath_{0:q-1}) \cdot \sum_{\theta^\detr\in \Theta^{\detr,O^{\nature,M}(\mypath_{0:q-1})}} \theta^\mix(\theta^\detr) \cdot \theta^\detr(O^{\nature,M}(\mypath_{0:q-1}),a_{q-1})(u_{q-1})}{\eta^{\pi^\detr,\theta^{\detr'}}(\mypath_{0:q-1}) \cdot \sum_{\theta^\detr\in \Theta^{\detr,O^{\nature,M}(\mypath_{0:q-1})}} \theta^\mix(\theta^\detr)}.
    \intertext{Now we can simplify the fraction:}
        &= \bm{T}(u_{q-1})(s_{q-1},a_{q-1},s_q) \cdot \sum_{\pi^\detr\in \Pi^\detr} \pi^\mix(\pi^\detr) \cdot \pi^\detr(O^{\agent,M}(\mypath_{0:q-1}))(a_{q-1}) \cdot \mu^{\pi^\detr,\changeB{\theta^\mix}}(\mypath_{0:q-1}) \cdot \strut\\
        &\qquad \dfrac{\sum_{\theta^\detr\in \Theta^{\detr,O^{\nature,M}(\mypath_{0:q-1})}} \theta^\mix(\theta^\detr) \cdot \theta^\detr(O^{\nature,M}(\mypath_{0:q-1}),a_{q-1})(u_{q-1})}{\sum_{\theta^\detr\in \Theta^{\detr,O^{\nature,M}(\mypath_{0:q-1})}} \theta^\mix(\theta^\detr)}.
    \intertext{Using our assumption $\forall \pi\in \Pi\cup \Pi^\mix\mu^{\pi,\theta^\mix}(\mypath_{0:q-1}) = \mu^{\pi,f(\theta^\mix)}(\mypath_{0:q-1})$, we get:}
        &= \bm{T}(u_{q-1})(s_{q-1},a_{q-1},s_q) \cdot \sum_{\pi^\detr\in \Pi^\detr} \pi^\mix(\pi^\detr) \cdot \pi^\detr(O^{\agent,M}(\mypath_{0:q-1}))(a_{q-1}) \cdot \changeR{\mu}^{\pi^\detr,\changeB{f(\theta^\mix)}}(\mypath_{0:q-1}) \cdot \strut\\
        &\qquad \dfrac{\sum_{\theta^\detr\in \Theta^{\detr,O^{\nature,M}(\mypath_{0:q-1})}} \theta^\mix(\theta^\detr) \cdot \theta^\detr(O^{\nature,M}(\mypath_{0:q-1}),a_{q-1})(u_{q-1})}{\sum_{\theta^\detr\in \Theta^{\detr,O^{\nature,M}(\mypath_{0:q-1})}} \theta^\mix(\theta^\detr)}.
    \intertext{Using the definition of $\mu$ for deterministic or stochastic agent and nature policies:}
        &= \bm{T}(u_{q-1})(s_{q-1},a_{q-1},s_q) \cdot \sum_{\pi^\detr\in \Pi^\detr} \pi^\mix(\pi^\detr) \cdot \pi^\detr(O^{\agent,M}(\mypath_{0:q-1}))(a_{q-1}) \cdot \changeR{\eta}^{\pi^\detr,f(\theta^\mix)}(\mypath_{0:q-1})\cdot \strut\\
        & \qquad \dfrac{\sum_{\theta^\detr\in \Theta^{\detr,O^{\nature,M}(\mypath_{0:q-1})}} \theta^\mix(\theta^\detr) \cdot \theta^\detr(O^{\nature,M}(\mypath_{0:q-1}),a_{q-1})(u_{q-1})}{\sum_{\theta^\detr\in \Theta^{\detr,O^{\nature,M}(\mypath_{0:q-1})}} \theta^\mix(\theta^\detr)}.
    \intertext{Using the definition of $f$ and our assumption the $O^{\nature,M}(\mypath_{0:q-1}) \in \rel^\nature(\theta^\mix)$, we get:}
        &= \bm{T}(u_{q-1})(s_{q-1},a_{q-1},s_q) \cdot \sum_{\pi^\detr\in \Pi^\detr} \pi^\mix(\pi^\detr) \cdot \pi^\detr(O^{\agent,M}(\mypath_{0:q-1}))(a_{q-1}) \cdot \eta^{\pi^\detr,f(\theta^\mix)}(\mypath_{0:q-1})\cdot \strut\\
        & \qquad f(\theta^\mix)(O^{\nature,M}(\mypath_{0:q-1}),a_{q-1})(u_{q-1}).
    \intertext{Reordering:}
        &= \sum_{\pi^\detr\in \Pi^\detr} \pi^\mix(\pi^\detr) \cdot \eta^{\pi^\detr,f(\theta^\mix)}(\mypath_{0:q-1}) \cdot \pi^\detr(O^{\agent,M}(\mypath_{0:q-1}))(a_{q-1}) \cdot f(\theta^\mix)(O^{\nature,M}(\mypath_{0:q-1}),a_{q-1})(u_{q-1}) \cdot \strut\\
        &\qquad \bm{T}(u_{q-1})(s_{q-1},a_{q-1},s_q).
    \intertext{Folding $\eta^{\pi^\detr,f(\theta^\mix)}(\mypath)$:}
        &= \sum_{\pi^\detr\in \Pi^\detr} \pi^\mix(\pi^\detr) \cdot \eta^{\pi^\detr,f(\theta^\mix)}(\mypath).
    \intertext{Using the definition of $\mu$ for mixed agent policies and deterministic or stochastic nature policies:}
        &= \mu^{\pi^\mix,f(\theta^\mix)}(\mypath).
    \end{align*}
    }%
    So if $\forall \pi \in \Pi \cup \Pi^\mix.\, \mu^{\pi,\theta^\mix}(\mypath) = \mu^{\pi,f(\theta^\mix)}(\mypath)$ holds for paths of arbitrary length $q-1 \in \NN$, $\forall \pi \in \Pi \cup \Pi^\mix.\, \mu^{\pi,\theta^\mix}(\mypath) = \mu^{\pi,f(\theta^\mix)}(\mypath)$ holds for paths of length $q$.
    Hence, by induction, $\forall \mypath\in \Paths^M,\forall \pi \in \Pi \cup \Pi^\mix.\, \mu^{\pi,\theta^\mix}(\mypath) = \mu^{\pi,f(\theta^\mix)}(\mypath)$.

    As $\theta^\mix \in \Theta^\mix$ was arbitrarily chosen, we conclude that:
    \[
    \forall \theta^\mix \in \Theta^\mix, \forall \pi\in \Pi \cup \Pi^\mix.\, \mu^{\pi,\theta^\mix} = \mu^{\pi,f(\theta^\mix)}.
    \]
\end{proof}

We can now prove the nature case of \Cref{app:thm_exists_stochastic_policy}:
\existsstochasticPolicy*
\begin{proof}
    Take arbitrary $\theta^\mix \in \Theta^\mix$.
    By \Cref{app:lem_fmix_in_stochastic}, we know that:
    \[f(\theta^\mix) \in \Theta.\]
    Furthermore, by \Cref{app:lem_mix_equiv_fmix}, we know that:
    \[\forall \pi\in \Pi \cup \Pi^\mix.\, \mu^{\pi,\theta^\mix} = \mu^{\pi,f(\theta^\mix)}.\]
    Hence, we know that:
    \[\exists \theta \in \Theta, \forall \pi\in \Pi \cup \Pi^\mix.\, \mu^{\pi,\theta^\mix} = \mu^{\pi,\theta}.\]
    As $\theta^\mix \in \Theta^\mix$ was arbitrarily chosen, we conclude that:
    \[\forall \theta^\mix \in \Theta^\mix, \exists \theta \in \Theta, \forall \pi\in \Pi \cup \Pi^\mix.\, \mu^{\pi,\theta^\mix} = \mu^{\pi,\theta}.\]
\end{proof}

\subsection{Convex Semi-Infinite Game}\label{app:convex_semi-infinite_game}

Although we follow the occupancy game construction from \cite{Springer:HSVI}, their proof for Nash equilibrium existence requires both players to have convex subsets of a Euclidean space as their policy space. 
Our set of nature policies does not meet this requirement, as the number of deterministic nature policies is infinite, and therefore, the mixed policy space is infinite-dimensional.
Instead, we show that our occupancy game is a convex semi-infinite game as defined in~\cite{Convex_semi-infinite_games} and given below:

\begin{definition}[Convex semi-infinite game {[Lopez and Vercher, 1986]}]\label{def:thm:convex_semi-infinite_game}
    Given an arbitrary, possibly infinite set $T$, a convex semi-infinite game is a zero-sum game with policy sets $\Gamma$ and $C$ of the following types:
    \begin{itemize}
        \item $\Gamma = \{\vec{\lambda} = (\lambda_t)_{t\in T} \mid \text{only finitely many } \lambda_t \neq 0, \lambda_t \geq 0, \text{ and } \sum_{t\in T} \lambda_t = 1\}$.
        \item $C = $ a nonempty closed convex set in $\RR^n$ with $n \in \NN$ a finite number.
    \end{itemize}
    And a family of convex functions with finite values $\mathcal{F}_T = \{F_t \colon \RR^n \to \RR \mid t\in T\}$, such that the value function of the convex semi-infinite game $W\colon C \times \Gamma \to \RR$ is:
    \[W(\vec{x}, \vec{\lambda}) = \sum_{t\in T} \lambda_tF_t(\vec{x})\text{, with }\vec{\lambda} \in \Gamma \text{ and } \vec{x}\in C.\]
\end{definition}

\begin{theorem}[Occupancy game is convex semi-infinite]\label{app:thm:convex_semi-infinite_game}
    Given an RPOMDP $M$ and horizon $K \in \NN$, the corresponding occupancy game OG (\Cref{app:def:occupancy_game}) is an convex semi-infinite game, where:
    \begin{itemize}
        \item $T$ is the set of deterministic nature policies $\Theta^\detr_{0:K-1}$.
        \item $\Gamma$ is the set of mixed nature policies $\Theta^\mix_{0:K-1}$.
        \item $C$ is the set of mixed agent policies $\Pi^\mix_{0:K-1}$.
        \item $F_{\theta^\detr}(\vec{x}) = \sum_{\pi^\detr\in \Pi^\detr_{0:K-1}} x_{\pi^\detr}\cdot V^{\pi^\detr,\theta^\detr}$ with $\theta^\detr \in \Theta^\detr_{0:K-1} = T$ and $\vec{x} \in \RR^n$ where $n$ is the finite number of deterministic agent policies $|\Pi^\detr_{0:K-1}|$.
    \end{itemize}
\end{theorem}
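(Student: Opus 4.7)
The plan is to verify each component of \Cref{def:thm:convex_semi-infinite_game} in turn for the occupancy game $\mathrm{OG}$, relying on the finiteness of deterministic agent policies and the linearity of the value function in mixed policies.

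First, I would establish that $C = \Pi^\mix_{0:K-1}$ is a nonempty closed convex subset of some $\RR^n$ with $n < \infty$. Because the agent's observation set, action set, and horizon are all finite, the set $\Pi^\detr_{0:K-1}$ of deterministic agent policies is finite; set $n = |\Pi^\detr_{0:K-1}|$. A mixed agent policy is by definition a probability distribution over $\Pi^\detr_{0:K-1}$, so $\Pi^\mix_{0:K-1}$ is precisely the standard probability simplex in $\RR^n$, which is nonempty, closed, and convex. Second, I would observe that $\Gamma = \Theta^\mix_{0:K-1}$ matches the required form verbatim: by our definition of mixed nature policies (distributions over $\Theta^\detr_{0:K-1}$ with finite support), each $\theta^\mix$ is a vector $(\lambda_{\theta^\detr})_{\theta^\detr \in \Theta^\detr_{0:K-1}}$ of non-negative reals with only finitely many nonzero entries that sum to $1$, which is exactly $\Gamma$ with index set $T = \Theta^\detr_{0:K-1}$.

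Next, I would check that each $F_{\theta^\detr}$ is convex and real-valued on $\RR^n$. Since for fixed $\theta^\detr$ the scalars $V^{\pi^\detr,\theta^\detr}$ are finite (finite horizon, bounded rewards over finitely many paths reachable under a deterministic pair), the map $\vec{x}\mapsto \sum_{\pi^\detr} x_{\pi^\detr}\cdot V^{\pi^\detr,\theta^\detr}$ is a finite linear functional on $\RR^n$; linear functionals are convex, so this is immediate.

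The main content of the proof lies in identifying the OG's value function with $W$. Concretely, I would show that for every $\vec{x}=\pi^\mix \in C$ and $\vec{\lambda}=\theta^\mix \in \Gamma$,
\begin{equation*}
V^{\pi^\mix,\theta^\mix} \;=\; \sum_{\theta^\detr \in \Theta^\detr_{0:K-1}} \theta^\mix(\theta^\detr)\, F_{\theta^\detr}(\pi^\mix) \;=\; \sum_{\pi^\detr, \theta^\detr} \pi^\mix(\pi^\detr)\,\theta^\mix(\theta^\detr)\, V^{\pi^\detr,\theta^\detr}.
\end{equation*}
This bilinearity follows from the definition of the value function for mixed policies given at the end of \Cref{app:appendix_prelim} (i.e., linearity of expectation over the two independent mixing distributions). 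Combining this with the fact, established via \Cref{thm:equivalent:values} and the sufficient statistic argument of \Cref{app:suff.stat}, that the value of a policy pair in the OG coincides with the value of the corresponding policy pair in the underlying RPOMDP/POSG, yields the required identification.

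The main obstacle will be bookkeeping rather than a conceptual hurdle: one must be careful that the finite-support requirement on $\Gamma$ is precisely what our mixed policy definition provides (no limits, no measures over uncountable sets), and that the value function identification handles the per-stage policies $\pi_{0:K-1},\theta_{0:K-1}$ correctly so that linearity in both mixing distributions is genuine rather than merely marginal. Once these are unpacked, each of the four bullet points of \Cref{def:thm:convex_semi-infinite_game} is directly witnessed, completing the proof.
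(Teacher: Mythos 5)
Your proposal matches the paper's own proof in both structure and substance: it verifies the same four ingredients (finiteness/convexity/closedness of the mixed-agent-policy simplex, the finite-support identification of $\Theta^\mix_{0:K-1}$ with $\Gamma$, convexity and finiteness of the linear functionals $F_{\theta^\detr}$, and the bilinear identification $V^{\pi^\mix,\theta^\mix}=\sum_{\theta^\detr}\theta^\mix(\theta^\detr)F_{\theta^\detr}(\pi^\mix)$), which is exactly the content of the paper's Lemmas on closedness/convexity, finite distributions, convex and finite-valued $F_{\theta^\detr}$, and the value-function equivalence. No gaps; the only cosmetic difference is that you phrase convexity of $F_{\theta^\detr}$ as an immediate consequence of linearity, which the paper verifies by the same computation spelled out.
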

Note that we omitted the history length indication for the policies for readability purposes.
The history lengths on which the policies are defined can be derived from the policy sets from which they are taken.
See \Cref{tab:notation} in \Cref{app:notation_and_prelim} for the notation glossary.

We prove \Cref{app:thm:convex_semi-infinite_game} by proving several smaller lemmas, showing that the suggested mapping of the occupancy game to the convex semi-infinite game definition is correct.
\Cref{app:lem_closed_convex} shows that the set of mixed agent policies $\Pi^\mix_{0:K-1}$ meets the conditions for policy set $C$ of the convex semi-infinite game definition.

\begin{lemma}\label{app:lem_closed_convex}
    $\Pi^\mix_{0:K-1}$ is a nonempty closed convex set in $\RR^n$ with $n = |\Pi^\detr_{0:K-1}|$ a finite number.
\end{lemma}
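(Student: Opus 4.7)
The plan is to identify $\Pi^\mix_{0:K-1}$ with the standard probability simplex in $\RR^n$, where $n = |\Pi^\detr_{0:K-1}|$, and then verify the three required properties (nonempty, closed, convex) on the simplex directly.

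First, I would establish finiteness of $n$. Since the horizon is $K$, the agent histories in $H^{\agent,M}_{0:K-1}$ are sequences of bounded length built from the finite sets $\Zagent$, $\Zpub$, and $A$, so $H^{\agent,M}_{0:K-1}$ is finite. A deterministic agent policy is a function $\pi^\detr \colon H^{\agent,M}_{0:K-1} \to A$, and the number of such functions is at most $|A|^{|H^{\agent,M}_{0:K-1}|}$, which is finite. Hence $n = |\Pi^\detr_{0:K-1}|$ is finite, as required.

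Next, I would set up an explicit embedding of $\Pi^\mix_{0:K-1}$ into $\RR^n$. Enumerate the deterministic policies as $\pi^\detr_1,\dots,\pi^\detr_n$, and map each mixed policy $\pi^\mix$ to the vector $\vec{x}(\pi^\mix) = (\pi^\mix(\pi^\detr_1),\dots,\pi^\mix(\pi^\detr_n)) \in \RR^n$. By definition of a mixed policy (a probability distribution over $\Pi^\detr_{0:K-1}$), this map is a bijection onto
\[
\Delta_n = \Bigl\{\vec{x} \in \RR^n \;\Big|\; x_i \geq 0 \text{ for all } i,\ \sum_{i=1}^n x_i = 1\Bigr\},
\]
the standard probability simplex. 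Note that the finite-support condition in the definition of $\Pi^\mix$ is automatic here because the index set $\Pi^\detr_{0:K-1}$ is already finite.

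Finally, I would verify the three properties on $\Delta_n$. Nonemptiness follows from the existence of Dirac distributions (any standard basis vector $e_i \in \RR^n$ lies in $\Delta_n$). Convexity follows because if $\vec{x},\vec{y} \in \Delta_n$ and $\alpha \in [0,1]$, then $\alpha \vec{x} + (1-\alpha)\vec{y}$ has nonnegative coordinates and its coordinates sum to $\alpha + (1-\alpha) = 1$, so it lies in $\Delta_n$. Closedness follows because $\Delta_n$ is the intersection of the closed half-spaces $\{x_i \geq 0\}$ for $i = 1,\dots,n$ with the closed hyperplane $\{\sum_i x_i = 1\}$, each of which is a closed subset of $\RR^n$. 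I do not expect any genuine obstacles here: the lemma is essentially a bookkeeping statement that the finite horizon together with finiteness of $A$, $\Zagent$, $\Zpub$ reduces the mixed-policy space to a finite-dimensional simplex, so the standard properties of the simplex suffice.
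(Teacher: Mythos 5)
Your proposal is correct and follows essentially the same route as the paper: both arguments note that the finite horizon together with finiteness of $A$, $\Zagent$, and $\Zpub$ makes $|\Pi^\detr_{0:K-1}|$ finite, and then identify $\Pi^\mix_{0:K-1}$ with the probability simplex in $\RR^n$, whose nonemptiness, closedness, and convexity are standard. Your version merely spells out the simplex properties that the paper asserts without detail, which is fine.
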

\begin{proof}
    By definition, $\Pi^\mix_{0:K-1} = \dist{\Pi^\detr_{0:K-1}}$.
    Since the set of actions $A$, the set of agent observation $\Zagent$, and the set of public observation $\Zpub$ are all finite and nonempty, the number of deterministic policies $|\Pi^\detr_{0:K-1}|$ is finite and nonempty and less than or equal to 
    \[
    \sum_{t = 0}^{K}(|A|\cdot|Z^\agent_\priv|\cdot|Z_\publ|)^t\cdot|A|.
    \]
    Let $n$ be the actual number of different deterministic policies in our RPOMDP.
    Then we know the set of mixed policies $\Pi^\mix_{0:K-1} \subseteq \RR^n$ with $n$ a finite number.
    Finally, since the set of mixed policies is the probability simplex $\dist{\Pi^\detr_{0:K-1}} = \dist{\RR^n}$, we know that it is a closed and convex set.
\end{proof}

As no restrictions are given on $T$, we can take the infinite set of deterministic nature policies $\Theta^\detr_{0:K-1}$.
\Cref{app:lem_finite_prob_dist} shows that the mixed nature policies then meet the conditions for policy set $\Gamma$ of the convex semi-infinite game definition.

\begin{lemma}\label{app:lem_finite_prob_dist}
    $\Theta^\mix_{0:K-1}$ is the set of finite probability distributions over the set of deterministic nature policies $\Theta^\detr_{0:K-1}$:
    \[\Theta^\mix_{0:K-1} = \{\vec{\lambda} = (\lambda_{\theta^\detr})_{\theta^\detr\in \Theta^\detr_{0:K-1}} \mid \text{only finitely many } \lambda_{\theta^\detr} \neq 0, \lambda_{\theta^\detr} \geq 0, \text{ and } \sum_{\theta^\detr\in \Theta^\detr_{0:K-1}} \lambda_{\theta^\detr} = 1\}.\]
\end{lemma}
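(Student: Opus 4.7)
The plan is to unfold the definitions established in \Cref{sec:preliminaries} and \Cref{sec:RPOMDP} and verify that the two descriptions of $\Theta^\mix_{0:K-1}$ coincide as sets. By our convention, a mixed nature policy is a (finite) probability distribution over the set of deterministic nature policies, i.e.\ $\Theta^\mix_{0:K-1} = \dist{\Theta^\detr_{0:K-1}}$; on the other hand, the set on the right-hand side of the lemma is just the explicit description of finite probability distributions over $\Theta^\detr_{0:K-1}$ written in vector form, indexed by the (in general infinite) set $\Theta^\detr_{0:K-1}$.

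First, I would recall from \Cref{sec:preliminaries} that for an infinite set $X$, a finite probability distribution over $X$ is a function $\lambda \colon X \to [0,1]$ with finitely many $x \in X$ such that $\lambda(x) \neq 0$ and with $\sum_{x \in X} \lambda(x) = 1$. Applying this to $X = \Theta^\detr_{0:K-1}$ yields exactly the set described on the right-hand side of the lemma, via the trivial bijection that identifies a function $\theta^\mix \colon \Theta^\detr_{0:K-1} \to [0,1]$ with its value vector $\vec{\lambda} = (\lambda_{\theta^\detr})_{\theta^\detr \in \Theta^\detr_{0:K-1}}$ defined by $\lambda_{\theta^\detr} = \theta^\mix(\theta^\detr)$.

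Next, I would check that the finiteness restriction (``only finitely many $\lambda_{\theta^\detr} \neq 0$'') is genuinely needed and consistent with our setup: the set $\Theta^\detr_{0:K-1}$ is in general infinite because nature's action space $\bm{U}$ may be infinite, so restricting to finitely supported distributions is precisely the convention we fixed for $\dist{\cdot}$ on infinite sets in \Cref{sec:preliminaries}. The nonnegativity and normalization conditions are then simply the two defining properties of any probability distribution, so both inclusions hold by definition.

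I do not anticipate any real obstacle here, as the statement is essentially a translation between the functional and the indexed-vector descriptions of finitely supported probability distributions; the only small subtlety is to make the identification between $\theta^\mix$ viewed as a function and $\vec{\lambda}$ viewed as an element of the (possibly uncountably indexed) tuple space explicit, which is a standard bookkeeping step. The proof can therefore be written in a few lines by quoting the definitions and the identification above.
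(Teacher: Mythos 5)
Your proposal is correct and follows essentially the same route as the paper's own proof: unfold the definition $\Theta^\mix_{0:K-1} = \dist{\Theta^\detr_{0:K-1}}$, invoke the convention from \Cref{sec:preliminaries} that distributions over infinite sets are finitely supported, and identify $\vec{\lambda}$ with $\theta^\mix$ via $\lambda_{\theta^\detr} = \theta^\mix(\theta^\detr)$. No gaps; the argument is complete as written.
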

\begin{proof}
    By definition, the set of mixed nature policies $\Theta^\mix_{0:K-1}$ is the set of probability distributions over the set of deterministic nature policies $\Theta^\detr_{0:K-1}$.
    As stated in \Cref{sec:preliminaries}, we only consider finite probability distributions over infinite sets.
    The set of mixed nature policies is hence also restricted to the finite probability distributions.
    This means $\vec{\lambda} = \theta^\mix \in \Theta^\mix_{0:K-1}$ with $\lambda_{\theta^\detr} = \theta^\mix(\theta^\detr)$.
\end{proof}

Recall the family of functions $F_{\theta^\detr}\colon \RR^n \to \RR$ defined on the set of deterministic nature policies $\Theta^\detr_{0:K-1}$ in \Cref{app:thm:convex_semi-infinite_game}:
\[F_{\theta^\detr}(x_{\pi^\detr}) = \sum_{\pi^\detr\in \Pi^\detr_{0:K-1}} x_{\pi^\detr}\cdot V^{\pi^\detr,\theta^\detr}.\]

The next two lemmas show that this family of functions $F_{\theta^\detr}$ is convex (\Cref{app:lem_fam_convex}) and all functions in the family have a finite value (\Cref{app:lem_fam_finite}).
\begin{lemma}\label{app:lem_fam_convex}
    $\forall \theta^\detr \in \Theta^\detr_{0:K-1}.\; F_{\theta^\detr}$ is a convex function.
\end{lemma}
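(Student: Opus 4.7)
The plan is to observe that for any fixed deterministic nature policy $\theta^\detr \in \Theta^\detr_{0:K-1}$, the quantities $V^{\pi^\detr,\theta^\detr}$ that appear in the definition of $F_{\theta^\detr}$ are real constants, one for each deterministic agent policy $\pi^\detr \in \Pi^\detr_{0:K-1}$. Consequently, $F_{\theta^\detr}$ is simply a linear combination of the coordinates of $\vec{x} \in \RR^n$ with fixed real coefficients, i.e.\ an affine (in fact, linear) functional on $\RR^n$. Since every affine function on a Euclidean space is convex (in fact both convex and concave), this immediately yields the claim.

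Concretely, I would take arbitrary $\vec{x}, \vec{y} \in \RR^n$ and $\alpha \in [0,1]$ and compute
\[
F_{\theta^\detr}(\alpha \vec{x} + (1-\alpha)\vec{y}) = \sum_{\pi^\detr \in \Pi^\detr_{0:K-1}} (\alpha x_{\pi^\detr} + (1-\alpha)y_{\pi^\detr}) \cdot V^{\pi^\detr,\theta^\detr},
\]
then split the sum along the two summands, pull the scalars $\alpha$ and $1-\alpha$ out of their respective sums, and recognize the two resulting expressions as $F_{\theta^\detr}(\vec{x})$ and $F_{\theta^\detr}(\vec{y})$. This yields
\[
F_{\theta^\detr}(\alpha \vec{x} + (1-\alpha)\vec{y}) = \alpha F_{\theta^\detr}(\vec{x}) + (1-\alpha) F_{\theta^\detr}(\vec{y}),
\]
which is the convexity inequality holding with equality, and hence in particular $\leq$, as required by \Cref{def:thm:convex_semi-infinite_game}.

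There is essentially no obstacle here: the only conceptual point worth flagging is that convexity is required in $\vec{x}$ alone (i.e.\ in the agent-side variable), with $\theta^\detr$ held fixed, so that the coefficients $V^{\pi^\detr,\theta^\detr}$ are genuinely constants independent of $\vec{x}$. Since the agent index set $\Pi^\detr_{0:K-1}$ is finite by the argument already used in \Cref{app:lem_closed_convex}, the sum defining $F_{\theta^\detr}$ is a finite sum and the above manipulations are justified term by term without any convergence issue. The real content of \Cref{app:thm:convex_semi-infinite_game}, therefore, is concentrated in the companion lemmas (finiteness, closed convexity, finite-valuedness), while the convexity check here reduces to the well-known observation that affine functions on Euclidean space are convex.
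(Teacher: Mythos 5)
Your proposal is correct and follows essentially the same route as the paper: the paper's proof also fixes $\theta^\detr$, expands $F_{\theta^\detr}(\alpha\vec{x}+(1-\alpha)\vec{y})$, splits the finite sum, pulls out $\alpha$ and $1-\alpha$, and concludes convexity from the resulting equality. Your observation that $F_{\theta^\detr}$ is in fact linear in $\vec{x}$ (hence both convex and concave) is a harmless strengthening of the same argument.
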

\begin{proof}
    Take arbitrary $\theta^\detr\in \Theta^\detr_{0:K-1}$, $\vec{x},\vec{y} \in \RR^n$, and $\alpha \in [0,1]$ then:
    \begin{align*}
        F_{\theta^\detr}(\alpha \cdot \vec{x} + (1-\alpha)\cdot \vec{y}) &= \sum_{\pi^\detr\in\Pi^\detr} (\alpha \cdot \vec{x} + (1-\alpha)\cdot \vec{y})_{\pi^\detr} \cdot V^{\pi^\detr, \theta^\detr}\\
        &= \sum_{\pi^\detr\in\Pi^\detr} (\alpha \cdot x_{\pi^\detr} + (1-\alpha)\cdot y_{\pi^\detr}) \cdot V^{\pi^\detr, \theta^\detr}\\
        &= \sum_{\pi^\detr\in\Pi^\detr} \alpha \cdot x_{\pi^\detr} \cdot V^{\pi^\detr, \theta^\detr} + (1-\alpha)\cdot y_{\pi^\detr} \cdot V^{\pi^\detr, \theta^\detr}\\
        &= \sum_{\pi^\detr\in\Pi^\detr} \bigl\{\alpha \cdot x_{\pi^\detr} \cdot V^{\pi^\detr, \theta^\detr}\bigr\} + \sum_{\pi^\detr\in\Pi^\detr} \bigl\{ (1-\alpha)\cdot y_{\pi^\detr} \cdot V^{\pi^\detr, \theta^\detr}\bigr\}\\
        &= \alpha \cdot \sum_{\pi^\detr\in\Pi^\detr} \bigl\{x_{\pi^\detr} \cdot V^{\pi^\detr, \theta^\detr}\bigr\} + (1-\alpha)\cdot \sum_{\pi^\detr\in\Pi^\detr} \bigl\{ y_{\pi^\detr} \cdot V^{\pi^\detr, \theta^\detr}\bigr\}\\
        &= \alpha \cdot F_{\theta^\detr}(\vec{x}) + (1-\alpha)\cdot F_{\theta^\detr}(\vec{y}).
    \end{align*}
    So $\forall_{\theta^\detr\in \Theta^\detr}.\; F_{\theta^\detr}$ is a convex function.
\end{proof}
\begin{lemma}\label{app:lem_fam_finite}
    Given $\theta^\detr \in \Theta^\detr$, $\vec{x} \in \RR^n$:
    \[
        F_{\theta^\detr}(\vec{x}) \text{ has a finite value.}
    \]
\end{lemma}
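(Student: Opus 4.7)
The plan is to show that $F_{\theta^\detr}(\vec{x})$ is a finite sum of finite terms, hence itself finite. The sum ranges over $\Pi^\detr_{0:K-1}$, so I first need to argue that this index set is finite; then I need to bound each summand $x_{\pi^\detr}\cdot V^{\pi^\detr,\theta^\detr}$.

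For the first step, I would invoke the same counting argument used in Lemma~\ref{app:lem_closed_convex}: because $A$, $Z^\agent_\priv$, and $Z_\publ$ are finite, the number of agent histories of length at most $K-1$ is bounded by $\sum_{t=0}^{K-1}(|A|\cdot|Z^\agent_\priv|\cdot|Z_\publ|)^t$, and each deterministic agent policy corresponds to a choice of one action from $A$ for each such history. Hence $|\Pi^\detr_{0:K-1}|=n$ is finite. The coefficients $x_{\pi^\detr}$ are by assumption real numbers (coordinates of $\vec{x}\in\RR^n$), hence finite.

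The remaining step is to show that $V^{\pi^\detr,\theta^\detr}$ is finite for each fixed pair of deterministic policies. By definition,
\[
V^{\pi^\detr,\theta^\detr}_{\mathrm{fh}} \;=\; \EE\Bigl[\sum_{t=0}^{K-1} r_t \,\Bigm|\, \pi^\detr,\theta^\detr\Bigr].
\]
Since $S$ and $A$ are finite, the reward function $R\colon S\times A\to \RR$ attains only finitely many values and is therefore bounded: let $R_{\max}=\max_{s,a}|R(s,a)|<\infty$. Then $|r_t|\le R_{\max}$ almost surely for every $t$, so by linearity and monotonicity of expectation
\[
\bigl|V^{\pi^\detr,\theta^\detr}_{\mathrm{fh}}\bigr| \;\le\; \sum_{t=0}^{K-1}\EE\bigl[|r_t|\bigr] \;\le\; K\cdot R_{\max} \;<\; \infty.
\]

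Putting the pieces together, $F_{\theta^\detr}(\vec{x})$ is a sum of $n<\infty$ products of two finite real numbers and is therefore finite. I do not foresee any real obstacle here; the only subtlety is making sure that the finiteness of $V^{\pi^\detr,\theta^\detr}$ does not implicitly rely on nature's action being chosen from the infinite set $\bm{U}$ in a way that could blow up, which is handled because the reward depends only on $(s,a)$ and not on $u$, so the boundedness of $R$ is enough regardless of how nature randomises.
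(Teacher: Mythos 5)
Your proof is correct and follows essentially the same route as the paper's: bound each $V^{\pi^\detr,\theta^\detr}$ by $K\cdot\max_{s,a}|R(s,a)|$ using finiteness of $S$, $A$, and the horizon, note that the coordinates of $\vec{x}\in\RR^n$ are finite, and conclude that the finite sum is finite. Your version is in fact slightly more careful than the paper's (explicitly invoking the finiteness of $\Pi^\detr_{0:K-1}$ and using $|R|$ rather than $R$ in the bound), but these are refinements of the same argument, not a different approach.
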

\begin{proof}
    $\forall \theta^\detr \in \Theta^\detr, \forall \pi^\detr\in\Pi^\detr$ the value of $ V^{\pi^\detr_i, \theta^\detr}$ is finite, as it is bounded by $K \cdot (\max_{s,a \in S\times A} R(s,a))$.
    Furthermore, $\vec{x} \in \RR^n$ can only take finite values.
    Together, this shows that:
    \[
    \forall \theta^\detr \in \Theta^\detr, \forall \pi^\detr\in\Pi^\detr.\, F_{\theta^\detr}(\vec{x}) \text{ has a finite value}.
    \]
\end{proof}

Finally, \Cref{app:lem_conv_semi-inf_equiv} shows that the value function $W \colon \Pi^\mix_{0:K-1} \times \Theta^\mix_{0:K-1} \to \RR$ of the convex semi-infinite game constructed as in \Cref{app:thm:convex_semi-infinite_game} is equivalent to the value function $V \colon \Pi^\mix_{0:K-1} \times \Theta^\mix_{0:K-1} \to \RR$ of the occupancy game.

\begin{lemma}\label{app:lem_conv_semi-inf_equiv}
    The value function of the convex semi-infinite game is equivalent to the value function of the occupancy game.
    \[
        \forall \pi^\mix \in \Pi^\mix_{0:K-1}, \forall \theta^\mix \in \Theta^\mix_{0:K-1}.\, W(\pi^\mix,\theta^\mix) = V^{\pi^\mix,\theta^\mix}.
    \]
\end{lemma}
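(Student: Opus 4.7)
The plan is to show that the claim reduces to a straightforward rearrangement of definitions: both $W$ and $V$ for mixed policies are bilinear expressions in the mixing weights over the same ``pure outcome'' values $V^{\pi^\detr,\theta^\detr}$. So the proof will just be a chain of equalities, with care taken for the infinite index set on nature's side.

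First I would fix arbitrary $\pi^\mix \in \Pi^\mix_{0:K-1}$ and $\theta^\mix \in \Theta^\mix_{0:K-1}$ and unfold $W$ using \Cref{def:thm:convex_semi-infinite_game} together with the identification established in \Cref{app:thm:convex_semi-infinite_game} and \Cref{app:lem_finite_prob_dist} (so that $\lambda_{\theta^\detr} = \theta^\mix(\theta^\detr)$ and the vector $\vec{x}$ has components $x_{\pi^\detr} = \pi^\mix(\pi^\detr)$):
\begin{align*}
W(\pi^\mix,\theta^\mix)
&= \sum_{\theta^\detr \in \Theta^\detr_{0:K-1}} \theta^\mix(\theta^\detr)\cdot F_{\theta^\detr}(\pi^\mix) \\
&= \sum_{\theta^\detr \in \Theta^\detr_{0:K-1}} \theta^\mix(\theta^\detr)\cdot \sum_{\pi^\detr \in \Pi^\detr_{0:K-1}} \pi^\mix(\pi^\detr)\cdot V^{\pi^\detr,\theta^\detr}.
\end{align*}
Since $\theta^\mix$ assigns nonzero weight to only finitely many $\theta^\detr$ (\Cref{sec:preliminaries}) and $\Pi^\detr_{0:K-1}$ is finite (as noted in the proof of \Cref{app:lem_closed_convex}), the double sum has only finitely many nonzero terms, so Fubini applies and I can swap the order:
\begin{align*}
W(\pi^\mix,\theta^\mix)
&= \sum_{\pi^\detr \in \Pi^\detr_{0:K-1}} \sum_{\theta^\detr \in \Theta^\detr_{0:K-1}} \pi^\mix(\pi^\detr)\cdot\theta^\mix(\theta^\detr)\cdot V^{\pi^\detr,\theta^\detr}.
\end{align*}

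Next I would invoke the formula for the value of a pair of mixed policies given at the end of \Cref{app:appendix_prelim}, namely $V^{\pi^\mix,\theta^\mix} = \sum_{\pi^\detr}\sum_{\theta^\detr} \pi^\mix(\pi^\detr)\cdot\theta^\mix(\theta^\detr)\cdot V^{\pi^\detr,\theta^\detr}$, which matches the previous display verbatim. This formula is justified here because the occupancy game is the sufficient-statistic reformulation of the RPOMDP (Appendices~\ref{app:suff.stat} and~\ref{app:occupacy_game}) and its finite-horizon expected reward is a linear functional in the distribution over paths, so by \Cref{app:cor_equiv_policies} (or equivalently by applying the mixed-policy value formula directly) the value decomposes as the expectation over jointly sampled deterministic policies. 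Concluding, $W(\pi^\mix,\theta^\mix) = V^{\pi^\mix,\theta^\mix}$, and since the policies were arbitrary the lemma holds.

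The only subtle point, and the main thing I would be careful about, is the interchange of the two sums: the outer sum over $\Theta^\detr_{0:K-1}$ is over an uncountable index set, so one must explicitly appeal to the finite support of $\theta^\mix$ (guaranteed by the finitely randomizing convention in \Cref{sec:preliminaries}) before treating it as a finite sum. Everything else is purely definitional.
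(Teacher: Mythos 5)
Your proposal is correct and follows essentially the same route as the paper's proof: unfold $W$ via the definition of $F_{\theta^\detr}$, exchange the order of summation, and recognize the resulting expression as the mixed-policy value formula $V^{\pi^\mix,\theta^\mix}$. Your explicit justification of the sum interchange via the finite support of $\theta^\mix$ is a small point the paper leaves implicit, but it does not change the argument.
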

\begin{proof}
    Recall the definition of the value function of a convex semi-infinite game:
    \[
        W(\pi^\mix,\theta^\mix) = \sum_{\theta^\detr \in \Theta^\detr} \theta^\mix(\theta^\detr) \cdot F_{\theta^\detr}(\pi^\mix).
    \]
    By construction, the value function of our occupancy game is the same as that of our original RPOMDP. 
    However, as shown in \Cref{app:mixed_policies}, we can reason with mixed policies, giving us the following value function:
    \[
        V^{\pi^\mix,\theta^\mix} = \sum_{\pi^\detr\in \Pi^\detr}\pi^\mix(\pi^\detr) \cdot \sum_{\theta^\detr\in \Theta^\detr} \theta^\mix(\theta^\detr) \cdot V^{\pi^\detr,\theta^\detr}.
    \]
    Take arbitrary mixed agent and nature policies $\pi^\mix \in \Pi^\mix_{0:K-1}, \theta^\mix \in \Theta^\mix_{0:K-1}$.
    Then:
    \begin{align*}
        W(\pi^\mix,\theta^\mix) &= \sum_{\theta^\detr \in \Theta^\detr} \theta^\mix(\theta^\detr) \cdot F_{\theta^\detr}(\pi^\mix)\\
        &= \sum_{\theta^\detr \in \Theta^\detr} \theta^\mix(\theta^\detr) \cdot \sum_{\theta^\detr\in\Theta^\detr} \pi^\mix(\pi^\detr) \cdot V^{\pi^\detr, \theta^\detr}\\
        &= \sum_{\pi^\detr\in \Pi^\detr}\pi^\mix(\pi^\detr) \cdot \sum_{\theta^\detr\in \Theta^\detr} \theta^\mix(\theta^\detr) \cdot V^{\pi^\detr,\theta^\detr}\\
        &= V^{\pi^\mix,\theta^\mix}.
    \end{align*}    
\end{proof}

\Cref{app:lem_conv_semi-inf_equiv} is the final step in proving \Cref{app:thm:convex_semi-infinite_game}.
We conclude that our occupancy game is a convex semi-infinite game.

The final step in proving the existence of a finite horizon Nash equilibrium in our RPOMDPs follows from \cite[Theorem 3.2]{Convex_semi-infinite_games}, stating that in a convex semi-infinite game, if the convex functions and the convex agent policy set have no common direction of recession, then a Nash equilibrium and an optimal strategy for the agent exist.

\begin{lemma}\label{app:lem_condintion_R}
    In the convex semi-infinite game of our occupancy game, the convex functions $F_{\theta^\detr}$ with $\theta^\detr\in\Theta^\detr$ and the set of mixed agent policies $\Pi^\mix$ have no common direction of recession.
\end{lemma}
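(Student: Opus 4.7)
The plan is to exploit compactness. A direction of recession of a convex set $C \subseteq \RR^n$ is a vector $d$ such that $\vec{x} + \lambda d \in C$ for every $\vec{x} \in C$ and every $\lambda \geq 0$; a common direction of recession with a family of convex functions is then a nonzero $d$ that is simultaneously a direction of recession of $C$ and along which every function in the family is bounded above (or non-increasing, depending on the formulation of \cite{Convex_semi-infinite_games}). So to rule out the existence of a common direction of recession, it suffices to show that the only direction of recession of $\Pi^\mix_{0:K-1}$ itself is the zero vector.

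First I would invoke \Cref{app:lem_closed_convex}, which already establishes that $\Pi^\mix_{0:K-1}$ is a nonempty closed convex subset of $\RR^n$ with $n = |\Pi^\detr_{0:K-1}|$ finite. I would then strengthen this observation: because $\Pi^\mix_{0:K-1} = \dist{\Pi^\detr_{0:K-1}}$ is the standard probability simplex in $\RR^n$, it is also bounded (each coordinate lies in $[0,1]$), and hence compact.

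Next I would apply the standard fact from convex analysis that the recession cone of a nonempty compact convex set in $\RR^n$ is trivial, i.e., equals $\{\vec{0}\}$. Concretely, suppose for contradiction that some $d \neq \vec{0}$ is a direction of recession of $\Pi^\mix_{0:K-1}$; then for any fixed $\pi^\mix \in \Pi^\mix_{0:K-1}$ the ray $\{\pi^\mix + \lambda d \mid \lambda \geq 0\}$ lies in $\Pi^\mix_{0:K-1}$, but this ray is unbounded, contradicting boundedness of the simplex. Hence the recession cone of $\Pi^\mix_{0:K-1}$ contains only $\vec{0}$.

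Finally, I would conclude that any vector that is a direction of recession of both $\Pi^\mix_{0:K-1}$ and the family $\{F_{\theta^\detr}\}_{\theta^\detr \in \Theta^\detr_{0:K-1}}$ must in particular be a direction of recession of $\Pi^\mix_{0:K-1}$, and is therefore the zero vector; equivalently, there is no common (nonzero) direction of recession. I do not expect this proof to have a serious obstacle: the work already appears in \Cref{app:lem_closed_convex}, and the compactness argument is purely mechanical. The only subtlety to watch is to state clearly which convention of "direction of recession" (zero versus nonzero) is being used, so the hypothesis of \cite[Theorem 3.2]{Convex_semi-infinite_games} applied in \Cref{app:lem_condintion_R} is invoked in the matching form.
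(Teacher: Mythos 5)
Your proposal is correct and follows essentially the same argument as the paper: both observe that $\Pi^\mix_{0:K-1}$ is a closed and bounded (indeed compact) convex subset of $\RR^n$ — the probability simplex — so its recession cone is trivial, and hence no nonzero common direction of recession with the functions $F_{\theta^\detr}$ can exist. The only cosmetic difference is that the paper phrases the boundedness via ``convex polytope'' and cites a convex-analysis reference rather than spelling out the ray-unboundedness contradiction.
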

\begin{proof}
    As our set of agent policies is a convex polytope in $\RR^n$, it is a closed and bounded convex subset of $\RR^n$.
    Then, the recession cone consists only of the zero vector~\cite{Book_convex_analysis_general_vector_spaces}.
    The zero vector is also trivially contained in the recession cones of the convex functions.
    Therefore, we know that the convex functions and the convex agent policy set have no common direction of recession.    
\end{proof}

As shown in \Cref{app:lem_condintion_R}, our occupancy game meets the condition given in \cite{Convex_semi-infinite_games}.
It hence follows that a Nash equilibrium and an optimal strategy for the agent exist and that the saddle point condition holds, proving  Theorem \hyperref[app:thm_nash_exists]{3}.

\subsection{Nature First}
When reasoning with the nature first semantics, the nature policy no longer relies on the last action of the agent.
This influences the proof of the sufficient statistic as follows:

{\allowdisplaybreaks
\begin{align*}
    &\Ocs{0:t}(\tup{h_t, a_t, u_t, z_\priv^\agent, z_\priv^\nature, z_\publ}) \stackrel{\text{def}}{=} \tup{\ocs{0:t}(\tup{h_t, a_t, u_t, z_\priv^\agent, z_\priv^\nature, z_\publ}), \theta_{0:t}},\\
    \intertext{where:}
    &\theta_{0:t} \stackrel{\text{def}}{=} \tup{\theta_{0:t-1}, \theta_t}.\\
    &\ocs{0:t}(\tup{h_t, a_t, u_t, z_\priv^\agent, z_\priv^\nature, z_\publ}) \stackrel{\text{def}}{=} \Pr(h_t, a_t, u_t, z_\priv^\agent, z_\priv^\nature, z_\publ \given \pi_{0:t}, \theta_{0:t})\\
    &= \sum_{s\in \mathcal{S}^\nature} \sum_{s'\in \mathcal{S}^\nature} \Pr(z_\priv^\agent, z_\priv^\nature, z_\publ \given s')\Pr(s' \given a_t, u_t, s)\Pr(h_t, a_t, u_t, s \given \pi_{0:t}, \theta_{0:t})\\
    &= \sum_{s\in \mathcal{S}^\nature} \sum_{s'\in \mathcal{S}^\nature} \Pr(z_\priv^\agent, z_\priv^\nature, z_\publ \given s')\Pr(s' \given a_t, u_t, s)\Pr(u_t \given h_t, a_t, s, \pi_{0:t}, \theta_{0:t})\Pr(h_t, a_t, s \given \pi_{0:t}, \theta_{0:t}).
    \intertext{The chance of a nature action only depends on nature's policy at time $t$ and the history:}
    &= \sum_{s\in \mathcal{S}^\nature} \sum_{s'\in \mathcal{S}^\nature} \Pr(z_\priv^\agent, z_\priv^\nature, z_\publ \given s')\Pr(s' \given a_t, u_t, s)\Pr(u_t \given h_t, \theta_{t})\Pr(h_t, a_t, s \given \pi_{0:t}, \theta_{0:t})\\
    &= \sum_{s\in \mathcal{S}^\nature} \sum_{s'\in \mathcal{S}^\nature} \Pr(z_\priv^\agent, z_\priv^\nature, z_\publ \given s')\Pr(s' \given a_t, u_t, s)\Pr(u_t \given h_t, a_t, \theta_{t})\Pr(a_t \given h_t, \pi_{t})\Pr(s \given h_t)\Pr(h_t \given \pi_{0:t-1}, \theta_{0:t-1})\\
    &= \sum_{s\in \mathcal{S}^\nature} \sum_{s'\in \mathcal{S}^\nature} \mathcal{O}^\agent(s',z_\priv^\agent, z_\publ,)\mathcal{O}^\nature(s',z_\priv^\nature, z_\publ)\mathcal{T}^\agent(\mathcal{T}^\nature(s,u_t), a_t, s')\theta_t(h^\nature_t, u_t)\pi_t(h^\agent_t, a_t)b(s, h_t)\ocs{0:t-1}(h_t).
\end{align*}
}
We can hence still compute the successor occupancy state using only the previous occupancy state $\Ocs{0:t-1} = \tup{\ocs{0:t-1}, \theta_{0:t-1}}$ and policies $\pi_t, \theta_t$ at time $t$.
The expected reward proof requires no modifications.\\

We define the nature first OG as follows:
\begin{definition}[OG]\label{def:underlying:agent:zsog}
Given a POSG as defined in \Cref{def:equivalent:nature:zsposg} $\tup{\mathcal{S^\agent, S^\nature, A^\agent, A^\nature, T, R, Z^\agent, Z^\nature, O^\agent, O^\nature}}$, and a horizon $K\in \NN$, we define the OG as a tuple $(\mathsf{S}^\agent, \mathsf{S}^\nature, \mathsf{A}^\agent, \mathsf{A}^\nature, \mathsf{T}, \mathsf{R})$ where the sets of states and actions are defined as follows:
$\mathsf{S}^\agent = \bigcup_{t=0}^{K-1} (\bigcup_{\pi_{0:t} \in \Pi_{0:t}} \bigcup_{\theta_{0:t}\in\Theta_{0:t}} \Ocs{0:t} \times \Theta_{t+1})$ is the infinite set of agent states, and $\mathsf{S}^\nature = \bigcup_{t=0}^{K-1} \bigcup_{\pi_{0:t} \in \Pi_{0:t}} \bigcup_{\theta_{0:t}\in\Theta_{0:t}} \Ocs{0:t}$ the infinite set of nature states;
$\mathsf{A}^\agent = \bigcup_{t=0}^{K-1} \Pi_{t}$ is the infinite set of agent actions, and $\mathsf{A}^\nature = \bigcup_{t=0}^{K-1} \Theta_{t}$ the infinite set of nature actions;
The transition and reward functions are then defined as:
\begin{itemize}
    \item $\mathsf{T} = \mathsf{T}^\agent \cup \mathsf{T}^\nature$, the transition function, where:
    \begin{itemize}
        \item $\mathsf{T}^\agent \colon \mathsf{S}^\agent \times \mathsf{A}^\agent \pto \mathsf{S}^\nature$ the agent's transition function.
        \item $\mathsf{T}^\nature \colon \mathsf{S}^\nature \times \mathsf{A}^\nature \pto \mathsf{S}^\agent$ nature's transition function.
    \end{itemize}
    \item $\mathsf{R}\colon \mathsf{S}^\agent \times \mathsf{A}^\agent \to \RR$ the reward function.
\end{itemize}
Where:
\begin{itemize}
    \item $\mathsf{R}(\tup{\ocs{0:t}, \theta_{0:t}},\pi_{t+1}) = \sum_{s\in \mathcal{S}^\agent}\sum_{a\in \mathcal{A}^\agent}\Bigl\{\mathcal{R}(s,a) \cdot \sum_{h_{t+1}\in H_{t+1}(\theta_{0:t})}\bigl\{\pi_{t+1}(h_{t+1}, a)b(s,h_{t+1})\ocs{0:t}(h_{t+1})\bigr\}\Bigr\}$.
    \item $\mathsf{T^\nature}(\tup{\ocs{0:t}, \theta_{0:t}},\theta_{t+1}) = \tup{\tup{\ocs{0:t}, \theta_{0:t}},\theta_{t+1}}$.
    \item $\mathsf{T^\agent}(\tup{\tup{\ocs{0:t}, \theta_{0:t}},\theta_{t+1}},\pi_{t+1}) = \tup{\ocs{0:t+1}, \theta_{0:t+1}}$, where:\\
    \begin{itemize}
        \item $\theta_{0:t+1} = \theta_{0:t} \concat \theta_{t+1}$.
        \item $\forall h_{t+1}\in H_{t+1}(\theta_{0:t}), \forall a \in \mathcal{A}^\agent, \forall u \in \mathcal{A}^\nature, \forall z^\agent_\priv, z^\nature_\priv, z_\publ \in \Zagent \times \Znature \times \Zpub, \ocs{0:t+1}(\tup{h_{t+1}, a, u, z^\agent_\priv, z^\nature_\priv, z_\publ}) =$\\
    \[ \sum_{s\in \mathcal{S}^\nature} \sum_{s'\in \mathcal{S}^\nature} \mathcal{O}^\agent(s',z_\priv^\agent, z_\publ,)\mathcal{O}^\nature(s',z_\priv^\nature, z_\publ)\mathcal{T}^\agent(\mathcal{T}^\nature(s,u_t), a_t, s')\theta_t(h^\nature_t, u_t)\pi_t(h^\agent_t, a_t)b(s, h_t)\ocs{0:t-1}(h_t).\]
    \end{itemize}
\end{itemize}
\end{definition}
Where $b(s, h_t)$ is the belief computed by $t$ belief updates given the joint \aoh{} $h_t$.
Where $H_t(\theta_{0:t-1}) \subset H_t$ is the subset with $u_i \in \bm{U}$ determined by $\theta_i$ given the history $h^\nature_{0:i-1}$ and action $a$. This is a finite subset of the infinite set of possible joint histories.

To show that for every stochastic policy there exists a mixed policy that behaves equivalently and vice versa in the nature first setting, the proofs follow the same steps as in \Cref{app:mixed_policies} for the agent first policies.
The only required changes are to remove the agent action input for the nature policies and the corresponding $\forall a \in A$.

The nature first OG still meets all requirements for having an optimal value, which can be shown by following the same proof steps as for the agent first OG in \Cref{app:convex_semi-infinite_game}.

}

\end{document}